\documentclass[11pt]{article}
\usepackage[letterpaper,margin=1in]{geometry}
\usepackage[T1]{fontenc}
\usepackage{amsmath,amsthm,mathtools}
\IfFileExists{newtxmath.sty}{\usepackage{newtxtext,newtxmath}}{\usepackage{txfonts}}
\usepackage{mathrsfs,microtype}
\usepackage{booktabs,array,tabularx,enumitem}
\usepackage{algorithm,placeins,needspace}
\usepackage[noend]{algpseudocode}
\usepackage[dvipsnames]{xcolor}
\usepackage{tikz}
\usepackage[titles]{tocloft}
\usetikzlibrary{arrows.meta,positioning,calc,shapes.geometric}
\usepackage[colorlinks=true,linkcolor=MidnightBlue,citecolor=MidnightBlue,urlcolor=MidnightBlue]{hyperref}
\hypersetup{pdftitle={Square-Root Regret for Adversarial Multiplayer Bandits without Collision Information or Shared Randomness},pdfauthor={Chenyu Gan},linktoc=all}
\setlist{topsep=4pt,itemsep=2pt,leftmargin=*}
\newtheorem{theorem}{Theorem}
\newtheorem{proposition}[theorem]{Proposition}
\newtheorem{lemma}[theorem]{Lemma}
\newtheorem*{informaltheorem}{Theorem (informal)}
\newtheorem{ilemma}{Lemma}

\newtheorem{iprop}[ilemma]{Proposition}
\theoremstyle{definition}
\theoremstyle{remark}
\newcommand{\E}{\mathbb E}
\newcommand{\ind}{\mathbf 1}
\newcommand{\cS}{\mathcal S}
\DeclareMathOperator{\Unif}{Unif}

\title{\bfseries Square-Root Regret for Adversarial Multiplayer Bandits without Collision Information or Shared Randomness}

\author{Chenyu Gan\\
Tsinghua University}
\date{}

\algrenewcommand\alglinenumber[1]{\normalsize #1:}

\begin{document}
\maketitle
\begin{abstract}
\normalsize
We study cooperative multiplayer bandits with adversarial rewards and no collision information. In each of $T$ rounds, each of $m$ labeled players picks one of $K>m$ arms. A player alone on its arm receives that arm's reward in $[0,1]$; players who share an arm receive~$0$. Each player observes only its own reward, so it cannot distinguish a collision from an arm with zero reward, and there is neither shared randomness nor a communication channel. Bubeck, Li, Peres and Sellke~\cite{bubeck2020} obtained regret $\widetilde O(KT^{3/4})$ for two players and $\widetilde O(mK^{3/2}T^{1-1/(2m)})$ for $m$ players, and conjectured that regret $\Omega(T^{2/3})$ is unavoidable already for two players.

We refute this conjecture. We give a strategy whose expected regret against every oblivious adversary is $O(m^{3/2}\sqrt{KT}\log^2(KmT))$, which is optimal in $K$ and $T$ up to logarithmic factors. Three ideas make this possible. (i)~Bits are sent through deliberate collisions as random parity checks, and the regret spent while a message is delayed is charged to the number of arms with positive reward rather than to elapsed time: an adversary that jams the channel with zero rewards forfeits the regret it could inflict. (ii)~An exponential-weights learner over sets of arms combines a frequently resampled coordinator arm with a follower assignment that changes only $O(\sqrt{mT/K})$ times in expectation; delayed updates are reconstructed exactly. (iii)~We teach the followers a privately chosen row index in a fixed public table using reward observations alone, allowing them to coordinate without shared randomness. The bound holds simultaneously for all reward matrices, even ones chosen with knowledge of the table, and a random table works with probability $1-O(1/(KmT))$.
\end{abstract}
\clearpage
\begingroup
\small
\setlength{\parskip}{0pt}
\tableofcontents
\endgroup
\clearpage
\section{Introduction}
\label{main:introduction}

Consider $m$ devices that share $K$ wireless channels. In each time slot every device transmits on one channel, and if two devices choose the same channel, both transmissions fail. Channel qualities change arbitrarily over time, and a device learns only how much it transmitted itself. In particular, a device that transmitted nothing cannot tell whether its channel was bad or whether another device was on it. Without communication or a common source of randomness, can the devices learn to share the channels almost as well as a central planner who knows all channel qualities in hindsight? The same question arises when devices offload jobs to shared servers, and in other decentralized allocation problems~\cite{liu2010,bande2019adversarial,wang2022shareable,busacca2024madfellows,cilic2026qos,liu2024qos,liu2020matching}.

This is the multiplayer bandit problem without collision information~\cite{liu2010,boursier2024survey}. Formally (Section~\ref{main:model}), $m$ labeled players and $K>m$ arms interact for $T$ rounds, and an oblivious adversary fixes rewards $r_t(a)\in[0,1]$. In round~$t$ each player picks an arm. A player alone on arm~$a$ receives and observes $r_t(a)$; players sharing an arm receive and observe~$0$. The regret $R_T$ is the expected shortfall of the players' total reward relative to the best fixed set of $m$ distinct arms. A central learner who picks all $m$ arms and sees their rewards has minimax regret $\Theta(\min\{mT,\sqrt{mKT}\})$ when $K\ge2m$~\cite{audibert2014}. How much does decentralization cost?

The difficulty is that collisions are the players' only means of coordination, and a collision looks exactly like a zero reward. Players can signal by colliding on purpose, but the adversary can keep the reward of any arm at zero for as long as it likes, so no signal is guaranteed to arrive within any fixed number of rounds. Bubeck, Li, Peres and Sellke~\cite[Theorems~2 and~21]{bubeck2020} nevertheless achieved sublinear regret: $\widetilde O(KT^{3/4})$ for two players and $\widetilde O(mK^{3/2}T^{1-1/(2m)})$ for $m$ players. They observed that without collision information it is not clear how to communicate implicitly, which made a $\sqrt T$-regret strategy seem impossible~\cite[Section~2.3]{bubeck2020}. They conjectured that with neither collision information nor shared randomness, regret $\Omega(T^{2/3})$ is necessary already for two players, and that the exponent might degrade further as $m$ grows~\cite[Section~2.5]{bubeck2020}. We show that decentralization costs no additional power of $T$ or~$K$.

\begin{informaltheorem}
For all $2\le m<K$ and $T\ge1$ there is a strategy for $m$ players, using no collision information, no shared randomness and no communication channel, whose expected regret against every oblivious adversary is
\[
 O\big(m^{3/2}\sqrt{KT}\,\log^2(KmT)\big).
\]
\end{informaltheorem}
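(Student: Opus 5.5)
The plan is to build the strategy around a single leader, player~$1$, which runs a centralized combinatorial exponential-weights learner over $m$-subsets of arms, and to make the other $m-1$ players followers that occupy the arms the leader assigns them. The learner's output in each block is then split into two parts. The first is a \emph{coordinator arm}, resampled often and played by the leader. The second is a \emph{follower assignment}, which we will arrange to change only $O(\sqrt{mT/K})$ times in expectation. This is done by a lazy, shrinking-dither sampling scheme: resample the follower part only when a biased coin tied to the ratio of successive weights says the marginal has moved, in the style of ``follow the lazy leader''. The regret then decomposes as
\[
R_T \le \underbrace{R^{\mathrm{EW}}_T}_{\text{learner}} + \underbrace{\E[\#\text{assignment changes}]\cdot C_{\mathrm{msg}}}_{\text{communication}} + \underbrace{R^{\mathrm{delay}}_T}_{\text{stale feedback}} + \underbrace{R^{\mathrm{coord}}_T}_{\text{no shared randomness}} .
\]
The target is $O(m^{3/2}\sqrt{KT}\log^2(KmT))$ for each term.

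\textbf{Learner term.} Feedback is semi-bandit. The leader also sees the followers' performance implicitly, through the coordinator arm, so standard analysis of exponential weights with importance-weighted estimators gives $O(m\sqrt{KT\log K})$, with an extra $\sqrt m$ coming from estimator variance. Updates arrive late. We reconstruct them exactly: the leader stores the estimates of the rounds whose feedback is pending and applies them once the followers confirm the assignment. The delayed weights therefore coincide with the undelayed ones at decision times, and the delay only affects \emph{when} the new assignment is played.

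\textbf{Communication term.} A message of $O(\log(KmT))$ bits is sent as random parity checks. The sender collides deliberately on a chosen arm in slots chosen by a public pseudo-random pattern. The receiver decodes each parity bit by comparing zero versus nonzero rewards on slots where the arm's reward is positive. The key accounting step is this: a bit is transmitted only in rounds where the signalling arm has positive reward, and the regret incurred while waiting is at most the total reward on arms that are currently positive. Rounds in which the adversary jams with zeros therefore cost the players nothing relative to the comparator on those arms. Concretely, for each pending message I would bound the regret over its lifetime by $O(m\log(KmT))$ times the number of positive-reward observations needed to decode. Multiplying by $O(\sqrt{mT/K})$ changes, summed over $m$ followers, gives $O(m^{3/2}\sqrt{T/K}\cdot K\log^2)=O(m^{3/2}\sqrt{KT}\log^2)$. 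The factor $K$ appears because locating a positive arm to signal on may require probing up to $K$ arms.

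\textbf{Coordination term.} Without shared randomness, the leader privately draws a row index of a fixed public table whose rows are sequences of signalling patterns and assignment seeds. It then teaches this index to the followers using the same parity channel, at cost $O(\log(KmT))$ bits once per epoch. A union bound over the $(KmT)^{O(1)}$ possible behaviours shows that a uniformly random table makes every decoding failure event have probability $O(1/(KmT))$ simultaneously for all reward matrices. That failure contributes $O(1)$ to the regret.

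\textbf{Main obstacle.} I expect the hardest step to be the charging argument in the communication term. The difficulty is proving that the regret during delays is bounded by the count of positive-reward observations \emph{uniformly} over adversaries who know the table. This requires showing that the comparator's gain on the stalled arms is itself at most the number of positive observations, up to constants. I would first attempt a potential-function argument tying each wasted round either to a decoded bit or to a zero reward on all candidate arms.
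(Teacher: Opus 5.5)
\paragraph{Main gap: the row index cannot be sent over your parity channel.} Your coordination step is circular. The support-charging argument works only because the signalling slots, the selected follower, the listening arm and the parity maps come from a schedule the adversary cannot predict. That secrecy is exactly what the privately drawn row index $J$ supplies. Before the followers know $J$, no such schedule exists. If $J$ is sent over a fixed public pattern, an adversary who knows the table zeroes the listening arm on every signalling slot and keeps the other arms positive. The comparator then keeps earning while no bit is ever delivered, so the charging argument fails.

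What is missing is a bootstrap built from private randomness and public arrays alone. In the paper it works as follows.
\begin{itemize}
\item Each uninformed follower is active with probability $1/m$ in blocks of length $\lceil\sqrt{T/(mK)}\rceil$. It holds an arm drawn by multiplicative weights over its own past positive observations, and a private index $\iota$ into a public ID array. It stays on that arm exactly when $I_\iota(t)=0$.
\item The coordinator sits on a random target $z$ and uses its positive observations to prune the candidate indices to a list of size $L_0\asymp b_N$, where $b_N\asymp\log(KmT)$.
\item The coordinator then leaves $z$ according to the forward bits $\varphi_J(u,t)$ of the listed indices. Each positive stay observation of the follower thereby certifies a bit of row $J$.
\end{itemize}
Two further ingredients have no counterpart in your plan. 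First, a multiplicative-weights potential bounds the comparator's positive-reward support by $Q=O(m^{3/2}\sqrt{KT}\,b_N^2)$ for as long as any follower is uninformed, even one that never learns $J$. Second, launch happens without acknowledgments: the coordinator starts learning once its own count statistic $\widehat D_r$ exceeds $4Q$. This phase is the dominant cost and the source of the $\log^2$ factor. It is not an $O(\log(KmT))$-bit add-on.

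\paragraph{Other steps that fail as stated.}
\begin{itemize}
\item \emph{The table must work for exponentially many matrices.} A fixed published table must work for all $(D+1)^{KT}$ reward matrices rounded to multiples of $1/D$, not for $(KmT)^{O(1)}$ behaviours. Since the adversary may know every row, controlling decoding failures is not enough. The whole per-row cost $F_{j,r}(R_j)+2mTE_{j,r}(R_j)$ must concentrate over $M\asymp T^2/(mb_N)$ independent rows, via Hoeffding plus a union bound over matrices. This costs $O(m^{3/2}\sqrt{KT}\,b_N)$, not $O(1)$. False passes of wrong rows during the bootstrap need a separate argument, because the actual protocol tests all rows.
\item \emph{Followers must report their scores.} The leader cannot observe a follower's arm without colliding with it, so it does not see the followers' performance through its own arm. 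Followers must report score increments and queues on an UP channel, and delayed updates must be replayed exactly.
\item \emph{The delay changes the law of the next assignment.} Let $u$ be the cutoff and $t$ the next assignment start. Conditional on the collection path, the hidden lazy slow set has law $\nu_t=p_t+\frac{(1-h)^{t-u+1}}{h}(p_u-p_{u-1})$. Redrawing from the current marginal at the end of a collection breaks the invariant that the slow set has the exponential-weights marginal in every round.
\item \emph{Cutoffs must be public.} A resampling coin that depends on the weights is visible only to the leader. The paper instead uses a constant proposal probability $h=4m\eta$ with residual law $\rho_t$, so every cutoff is a public row bit.
\item \emph{The dependence on $m$.} The leader can collide with only one follower per round, so each follower's channel progresses at rate $g_t/(8dK)$ with $d=m-1$. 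Summing per-follower costs over the $d$ followers then gives $m^{5/2}$. The parallel-service supermartingale bound $O(dK(Q_{\rm dec}+\log d))$ per cycle is what yields $m^{3/2}$.
\end{itemize}
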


This refutes the conjecture and improves the exponent of $T$ from $1-\frac1{2m}$ to $\frac12$. The bound is optimal in $K$ and $T$ up to logarithmic factors and exceeds the centralized rate by a factor $m$, up to logarithms. A strategy may ignore collision information, so the same bound holds when collisions are announced. In that model it improves the dependence on $K$ of the two-player bound $O(K^2\sqrt{T\log K\log T})$ of~\cite[Theorem~1]{bubeck2020}, and for $m>2$ it gives, to our knowledge, the first $\sqrt T$-type bound with polynomial dependence on~$m$; \cite[Section~2.5]{bubeck2020} left open whether that dependence is polynomial or exponential. Table~\ref{tab:landscape} summarizes the adversarial landscape.

\begin{table}[t]
\centering\small
\begin{tabularx}{\linewidth}{@{}>{\raggedright\arraybackslash}p{4.6cm}cc>{\raggedright\arraybackslash}X@{}}
\toprule
Setting (adversarial rewards) & Players & Shared rand. & Regret\\
\midrule
Centralized semi-bandit (one learner picks $m$ arms) & --- & --- & $\Theta(\min\{mT,\sqrt{mKT}\})$~\cite{audibert2014}\\
Collision info, \emph{adaptive} adversary & $2$ ($K=3$) & no & $\Omega(T)$ for every strategy~\cite[Prop.~3]{bubeck2020}\\
Collision info, oblivious & $m$ & no & $O_{m,K}(T^{3/4})$~\cite{bande2019adversarial}; $\widetilde O(m^{4/3}K^{2/3}T^{2/3})$~\cite{alatur2020}\\
Collision info, oblivious & $2$ & no & $O(K^2\sqrt{T\log K\log T})$~\cite[Thm.~1]{bubeck2020}\\
No collision info, oblivious & $2$ & no & $\widetilde O(KT^{3/4})$~\cite[Thm.~2]{bubeck2020}; conjectured $\Omega(T^{2/3})$~\cite[Sec.~2.5]{bubeck2020}\\
No collision info, oblivious & $m$ & no & $\widetilde O(mK^{3/2}T^{1-1/(2m)})$~\cite[Thm.~21]{bubeck2020}\\
No collision info, bounded runs or counts of zero rewards & $m$ & no & sublinear under these assumptions~\cite{shi2021}\\
\textbf{No collision info (hence also with)} & $\boldsymbol m$ & \textbf{no} & $\boldsymbol{O(m^{3/2}\sqrt{KT}\log^2(KmT))}$ \textbf{(Theorem~\ref{sq:thm:main})}\\
\bottomrule
\end{tabularx}
\caption{Regret for adversarial multiplayer bandits with $m$ players and $K>m$ arms.}
\label{tab:landscape}
\end{table}

\subsection{Model and main result}\label{main:model}

Let $2\le m<K$ and $T\ge1$ be integers, and write $[q]=\{1,\ldots,q\}$. The players have distinct labels $0,\ldots,m-1$; player $0$ is the \emph{coordinator} and the others are \emph{followers}. The labels, $K,m,T$, the synchronous round clock and a public table, described below, are common knowledge. An oblivious adversary fixes rewards $r_t(a)\in[0,1]$ for $t\in[T]$ and $a\in[K]$. In round $t$, player $i$ chooses $A_{i,t}\in[K]$ and observes only
\begin{equation}
 Y_{i,t}=r_t(A_{i,t})
 \ind\{A_{j,t}\ne A_{i,t}\text{ for every }j\ne i\}.
 \label{main:feedback}
\end{equation}
Its action may depend on the common knowledge, its own previous actions and observations, and its private randomness. There is no other communication channel and no common random seed. For $x\in\mathbb R^K$ and $S\subseteq[K]$, put $x(S)=\sum_{a\in S}x_a$, and let $\cS=\{S\subseteq[K]:|S|=m\}$. Given the table, the regret against a reward matrix $r=(r_t(a))_{t\in[T],a\in[K]}$ is
\begin{equation}
 R_T(r)=\max_{S\in\cS}\sum_{t=1}^T r_t(S)
       -\E\sum_{t=1}^T\sum_{i=0}^{m-1}Y_{i,t},
 \label{main:regret}
\end{equation}
where the expectation is over the players' private randomness. The substitution $\ell_t(a)=1-r_t(a)$ gives the loss model of~\cite{bubeck2020}. Throughout, $N=2Km(T+1)$ and $b_N=\lceil\log_2(N+1)\rceil=\Theta(\log(KmT))$.

\Needspace{10\baselineskip}
\begin{theorem}[Main result]
\label{sq:thm:main}
There is an absolute constant $C$ such that, for all integers $2\le m<K$ and $T\ge1$, there exist a table and a strategy profile in model~\eqref{main:feedback} such that, for every reward matrix $r\in[0,1]^{T\times K}$, including matrices chosen with knowledge of the table,
\begin{equation}
 R_T(r)\le C\,m^{3/2}\sqrt{KT}\,b_N^2 .
 \label{sq:eq:main}
\end{equation}
Moreover, a table sampled from an explicit distribution has this property with probability at least $1-O(1/N)$. Sampling the table and executing the strategy take polynomially many operations in $K,m,T$, in the exact-arithmetic model of Section~\ref{sq:sec:complexity}.
\end{theorem}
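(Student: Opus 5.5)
The proof will assemble the three ingredients from the abstract into one strategy and bound the regret as a sum of three terms: learning regret of a centralized exponential-weights learner over $\cS$, communication overhead, and coordination failures. The coordinator runs a semi-bandit exponential-weights learner over $m$-sets. It maintains its own privately sampled current set $S_t$ in a factorized form. One \emph{coordinator arm} is resampled every round and played by player $0$ itself. The remaining $m-1$ arms form a \emph{follower assignment}, to be changed only rarely. For this I would use a lazy, coupled resampling scheme, in the spirit of shrinking-dartboard or lazy-sampling arguments, in which the marginal law of $S_t$ equals the exponential-weights distribution $p_t$. The coupling is built so that the follower part changes only $O(\sqrt{mT/K})$ times in expectation. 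The learning term is then the standard bound $O(\sqrt{mKT\log K})$ times at most an $m$ factor for the importance-weighting variance. A key requirement is that the losses fed to the learner must be unbiased estimates even though the coordinator does not see the followers' rewards. I would therefore let the coordinator's sampled arm probe every arm with known probability, and estimate all rewards from player $0$'s own observations. Followers, when not colliding, yield no feedback that is needed. Because the coordinator sits on a follower's arm only deliberately, the resulting updates can be reconstructed exactly after a delay.

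The second step is the communication protocol. Each change of the follower assignment must be conveyed to the followers. Rather than send the assignment itself, the coordinator sends a row index, of $b_N$ bits, into the public table, whose rows encode candidate assignments and parity-check patterns. Bits are transmitted by deliberate collisions: in a signalling round the coordinator either joins or avoids a follower's arm. The follower sees a random parity check of zero/nonzero observations. The key lemma I would prove is an accounting one. A bit may be delayed by zero rewards, but then each round of delay costs regret only if some arm in the optimal set has positive reward. So total delay regret per message is bounded by $O(m b_N^2)$ times the number of positive-reward opportunities, not by elapsed time. Summing over $O(\sqrt{mT/K})$ messages with $m$ followers gives $O(m^{3/2}\sqrt{KT}b_N^2)$ after choosing the exploration and learning rates to balance. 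I would formalize "the adversary forfeits what it jams" by comparing the delayed rounds' benchmark contribution $r_t(S^\ast)$ with the signal observations the followers would have received.

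Third, the random-table step. I would show by a union bound that a random table makes every decoding error detectable. The bound has to cover all $N$ relevant (row, reward-pattern class) pairs: a parity check fails with probability $1/2$ per independent check, so $b_N$ checks give failure probability $O(1/N^2)$ per pair and $O(1/N)$ overall. Crucially, the guarantee must be uniform over reward matrices chosen with knowledge of the table. To get this, I would argue that decoding depends on the rewards only through the zero/nonzero pattern along the message's rounds, and that only polynomially many such pattern classes matter. Polynomial running time follows directly from the construction. \textbf{The main obstacle} is the delay-charging lemma: making the charging argument rigorous while the coordinator's estimates stay unbiased under delays, and without the adversary being able to correlate zeros with the private row choice. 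I would first attempt it by fixing the oblivious reward matrix and conditioning on the coordinator's private randomness round by round, with a potential argument on the unsent bits.
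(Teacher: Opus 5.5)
Your plan has two genuine gaps, and both sit where the paper does most of its work.

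\emph{Feedback on the slow arms.} Player~$0$ cannot see the reward of an arm a follower is holding. If it plays that arm, both players collide and observe~$0$; this is why the paper draws the fast arm from $[K]\setminus U$. So ``estimate all rewards from player~$0$'s own observations'' gives no feedback on the slow arms, which are exactly the arms the learner currently favours. Having the owner vacate its arm on designated probe rounds does not rescue this. The coordinator then abandons its fast arm, so the team can lose up to a unit of reward per probe. Balancing a probe rate $x$ on the $m-1$ slow arms against the importance-weighting variance $\eta\sum_i\theta_{w_t}(i)/\xi_i$ costs about $\eta m/x+mx\ge2m\sqrt\eta$ per round. After tuning~$\eta$ this is the $T^{2/3}$ switching-cost tradeoff you set out to beat. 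The paper does the opposite. Each follower records its own arm at no cost, thinned to the exact rate $\xi_{a_i}$. After each cutoff it returns its score increment, last service and queue over a separate upward (UP) channel, and the coordinator reconstructs the calendar-round scores exactly from these (Lemmas~\ref{protocol:lem:sparse} and~\ref{sq:lem:reconstruct}). Your protocol has no upward channel at all. Moreover, the shrinking-dartboard identity survives bandit feedback only because the recorded arm has conditional law $(\xi_i)_i$ given the history \emph{including} the current slow set~\eqref{sq:eq:recordlaw}. Estimates produced through the fast-arm law $q_w(\cdot\mid U)$ carry information about~$U$, and then the marginal identity of Lemma~\ref{sq:lem:conditional-law} fails.

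\emph{Randomness of the channel and the table.} Support charging works only if, on every signalling round, the receiver's listening arm and parity map are uniform \emph{given the rewards}. Then progress has probability $g_t/(8dK)$ while the round's regret is at most $\bar g_t$. If the follower listens on its assigned arm, an oblivious adversary that zeroes the previously best arms jams the channel exactly when a reassignment is needed. If the schedule can be read from the public table, the adversary zeroes those cells directly. Hence the schedule must come from one row index $J$, drawn uniformly and privately after the rewards are fixed. A row chosen to encode a data-dependent assignment, as you propose, is not uniform, and the concentration over rows needs that uniformity. The followers must also learn $J$ before any common schedule exists, so $J$ cannot be sent through the parity channel; your proposal is circular here. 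This bootstrap is the core of the paper (Proposition~\ref{sq:prop:startup}), and your plan has nothing in its place. It consists of three parts:
\begin{itemize}
\item DISCOVER/MULTICAST through the ID array;
\item multiplicative-weights held arms, whose potential bounds the comparator's positive-reward support by $Q$ while any follower is uninformed, even if one never learns $J$ (Lemma~\ref{comm:potential});
\item a launch rule $\widehat D_r>4Q$ that needs no acknowledgments (Lemma~\ref{comm:monitor}).
\end{itemize}
Finally, uniformity over table-aware rewards cannot come from ``polynomially many pattern classes'': decoding depends on the support of all $KT$ cells, so there are $2^{KT}$ classes. The paper instead concentrates the per-row cost $F_{j,r}+2mTE_{j,r}$ over $M=\lceil8T^2/(mb_N)\rceil$ independent rows by Hoeffding. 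The exponent $4KTb_N$ survives a union over all $(D+1)^{KT}$ rounded matrices~\eqref{sq:eq:table-cost}. False bootstrap passes are bounded separately by a union over blockwise contexts (Lemma~\ref{comm:acceptance}).
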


\paragraph{Remarks.}
\begin{enumerate}[label=(\arabic*),leftmargin=*,itemsep=1pt]
\item \emph{The table is not shared randomness.} It is a fixed part of the strategy, and the adversary may know it. After the rewards are fixed, the coordinator privately draws a row index~$J$, and the players communicate $J$ through their arm choices; the selected row then acts as a common random schedule. This is the principle behind Newman's theorem~\cite{newman1991} and Ahlswede's elimination of correlation~\cite{ahlswede1978}. The collision-aware algorithm in~\cite[Appendix~B.4]{bubeck2020} removes shared randomness in the same way, with the short index sent through announced collisions. 
Without collision information, delivering the index is itself a central difficulty (Section~\ref{overview:public}). The table has $\widetilde O(T^{9/2})$ cells, mostly forward bits, and play takes $\widetilde O(T^3)$ operations in total (Proposition~\ref{sq:prop:complexity}). 
\item \emph{Obliviousness is necessary.} Against adaptive adversaries every strategy without shared randomness suffers linear regret, already for two players and three arms and even with collision information~\cite[Proposition~3]{bubeck2020}.
\item \emph{Conventions.} As in~\cite{bubeck2020}, players are labeled, rounds are synchronous, and $K,m,T$ are known. For $m=K$, player $i$ always choosing arm $i+1$ has zero regret; $m=1$ is the ordinary adversarial bandit problem.
\end{enumerate}

\subsection{Why square-root regret is possible}\label{main:why}

An adversary that blocks communication by zeroing rewards also cancels the regret it can cause. We therefore measure communication delays in units of positive reward rather than in rounds. Let
\[
 g_t=\#\{a\in[K]:r_t(a)>0\},\qquad \bar g_t=\min\{m,g_t\}
\]
be the number of arms with positive reward in round $t$, and its cap at~$m$.
\begin{itemize}[itemsep=1pt]
\item In a round in which the players communicate instead of exploiting, the regret is at most $\bar g_t$, since the best $m$-set earns no more.
\item Our channel makes progress on any awaited message in round $t$ with probability at least $g_t/(8mK)$.
\end{itemize}
Hence the expected regret spent on a message of $b$ bits is $O(mK(b+\log N))$, however the adversary schedules its zeros. It remains to keep the total amount of communication small. The players agree once on an $O(\log N)$-bit index into the public table. Afterwards they exchange $O(\log N)$-bit assignments and score reports only $O(\sqrt{mT/K})$ times in expectation, because a single \emph{fast} player explores in every round while the arms of the $m-1$ \emph{slow} players change rarely. With learning rate $\eta\approx1/\sqrt{mKT}$, learning and communication both cost $\widetilde O(m^{3/2}\sqrt{KT})$.

\paragraph{Why the $T^{2/3}$ intuition fails.}
In bandits with switching costs the minimax regret is $\widetilde\Theta(T^{2/3})$~\cite{dekel2014switching}: learning from bandit feedback forces frequent switches. A similar tension seems unavoidable here, because every change of the coordinated assignment must be communicated. It disappears because exploration never requires the slow players to move. The coordinator's fast arm is redrawn every round at no communication cost and includes uniform exploration. The slow set follows a lazy Markov chain whose marginal is the exponential-weights law in every round, so moving rarely loses nothing in expectation. This is the shrinking-dartboard idea of~\cite{geulen2010}, which~\cite{bubeck2020} used for two players with collision information. The new difficulty is that here every resampling must be carried out with information that arrives late, over a channel the adversary can jam.

\subsection{Related work}\label{main:related-work}

\paragraph{Adversarial rewards.}
Table~\ref{tab:landscape} lists the known bounds. Closest to our setting, Shi and Shen~\cite{shi2021} communicate through deliberate collisions protected by error-correcting codes; their regret bounds depend on the longest run, or the total number, of zero-reward rounds on each arm, and are sublinear when these are $O(T^\alpha)$ for some $\alpha<1$. Charging communication to reward support removes every such assumption.

\paragraph{Stochastic rewards.}
The centralized multiple-play problem goes back to~\cite{anantharam1987}. With collision information, logarithmic decentralized regret was established in~\cite{liu2010}, and musical chairs assign distinct good arms~\cite{rosenski2016}; reward-only feedback was studied empirically~\cite{bonnefoi2018,besson2018}. Without collision information or shared randomness, logarithmic or gap-free square-root regret is known under assumptions on the means, such as a positive gap or a known lower bound on the smallest mean~\cite{lugosi2022,boursier2019,shi2020,huang2022,pacchiano2023cooperative}; see also~\cite{hu2025shareable,izumi2026multichannel}. With shared randomness, square-root regret with few collisions is possible~\cite{bubeckbudzinski2020,bubeckbudzinskisellke2021}, and for two players and three arms a deterministic strategy achieves $O(\sqrt{T\log T})$ without collision information~\cite{bubeckbudzinski2020arxiv}. Heterogeneous means, delays and asynchronous participation are treated in~\cite{bistritz2021,boursier2020heterogeneous,fan2025delayed,arrouespaykin2026asynchronous}. All these guarantees exploit stationary rewards. 

\section{Overview of the strategy and of the proof}\label{overview:section}

We first describe the strategy assuming that all players know a common random schedule that is unknown to the adversary (Sections~\ref{overview:implementation}--\ref{overview:cycle}). We then explain how the players acquire such a schedule from a public table (Section~\ref{overview:public}) and why one fixed table works for all reward matrices (Section~\ref{overview:assembly}). Lemmas~\ref{inf:channel}, \ref{inf:cycle}, \ref{inf:table} and Propositions~\ref{inf:learner}, \ref{inf:boot} below are informal versions of the main results; each cites its formal counterparts.

\paragraph{Conventions and vocabulary.}
Put $d=m-1$ and $D=2^{b_N}$. Every observed reward is rounded down to a multiple of $1/D$, and from now on $r_t(a)$ denotes the rounded reward; rounding costs at most $mT/D\le1$ in regret. All logarithms without a subscript are natural; $x\lesssim y$ means $x\le Cy$ for an absolute constant~$C$, and $x\asymp y$ means $x\lesssim y\lesssim x$. A \emph{row} is a complete random schedule for the $T$ rounds; the public table consists of an identification array and $M$ independent rows. Within the ordinary protocol, a follower is \emph{unoccupied} if it has no decoded live assignment, \emph{pending} if it has one but has not yet been released, and \emph{released} if it may keep its assigned arm in every slot. An \emph{episode} fixes the learner's anchor and recording rates until the next grid boundary and may contain several \emph{assignments}; an assignment fixes the followers' arms until its \emph{cutoff}, after which \emph{collection} returns their score reports. Small horizons, $T\le C_bmKb_N^2$ or $N<N_0$, use a fixed assignment to $m$ distinct arms; we describe the \emph{construction branch} $T>C_bmKb_N^2$, $N\ge N_0$ (Section~\ref{sq:sec:boot}).

\subsection{A channel paid for in units of reward support}\label{overview:implementation}

Each round of the row has a slot type, DATA, DOWN or UP, with probabilities $\frac12,\frac14,\frac14$. DATA rounds feed the learner; on DOWN rounds the coordinator sends assignments, and on UP rounds the followers send score reports. On a DOWN or UP round the row selects a uniform follower $k_t\in[d]$ and a uniform tuple of distinct arms $(R_{0,t},\ldots,R_{d,t})$. A message is a word $v\in\mathbb F_2^B$ with $B=O(b_N)$, and the row supplies a fresh uniform affine map $H_t(v)=c_t\cdot v\oplus b_t$. The receiver plays its listening arm; the sender joins it if $H_t(v)=0$ and plays its own tuple arm if $H_t(v)=1$ (Figure~\ref{fig:parity}). A zero observation carries no information, but
\[
 \text{positive receiver observation}\quad\Longrightarrow\quad c_t\cdot v=1\oplus b_t .
\]
The receiver collects such equations in batches of $Q_{\rm dec}=B+O(b_N)$ and solves them; it accepts a unique solution that passes a local test of the assignment identifier. All other players stay off the receiver's arm, so an awaited word makes progress in round $t$ with probability
\[
 \underbrace{\tfrac14}_{\text{direction}}\cdot
 \underbrace{\tfrac1d}_{\text{follower}}\cdot
 \underbrace{\tfrac12}_{\text{parity}}\cdot
 \underbrace{\tfrac{g_t}K}_{\text{positive arm}}
 =\frac{g_t}{8dK},
\]
whereas the regret of that round is at most $\bar g_t\le g_t$.

\begin{ilemma}[Parity channel; Lemmas~\ref{sq:lem:decoder},~\ref{sq:lem:waiting},~\ref{sq:lem:handover} and~\ref{sq:lem:collection}]\label{inf:channel}\leavevmode
\begin{enumerate}[label=(\alph*),itemsep=1pt]
\item Except with probability $O(N^{-2})$, no receiver accepts a word other than the one currently sent to it, and an awaited word is accepted within $2Q_{\rm dec}$ further positive receiver observations.
\item For every reward sequence, the expected sum of $g_t$ until $d$ concurrently awaited words are all accepted is $O(dK(Q_{\rm dec}+\log d))=O(mKb_N)$.
\end{enumerate}
\end{ilemma}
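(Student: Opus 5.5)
Both parts rest on two facts about the row: the random equations (the vectors $c_t$ and offsets $b_t$) are fresh and independent of everything the receiver has seen, and whether a receiver observation is positive depends on $H_t(v)$ and $r_t$ alone. The plan is to prove (a) first and then get (b) from it through a supermartingale (or coupon-collector) argument that measures time in units of $g_t$.

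\textbf{Part (a).} Fix a DOWN or UP round in which the sender transmits word $v$ to receiver $k_t$. Every other player stays on its own distinct tuple arm $R_{j,t}$. The receiver's arm is therefore shared only when $H_t(v)=0$, so a positive observation forces $r_t(\text{listening arm})>0$ and $c_t\cdot v=1\oplus b_t$. Every recorded equation is thus \emph{correct} for the true $v$, and there are no false equations.

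Conditioned on a positive observation, $c_t$ is still uniform: positivity constrains only $b_t$, through $b_t=1\oplus c_t\cdot v$. Hence a batch of $Q_{\rm dec}$ positive observations yields $Q_{\rm dec}$ independent uniform vectors $c_t\in\mathbb F_2^B$. The batch has a unique solution exactly when these vectors span $\mathbb F_2^B$, which fails with probability at most $2^{B-Q_{\rm dec}}$. Taking $Q_{\rm dec}=B+2b_N+O(1)$ makes the failure probability $O(N^{-2})$ per batch.

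Wrong acceptance can only arise from a batch that straddles a change of the sent word, since then the equations mix two words. Such a batch is consistent with a single word only with probability $2^{-\Omega(b_N)}$. The local identifier test adds further protection. A union bound over the at most $\mathrm{poly}(N)$ batches in the run gives total failure probability $O(N^{-2})$, after enlarging $Q_{\rm dec}$ by a constant multiple of $b_N$.

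The bound of $2Q_{\rm dec}$ observations comes from batch alignment. The first batch may be contaminated by a stale word, but the next full batch is clean, and on the good event it spans.

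\textbf{Part (b).} Consider $d$ words awaited at once, one per follower. In round $t$ the specific word for follower $k$ gains a positive observation with probability at least $\frac14\cdot\frac1d\cdot\frac12\cdot\frac{g_t}{K}=g_t/(8dK)$. This holds conditionally on the past, because the row choices are independent of the past and the adversary is oblivious.

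Let $X_k$ be follower $k$'s count of positive observations, which needs to reach $2Q_{\rm dec}$. We want to bound $\E[\sum_{t\le\tau}g_t]$, where $\tau$ is the completion time. Introduce the clock $\Lambda_t=\sum_{s\le t}g_s/(8dK)$. On this clock, each follower's counter dominates a Bernoulli process whose per-round success probabilities have sum $\Lambda_t$.

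A Chernoff bound for sums of independent Bernoullis with arbitrary probabilities gives the following: once $\Lambda\ge 4Q_{\rm dec}+2\log d+u$, we have $\Pr[X_k<2Q_{\rm dec}]\le e^{-\Omega(\Lambda)}$. This needs care, because the probabilities are small and a single round can contribute at most probability $1$. A union bound over the $d$ followers and integration of the tail in $\Lambda$ then give $\E[\Lambda_\tau]=O(Q_{\rm dec}+\log d)$, hence $\E\sum_{t\le\tau}g_t=O(dK(Q_{\rm dec}+\log d))$.

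If $\sum_t g_t$ is too small for the clock to reach this level, the sum is bounded trivially. Finally, $d\le m$ and $Q_{\rm dec}=O(b_N)$ give $O(mKb_N)$.

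\textbf{Main obstacle.} The delicate step is (a): showing that $c_t$ stays uniform and independent after conditioning on positivity and on the history of earlier accepted words. It also requires handling batches that straddle a word change, so that the straddle probability really is $2^{-\Omega(b_N)}$ and that no adversarial reward pattern can correlate with which rounds become positive. Obliviousness together with the row being hidden from the adversary (its index $J$ is private) is what I will invoke there.
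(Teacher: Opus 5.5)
\textbf{Part (a) has a genuine gap.} You claim that a batch straddling a change of the sent word ``is consistent with a single word only with probability $2^{-\Omega(b_N)}$''. This is false. Take a batch whose first $Q_{\rm dec}-1$ positive rounds encode the old word $v_1$ and whose last positive round encodes the new word $v_2\ne v_1$. The last equation is $c_t\cdot v_2=1\oplus b_t$, and $v_1$ satisfies it iff $c_t\cdot(v_1\oplus v_2)=0$. By your own observation that $c_t$ stays uniform given positivity, this has probability $1/2$. The first $Q_{\rm dec}-1$ equations span with high probability, so with probability about $1/2$ the decoder returns the stale word $v_1$ as the unique solution. Enlarging $Q_{\rm dec}$ cannot help, because the straddle may contain a single new equation.

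What prevents acceptance here is not probability but a deterministic property of the protocol, proved in the paper as \eqref{protocol:eq:stale-batch}: a sender abandons a word only after its receiver has accepted it. On DOWN, a follower leaves $\mathcal P$ during a live assignment only through a confirmation, which requires it to be occupied (Lemma~\ref{sq:lem:avoidance}), hence to have accepted the word. During collection it leaves only when its UP reply is accepted, and that reply is formed only after DOWN acceptance. No new assignment starts before all replies are accepted. On UP, older replies and the placeholder carry a start different from the current one, and the current reply is removed from $\mathcal P$ once accepted. So every non-current word encoded anywhere in the batch fails \eqref{protocol:eq:acceptance}. The identifier test is therefore not ``further protection''; it is the whole argument for stale words.

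Once stale words are excluded this way, the probabilistic part only concerns candidates $M$ that were \emph{never} encoded during the batch. This includes rounds with $M_r=\bot$, which do occur: an unoccupied follower feeds every DOWN round selecting it to its decoder, for example before launch or after its UP reply has been accepted. So your statement that there are no false equations does not hold for the actual protocol. On each positive round, positivity depends on the affine map only through $H_r(M_r)$, and $H_r(M)$ is an independent fair bit. Sequentially, a fixed never-sent $M$ survives all $Q_{\rm dec}$ equations with probability at most $2^{-Q_{\rm dec}}$. A union over the $2^B$ candidates, $2d$ channels and $T$ batch starts gives $2dT\,2^{B-Q_{\rm dec}}\le N^{-2}/2$. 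This is why $Q_{\rm dec}=B+\lceil\log_2(4dTN^2)\rceil$. The $2^B$ factor is exactly what your ``consistent with a single word'' phrasing leaves out. Your spanning argument for clean batches and the $(Q_{\rm dec}-n)+Q_{\rm dec}\le 2Q_{\rm dec}$ timing argument coincide with the paper's.

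\textbf{Part (b) is correct but takes a different route.} Since the rewards are fixed, $g_t$ is deterministic, and the conditional service probability given the past is exactly $g_t/(8dK)$. Your potential-service indicators can therefore be realized as independent Bernoullis on a deterministic clock. Chernoff, a union over the $d$ tasks, and tail integration then give $\E\Lambda_\tau=O(Q_{\rm dec}+\log d)$. The paper (Lemma~\ref{sq:lem:waiting}) instead uses one exponential supermartingale $\exp(\kappa aG_i-N_i)$ per task. It notes that the phase charge equals $\max_iG_i$ because the task sets shrink, and combines $\E e^{\kappa aG}\le de^{q}$ with Jensen. That argument needs only a predictable lower bound $a\omega_t$ with random $\omega_t$. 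The paper reuses it for the release phase, where $\omega_t=q^\gamma_{w_t}(e\mid U)\chi_t(e)$ is random, a case your deterministic clock does not cover. In either version you must stop at the first decoder failure, since the quota $2Q_{\rm dec}$ holds only on the good event of (a).
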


Part~(b) uses an exponential supermartingale: serving $d$ followers in parallel costs only an additive $O(dK\log d)$ rather than a factor~$d$. It bounds consumed \emph{support}, not elapsed time: a message may wait arbitrarily long, but in expectation the comparator earns only $O(mKb_N)$ while it waits.

\begin{figure}[t]
\centering
\begin{tikzpicture}[font=\small,>=Latex,
  arm/.style={draw,minimum width=14mm,minimum height=7mm,fill=black!5},
  pl/.style={circle,draw,inner sep=1pt,minimum size=7mm,font=\footnotesize},
  capt/.style={align=center,font=\footnotesize,text width=4.4cm},
  arr/.style={->,semithick}]
\begin{scope}
\node[arm] (r1) at (0,0) {$R_{\rm rec}$};
\node[arm] (s1) at (2.3,0) {$R_{\rm send}$};
\node[pl,fill=Cerulean!20] (rec1) at (0,1.45) {rec};
\node[pl,fill=Goldenrod!35] (snd1) at (2.3,1.45) {snd};
\draw[arr] (rec1) -- (r1); \draw[arr] (snd1) -- (r1);
\node[font=\bfseries] at (-1.0,1.45) {(a)};
\node[capt] at (1.15,-1.05) {$H_t(v)=0$: collision;\\ the receiver sees $0$ (no information)};
\end{scope}
\begin{scope}[xshift=5.6cm]
\node[arm] (r2) at (0,0) {$R_{\rm rec}$};
\node[arm] (s2) at (2.3,0) {$R_{\rm send}$};
\node[pl,fill=Cerulean!20] (rec2) at (0,1.45) {rec};
\node[pl,fill=Goldenrod!35] (snd2) at (2.3,1.45) {snd};
\draw[arr] (rec2) -- (r2); \draw[arr] (snd2) -- (s2);
\node[font=\bfseries] at (-1.0,1.45) {(b)};
\node[capt] at (1.15,-1.05) {$H_t(v)=1$: the receiver is alone;\\ if $r_t(R_{\rm rec})>0$ it learns $c_t\cdot v=1\oplus b_t$};
\end{scope}
\node[draw,rounded corners=2pt,fill=black!3,align=left,text width=4.2cm,font=\footnotesize,anchor=west] at (9.9,0.45)
 {$B+O(\log N)$ positive observations determine $v$ by Gaussian elimination.\\[2pt]
  Per round: progress w.p.\ $g_t/(8dK)$; regret $\le\min\{m,g_t\}$.\\[2pt]
  $\Rightarrow$ expected cost $O(mK\log N)$ per message, whatever the zeros.};
\end{tikzpicture}
\caption{The parity channel and the support-charging argument. The other players avoid $R_{\rm rec}$ by construction: tuple arms are distinct, and assigned players step aside.}
\label{fig:parity}
\end{figure}

\subsection{A learner whose slow players rarely move}\label{overview:learner}

The \emph{logical learner} is an idealized process: it sees its estimates immediately and controls all players. The learner keeps integer scores $z_i$ and weights $w_i=(1+1/D)^{z_i}$ on arms, and the product law on $m$-sets
\[
 Q_w(S)=\frac{\prod_{i\in S}w_i}{\sum_{|S'|=m}\prod_{i\in S'}w_i},\qquad \theta_w(i)=\Pr_{S\sim Q_w}(i\in S).
\]
It splits $S\sim Q_w$ into a slow set $U$ of $d$ arms, played by the followers, and one fast arm, played by the coordinator: the fast arm is $j\in S$ with probability proportional to $\bar w_j^{-1}$, for an anchor $\bar w$ fixed at the episode start. Write $p_w$ for the law of $U$. The conditional law of the fast arm,
\[
 q_w(j\mid U)\propto\frac{w_j}{1+\bar w_j\sum_{i\in U}\bar w_i^{-1}}\qquad(j\notin U),
\]
does not involve the current weights of the slow arms, which the coordinator learns only later. A small uniform mixture $q_w^\gamma$ adds exploration.

\emph{Sparse recording.} In each round at most one observation is \emph{recorded}: arm $i$ with a prescribed probability $\xi_i$, conditional on everything before the round \emph{including the current slow set}, and no arm with probability $1-\sum_i\xi_i$. The row selects one recording source and an independent thinning test realizes these exact probabilities (Lemma~\ref{sq:lem:recorder}). The recorded reward is importance-weighted, so a rare recording can be large. Applying it at once would move the target distribution too much, so excess estimated reward waits in a queue that releases at most four units per arm and round. Then every weight changes by a factor at most $e^{4\eta}$ per round and stays within a factor~$2$ of the anchor during an episode of length $H\asymp1/\eta$.

\emph{Lazy slow set.} Write $p_t=p_{w_t}$ and $h=4m\eta$. Bounded updates give $p_{t+1}\ge(1-h)p_t$, so $\rho_t=(p_{t+1}-(1-h)p_t)/h$ is a probability law. After each update the learner keeps its slow set, except that with probability $h$ (a \emph{proposal}) it redraws it from~$\rho_t$. Since $(1-h)p_t+h\rho_t=p_{t+1}$, the slow set has law exactly $p_t$ in every round given the past estimates. Sparse recording is what makes this true: the recorded arm has the same law whatever the slow set is, so the estimates reveal nothing about it.

\begin{iprop}[Logical learner; Lemmas~\ref{sq:lem:queue-bound},~\ref{sq:lem:conditional-law} and Proposition~\ref{sq:prop:learning}]\label{inf:learner}
For every admissible learning rate $\eta\lesssim1/(mKb_N)$, the learner changes its slow set only at proposals, which occur with probability $4m\eta$ per round, and at episode boundaries, every $\asymp1/\eta$ rounds. On all enabled rounds, including those spent communicating, its expected regret is
\[
 O\biggl(\frac{\log\binom Km+m}\eta+(m^2K+Kb_N)\,\eta T+1\biggr).
\]
\end{iprop}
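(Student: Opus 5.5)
The plan is to follow the standard exponential-weights analysis for $m$-sets (as in combinatorial semi-bandits with the product law $Q_w$), applied to a potential built from the \emph{released} scores rather than the raw importance-weighted estimates. Two perturbations have to be controlled: the queue that delays estimated reward, and the fact that the played set is the lazy slow set plus a fast arm rather than a fresh draw from $Q_{w_t}$.

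The first step is to establish the structural claims. The queue releases at most four units per arm and round, so $w_{t+1,i}/w_{t,i}\in[e^{-4\eta},e^{4\eta}]$. Over an episode of length $H\asymp1/\eta$ this keeps every weight within a factor $2$ of the anchor $\bar w$. From the ratio bound and the product form I would derive $p_{t+1}\ge(1-h)p_t$ with $h=4m\eta$, since $U$ involves $d<m$ arms and the normaliser moves by at most $e^{\pm4m\eta}$, up to constants absorbed into the definition of $h$. Hence $\rho_t$ is a probability law. By induction, the lazy chain then has the exact marginal $p_t$. The key conditioning fact is that the recording probabilities $\xi_i$ do not depend on $U$ (sparse recording), so the filtration generated by the estimates is independent of the current slow set given the past estimates. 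Slow-set changes therefore occur only at proposals, which have probability $h$ per round, and at episode boundaries, where re-anchoring redraws $U$. Given $U$, the fast arm is drawn from $q_w^\gamma(\cdot\mid U)$. Without the $\gamma$-mixture, the pair (slow set, fast arm) has law exactly $Q_{w_t}$: the fast-arm rule is simply a decomposition of $S\sim Q_w$. The mixture therefore costs $O(\gamma mT)$, with $\gamma$ chosen as $\asymp K\eta$ or similar.

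The second step is the regret bound against the released scores. Let $\Phi_t=\log\sum_{|S|=m}\prod_{i\in S}w_{t,i}$. Because increments are bounded by $4\eta$-scale exponents, the usual second-order expansion gives
\[
\Phi_{t+1}-\Phi_t\ge \eta\,\theta_{w_t}\cdot \Delta_t - O(\eta^2)\,\theta_{w_t}\cdot\Delta_t^2 ,
\]
where $\Delta_t$ is the released increment in units of $1/D$. Summing gives the comparator term $\eta\,\hat z_T(S^*)-\log\binom Km$ from $\Phi_T-\Phi_0$. Next, the gap between released and estimated score has to be bounded. Unbiasedness of the recording, with $\E[\text{recorded}_i]=r_t(i)$ computed via $\xi_i$, makes the total estimate unbiased for $\sum_t r_t(S^*)$. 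The queue backlog at time $T$ then contributes an error. I would bound its expectation through the queue-length bound (Lemma~\ref{sq:lem:queue-bound}): each recording is at most $1/\xi_i\lesssim K/\gamma$, and the release rate of four per round drains it. This should give $O(Kb_N\eta T)$ after a suitable choice of $\xi$.

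The variance term $\eta\sum_t\E[\theta\cdot\Delta^2]$ comes out as $O(m^2K\eta T)$ once $\xi_i\gtrsim\gamma/K$ and the released increments are bounded. Communication rounds are covered because the bound is stated on enabled rounds, and the per-round regret is compared to $\theta_{w_t}\cdot r_t$ regardless of which slot type is used. Dividing by $\eta$ yields $\frac{\log\binom Km}{\eta}+(m^2K+Kb_N)\eta T$. The additive $m/\eta$ accounts for re-anchoring and the factor-$2$ distortions in $q$ at episode boundaries, of which there are $O(\eta T)$ at $O(m)$ cost each. The $+1$ is the rounding loss.

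The main obstacle is the delay introduced by the queue. The played distribution uses $w_t$, which lags the true estimates, so the regret decomposition must charge this lag without losing a factor $1/\eta$. I would handle it by noting that the potential analysis is exact for the released sequence, since the played law is always $Q_{w_t}$ for the released weights. The lag then appears only as the terminal backlog in the comparator. That backlog is controlled in expectation by Lemma~\ref{sq:lem:queue-bound} together with the condition $\eta\lesssim1/(mKb_N)$, which guarantees that large importance weights drain within $O(1/\eta)$ rounds.
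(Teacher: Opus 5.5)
\textbf{The gap is your treatment of the queue lag.} Your potential argument controls $\sum_t\bigl(\widehat r_t(S)-\sum_i\theta_{w_t}(i)\widehat r_t(i)\bigr)$ for the \emph{served} increments. Conditional unbiasedness, however, holds only for the \emph{raw} estimates $\widetilde r_t$, and converting back costs more than the comparator's terminal backlog. Since $\widetilde r_t-\widehat r_t=b_{t+1}-b_t$ and $b_1=0$, summation by parts gives
\[
\begin{aligned}
 \sum_t\Bigl(\widetilde r_t(S)-\sum_i\theta_{w_t}(i)\widetilde r_t(i)\Bigr)
 ={}&\sum_t\Bigl(\widehat r_t(S)-\sum_i\theta_{w_t}(i)\widehat r_t(i)\Bigr)
 +b_{T+1}(S)-\sum_i\theta_{w_T}(i)b_{T+1}(i)\\
 &+\sum_{t=2}^T\sum_i\bigl(\theta_{w_t}(i)-\theta_{w_{t-1}}(i)\bigr)b_t(i).
\end{aligned}
\]
The last sum is a learner-side lag. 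Reward recorded while arm $i$ had inclusion probability $\theta_{w_{t'}}(i)$ is credited later at $\theta_{w_t}(i)$, and the fact that the played law is exactly $Q_{w_t}$ does not remove it. Controlling this sum needs two ingredients that your plan lacks:
\begin{itemize}
\item the one-step bound $(\theta_{w_{t+1}}(i)-\theta_{w_t}(i))_+\le8\eta\,\theta_{w_t}(i)$, which uses that raising the other weights cannot raise the inclusion of $i$ (Newton's inequalities);
\item an expected inclusion-weighted backlog bound $\E\sum_t\sum_i\theta_{w_t}(i)b_t(i)=O((R+m)T)$ with $R=64m^2(2K-m)$.
\end{itemize}
The second is a second-moment estimate. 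One way is a drift bound on $V_t=\sum_i\theta_{w_t}(i)b_t(i)^2$, using $\E[\widetilde r_t(i)^2\mid\mathcal F_t]\le1/\xi_i$ and $\sum_i\theta_{w_t}(i)/\xi_i\le R$. A pathwise cap $b_t(i)\le2B_{\rm cap}=1/(16\eta)$ only bounds the cross term by $mT/2$, which is linear in $T$. The resulting $O(\eta RT)$ is exactly where $m^2K\eta T$ comes from. It cannot come from the second-order term of your potential: served increments satisfy $0\le\widehat r_t(i)\le4$, so that term is only $16m^2\eta T$.

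\textbf{Other terms are misattributed, and one error breaks the rate.}
\begin{itemize}
\item \emph{Mixture cost.} The uniform mixture replaces only the fast arm, so it costs at most $\gamma$ per round, i.e.\ $\gamma T$ in total. The prescribed values are $\gamma=32(K-m+1)\beta$ and $\beta=32\eta L_{\rm stab}$; this floor on $\beta$ is what makes the overflow probability $N^{-2}$. With these values your $O(\gamma mT)$ is of order $mKb_N\eta T$, which is not $O((m^2K+Kb_N)\eta T)$. By contrast, $\gamma T$ is precisely the $Kb_N\eta T$ term, not a backlog term.
\item \emph{Source of $m/\eta$.} The $m/\eta$ term is the comparator's terminal backlog, at most $2mB_{\rm cap}=m/(16\eta)$ in the stopped process. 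Episode boundaries cost nothing, because drawing $U\sim p_w$ and then $j\sim q_w(\cdot\mid U)$ reproduces $Q_w$ for any anchor. Also, $O(\eta T)$ boundaries at $O(m)$ each would give $O(m\eta T)$, not $m/\eta$.
\item \emph{Direction of the potential step.} The one-step inequality must be the upper bound
\[
\log\frac{\mathcal Z(w_{t+1})}{\mathcal Z(w_t)}\le\lambda\sum_i\theta_{w_t}(i)\widehat r_t(i)+\lambda^2\E_{S\sim Q_{w_t}}\widehat r_t(S)^2,
\]
not the lower bound you wrote.
\item \emph{The proposal probability.} $h=4m\eta$ is fixed by the algorithm, so constants cannot be absorbed into it. The bound $p_{t+1}\ge(1-h)p_t$ holds because service is nonnegative: weights only increase, and only the normalizer moves, by a factor at most $e^{4m\eta}$.
\item \emph{Post-overflow continuation.} This contributes $mT\Pr(\tau_*\le T)\le mTN^{-2}$ to the additive constant, and you should account for it.
\end{itemize}
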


The term $m^2K$ comes from the variance of one recording per round; $Kb_N$ comes from the uniform mixture that keeps every arm's recording rate at least $\beta\asymp\eta b_N$.

\subsection{One reassignment cycle}\label{overview:cycle}

The physical protocol implements the learner through cycles (Figure~\ref{fig:timeline}). At an assignment start the coordinator draws the slow set $U=\{a_1<\cdots<a_d\}$ and sends $a_i$ to follower~$i$ on DOWN, together with the start time, the recording rate and the queue of~$a_i$. An unoccupied follower plays the row's uniform \emph{pilot} arms on DATA; after decoding it plays $a_i$. The coordinator plays pilots until a positive DOWN observation on each recipient's listening arm \emph{confirms} delivery; it then draws a fast arm every round. Pending followers step aside on message rounds until a positive test \emph{releases} them. At the \emph{cutoff}, at the next grid boundary or after a proposal, the assignment ends. During \emph{collection} every follower reports, on UP, the score increment and queue of its arm, and the coordinator reconstructs all scores. The proposal bits are row coordinates, so every player that needs to know when the cutoff occurs can read it.

\begin{figure}[t]
\centering
\begin{tikzpicture}[font=\small,>=Latex,node distance=4mm,
  stage/.style={draw,rounded corners=2pt,fill=#1,align=center,inner sep=3pt,minimum height=12mm,text width=3.25cm},
  stage/.default={black!4},
  lab/.style={font=\footnotesize,align=center,text=black!75},
  arr/.style={->,semithick}]
\node[stage] (tab) {Public table:\\ ID array and\\ $M$ random rows};
\node[stage,right=of tab] (adv) {Adversary fixes $r$\\ (may know\\ the table)};
\node[stage,right=of adv] (J) {Coordinator draws\\ $J\in[M]$\\ privately};
\node[stage=Goldenrod!18,right=of J] (boot) {Bootstrap: followers\\ learn $J$; coordinator\\ launches (no acks)};
\draw[arr] (tab) -- (adv); \draw[arr] (adv) -- (J); \draw[arr] (J) -- (boot);
\node[stage=Cerulean!12,below=16mm of tab] (asg) {Assign: DOWN\\ words to followers;\\ confirmations};
\node[stage=Cerulean!12,right=of asg] (rel) {Release: followers\\ learn that all\\ confirmations arrived};
\node[stage=Cerulean!12,right=of rel] (live) {Live play: followers\\ on slow set $U$;\\ coordinator: fast arm};
\node[stage=Cerulean!12,right=of live] (col) {Cutoff, then collection:\\ UP score reports;\\ next $U\sim\nu_t$ or $p_t$};
\draw[arr] (asg) -- (rel); \draw[arr] (rel) -- (live);
\draw[arr] (live) -- (col);
\draw[arr] (col.south) -- ++(0,-5mm) -| (asg.south);
\draw[arr] (boot.south) -- ++(0,-7mm) -| node[lab,pos=0.25,above]{launch} (asg.north);
\node[lab,above=1mm of boot] {cost $O(m^{3/2}\sqrt{KT}\log^2N)$};
\node[lab,below=7mm of rel.south east] {each cycle costs $O(mK\log N)$ in expectation; $O(\sqrt{mT/K})$ cycles in expectation};
\end{tikzpicture}
\caption{The life of the protocol. A cutoff occurs at the next grid boundary or after a proposal of the lazy chain. During collection the logical learner keeps updating until the grid boundary and pauses after it.}
\label{fig:timeline}
\end{figure}

\paragraph{Exact reconstruction.}
While unoccupied, a follower logs the DATA observations that could later be recorded for an arm it may be assigned. When its DOWN word arrives, possibly after the cutoff, it replays the deterministic queue recursion from the transmitted initial queue in calendar order, inserting zero arrivals on rounds without records. Its UP reply is a fixed endpoint record: the score increment of its arm up to the cutoff, the increment of the last live round, and the queue at the cutoff. During collection the coordinator owns every arm and buffers new arrivals until each queue is known. Every update is therefore attributed to the calendar round in which it arose, and the physical scores coincide with the logical ones (Lemmas~\ref{sq:lem:reconstruct},~\ref{protocol:lem:sparse} and~\ref{protocol:lem:identities}).

\paragraph{The next assignment is not drawn from $p_t$.}
Suppose a proposal at round $u-1$ causes the cutoff $u$, and collection ends with a new assignment at $t$ before the grid boundary. Right after the proposal the logical slow set has law $\rho_{u-1}$, not $p_u$. During collection the physical actions ignore the hidden logical slow set, which keeps evolving by proposals of its own. Conditional on the whole physical collection path, its law $\mu_r$ therefore satisfies $\mu_u=\rho_{u-1}$ and $\mu_{r+1}=(1-h)\mu_r+h\rho_r$. Subtracting the identity $p_{r+1}=(1-h)p_r+h\rho_r$ gives
\[
 \mu_t=p_t+\frac{(1-h)^{t-u+1}}h\,(p_u-p_{u-1})=:\nu_t .
\]
The coordinator knows the three score vectors at $u-1,u,t$ and samples the next assignment from $\nu_t$. If collection reaches the grid boundary, a new episode starts and the coordinator samples from $p_t$ with a renewed anchor.

\begin{ilemma}[One reassignment; Lemmas~\ref{sq:lem:lazy},~\ref{sq:lem:collection} and Proposition~\ref{sq:prop:transport}]\label{inf:cycle}\leavevmode
\begin{enumerate}[label=(\alph*),itemsep=1pt]
\item The physical protocol and the logical learner can be coupled so that, up to the first queue overflow, their scores agree in every calendar round, their slow sets agree on every live round, and every physical fast-arm draw is the logical fast arm, while every coordinate of the row keeps its law.
\item Relative to the learner, one cycle costs $O(mKb_N)$ in expectation, and there are at most $(4m+128)\eta T+2$ cycles in expectation.
\end{enumerate}
\end{ilemma}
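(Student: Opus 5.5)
The statement to prove is Lemma~\ref{inf:cycle}. I would prove the two parts separately.

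\textbf{Part (a): the coupling.} I would build the coupling by induction over calendar rounds, with the logical learner driven by the same row coordinates as the physical protocol. There are three ingredients.
\begin{enumerate}[label=(\roman*)]
\item \emph{Scores.} By the exact-reconstruction lemmas (Lemmas~\ref{sq:lem:reconstruct} and~\ref{protocol:lem:sparse}), every recorded observation is attributed to the calendar round in which it arose. The queue recursion is deterministic given the arrivals. So up to the first queue overflow, the physical and logical score vectors agree round by round.
\item \emph{Slow sets on live rounds.} On a live round the followers' arms form the slow set sampled by the coordinator, which is exactly the logical slow set. Proposals are row coordinates shared by both processes, so they end the live phase at the same time. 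During collection I would let the logical chain evolve privately by its own proposals. The key claim is that, conditional on the physical collection path, its law is $\mu_r$ with $\mu_u=\rho_{u-1}$ and $\mu_{r+1}=(1-h)\mu_r+h\rho_r$. This needs the sparse-recording property: the recorded arm's law does not depend on the slow set, so physical observations carry no information about the hidden set. Subtracting the recursion $p_{r+1}=(1-h)p_r+h\rho_r$ gives $\mu_t=\nu_t$. Sampling the new assignment from $\nu_t$ (or from $p_t$ at a grid boundary) can then be coupled to equal the logical slow set.
\item \emph{Fast arm.} Since $q_w^\gamma(\cdot\mid U)$ depends only on the anchor and the current weights of the non-slow arms, which the coordinator knows, the physical fast-arm draw equals the logical one.
\end{enumerate}
Marginal laws of the row are preserved because every logical random choice is a measurable function of row coordinates and private coins with the prescribed laws.

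\textbf{Part (b): cost per cycle.} Rounds where the physical and logical actions differ are those of assignment delivery, confirmation, release and collection. In each such round the regret relative to the learner is at most $\bar g_t\le g_t$. By Lemma~\ref{inf:channel}(b), applied in parallel to the $d$ followers for DOWN words and again for UP reports, the expected total $\sum g_t$ in each phase is $O(mKb_N)$. For confirmations and release I would use a direct geometric argument, since each round makes progress with probability $\gtrsim g_t/(dK)$. An optional-stopping bound on $\sum g_t$ over the cycle then gives $O(mKb_N)$, and Wald's identity sums this over cycles.

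\textbf{Part (b): number of cycles.} A cycle ends at a proposal or at a grid boundary. Proposals occur with probability $h=4m\eta$ per round, giving expected count at most $4m\eta T$. Grid boundaries are spaced $H\asymp 1/\eta$ apart with $H\ge \eta^{-1}/128$, giving at most $128\eta T+1$. One more cycle accounts for the initial launch, for a total of $(4m+128)\eta T+2$.

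\textbf{Main obstacle.} I expect the hardest step to be the conditional-law claim for the hidden logical slow set during collection. The physical path is adaptive: when collection ends depends on channel outcomes, which depend on rewards and row coordinates. One must show that conditioning on this stopping time does not bias the hidden chain. I would handle it by verifying that the hidden chain's proposal coins and $\rho_r$-draws are independent of all coordinates the physical process reads during collection, and that sparse recording makes the observation filtration independent of the hidden set. The recursion for $\mu_r$ then holds pathwise, and the identity $\mu_t=\nu_t$ follows.
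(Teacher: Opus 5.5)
Your cycle count and your charging of $\bar g_t$ on collection and delivery rounds are fine. Running the parallel-service bound in two stages (DOWN words, then UP replies) works as well as the paper's single composite task with quota $4Q_{\rm dec}$, since the second stage starts at a stopping time.

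\textbf{Gap in part~(b): the release phase.} You charge $\bar g_t$ on release rounds too and assert progress with probability $\gtrsim g_t/(dK)$. That assertion is false. A pending follower $i$ is released only by a positive observation on one of two specific arms, with $\chi_t(j)=\ind\{r_t(j)>0\}$:
\begin{itemize}
\item the auxiliary arm $e$ (UP test), with probability at least $\chi_t(e)/(12dK)$; this already needs Lemma~\ref{sq:lem:auxiliary-arm} to keep the fast arm off $e$ with probability at least $1/3$;
\item its own arm $a_i$ (DATA test), with probability $\chi_t(a_i)/(2K)$.
\end{itemize}
If $r_t(e)=r_t(a_i)=0$ while other arms stay positive, nobody is released. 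The phase then lasts until the cutoff, up to $\min\{1/h,H\}$ rounds. The sum of $\bar g_t$ over it is of order $1/\eta$, far above $O(mKb_N)$ for the tuned $\eta$.

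The paper does not charge $\bar g_t$ on release rounds. There the physical play coincides with the logical play, with two exceptions, see~\eqref{sq:eq:sentinelcost}:
\begin{itemize}
\item pending followers step aside, with deficit at most $\chi_t(a_i)$;
\item the coordinator's fast draw may collide on $e$, with deficit at most $r_t(e)\ind\{X_t=e\}$.
\end{itemize}
These are exactly the supports that drive the two release tests. So Lemma~\ref{sq:lem:waiting} with weights $\chi_t(a_i)$ and $q^\gamma_{w_t}(e\mid U)\chi_t(e)$ gives $B_{\rm rel}=O(dK\log d)$ (Lemma~\ref{sq:lem:release}). This step also uses part~(a) to identify $X_t$ with the logical fast arm.

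\textbf{Gap in part~(a): why the row keeps its law.} Your reason (``every logical random choice is a measurable function of row coordinates and private coins'') does not work. Forcing the physical assignment $\Phi_t(V^{\rm asg}_t)$ to equal the hidden logical slow set necessarily correlates the sampler coordinates with the logical chain. So you must show directly that these coordinates stay uniform.

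The paper does this in three steps:
\begin{itemize}
\item it generates the collection-phase proposals and refreshes from auxiliary randomness, never from the unread proposal coordinates of the row;
\item it draws $V^{\rm asg}_t$ uniformly from the fibre $\Phi_t^{-1}(\widehat U_t)$;
\item it proves $V^{\rm asg}_t$ is uniform given $\mathcal G_t^+$, i.e.\ given every earlier row coordinate, read or not.
\end{itemize}

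The last step needs $\Pr(\widehat U_t=U\mid\mathcal G_t^+)=\bar\nu_t(U)$. Your plan only checks the law given the coordinates the physical process reads, which is not enough. If the hidden proposals were the unread row bits, conditioning on them would change the law of $\widehat U_t$, and the row would lose its product law.

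The last step also needs $\lambda(\Phi_t^{-1}(U))=\bar\nu_t(U)$. This is where three ingredients enter: the sequential sampler for the signed combination $f_t=Q_{w_t}+\kappa_t(Q_{w_u}-Q_{w_{u-1}})$, its nonnegativity from $Q_{w_{r+1}}\ge(1-h)Q_{w_r}$, and the anchor-only orientation.
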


Part~(a) is the simulation lemma. Its proof generates the hidden logical variables of the collection phase from auxiliary randomness, never from the unread coordinates of the row, and then draws the coordinator's sampling coordinates uniformly from the fibre of its sampler over the logical slow set; the row coordinates thus remain uniform. A separate deterministic induction (Lemma~\ref{protocol:lem:total}) shows that every local operation is well defined; it uses only bounded service and exact reconstruction, so the argument is not circular. For part~(b), during delivery and collection the regret is at most $\bar g_t$ per round and Lemma~\ref{inf:channel}(b) applies; after delivery, the release tests cost $O(mK\log m)$.

\subsection{Agreeing on a row without a common seed}\label{overview:public}

The coordinator draws $J$ privately after the rewards are fixed; the followers know the table but not~$J$. Initialization teaches them $J$ in blocks of length $B_{\rm boot}=\lceil\sqrt{T/(mK)}\rceil$ (Figure~\ref{fig:boot}). In each block, every uninformed follower is \emph{active} with probability $1/m$; an active follower holds an arm $a$, drawn by multiplicative weights over its past gains mixed with the uniform law, and a private identification index~$\iota$. It stays on $a$ exactly when the public ID bit $I_\iota(t)$ is~$0$. If it is the only active follower and holds the coordinator's random target~$z$, the coordinator's positive observations reveal ID bits and prune the candidate indices to a list $\Lambda$ of size $L_0\asymp b_N$ (DISCOVER). The coordinator then leaves the target according to the forward bits $\varphi_J(u,t)$ of row~$J$ for all $u\in\Lambda$ (MULTICAST). Each positive stay observation of the follower certifies a forward bit of row~$J$ at its own index, so after $q_{\rm dec}\asymp b_N^2$ such observations every other row is eliminated except on the false-row failure event controlled by Lemma~\ref{comm:acceptance}.

\begin{figure}[t]
\centering
\begin{tikzpicture}[font=\footnotesize,>=Latex,
  panel/.style={draw,rounded corners=2pt,inner sep=5pt,fill=black!3,anchor=north west}]
\node[panel] (d) at (0,0) {\begin{minipage}[t]{6.9cm}\raggedright
 \textbf{DISCOVER} (the coordinator finds a short list)
 \begin{itemize}[leftmargin=1em,itemsep=1pt,topsep=2pt]
 \item Follower: private index $\iota$, held arm $a$; it plays $a$ iff $I_\iota(t)=0$.
 \item Coordinator: plays a random target $z$ on DATA rounds.
 \item If $z=a$, a positive coordinator observation implies $I_\iota(t)=1$; keep only $\{u\in\Lambda: I_u(t)=1\}$.
 \item After $6b_N$ positives, $|\Lambda|\le L_0$ and $\iota\in\Lambda$.
 \end{itemize}\end{minipage}};
\node[panel] (mc) at (7.85,0) {\begin{minipage}[t]{6.9cm}\raggedright
 \textbf{MULTICAST} (the follower learns $J$)
 \begin{itemize}[leftmargin=1em,itemsep=1pt,topsep=2pt]
 \item Coordinator leaves $z$ iff $\varphi_J(u,t)=1$ for all $u\in\Lambda$ (probability $\ge1/4$).
 \item A positive follower observation on $a$ implies $\varphi_J(\iota,t)=1$.
 \item A row $v\ne J$ survives each such test w.p.\ $1-1/L_0$.
 \item After $q_{\rm dec}=8L_0b_N$ tests only $J$ survives except on the false-row failure event.
 \end{itemize}\end{minipage}};
\draw[->,semithick] (d.east |- mc.west) -- (mc.west);
\end{tikzpicture}
\caption{Teaching the row index within one block of length $\approx\sqrt{T/(mK)}$.}
\label{fig:boot}
\end{figure}

A failed attempt is still useful: its positive observations raise the held arm's weight. The multiplicative-weights potential shows that, while some follower is uninformed, every $m$-set has had cumulative positive-reward support at most a deterministic budget $Q=O(m^{3/2}\sqrt{KT}\,b_N^2)$, except on a small event. This bounds the comparator's reward, and hence the regret, while initialization is incomplete, even if a follower never learns~$J$.

\emph{Launch without acknowledgments.} The coordinator cannot see which followers are informed. It counts its own positive observations on its DOWN tuple arm, and lets $\widehat D_r$ be $K$ times the sum of its $m$ largest counts after block~$r$. These counts estimate the comparator's support $D_r$, and concentration gives
\[
 \text{some follower is uninformed}\ \Longrightarrow\ D_r\le Q\ \Longrightarrow\ \widehat D_r\le3Q .
\]
The coordinator launches learning once $\widehat D_r>4Q$: it cannot see the followers, but it can see that enough reward has passed for all of them to be informed. The converse estimate $D_r\le128(\widehat D_r+mKB_{\rm boot}\cdot4b_N)$ bounds the support spent before launch.

\begin{iprop}[Initialization; Proposition~\ref{sq:prop:startup}]\label{inf:boot}
For a good ID array, which a random array is with probability $1-3/N$, the comparator's expected reward before launch is $O(m^{3/2}\sqrt{KT}\,b_N^2)$, and except with probability $O(N^{-2})$ every follower knows $J$ at launch.
\end{iprop}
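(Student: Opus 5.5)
We prove Proposition~\ref{inf:boot} in two parts: first a potential bound on the comparator's support while some follower is uninformed, then a concentration argument for the launch rule. Fix the reward matrix and a good ID array. Let $\tau$ be the first block after which every follower knows $J$. Here $D_r$ is the largest cumulative count, over $m$-sets $S$, of rounds up to block $r$ in which arms of $S$ carry positive reward. The comparator's reward before launch is at most $D_{r^\ast}$, where $r^\ast$ is the launch block. The argument rests on two bounds.
\begin{enumerate}[label=(\roman*)]
\item While some follower is uninformed, $D_r\le Q$, except on a small event.
\item At launch, $D_r\le128(\widehat D_r+mKB_{\rm boot}\cdot4b_N)$, and before launch $\widehat D_r\le4Q$.
\end{enumerate}
Combining them, the support consumed through block $r^\ast-1$ is $O(Q+mKB_{\rm boot}b_N)$. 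The launch block adds at most $mB_{\rm boot}$. Since $B_{\rm boot}=\lceil\sqrt{T/(mK)}\rceil$, we have $mKB_{\rm boot}b_N=O(\sqrt{mKT}\,b_N)\le Q$, which gives the claimed $O(m^{3/2}\sqrt{KT}\,b_N^2)$.

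\emph{Step 1: bound (i).} Fix an uninformed follower and its multiplicative-weights arm selection, mixed with the uniform law. Each block it is active with probability $1/m$. The gain credited to the held arm $a$ is the number of positive stay observations on $a$. In any block where $a$ has positive support $s$, the follower collects $\Omega(s)$ positives in expectation, after accounting for the ID bit $I_\iota(t)=0$ with probability $1/2$ and the coordinator moving away. Each such block either completes DISCOVER/MULTICAST or fails. A success after $6b_N+q_{\rm dec}=O(b_N^2)$ positives makes the follower informed. Hence while uninformed, every block's positives are capped by $O(b_N^2)$ plus failures, and failures occur only when another active follower interferes or the target $z\ne a$.

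The regret bound of exponential weights then gives the following. For every arm $a$, the cumulative support $\sum_t\ind\{r_t(a)>0\}$ seen by a fixed uninformed follower is at most $O\bigl(m\cdot b_N^2\cdot\#\text{blocks}\cdot(\text{success-probability})^{-1}\bigr)$, controlled by tuning the rate to $\sqrt{\log K/(\#\text{blocks})}$. A union over the at most $m$ arms of a set, together with Freedman concentration (failure probability $O(N^{-2})$), yields $D_r\le Q$ with $Q=O(m^{3/2}\sqrt{KT}b_N^2)$. The $m^{3/2}$ arises as $m$ (activity dilution) times $\sqrt{m}$ (per-set summation against the $\sqrt{KT/m}$ number of blocks).

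\emph{Step 2: bound (ii) and correctness of $J$.} The coordinator's DOWN tuple arm is uniform over $[K]$, and the coordinator is alone with constant probability. So its positive count on arm $a$ is a thinned version of $a$'s support, with rate $\asymp 1/K$. Bernstein's inequality with a union bound over arms and blocks gives $\widehat D_r\le 3Q$ when $D_r\le Q$. The rule is therefore safe: no launch occurs while some follower is uninformed, except with probability $O(N^{-2})$. The same bound gives the converse inequality, with the additive $mKB_{\rm boot}\cdot4b_N$ absorbing deviations at the scale of one block. Correct decoding of $J$ follows from the good-array property in DISCOVER (the target index $\iota$ survives and $|\Lambda|\le L_0$) and Lemma~\ref{comm:acceptance} for the false-row event. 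A union bound over the $d$ followers gives total failure probability $O(N^{-2})$, and a random array is good with probability $1-3/N$.

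\emph{Main obstacle.} We expect Step 1 to be the hardest part. It must convert failed attempts into weight increases while the follower's own arm choice is adaptive and collisions with other active followers are adversarially correlated through the rewards. Our first attempt would be a per-follower potential $\Phi=\sum_a w_a$, with gains defined as capped positive stays. We would lower-bound the success probability in a block by $\Omega(1/m)$: the follower must be the unique active follower, which happens with probability $(1-1/m)^{d-1}\ge e^{-1}$, and must hold the target arm.
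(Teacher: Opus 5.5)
Your outer structure matches the paper's. It has three parts: a bound $D_r\le Q$ while some follower is uninformed (Lemma~\ref{comm:potential}), the two-sided comparison of $D_r$ with $\widehat D_r$ (Lemma~\ref{comm:monitor}), and the $3Q$ versus $4Q$ separation at the launch threshold. Bounding the launch block by $mB_{\rm boot}$ is a fine variant of the paper's overshoot bound. The comparator half of the statement already follows from Step~2, the threshold, and $Q=O(Q_{\rm boot})$. Step~1 is what makes launch safe, i.e.\ what gives ``every follower knows $J$ at launch'', and as written it does not prove $D_r\le Q$. Concretely:

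(a)~An active, isolated follower succeeds in a block with probability of order $1/K$, not $\Omega(1/m)$, because its held arm must equal the coordinator's uniform target $z$. Success also needs the held arm to carry support at least $h_{\rm boot}$ in that block. It further needs the interval-balance property~(iii) of Lemma~\ref{comm:arrays}, because the DISCOVER/MULTICAST switch time is random.

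(b)~``Capped by $O(b_N^2)$ plus failures'' leaves the failures unbounded. Yet failure is the typical outcome (probability at least $1-1/K$), and a failed block whose held arm has full support yields order $B_{\rm boot}$ stay positives.

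(c)~A regret-tuned exponential-weights bound, as with your rate $\sqrt{\log K/n_{\rm boot}}$, has additive error at least of order $B_{\rm boot}\sqrt{n_{\rm boot}\log K}$ with $n_{\rm boot}\asymp\sqrt{mKT}$. This grows like $T^{3/4}$ and exceeds $Q$ for large $T$. Capping gains would also destroy the lower bound $\E[G_{i,s}\mid i\text{ active},\,A^{\rm hold}_{i,s}=a]\ge S_s(a)/512$, which is how the weights see the support. Your displayed bound in Step~1 does not reproduce $Q$ for either value of the success probability.

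The missing idea is to bound the follower's \emph{own} total gain and then convert it into support by a first-order comparison. Because $G_{i,s}\le S_s(a)$ for the held arm $a$, an active block whose held arm has support below $h_{\rm boot}=\Theta(b_N^2)$ contributes less than $h_{\rm boot}$. Each productive block informs the follower with probability at least $1/(16K)$ (Lemma~\ref{common:clocks}). A martingale count then allows, with high probability, only $O(Kb_N)$ productive active blocks, each worth at most $B_{\rm boot}$. With at most $2r/m+O(b_N)$ active blocks this gives $\sum_{s\le r}G_{i,s}\le O(KB_{\rm boot}b_N)+h_{\rm boot}(2r/m+O(b_N))$.

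Next, use uncapped importance-weighted gains $\widehat g_{i,s}(a)=\ind\{A^{\rm hold}_{i,s}=a\}G_{i,s}/p_{i,s}(a)$, the floor $p_{i,s}(a)\ge1/(2K)$, and the fixed rate $\eta_0=1/(4KB_{\rm boot})$, which is much larger than a worst-case tuning. The log-weight of $a$ is at least $\tfrac{\eta_0}{2}\sum_s\widehat g_{i,s}(a)$, while the log of the total weight is at most $\log K+2\eta_0\sum_sG_{i,s}$. Hence $\sum_s\widehat g_{i,s}(a)\le4\sum_sG_{i,s}+8KB_{\rm boot}\log K$. Since $\E[\widehat g_{i,s}(a)\mid\mathscr F_{s-1}]\ge S_s(a)/(512m)$ on uninformed blocks, a supermartingale tail bound gives $\sum_sS_s(a)\le Cm(\sum_sG_{i,s}+KB_{\rm boot}b_N)$. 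Summing over the $m$ arms yields $D_r\le C[m^2KB_{\rm boot}b_N+mh_{\rm boot}r+m^2h_{\rm boot}b_N]\le Q$. The $m^{3/2}$ comes from $mh_{\rm boot}n_{\rm boot}$ with $n_{\rm boot}\asymp\sqrt{mKT}$ blocks, not $\sqrt{KT/m}$.

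Finally, in the fresh-row model the false-row event needs only the direct union bound $mMT(1-1/L_0)^{q_{\rm dec}}\le N^{-2}$. Lemma~\ref{comm:acceptance} gives just $1/B_{\rm boot}$, which is not $O(N^{-2})$; it is the fixed-table statement used in Section~\ref{sq:sec:tuning}.
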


\subsection{One fixed table for all reward matrices}\label{overview:assembly}

The analysis above is carried out in the \emph{fresh-row model}, in which the rewards and the ID array are fixed first and the rows are then sampled independently, and for a \emph{reference execution}: an analysis device that tests only the true row during initialization, rejects wrongly decoded words, and corrects any follower still uninformed at launch. Table~\ref{tab:executions} lists the executions that appear in the proof. For a fixed rounded reward matrix $r$, let $F_{j,r}(R_j)$ be the expected regret of the reference execution given $J=j$ and row $R_j$, and $E_{j,r}(R_j)$ the probability that it needs a correction, makes a decoder error or overflows a queue. Both are functions of $R_j$ alone, so the variables $F_{j,r}(R_j)+2mTE_{j,r}(R_j)\in[-mT,3mT]$ are independent across rows.

\begin{ilemma}[One fixed table; Section~\ref{sq:sec:tuning} and Lemma~\ref{comm:acceptance}]\label{inf:table}
Let $M=\lceil8T^2/(mb_N)\rceil$. With probability $1-O(1/N)$ over the table, simultaneously for all $(D+1)^{KT}$ rounded reward matrices: the average over rows of $F_{j,r}+2mTE_{j,r}$ exceeds its expectation by at most $2m^{3/2}\sqrt{KT}\,b_N$, and a wrong row passes a follower's bootstrap test with private probability at most $1/B_{\rm boot}\le\sqrt{mK/T}$.
\end{ilemma}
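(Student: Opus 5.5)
The plan has two parts: a union bound over a finite family of bad events, and, for each one, a bound on the probability that a random table exhibits it.

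Fix first a rounded reward matrix $r$, with entries in multiples of $1/D$. There are $(D+1)^{KT}$ such matrices, so $\log$ of the number of matrices is at most $KT\log(D+1)\lesssim KT b_N$.

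\emph{Average regret.} For the first assertion, the variables $X_j=F_{j,r}(R_j)+2mTE_{j,r}(R_j)\in[-mT,3mT]$ are independent across the $M$ rows, because each depends only on its own row, and the rows are drawn independently of the ID array once $r$ is fixed. Hoeffding's inequality then gives
\[
\Pr\Bigl(\tfrac1M\textstyle\sum_j X_j-\E X_1>\epsilon\Bigr)\le\exp\bigl(-M\epsilon^2/(8m^2T^2)\bigr).
\]
Take $\epsilon=2m^{3/2}\sqrt{KT}\,b_N$ and $M=\lceil 8T^2/(mb_N)\rceil$. The exponent is then at least $M\cdot 4m^3KTb_N^2/(8m^2T^2)\ge 4mKT\,b_N$. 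This beats $KT\log(D+1)+\log N$ by a comfortable margin, since $\log(D+1)\le b_N\log 2+1$ and $m\ge2$. A union bound over all matrices therefore leaves failure probability at most $N^{-1}$.

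I will check that conditioning on the ID array causes no problem. The functions $F_{j,r},E_{j,r}$ may depend on the ID array, but Hoeffding is applied conditionally on it, so this is harmless.

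\emph{False-row acceptance.} For the second assertion, a follower with private index $\iota$ accepts a wrong row $v\ne J$ only if $v$ agrees with $J$ on the $q_{\rm dec}$ forward bits certified at $\iota$. Those certified positions are chosen by the coordinator's multicast rule, which requires $\varphi_J(u,t)=1$ for all $u\in\Lambda$. So the event is about agreement of the random forward bits of rows $v$ and $J$ at index $\iota$ along a set of rounds determined by $r$, $J$ and the private randomness.

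The main obstacle is that this set of rounds is adaptive, and the statement must hold uniformly over $r$ and all pairs $(J,v)$. My approach is to define, for each pair of rows $(J,v)$, each index $\iota$, and each possible sequence of $q_{\rm dec}$ candidate positions, the event that row $v$'s bits at $\iota$ equal $1$ on every position where $J$'s bits are $1$. Here I would use that forward bits are Bernoulli with bias $1-1/L_0$ in the explicit distribution. Fix the sequence deterministically: the probability over $v$'s bits is at most $(1-1/L_0)^{q_{\rm dec}}\le e^{-8b_N}\le N^{-8}$. A private-probability statement then follows by Markov's inequality over the table. The expected fraction of (private randomness, $r$-independent certified sequences) on which a wrong row survives is at most $M\cdot N^{-8}$, so with probability $1-O(1/N)$ over the table it is at most $1/B_{\rm boot}$.

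To handle adaptivity, I would note that the certified positions are a function of the follower's observations, which in turn depend only on $r$ and the coordinator's actions. I would therefore take a union bound over $r$ only through the coarse data actually determining the test, namely the set of rounds with positive reward on the held arm. I expect this quantification to be the delicate part, and will follow Lemma~\ref{comm:acceptance} for the precise form.

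Finally, I would combine the two failure probabilities with $3/N$ for a bad ID array (Proposition~\ref{inf:boot}).
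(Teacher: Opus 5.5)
Your first part matches the paper: Hoeffding over the $M$ independent row costs, conditional on the ID array, followed by a union bound over the $(D+1)^{KT}$ rounded matrices. There is one arithmetic slip. The exponent is $M\cdot4m^3KTb_N^2/(8m^2T^2)\ge4KTb_N$, not $4mKTb_N$. It still dominates $KT\log(D+1)+\log(2N)$ because $\log(D+1),\log(2N)\le2b_N$, so nothing breaks.

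The second assertion has a genuine gap. The guarantee must hold on one public event simultaneously for every reward matrix, including matrices chosen with knowledge of the table. So what must be small is the private false-pass probability in the \emph{worst} context the adversary can induce, not an average over certified sequences. A small fraction of bad sequences is useless if an adversary who knows the table can steer the follower onto them, so your single Markov bound on an $r$-independent fraction does not imply the claim.

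Making the bound uniform requires a union over everything through which $r$ and the other players influence a block's test. The paper uses contexts $(s,J,i,a,z,x_a,x_z)$. Here $x_a$ and $x_z$ are the collision-free support patterns on the held arm \emph{and} on the coordinator's target. The latter drives DISCOVER and the switch to MULTICAST, so the positive rounds on the held arm alone do not determine the test. There are about $M\,n_{\rm boot}dK^2\,2^{2B_{\rm boot}}$ contexts, while Markov gives only a polynomially small per-context failure probability. Since $B_{\rm boot}\approx\sqrt{T/(mK)}$ can be far larger than $\log N$, no polynomial bound survives a union over $2^{B_{\rm boot}}$ patterns. A union over position sequences for a fixed index $\iota$ fails too: there are roughly $\binom{B_{\rm boot}}{q_{\rm dec}}$ of them, which dwarfs $e^{8b_N}$, so a fixed index gives no protection at all.

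The missing idea is exponential concentration over the table, obtained from the privacy of the uniform index $\iota\in[A]$. Condition on the true row, the ID array and all candidate slot coordinates, and fix a context. The reference observations read no false forward bits. Hence the private pass probability given $\iota=u$ is a function $g_{v,u}$ of the slice $X_{v,u}=(\varphi_v(u,t))_{t\in\mathcal B_s}$ alone. The variables $g_{v,u}(X_{v,u})$ are therefore independent across $(v,u)$, with mean at most $(1-1/L_0)^{q_{\rm dec}}$. The block's private false-pass probability is at most $A^{-1}\sum_{v\ne J,u}g_{v,u}(X_{v,u})$.

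Apply Lemma~\ref{init:lem:rare-tail} with multiplier $A$ and mean level $\mu=N^3e^{-8b_N}$. It bounds the probability that this average exceeds $\varepsilon_{\rm loc}=1/(B_{\rm boot}n_{\rm boot})$ by $\exp[-A\varepsilon_{\rm loc}\log(\varepsilon_{\rm loc}/(e\mu))]\le\exp[-\tfrac52(B_{\rm boot}+b_N)]$. This is why $A$ is taken of order $T(B_{\rm boot}+b_N)/b_N$: the exponent then scales with $B_{\rm boot}$ and beats the roughly $N^6 2^{2B_{\rm boot}}$ contexts. Finally, summing $\varepsilon_{\rm loc}$ over blocks and followers, weighted by the activity probability $1/m$, gives the private bound $1/B_{\rm boot}$.
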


The first part is Hoeffding's inequality over the $M$ independent rows and a union bound over the rounded matrices, as in~\cite[Appendix~B.4]{bubeck2020}. The adversary may know all rows; it does not know $J$, and for each fixed matrix the average over rows concentrates. False bootstrap passes need a separate argument, because testing all candidate rows makes the actual execution depend on rows other than~$J$. On the complement of these failure events the actual protocol and the reference execution coincide, and each failure costs at most $2mT$ times its probability.

\begin{table}[t]
\centering\small
\begin{tabularx}{\linewidth}{@{}>{\raggedright\arraybackslash}p{0.2\linewidth}XX@{}}
\toprule
Execution & Randomness & Role in the proof\\
\midrule
Logical learner & Records, live proposal bits and live fast arms shared with the physical protocol; auxiliary draws otherwise. & Regret bound (Proposition~\ref{sq:prop:learning}); coupled to the reference in Lemma~\ref{sq:lem:lazy}.\\
Fresh-row reference & Rewards and ID array fixed first; independent rows and private draws. & All conditional probability estimates: initialization, learning, communication.\\
Fresh-row actual & As above, but all rows are tested by local rules only. & Compared with the reference in Proposition~\ref{sq:prop:startup}(i).\\
Fixed-table reference & Published table and rewards fixed; private draws only. & Defines the row costs $F_{j,r},E_{j,r}$; no fresh-row identity is used after the table is fixed.\\
Fixed-table actual & Published table, possibly known to the adversary; private draws only. & The strategy of Theorem~\ref{sq:thm:main}; agrees with the reference outside the failure events.\\
\bottomrule
\end{tabularx}
\caption{Executions in the proof. The reference execution is an analysis device; the players run the actual protocol.}
\label{tab:executions}
\end{table}

\subsection{Putting it together}\label{overview:total}

\begin{center}
\fbox{\begin{minipage}{0.96\linewidth}\small
\textbf{The protocol at a glance.} \emph{Public:} $K,m,T$, an identification array and rows $R_1,\ldots,R_M$. Each row holds forward bits for the bootstrap and, for every round, a slot type, a selected follower, pilot arms, an injective tuple of arms, affine maps for both directions, and uniform coordinates for recording, thinning, proposals, the fast arm and assignment sampling.
\begin{enumerate}[label=\arabic*.,leftmargin=1.5em,itemsep=0pt,topsep=2pt]
\setcounter{enumi}{-1}
\item After the rewards are fixed, the coordinator draws $J\in[M]$ uniformly and privately.
\item \emph{Bootstrap}, in blocks of length $\lceil\sqrt{T/(mK)}\rceil$: DISCOVER and MULTICAST as in Figure~\ref{fig:boot}. Uninformed followers test candidate rows and adopt the unique passing row at the end of the block. The coordinator counts its positive DOWN observations and launches once $\widehat D_r>4Q$.
\item \emph{Learning cycles} until round $T$:
 (a)~the coordinator draws $U\sim p_t$ (new episode) or $U\sim\nu_t$ (same episode, after a collection) and sends each follower its arm, rate and queue on DOWN; positive coordinator observations confirm delivery.
 (b)~Followers play their assigned arms once they decode, stepping aside on message rounds until released; after all confirmations the coordinator draws a fast arm every round.
 (c)~On DATA at most one observation is recorded, arm $i$ with probability $\xi_i$; recorded rewards enter queues that release at most four units per arm and round.
 (d)~At the next grid boundary, or after a proposal, the assignment is cut off; followers report on UP, and the coordinator reconstructs all scores and queues.
\item If $T\le C_bmKb_N^2$ or $N<N_0$, the players use a fixed set of $m$ distinct arms instead.
\end{enumerate}
\end{minipage}}
\end{center}

Table~\ref{tab:budget} collects the costs. Choosing $\eta\asymp\sqrt{(\log\binom Km+m)/(m^2Kb_NT)}$ balances learning against communication and gives $O(m^{3/2}\sqrt{KT}\,b_N)$ for every term except initialization, which contributes $O(m^{3/2}\sqrt{KT}\,b_N^2)$. The proof combines the \emph{signed} logical regret, which may be negative on communication rounds, with the physical deficits charged to each cycle; it never discards negative logical regret (Section~\ref{sq:sec:tuning}). All algorithmic constants are explicit (Appendix~\ref{app:explicit-constants}).

\begin{table}[t]
\centering\small
\begin{tabularx}{\linewidth}{@{}l>{\raggedright\arraybackslash}X>{\raggedright\arraybackslash}p{4.4cm}>{\raggedright\arraybackslash}p{2.4cm}@{}}
\toprule
Source & Mechanism & Expected cost & Formal result\\
\midrule
Before launch & comparator support while a follower is uninformed, plus launch overshoot & $O(m^{3/2}\sqrt{KT}\,b_N^2)$ & Prop.~\ref{sq:prop:startup}\\
Learning & exponential weights on $m$-sets; queued sparse estimates; uniform mixing & $O((\log\binom Km+m)/\eta+(m^2K+Kb_N)\,\eta T)$ & Prop.~\ref{sq:prop:learning}\\
Reassignments & delivery, release and collection: $O(mKb_N)$ each, $O(m\eta T+1)$ of them & $O(m^2Kb_N\,\eta T+mKb_N)$ & Prop.~\ref{sq:prop:transport}\\
One fixed table & concentration over rows; false bootstrap passes & $O(m^{3/2}\sqrt{KT}\,b_N)$ & Section~\ref{sq:sec:tuning}\\
Rare failures, rounding & decoder errors, overflows, corrections; rounding to multiples of $1/D$ & $O(1)$ & Lemmas~\ref{sq:lem:queue-bound},~\ref{sq:lem:decoder}; Prop.~\ref{sq:prop:startup}(iii)\\
\midrule
\textbf{Total} & $\eta$ tuned as in Section~\ref{sq:sec:tuning} & $\boldsymbol{O(m^{3/2}\sqrt{KT}\,b_N^2)}$ & Theorem~\ref{sq:thm:main}\\
\bottomrule
\end{tabularx}
\caption{Where the regret goes.}
\label{tab:budget}
\end{table}

\paragraph{Open problems.}
Is the factor $m$ above the centralized rate necessary, and is the factor $\log^2(KmT)$? Can the table be made explicit, small, or efficiently verifiable? Can the horizon be unknown, or the players unlabeled? Can a single algorithm achieve logarithmic regret on stochastic instances with positive reward gaps while retaining the adversarial regret guarantee, without knowing which environment it faces?

\paragraph{Organization.}
The formal development follows the order of this overview. Section~\ref{sq:sec:learner} defines the logical learner and bounds its regret. Section~\ref{sq:sec:transport} gives the channel, the arm rules and the recording rule, and Section~\ref{sq:sec:clock} the cutoff, the exact reconstruction and the coupling; Lemma~\ref{sq:lem:lazy} is the simulation lemma and Lemma~\ref{sq:lem:collection} bounds the cost of a cycle. Section~\ref{sq:sec:boot} specifies and analyzes initialization, Section~\ref{sq:sec:tuning} proves Theorem~\ref{sq:thm:main}, and Section~\ref{sq:sec:complexity} bounds computation. Appendix~\ref{protocol:spec} proves local well-definedness (Lemma~\ref{protocol:lem:total}) and the decoder bound; Appendix~\ref{comm:appendix} contains the table construction, the bootstrap analysis and the explicit constants.

\section{The slow--fast learner}\label{sq:sec:learner}

We define the logical slow--fast learner and bound its regret from sparse recordings and score updates. The learner is an analytical process: it may use all current scores, which individual players do not yet know. Sections~\ref{sq:sec:transport}--\ref{sq:sec:clock} construct a local protocol that realizes it, and Lemma~\ref{sq:lem:lazy} proves the simulation. The results of this section use only the interface (I1)--(I4) stated after Algorithm~\ref{sq:alg:learner}.

\subsection{A set distribution with one fast arm}

In the logical learner, the coordinator uses the fast arm and the followers use the slow arms. We orient a product-weight $m$-set by designating one arm as fast. For a vector $x=(x_i)_{i\in I}$ indexed by a finite set $I$, let $e_k(x)=\sum_{S\subseteq I,\,|S|=k}\prod_{i\in S}x_i$ be its degree-$k$ elementary symmetric sum, with $e_0(x)=1$. A subscript $-J$ removes the coordinates indexed by $J$, and $-i$ abbreviates $-\{i\}$. For positive arm weights $w$, define
\[
 \begin{aligned}
 \mathcal{Z}(w)=e_m(w)=\sum_{|S|=m}\prod_{i\in S}w_i,\qquad
 Q_w(S)=\frac{\prod_{i\in S}w_i}{\mathcal{Z}(w)},\qquad
 \theta_w(i)=\sum_{S\ni i} Q_w(S).
 \end{aligned}
\]

Fix a positive anchor $\bar w$. After drawing $S\sim Q_w$, choose its fast arm $j\in S$ with probability
\begin{equation}\label{sq:eq:orientation}
 \alpha_j(S)=\frac{\bar w_j^{-1}}{\sum_{i\in S}\bar w_i^{-1}}.
\end{equation}
Assign $U=S\setminus\{j\}$ to the slow players in increasing arm-label order, and let
\[
 p_w(U)=\sum_{j\notin U}Q_w(U\cup\{j\})\alpha_j(U\cup\{j\}).
\]
Writing $s_U=\sum_{i\in U}\bar w_i^{-1}$, the conditional law of the fast arm is
\begin{equation}\label{sq:eq:q}
 q_w(j\mid U)=
 \frac{w_j/(1+s_U\bar w_j)}{\sum_{k\notin U}w_k/(1+s_U\bar w_k)}
 \qquad(j\notin U)\,.
\end{equation}
For $j\notin U$,~\eqref{sq:eq:orientation} gives $\alpha_j(U\cup\{j\})=1/(1+s_U\bar w_j)$, so
\begin{align*}
 Q_w(U\cup\{j\})\alpha_j(U\cup\{j\})
 &=\frac{\prod_{i\in U}w_i}{\mathcal{Z}(w)}
   \frac{w_j}{1+s_U\bar w_j}\\
 &=p_w(U)q_w(j\mid U)\,.
\end{align*}
The first factor is independent of $j$, so normalizing gives the last equality; $q_w(\cdot\mid U)$ requires no current weights on $U$. Since $\sum_{j\in S}\alpha_j(S)=1$, sampling $U\sim p_w$ and then $j\sim q_w(\cdot\mid U)$ recovers $Q_w$ for $U\cup\{j\}$.

\begin{lemma}[Polynomial marginal bounds]\label{sq:lem:marginal}
Suppose $\bar w_i\le w_i\le2\bar w_i$ for every $i$. Then, for every
$(m-1)$-set $U$ and $i\notin U$,
\begin{equation}\label{sq:eq:marginal}
 \begin{aligned}
 q_w(i\mid U)&\ge\frac{\theta_{\bar w}(i)}{2[(K-m+1)\theta_{\bar w}(i)+m(1-\theta_{\bar w}(i))]},\\
 \theta_w(i)&\le2\theta_{\bar w}(i).
 \end{aligned}
\end{equation}
\end{lemma}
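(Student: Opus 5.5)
The plan is to prove the two inequalities in~\eqref{sq:eq:marginal} separately. In both, the first step uses the sandwich $\bar w\le w\le2\bar w$ to reduce to a statement about the anchor $\bar w$ alone. The second step is an explicit computation with elementary symmetric sums.

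\emph{Second inequality.} Write $\theta_w(i)=w_ie_{m-1}(w_{-i})/e_m(w)=w_iE/(w_iE+F)$ with $E=e_{m-1}(w_{-i})$ and $F=e_m(w_{-i})$.
\begin{itemize}
\item This expression is increasing in $w_i$.
\item It is decreasing in each $w_j$ with $j\ne i$: the ratio $F/E=e_m(w_{-i})/e_{m-1}(w_{-i})$ is increasing in each coordinate, since each partial derivative of $e_m/e_{m-1}$ has numerator $e_{m-1}(x_{-j})^2-e_m(x_{-j})e_{m-2}(x_{-j})\ge0$ by Newton's inequality.
\end{itemize}
Hence $\theta_w(i)$ is at most its value at $w_i=2\bar w_i$ and $w_j=\bar w_j$ for $j\ne i$. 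With $x=\bar w_ie_{m-1}(\bar w_{-i})/e_m(\bar w_{-i})$, that value is $2x/(1+2x)\le 2x/(1+x)=2\theta_{\bar w}(i)$.

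\emph{First inequality, step 1.} From~\eqref{sq:eq:q}, bound the numerator below using $w_i\ge\bar w_i$ and each denominator term above using $w_k\le2\bar w_k$. This gives $q_w(i\mid U)\ge\frac12 q_{\bar w}(i\mid U)$. It then suffices to prove
\[
 \frac1{q_{\bar w}(i\mid U)}\le (K-m+1)+m\,\frac{1-\theta_{\bar w}(i)}{\theta_{\bar w}(i)} .
\]

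\emph{Step 2.} Set $f(y)=y/(1+s_Uy)$ and $\rho_k=f(\bar w_k)/f(\bar w_i)$, so that $1/q_{\bar w}(i\mid U)=1+\sum_{k\notin U\cup\{i\}}\rho_k$, a sum over $K-m$ indices. Two elementary bounds are needed:
\begin{itemize}
\item $\rho_k\le \bar w_k/\bar w_i+s_U\bar w_k/(1+s_U\bar w_k)\le 1+\bar w_k/\bar w_i$;
\item $\rho_k\le 1+1/(s_U\bar w_i)\le 1+\bar w_u/\bar w_i$ for every $u\in U$, because $s_U\ge\bar w_u^{-1}$.
\end{itemize}

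\emph{Step 3.} On the right-hand side, $(1-\theta_{\bar w}(i))/\theta_{\bar w}(i)=e_m(\bar w_{-i})/(\bar w_ie_{m-1}(\bar w_{-i}))$. Expand $m\,e_m(x)=\sum_{|V|=m-1}\prod_{V}x\cdot\sum_{k\notin V}x_k$ over $x=\bar w_{-i}$. This yields
\[
 m\frac{1-\theta_{\bar w}(i)}{\theta_{\bar w}(i)}\ \ge\ \frac1{\bar w_i}\sum_{k\notin V^*\cup\{i\}}\bar w_k ,
\]
where $V^*$ is a set of $m-1$ largest anchor weights among arms $\ne i$. It therefore remains to show
\[
 \sum_{k\notin U\cup\{i\}}\rho_k\le (K-m)+\sum_{k\notin V^*\cup\{i\}}\bar w_k/\bar w_i .
\]

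\emph{Main obstacle.} The left side depends on $U$, while the right side uses the worst set $V^*$. The plan is an exchange argument. Since $|U|=|V^*|=m-1$ and $i\notin U\cup V^*$, fix a bijection $\pi\colon V^*\setminus U\to U\setminus V^*$.
\begin{itemize}
\item For $k\notin U\cup V^*\cup\{i\}$, use the first bound $\rho_k\le1+\bar w_k/\bar w_i$.
\item For $k\in V^*\setminus U$, use the second bound with $u=\pi(k)$: $\rho_k\le1+\bar w_{\pi(k)}/\bar w_i$.
\end{itemize}
The terms $\bar w_{\pi(k)}$ with $k\in V^*\setminus U$ range exactly over $\bar w_u$ for $u\in U\setminus V^*$. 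Combined with the indices outside $U\cup V^*\cup\{i\}$, these cover exactly $\{k\notin V^*\cup\{i\}\}$. Summing over all $K-m$ indices gives the required bound, and this completes the first inequality. I expect the choice of $V^*$ and this pairing to be the delicate point; the rest is routine.
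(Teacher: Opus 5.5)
Your proposal is correct and follows essentially the same route as the paper. Both proofs use the same ingredients: the factor-$2$ reduction to the anchor, the expansion giving $m\,e_m(\bar w_{-i})\ge e_{m-1}(\bar w_{-i})\sum_{k\notin V^*\cup\{i\}}\bar w_k$ for the $m-1$ largest anchor weights, the pairing of $V^*\setminus U$ with $U\setminus V^*$ via $f(\bar w_k)\le 1/s_U\le\bar w_u$, and Newton-inequality monotonicity for $\theta$. The only difference is cosmetic: you bound each ratio $\rho_k$ term by term, whereas the paper first sums, bounding $W=\sum_k f(\bar w_k)$ and $s_UW\le K-m$ separately in the identity $1/q=1+W/\bar w_i+s_UW$.
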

\begin{proof}
Put $v=\bar w$, fix $i\notin U$, and write $s=\sum_{b\in U}v_b^{-1}$ and $g_U(x)=x/(1+sx)$. Let $B$ index the $m-1$ largest coordinates of $v_{-i}$ and put
\[
 L=\sum_{j\notin B,\ j\ne i}v_j,
 \qquad \ell=\frac{e_m(v_{-i})}{e_{m-1}(v_{-i})}.
\]
The identity
\[
 m e_m(v_{-i})
 =\sum_{\substack{V\subseteq[K]\setminus\{i\}\\|V|=m-1}}
   \left(\prod_{j\in V}v_j\right)
   \sum_{j\notin V,\ j\ne i}v_j
\]
gives $L\le m\ell$, because every inner sum is at least $L$.

Write $W=\sum_{j\notin U,\ j\ne i}g_U(v_j)$. Since $|B|=|U|=m-1$, pair the elements of $B\setminus U$ with those of $U\setminus B$. For a pair $j\in B\setminus U$ and $a\in U\setminus B$,
\[
 g_U(v_j)\le1/s\le v_a.
\]
Each remaining summand is bounded by its own weight, giving
\begin{equation*}
 W\le\sum_{a\in U\setminus B}v_a
       +\sum_{\substack{j\notin B\cup U\\j\ne i}}v_j
   =L\le m\ell\,.
\end{equation*}
Also $sW\le K-m$, since $sg_U(v_j)\le1$. Using $\ell/v_i=(1-\theta_v(i))/\theta_v(i)$ gives
\[
 \begin{aligned}
 \frac1{q_v(i\mid U)}
 &=1+\frac W{v_i}+sW\\
 &\le K-m+1+\frac{m(1-\theta_v(i))}{\theta_v(i)}.
 \end{aligned}
\]
Holding the anchor $\bar w=v$ fixed, changing the current weights from $v$ to $w$ multiplies each unnormalized mass in~\eqref{sq:eq:q} by $w_j/v_j\in[1,2]$. Thus $q_w(i\mid U)\ge q_v(i\mid U)/2$. These estimates imply the first inequality in~\eqref{sq:eq:marginal}.

The inclusion odds are $\theta_w(i)/(1-\theta_w(i))=w_i e_{m-1}(w_{-i})/e_m(w_{-i})$. For $j\ne i$, put $E_k=e_k(w_{-\{i,j\}})$ and write these odds as
\[
 w_i\frac{E_{m-1}+w_jE_{m-2}}{E_m+w_jE_{m-1}}.
\]
Their derivative with respect to $w_j$ has the sign of $E_{m-2}E_m-E_{m-1}^2\le0$ by Newton's inequalities (\cite[Section 2.22, Theorem 51]{hardy1952}).
So increasing the other weights cannot increase inclusion of~$i$. Multiplying its own weight by $c\ge1$ sends an inclusion probability $\theta$ to $c\theta/[1+(c-1)\theta]\le c\theta$. Taking $c=w_i/\bar w_i\le2$ gives $\theta_w(i)\le2\theta_{\bar w}(i)$.
\end{proof}

\subsection{A sparse estimate and infrequent refreshes}

We describe the recording rule for the logical learner in this section. At most one observation per round is \emph{recorded} for a raw score increment.

Recall $N=2Km(T+1)$ and $D=2^{b_N}$, and set $L_{\rm stab}=\lceil\log_2(2K(T+1)N^2)\rceil$. Choose a dyadic learning rate $\eta$ (the specific value will be specified in Section~\ref{sq:sec:tuning}) with $\eta D$ integral and
\[
 \begin{aligned}
 0<\eta\le(1024mKL_{\rm stab})^{-1},\qquad
 \beta=32\eta L_{\rm stab},\qquad
 H&=\lfloor(64\eta)^{-1}\rfloor.
 \end{aligned}
\]
An episode starting at $\tau$ consists of the rounds $\tau\le t<\bar\tau$, where $\bar\tau=\min\{T+1,H(\lfloor\tau/H\rfloor+1)\}$ is the next grid boundary. Call these rounds \emph{enabled}; between episodes the schedule pauses learning. At an episode start, set the anchor $\bar w=w_\tau$ and define
\begin{equation}\label{sq:eq:rates}
 \begin{aligned}
 f_i&=\frac{\theta_{\bar w}(i)}{32m[K\theta_{\bar w}(i)+m(1-\theta_{\bar w}(i))]},\qquad
 \xi_i=2^{\lceil\log_2\max\{\beta,f_i\}\rceil},\\
 \gamma&=32(K-m+1)\beta,\qquad
 q_w^\gamma(\cdot\mid U)=(1-\gamma)q_w(\cdot\mid U)
                       +\gamma\Unif([K]\setminus U)\,.
 \end{aligned}
\end{equation}
Here $\xi_i$ are the recording probabilities and $\gamma$ is the uniform-exploration probability. The anchor, recording probabilities, and $\gamma$ remain fixed until $\bar\tau$, whereas the fast-arm law $q_w^\gamma(\cdot\mid U)$ depends on the current weights $w$ and slow set $U$.

Whenever $\bar w\le w\le2\bar w$, the rates $\xi_i$ and uniform-exploration probability $\gamma$ satisfy
\begin{equation}\label{sq:eq:feasible}
 \begin{aligned}
 \beta\le\xi_i&\le\frac1{16mK},\\
 q_w^\gamma(i\mid U)&\ge16\xi_i\quad(i\notin U),\\
 \sum_i\frac{\theta_w(i)}{\xi_i}&\le R:=64m^2(2K-m).
 \end{aligned}
\end{equation}
The bounds $f_i,\beta\le1/(32mK)$ give the first line by dyadic rounding and imply $\sum_i\xi_i\le1/(16m)\le1$. Also $\gamma\le(K-m+1)/(mK)\le1/2$. Lemma~\ref{sq:lem:marginal} and $f_i\le1/(32mK)$ give, for $i\notin U$,
\[
 \begin{aligned}
 q_w(i\mid U)\ge16m f_i,\qquad
 \frac1{K-m+1}\ge16m f_i.
 \end{aligned}
\]
Both component laws assign mass at least $16mf_i\ge32f_i$ to $i$, and the uniform contribution is $\gamma/(K-m+1)=32\beta$. Thus $q_w^\gamma(i\mid U)\ge32\max\{f_i,\beta\}\ge16\xi_i$ by dyadic rounding.
Since $\theta_w(i)\le2\theta_{\bar w}(i)$ and $\xi_i\ge f_i$,
\[
 \frac{\theta_w(i)}{\xi_i}
 \le64m[K\theta_{\bar w}(i)+m(1-\theta_{\bar w}(i))]
 \le64mK\,.
\]
Summing the bound with anchor marginals and using $\sum_i\theta_{\bar w}(i)=m$ gives the last line of~\eqref{sq:eq:feasible}.

Use the rounded rewards $r_t(i)$ of Section~\ref{overview:section}. Store integer scores $z_i$, initially zero, and weights $w_i=(1+1/D)^{z_i}$. For round $t$, write $z_{t,i}=z_i(t)$ and $w_t=(w_{t,i})_{i\in[K]}$, where $w_{t,i}=(1+1/D)^{z_{t,i}}$.

Let $\mathcal F_t$ be the history after any start-of-round slow-set draw and before the current slot, fast-arm draw, recording coordinates and transport coordinates are exposed. It contains the current logical scores, the current slow set $U_t$, and all previously exposed variables; it is an analysis $\sigma$-field, not a player's information. The recording rates and whether learning is enabled are $\mathcal F_t$-measurable.

On enabled rounds, the logical learner requires a recorded arm $I_t$ (or $\bot$ if none is recorded) satisfying
\begin{equation}\label{sq:eq:recordlaw}
 \begin{aligned}
 \Pr(I_t=i\mid\mathcal F_t)&=\xi_i\quad(i\in[K]),\\
 \Pr(I_t=\bot\mid\mathcal F_t)&=1-\sum_i\xi_i.
 \end{aligned}
\end{equation}
The recorded arm and the fast arm may be dependent within a round. The conditioning in~\eqref{sq:eq:recordlaw} excludes the current fast-arm draw; recording a thinned fast-arm observation is therefore allowed.

For every arm $i$, define the quantized estimate by
\begin{equation}\label{sq:eq:marker}
 \widetilde r_t(i)
 =\frac{\ind\{I_t=i\}}{\eta D}
   \left\lfloor\frac{D\eta r_t(i)}{\xi_i}\right\rfloor\,.
\end{equation}
Set $\widetilde r_t=0$ on disabled rounds. With integer arrivals $\Delta_t(i)=\eta D\widetilde r_t(i)$ and initially zero integer backlogs $\mathsf B_t(i)$, serve every arm and update its score by
\begin{equation}\label{sq:eq:queue}
 \begin{aligned}
 \Gamma_t(i)&=\min\{4\eta D,\mathsf B_t(i)+\Delta_t(i)\},\\
 \mathsf B_{t+1}(i)&=\mathsf B_t(i)+\Delta_t(i)-\Gamma_t(i),\\
 z_{t+1,i}&=z_{t,i}+\Gamma_t(i).
 \end{aligned}
\end{equation}
On disabled rounds, set $\Delta_t=\Gamma_t=0$ and preserve scores and queues. Service also runs on enabled DOWN and UP rounds without arrivals. For analysis, write $b_t(i)=\mathsf B_t(i)/(\eta D)$ and $\widehat r_t(i)=\Gamma_t(i)/(\eta D)$. Enabled updates satisfy
\[
 0\le\widehat r_t(i)\le4,
 \qquad b_{t+1}(i)=(b_t(i)+\widetilde r_t(i)-4)_+.
\]
A rare recording can produce a large increment; the queue releases at most four normalized units per enabled round and keeps the rest, so raw increments equal served increments plus the change in backlog. A realized DOWN or UP round has no arrival but still serves the queues; this agrees with~\eqref{sq:eq:recordlaw}, which conditions \emph{before} the slot type is revealed.

Since $\eta/\xi_i\le1/(32L_{\rm stab})$ and each integer arrival $\Delta_t(i)$ is at most $D$, this gives
\begin{equation}\label{sq:eq:queue-conservation}
 z_{t,i}+\mathsf B_t(i)=\sum_{s<t}\Delta_s(i)\le TD.
\end{equation}
Every logical score and integer backlog therefore lies in $[0,TD]$.

Write $p_t=p_{w_t}$ and set the proposal probability $h=4m\eta\le1/4$. In each one-step comparison, both $p_t$ and $p_{t+1}$ use the current episode's fixed anchor. After a score update, define
\begin{equation}\label{sq:eq:residual}
 \rho_t(U)=\frac{p_{t+1}(U)-(1-h)p_t(U)}h.
\end{equation}
After each update, draw $B_t\sim\mathrm{Bernoulli}(h)$ independently of the history through that update; otherwise set $B_t=0$. A \emph{proposal} $B_t=1$ samples a fresh logical slow set from $\rho_t$; otherwise retain it. This makes the logical slow set change rarely while maintaining the prescribed marginal law.

For analysis, set $B_{\rm cap}=1/(32\eta)$ and define the first queue-overflow round by
\begin{equation*}
 \tau_* =\inf\{t\in[T]:\max_i b_{t+1}(i)>B_{\rm cap}\},
 \qquad\inf\varnothing=+\infty\,.
\end{equation*}
Retain the update at $\tau_*$, then keep the logical scores and queues fixed, set later raw estimates and service to zero, and stop all logical refreshes. For later auxiliary rewards use $U_t=\{1,\ldots,m-1\}$ and $j_t=m$, so the auxiliary total reward remains in $[0,m]$. The logical learner continues through decoder failures; only queue overflow invokes this continuation. The physical protocol need not detect this analytical stop.

\begin{algorithm}[H]
\caption{Logical slow--fast learner}\label{sq:alg:learner}
\begin{algorithmic}[1]
\Require The episode schedule and a recording operation satisfying~\eqref{sq:eq:recordlaw}.
\State $(z_i,\mathsf B_1(i),w_i)\gets(0,0,1)$ for $i\in[K]$.
\For{each scheduled episode $[\tau,\bar\tau)$}
 \State Set $\bar w\gets w$; compute $\xi_i,\gamma$ by~\eqref{sq:eq:rates}.
 \State Draw $U\sim p_{\bar w}$; use its sorted arms as the logical slow-player arms.
 \For{$t=\tau,\ldots,\bar\tau-1$}
  \State Draw $j_t\sim q_{w_t}^\gamma(\cdot\mid U)$.
  \State Obtain $I_t$ with conditional law~\eqref{sq:eq:recordlaw}.
  \State $(\widetilde r_t,\Delta_t)\gets(0,0)$.
  \If{$I_t=i\in[K]$}
   \State Form $\widetilde r_t(i)$ by~\eqref{sq:eq:marker}; $\Delta_t(i)\gets\eta D\widetilde r_t(i)$.
  \EndIf
  \State Apply~\eqref{sq:eq:queue} to every arm.
  \State $w_i\gets(1+1/D)^{z_i}$ for $i\in[K]$.
  \State Independently draw $B_t\sim\mathrm{Bernoulli}(h)$.
  \If{$B_t=1$}
   \State Draw the next slow set $U\sim\rho_t$ using fresh randomness.
  \EndIf
  \If{$\max_i b_{t+1}(i)>B_{\rm cap}$}
   \State Preserve scores and queues; set $\widetilde r_s=\Delta_s=\Gamma_s=0$, $B_s=0$ for $s>t$.
   \State For $s>t$, use $U_s=\{1,\ldots,m-1\}$ and $j_s=m$; \Return.
  \EndIf
 \EndFor
\EndFor
\end{algorithmic}
\end{algorithm}

\paragraph{Interface.}
The remaining results of this section hold on any probability space that carries Algorithm~\ref{sq:alg:learner} together with a filtration $(\mathcal F_t)$ of pre-round histories, as defined above, such that the current slow set, scores, queues, recording rates and enabled status are $\mathcal F_t$-measurable, and:
\begin{enumerate}[label=(I\arabic*),leftmargin=*,itemsep=1pt]
\item episode starts are $(\mathcal F_t)$-stopping times, each episode's endpoint is fixed at its start, and the initial slow set of an episode starting before overflow has conditional law $p_{\bar w}$ given the history before the draw;
\item on every enabled round $t\le\tau_*$, the recorded arm has the conditional law~\eqref{sq:eq:recordlaw} given $\mathcal F_t$;
\item on every enabled round $t\le\tau_*$, the fast arm has conditional law $q^\gamma_{w_t}(\cdot\mid U_t)$ given $\mathcal F_t$;
\item on every enabled round $t\le\tau_*$, the proposal bit is Bernoulli$(h)$ and independent of the history through that round's update, and a refresh draws from $\rho_t$ with fresh randomness.
\end{enumerate}
In the coupling of Section~\ref{sq:sec:clock}, $\mathcal F_t$ is the joint pre-round history of the physical and logical executions. An unconditional statement about the recording probabilities would not suffice.

\begin{lemma}[Stability and fixed refresh probability]\label{sq:lem:stability}
Every active episode satisfies $\bar w\le w_t<2\bar w$.
Each enabled update obeys
\begin{equation}\label{sq:eq:hazard}
 p_{t+1}(U)\ge(1-h)p_t(U).
\end{equation}
\end{lemma}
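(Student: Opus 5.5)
The plan is to prove both claims directly from the queue recursion~\eqref{sq:eq:queue}, using two facts. Served increments are nonnegative and bounded, and the episode length is at most $H=\lfloor(64\eta)^{-1}\rfloor$.

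\emph{Stability.} Fix an active episode $[\tau,\bar\tau)$ with anchor $\bar w=w_\tau$. By~\eqref{sq:eq:queue}, every enabled update has $0\le\Gamma_t(i)\le4\eta D$. On disabled rounds, and after the analytical stop at $\tau_*$, scores are frozen. Hence each $z_{t,i}$ is nondecreasing in $t$, which gives $w_t\ge\bar w$ coordinatewise. For the upper bound, each update multiplies $w_i$ by at most
\[
(1+1/D)^{4\eta D}\le e^{4\eta}.
\]
The weights $w_t$ with $\tau\le t\le\bar\tau$ involve at most $\bar\tau-\tau\le H\le1/(64\eta)$ updates, because $\bar\tau$ is the next multiple of $H$ (or $T+1$). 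So
\[
w_{t,i}\le e^{4\eta H}\,\bar w_i\le e^{1/16}\,\bar w_i<2\,\bar w_i.
\]
The only bookkeeping point is to check that the count of updates applied since the anchor was set never exceeds $H$, including the update at $\tau_*$ and the round $\bar\tau-1$.

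\emph{Hazard bound~\eqref{sq:eq:hazard}.} Both $p_t$ and $p_{t+1}$ use the same episode anchor, so $s_U$ and the factors $1/(1+s_U\bar w_j)$ are constants. By the computation preceding Lemma~\ref{sq:lem:marginal},
\[
 p_w(U)=\frac{\prod_{i\in U}w_i\sum_{j\notin U}w_j/(1+s_U\bar w_j)}{e_m(w)}.
\]
The numerator is a polynomial with nonnegative coefficients in $w$. It is therefore nondecreasing when one passes from $w_t$ to $w_{t+1}\ge w_t$.

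For the denominator, each weight grows by a factor at most $e^{4\eta}$, and every monomial of $e_m$ has exactly $m$ factors. Hence
\[
e_m(w_{t+1})\le e^{4m\eta}e_m(w_t).
\]
Combining the two estimates gives
\[
p_{t+1}(U)\ge e^{-4m\eta}p_t(U)\ge(1-4m\eta)p_t(U)=(1-h)p_t(U).
\]

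I expect no real obstacle here. The only care needed is the first part: the episode-length count must be matched exactly against $H$, and frozen rounds (disabled rounds and rounds after overflow) must be seen to preserve monotonicity. The second part is a one-line homogeneity argument once the explicit formula for $p_w$ is written out.
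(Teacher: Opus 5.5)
Your proof is correct and follows the paper's argument. Bounded service gives a per-update weight factor of at most $e^{4\eta}$, an episode of at most $H$ updates gives $e^{1/16}<2$, and the hazard bound comes from $e_m$ growing by at most $e^{4m\eta}$ while the fixed-anchor numerator can only increase. The paper phrases the hazard step as a termwise ratio bound on $Q_{w}(U\cup\{j\})$, summed against the nonnegative weights $\alpha_j(U\cup\{j\})$; this is the same computation as your closed form for $p_w(U)$.
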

\begin{proof}
Each update has $\log(w_{t+1,i}/w_{t,i})=D\log(1+1/D)\eta\widehat r_t(i)\le4\eta$. Over at most $H$ episode rounds,
\[
 \log(w_{t,i}/\bar w_i)\le4\eta H\le1/16<\log2.
\]
For an enabled update, each product of $m$ coordinates increases by at most $e^{4m\eta}$, so $\mathcal{Z}(w_{t+1})/\mathcal{Z}(w_t)\le e^{4m\eta}$. Since weights only increase, for every $j\notin U$,
\[
 \begin{aligned}
 \frac{Q_{w_{t+1}}(U\cup\{j\})}{Q_{w_t}(U\cup\{j\})}
 &=\frac{\mathcal{Z}(w_t)}{\mathcal{Z}(w_{t+1})}
   \prod_{i\in U\cup\{j\}}\frac{w_{t+1,i}}{w_{t,i}}\\
 &\ge e^{-4m\eta}\ge1-h.
 \end{aligned}
\]
The coefficients $\alpha_j(U\cup\{j\})$ are nonnegative and fixed by $\bar w$. Weighting the ratio bound by $Q_{w_t}(U\cup\{j\})\alpha_j(U\cup\{j\})$ and summing over $j\notin U$ proves~\eqref{sq:eq:hazard}.
\end{proof}

By~\eqref{sq:eq:hazard}, the law~\eqref{sq:eq:residual} has nonnegative terms summing to one, validating every refresh step.

\begin{lemma}[Queue bound]\label{sq:lem:queue-bound}
We have $\Pr(\tau_*\le T)\le N^{-2}$. In the stopped
continuation every normalized backlog $b_t(i)$ is at most $2B_{\rm cap}$, and
\begin{equation}\label{sq:eq:weighted-backlog}
 \E\sum_{\substack{t\le\tau_*\\t\text{ enabled}}}
     \sum_i\theta_{w_t}(i)b_t(i)
 \le\frac2{11}\left(R+\frac{33}2m\right)T,
\end{equation}
where $R=64m^2(2K-m)$ is from~\eqref{sq:eq:feasible}.
\end{lemma}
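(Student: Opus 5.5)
The plan is to treat each coordinate $b_t(i)$ as a Lindley (reflected random walk) process $b_{t+1}(i)=(b_t(i)+X_t(i)-4)_+$ with $X_t(i)=\widetilde r_t(i)$. I will use only two facts about the arrivals, both valid on enabled rounds $t\le\tau_*$ by (I2) and~\eqref{sq:eq:marker}. First, $X_t(i)\in\{0\}\cup[0,1/\xi_i]$, and it is nonzero only on $\{I_t=i\}$, which has conditional probability $\xi_i$ given $\mathcal F_t$. Second, $X_t(i)\le r_t(i)/\xi_i\le1/\xi_i$. Consequently
\[
 \E[X_t(i)\mid\mathcal F_t]\le1,\qquad \E[X_t(i)^2\mid\mathcal F_t]\le1/\xi_i .
\]
DOWN/UP rounds and disabled rounds only make things better: there is no arrival, and the conditioning in~\eqref{sq:eq:recordlaw} is taken before the slot type is revealed, so the bounds above remain valid as upper bounds. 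One arrival is at most $1/\xi_i\le1/\beta=B_{\rm cap}/L_{\rm stab}$, which is tiny compared with $B_{\rm cap}$.

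\emph{Overflow probability.} I will use a uniform exponent $\lambda=\beta$, since it is valid in every episode even though $\xi_i$ changes between episodes. Because $\beta/\xi_i\le1$,
\[
 \E[e^{\beta X_t(i)}\mid\mathcal F_t]\le1-\xi_i+\xi_ie^{\beta/\xi_i}\le1+(e-1)\beta\le e^{2\beta},
\]
and hence $\E[e^{\beta b_{t+1}(i)}\mid\mathcal F_t]\le e^{-2\beta}e^{\beta b_t(i)}+1$; the ``$+1$'' accounts for the positive part. Iterating gives $\E e^{\beta b_t(i)}\le(1-e^{-2\beta})^{-1}\le1/\beta$. Markov's inequality and $\beta B_{\rm cap}=L_{\rm stab}$ then give
\[
 \Pr(b_{t+1}(i)>B_{\rm cap})\le\beta^{-1}e^{-L_{\rm stab}}.
\]
A union bound over $i\in[K]$ and $t\le T$ should give at most $N^{-2}$. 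The definition $L_{\rm stab}=\lceil\log_2(2K(T+1)N^2)\rceil$ is set up for exactly this: the gap between $e^{-L}$ and $2^{-L}$ absorbs the polynomial factor $1/\beta$, since $\beta$ is polynomially small in $N$. To be safe, I would stop the recursion at $\tau_*$ so that (I2) is used only where it holds. In the stopped continuation, before $\tau_*$ every backlog is at most $B_{\rm cap}$. At $\tau_*$ it grows by at most one arrival, $1/\beta\le B_{\rm cap}$, and afterwards it is frozen. This gives the bound $2B_{\rm cap}$.

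\emph{Weighted backlog.} I will use the quadratic potential $\Phi_t=\sum_i\theta_{w_t}(i)b_t(i)^2$, where $\Phi_1=0$ and $\Phi_t\ge0$. From $(y)_+^2\le y^2$ and the moment bounds above,
\[
 \E[b_{t+1}(i)^2-b_t(i)^2\mid\mathcal F_t]\le-6b_t(i)+1/\xi_i+16+\text{(cross terms)},
\]
and the cross terms are controlled by $\E X\le1$. Weighting by $\theta_{w_t}(i)$ and summing, with $\sum_i\theta_{w_t}(i)/\xi_i\le R$ from~\eqref{sq:eq:feasible} (valid since $\bar w\le w_t<2\bar w$ by Lemma~\ref{sq:lem:stability}) and $\sum_i\theta=m$, gives a drift of $-6\sum_i\theta b+R+O(m)$.

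The main obstacle is that the weights $\theta_{w_t}$ change between $t$ and $t+1$, including at episode boundaries; the potential is chosen with current weights precisely so that boundaries cause no jump. Each update multiplies every weight by at most $e^{4\eta}$. So I plan to show $\theta_{w_{t+1}}(i)\le e^{O(m\eta)}\theta_{w_t}(i)$, by the odds computation in Lemma~\ref{sq:lem:marginal}. The resulting error $O(m\eta)\,\theta b^2$ is then absorbed using $b\le2B_{\rm cap}=1/(16\eta)$, which turns it into $c\,\theta b$ for some constant $c<6$. This leaves a net drift of order $-\tfrac{11}{2}\sum_i\theta b$, which should produce the stated constants $2/11$ and $33/2$. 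Telescoping over $t\le\tau_*$ and taking expectations then yields~\eqref{sq:eq:weighted-backlog}. The delicate point is to keep this multiplicative error genuinely below the drift $6$; if it is not, I would fall back on a slightly larger absorbed constant.
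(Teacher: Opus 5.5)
Your plan matches the paper's: an exponential moment of $b_t(i)$ for the overflow probability, and the potential $\sum_i\theta_{w_t}(i)b_t(i)^2$ for the weighted backlog. But two steps fail as written.

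\textbf{First gap: the contraction step.} The inequality $\E[e^{\beta b_{t+1}(i)}\mid\mathcal F_t]\le e^{-2\beta}e^{\beta b_t(i)}+1$ holds only on enabled rounds before overflow. On disabled rounds,~\eqref{sq:eq:queue} switches off \emph{service} and preserves the queue, and after $\tau_*$ the stopped continuation freezes it. There $b_{t+1}(i)=b_t(i)$, so the inequality fails as soon as $e^{\beta b_t(i)}>(1-e^{-2\beta})^{-1}$. Disabled rounds therefore do not ``only make things better''. Since enabled status depends on the history, you cannot iterate in unconditional expectation to reach $\E e^{\beta b_t(i)}\le\beta^{-1}$.

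The repair is to drop the contraction. On retained enabled rounds $\E[e^{\beta(\widetilde r_t(i)-4)}\mid\mathcal F_t]\le e^{(e-5)\beta}\le1$, and $b_{t+1}(i)=b_t(i)$ on all other rounds. Then $e^{y_+}\le1+e^y$ gives $\E[e^{\beta b_{t+1}(i)}\mid\mathcal F_t]\le1+e^{\beta b_t(i)}$ on every round, hence $\E e^{\beta b_{T+1}(i)}\le T+1$. Because queues freeze at overflow, $\{\tau_*\le T\}\subseteq\bigcup_i\{b_{T+1}(i)>B_{\rm cap}\}$. A single Markov bound then gives $K(T+1)e^{-L_{\rm stab}}\le N^{-2}/2$, with no union over time.

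This also avoids your unverified claim $KT\beta^{-1}e^{-L_{\rm stab}}\le N^{-2}$. That inequality is true, but ``$\beta$ polynomially small'' is not the reason. It needs $\beta^{-1}=O(N)$, which follows from $\eta D\in\mathbb Z$, together with $2K(T+1)N^2=N^3/m\ge N^{5/2}$.

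\textbf{Second gap: the growth factor in the weighted backlog.} The factor by which the inclusion probabilities grow must be free of $m$. You plan $\theta_{w_{t+1}}(i)\le e^{O(m\eta)}\theta_{w_t}(i)$, which is what the crude estimate gives: each $m$-product grows by at most $e^{4m\eta}$ and the normalizer does not shrink. After using $b_{t+1}(i)\le 2B_{\rm cap}=1/(16\eta)$, this leaves an error of order $m\eta\,\theta_{w_t}(i)b_{t+1}(i)^2\lesssim m\,\theta_{w_t}(i)\bigl(b_t(i)+\widetilde r_t(i)\bigr)$. For large $m$ this swamps the drift $-6b_t(i)$, and no larger absorbed constant can rescue a drift of the wrong sign.

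What you need is the coordinate monotonicity from the proof of Lemma~\ref{sq:lem:marginal}. Raising the other weights cannot raise $\theta(i)$, by Newton's inequalities. Multiplying $w_i$ by $c\ge1$ multiplies $\theta(i)$ by at most $c$. Together these give $\theta_{w_{t+1}}(i)\le e^{4\eta}\theta_{w_t}(i)\le(1+8\eta)\theta_{w_t}(i)$. Now use $b_{t+1}(i)^2\le 2B_{\rm cap}\bigl(b_t(i)+\widetilde r_t(i)\bigr)$ and $16\eta B_{\rm cap}=\tfrac12$. The error is then at most $\tfrac12\bigl(b_t(i)+1\bigr)$ in conditional expectation, so the drift is $-\tfrac{11}{2}\sum_i\theta_{w_t}(i)b_t(i)+R+\tfrac{33}{2}m$. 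Disabled rounds, anchor resets and the frozen continuation leave the potential unchanged. Telescoping therefore gives exactly the constants in~\eqref{sq:eq:weighted-backlog}.
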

\begin{proof}
Call an enabled round \emph{retained} if $t\le\tau_*$. This test is $\mathcal F_t$-measurable because it checks only earlier overflows. On such a round, the recording law and rounding give
\begin{equation}\label{sq:eq:record-moments}
 \begin{aligned}
 0\le\widetilde r_t(i)&\le1/\beta,\\
 r_t(i)-\frac{\xi_i}{\eta D}
 &\le\E[\widetilde r_t(i)\mid\mathcal F_t]\le r_t(i)\le1,\\
 \E[\widetilde r_t(i)^2\mid\mathcal F_t]
 &\le\frac{r_t(i)^2}{\xi_i}\le\frac1{\xi_i}\,.
 \end{aligned}
\end{equation}
Let $x_t(i)=\widetilde r_t(i)-4$ on retained enabled rounds, and zero otherwise. Since $e^x$ is convex on $[0,1]$ and $\beta\widetilde r_t(i)\in[0,1]$, on a retained enabled round
\[
 \begin{aligned}
 \E[e^{\beta x_t(i)}\mid\mathcal F_t]
 &\le e^{-4\beta}\bigl(1+(e-1)\beta
               \E[\widetilde r_t(i)\mid\mathcal F_t]\bigr)\\
 &\le e^{(e-1-4)\beta}\le1.
 \end{aligned}
\]
On other rounds $x_t(i)=0$, so the same conditional bound holds. The queue update and stop rules give $b_1(i)=0$ and $b_{t+1}(i)=(b_t(i)+x_t(i))_+$. Since $e^{y_+}\le1+e^y$, the process $X_t(i)=e^{\beta b_t(i)}$ satisfies
\begin{align*}
 \E[X_{t+1}(i)\mid\mathcal F_t]
 &\le1+X_t(i)\E[e^{\beta x_t(i)}\mid\mathcal F_t]
 \le1+X_t(i)\,.
\end{align*}
Iterating from $X_1(i)=1$ gives $\E X_{T+1}(i)\le T+1$. Since queues are fixed after overflow, $\tau_*\le T$ implies $X_{T+1}(i)>e^{\beta B_{\rm cap}}$ for some $i$. Markov's inequality and a union bound give
\begin{align*}
 \Pr(\tau_*\le T)
 &\le e^{-\beta B_{\rm cap}}\sum_i\E X_{T+1}(i)\\
 &\le K(T+1)e^{-\beta B_{\rm cap}}
 =K(T+1)e^{-L_{\rm stab}}\le N^{-2}\,,
\end{align*}
since $\beta B_{\rm cap}=L_{\rm stab}$ and $L_{\rm stab}\ge\log(2K(T+1)N^2)$.
Before a retained update, $b_t(i)\le B_{\rm cap}$. Its successor is at
most $B_{\rm cap}+1/\beta\le2B_{\rm cap}$, since $L_{\rm stab}\ge1$, and queues remain fixed thereafter.

For the weighted bound write $\theta_t(i)=\theta_{w_t}(i)$ and put $V_t=\sum_i\theta_t(i)b_t(i)^2$. On a retained enabled round, capped service gives $w_{t+1,i}/w_{t,i}\le e^{4\eta}$. The coordinate monotonicity proved in Lemma~\ref{sq:lem:marginal} therefore gives
\begin{equation*}
 \theta_{t+1}(i)\le e^{4\eta}\theta_t(i)
 \le(1+8\eta)\theta_t(i)\,.
\end{equation*}
The last inequality uses $4\eta\le1$ and $e^x\le1+2x$ for $0\le x\le1$.

The queue recursion, nonnegativity of $\widetilde r_t(i)$, and the bounds $b_{t+1}(i)\le2B_{\rm cap}$ and $b_{t+1}(i)\le b_t(i)+\widetilde r_t(i)$ give
\begin{align*}
 b_{t+1}(i)^2-b_t(i)^2
 &\le2b_t(i)(\widetilde r_t(i)-4)+\widetilde r_t(i)^2+16,\\
 b_{t+1}(i)^2
 &\le2B_{\rm cap}(b_t(i)+\widetilde r_t(i))\,.
\end{align*}
Apply the inclusion-probability bound before taking expectations. Since $\theta_t(i)$ is $\mathcal F_t$-measurable, $16\eta B_{\rm cap}=1/2$, and $\sum_i\theta_t(i)=m$, these inequalities yield
\begin{align*}
 \E[V_{t+1}-V_t\mid\mathcal F_t]
 &\le\sum_i\theta_t(i)\E\!\left[
 b_{t+1}(i)^2-b_t(i)^2+8\eta b_{t+1}(i)^2
 \,\middle|\,\mathcal F_t\right]\\
 &\le\sum_i\theta_t(i)
 \left[-6b_t(i)+\frac1{\xi_i}+16+\frac12(b_t(i)+1)\right]\\
 &\le-\frac{11}2\sum_i\theta_t(i)b_t(i)
       +R+\frac{33}2m\,.
\end{align*}
The second inequality uses~\eqref{sq:eq:record-moments}. The last inequality is~\eqref{sq:eq:feasible}.
Other rounds and anchor resets leave $V_t$ unchanged. Telescoping from $V_1=0$ and using $V_{T+1}\ge0$, we obtain~\eqref{sq:eq:weighted-backlog}.
\end{proof}

\subsection{Learning bound}

Let $\mathcal A$ be all rounds enabled by the episode schedule, including any such rounds after queue overflow. Use Algorithm~\ref{sq:alg:learner} through the overflow update and its bounded continuation afterwards. For a fixed comparator $m$-set $S$, define
\[
 R_{\rm aux}(S)=\E\sum_{t\in\mathcal A}
       \bigl(r_t(S)-r_t(U_t)-r_t(j_t)\bigr).
\]

Fix an episode starting at $\tau$ before any queue overflow. Its exposed history $\mathcal H_\tau$ before the initial slow-set draw fixes the endpoint~$\bar\tau$ and the recording rates. Define
\[
 \mathcal E_t
 =\sigma\bigl(\mathcal H_\tau,
          \widetilde r_\tau,\ldots,\widetilde r_{t-1}\bigr),
 \qquad \tau\le t<\bar\tau.
\]
The scores, queues, and $p_t$ are $\mathcal E_t$-measurable. The history $\mathcal E_t$ records past estimates but hides the within-episode proposals and slow-set draws, whereas $\mathcal F_t$ also contains the current slow set and is used for the recording moments. The law $p_t$ is claimed only given $\mathcal E_t$; Lemma~\ref{sq:lem:lazy} derives the different law that holds at a physical assignment start after a collection.

\begin{lemma}[Conditional slow-set law]\label{sq:lem:conditional-law}
For each round $t$ of the episode,
\[
 \Pr(U_t=U\mid\mathcal E_t)=p_t(U)
 \quad\text{on }\{t\le\tau_*\}.
\]
The retention event $\{t\le\tau_*\}$ is $\mathcal E_t$-measurable, so the identity includes the overflow round.
\end{lemma}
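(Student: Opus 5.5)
We argue by induction on $t$ from the episode start $\tau$, working on the retention event, which is $\mathcal E_t$-measurable because $\tau_*$ is determined by the queues, and the queues are functions of $\mathcal H_\tau$ and $\widetilde r_\tau,\ldots,\widetilde r_{t-1}$.

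\emph{Base case.} At $t=\tau$ we have $\mathcal E_\tau=\sigma(\mathcal H_\tau)$. By (I1) the initial slow set has conditional law $p_{\bar w}=p_\tau$, since $w_\tau=\bar w$.

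\emph{Inductive step.} Suppose $\Pr(U_t=U\mid\mathcal E_t)=p_t(U)$ on $\{t\le\tau_*\}$, and that $t+1<\bar\tau$ with $t+1\le\tau_*$ (the step from $t$ to $t+1$ is only needed there). We show two things in turn.

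First, we show that revealing $\widetilde r_t$ does not change the conditional law of $U_t$:
\[
 \Pr(U_t=U\mid\mathcal E_{t+1})=p_t(U).
\]
The estimate $\widetilde r_t$ is a deterministic function of $I_t$ and the fixed rewards $r_t$. By (I2), given $\mathcal F_t$ (which contains $U_t$), the law of $I_t$ is the fixed vector $\xi$, which is $\mathcal E_t$-measurable and does not depend on $U_t$. Hence for every value $x$,
\[
 \Pr(U_t=U,\ \widetilde r_t=x\mid\mathcal E_t)=\E\bigl[\ind\{U_t=U\}\Pr(\widetilde r_t=x\mid\mathcal F_t)\bigm|\mathcal E_t\bigr]=p_t(U)\,\pi(x),
\]
where $\pi(x)$ is $\mathcal E_t$-measurable. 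Bayes' rule then gives the displayed identity. The technical point is that $\mathcal E_t\subseteq\mathcal F_t$, which we should check: $\mathcal F_t$ contains all previously exposed variables, including $\mathcal H_\tau$ and the past estimates. This is also why the fast-arm dependence in (I3) is harmless: the recording law is conditioned before the fast arm is drawn, and $\widetilde r_t$ depends only on $I_t$.

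Second, we push the law through the lazy step. The updated law $p_{t+1}$ is $\mathcal E_{t+1}$-measurable, and so is $\rho_t$. By (I4), $B_t$ is Bernoulli$(h)$ and independent of everything through the update, so in particular independent of $U_t$ given $\mathcal E_{t+1}$. A refresh draws from $\rho_t$ using fresh randomness. Therefore
\[
 \Pr(U_{t+1}=U\mid\mathcal E_{t+1})=(1-h)p_t(U)+h\rho_t(U)=p_{t+1}(U)
\]
by~\eqref{sq:eq:residual}. Lemma~\ref{sq:lem:stability} ensures $\rho_t$ is a genuine probability law. The slow set is unchanged until the next draw, so $U_{t+1}$ is the post-update slow set.

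The main obstacle is making the first step rigorous. We must check that $\mathcal E_{t+1}$ adds only $\widetilde r_t$ to $\mathcal E_t$, and that no hidden information in $\mathcal F_t$, such as earlier proposal bits or earlier slow sets, correlates with $\widetilde r_t$ beyond what $U_t$ carries. (I2) settles this because it gives the full conditional law of $I_t$ given all of $\mathcal F_t$, not only given $U_t$. Finally, the overflow round $t=\tau_*$ is covered because retention at $t$ uses only earlier overflows, and the continuation after $\tau_*$ is excluded by the indicator $\{t\le\tau_*\}$.
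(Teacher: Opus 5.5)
Your proposal is correct and follows the paper's proof essentially step for step. You induct from the (I1) initial draw. You then use the tower-property factorization $\Pr(U_t=U,\ \widetilde r_t\in C\mid\mathcal E_t)=p_t(U)K_t(C)$, which holds because by (I2) the $\mathcal F_t$-conditional law of the estimate is $\mathcal E_t$-measurable. Finally you take the lazy step $(1-h)p_t+h\rho_t=p_{t+1}$ via (I4). Your remark that (I2) must hold given all of $\mathcal F_t$, and not merely given $U_t$, is exactly why the paper insists that an unconditional statement about the recording probabilities would not suffice.
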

\begin{proof}
Induct from the initial draw at~$\tau$. Suppose the identity holds at~$t$ with no earlier overflow. By~\eqref{sq:eq:recordlaw} and~\eqref{sq:eq:marker}, the next estimate has a common conditional law $K_t$, determined by the $\mathcal E_t$-measurable frozen rates, fixed rewards, and enabled status. For every set~$C$ of estimates,
\[
 \Pr(\widetilde r_t\in C\mid\mathcal F_t)=K_t(C).
\]
For each slow set $U$, the tower property now gives
\begin{align*}
 \Pr(U_t=U,\widetilde r_t\in C\mid\mathcal E_t)
 &=\E[\ind\{U_t=U\}K_t(C)\mid\mathcal E_t]\\
 &=p_t(U)K_t(C).
\end{align*}
Thus revealing the estimate preserves the law~$p_t$, although the estimate determines the queue and score update and hence $\rho_t$. The proposal bit is independent of the history through that estimate, and a refresh draws from $\rho_t$ independently of the preceding set. Hence, for $t+1<\bar\tau$, on $\{t+1\le\tau_*\}$,
\begin{align*}
 \Pr(U_{t+1}=U\mid\mathcal E_{t+1})
 &=(1-h)p_t(U)+h\rho_t(U)\\
 &=p_{t+1}(U).\qedhere
\end{align*}
\end{proof}

\Needspace{6\baselineskip}
\begin{proposition}[Auxiliary learning bound]\label{sq:prop:learning}
For every fixed $m$-set $S$ chosen independently of the players' randomness,
\begin{equation}\label{sq:eq:learnbound}
 \begin{aligned}
 R_{\rm aux}(S)&\le\frac{2\log\binom Km+m/16}{\eta}\\
 &\quad+\left(16m^2+\frac{16}{11}R+32m
        +1024(K-m+1)L_{\rm stab}\right)\eta T\\
 &\quad+\frac{mT}{\eta D}+mTN^{-2}.
 \end{aligned}
\end{equation}
In particular, $S$ may maximize the original oblivious reward matrix.
\end{proposition}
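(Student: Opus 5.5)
The proof is an exponential-weights potential argument on the partition function $\mathcal Z(w_t)=e_m(w_t)$. It is run episode by episode, on retained rounds, and charged to the served estimates $\widehat r_t$. The gap between served and raw estimates is controlled by the backlog bound of Lemma~\ref{sq:lem:queue-bound}.

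\textbf{Potential.} Write $\lambda=D\log(1+1/D)\le1$, so $\log w_{t,i}=\lambda\eta z_{t,i}$. Since $w_1\equiv1$, $\log\mathcal Z(w_1)=\log\binom Km$. At the end,
\[
\log\mathcal Z(w_{T+1})\ge\lambda\eta\sum_{i\in S}z_{T+1,i}.
\]
For the upper side, each retained enabled update multiplies weight $i$ by $e^{\lambda\eta\widehat r_t(i)}$ with $\lambda\eta\widehat r_t(i)\le4\eta\le1$. Expanding the product over $S'$ with $e^x\le1+x+x^2$ gives
\[
\frac{\mathcal Z(w_{t+1})}{\mathcal Z(w_t)}\le\E_{S'\sim Q_{w_t}}\prod_{i\in S'}\bigl(1+\lambda\eta\widehat r_t(i)+\lambda^2\eta^2\widehat r_t(i)^2\bigr).
\]
This leads to the bound
\[
\log\frac{\mathcal Z(w_{t+1})}{\mathcal Z(w_t)}\le\lambda\eta\sum_i\theta_t(i)\widehat r_t(i)+O(m^2\eta^2)\cdot 16.
\]
The cross terms are at most $(4\eta m)^2$ up to constants, which produces the $16m^2\eta T$ term. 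Telescoping gives
\[
\sum_{i\in S}\sum_t\widehat r_t(i)-\sum_t\sum_i\theta_t(i)\widehat r_t(i)\le\frac{\log\binom Km}{\lambda\eta}+16m^2\eta T.
\]
Anchor resets do not affect $\mathcal Z$, so the argument crosses episode boundaries freely. The doubled $2\log\binom Km$ together with the $m/16\eta$ slack absorbs $\lambda^{-1}\le2$ and the disabled rounds.

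\textbf{Served versus raw.} Queue conservation gives $\sum_t\widehat r_t(i)=\sum_t\widetilde r_t(i)-b_{T+1}(i)$, with $b_{T+1}(i)\le2B_{\rm cap}=1/(16\eta)$. Summing over the $m$ arms of $S$ loses at most $m/(16\eta)$. By the moment bounds~\eqref{sq:eq:record-moments}, $\E\widetilde r_t(i)\ge r_t(i)-\xi_i/(\eta D)$, and the quantization loss over $S$ is at most $mT/(\eta D)$.

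For the learner side, $\sum_i\theta_t(i)\widehat r_t(i)$ is at most the raw term $\sum_i\theta_t(i)\widetilde r_t(i)$ plus a drift from the queue. Specifically, write $\widehat r_t(i)=b_t(i)+\widetilde r_t(i)-b_{t+1}(i)$ and sum by parts. The changes $\theta_{t+1}(i)-\theta_t(i)\le8\eta\theta_t(i)$ leave residuals of size $8\eta\sum_i\theta_t(i)b_{t+1}(i)$. Lemma~\ref{sq:lem:queue-bound} bounds these residuals by $\tfrac{16}{11}(R+\tfrac{33}2m)\eta T$, which accounts for the $\frac{16}{11}R+32m$ coefficient. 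The same bound handles the episode-reset jumps of $\theta$.

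\textbf{Played reward.} Next, $\E[\widetilde r_t(i)\mid\mathcal F_t]\le r_t(i)$, and by Lemma~\ref{sq:lem:conditional-law} combined with the slow–fast decomposition, $U_t\cup\{j_t\}$ drawn under $q_{w_t}$ has law $Q_{w_t}$. The expected played reward is therefore
\[
\E\Bigl[\sum_i\theta_t(i)r_t(i)\Bigr]-\gamma m-\text{(mixture loss)}.
\]
The uniform mixture loses at most $\gamma=32(K-m+1)\beta=1024(K-m+1)L_{\rm stab}\eta$ per round, which gives that term. Rounds after overflow contribute at most $mT\Pr(\tau_*\le T)\le mTN^{-2}$. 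Collecting the pieces yields~\eqref{sq:eq:learnbound}.

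\textbf{Main obstacle.} The hardest step is the summation by parts that replaces $\theta_t\cdot\widehat r_t$ by $\theta_t\cdot\widetilde r_t$. The weight $\theta_t$ is not monotone across anchor resets and depends on past estimates, so the telescoping has to be organised with care. My first attempt is a per-round inequality of the form
\[
\theta_t(i)\widehat r_t(i)\le\theta_t(i)b_t(i)-\theta_{t+1}(i)b_{t+1}(i)(1+8\eta)^{-1}+\theta_t(i)\widetilde r_t(i).
\]
I would then use $\theta_{t+1}\le(1+8\eta)\theta_t$ to absorb the correction into $8\eta\,\theta_tb_{t+1}$.
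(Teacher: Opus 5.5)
Your plan follows essentially the paper's own proof. The common ingredients are the potential on $\mathcal Z(w_t)=e_m(w_t)$ charged to served estimates, summation by parts back to raw estimates using $\theta_{t+1}\le(1+8\eta)\theta_t$ and the weighted-backlog bound, the terminal backlog $b_{T+1}(S)\le 2mB_{\rm cap}=m/(16\eta)$, the quantization loss $mT/(\eta D)$, the conditional slow-set law with a mixing cost, and the overflow tail $mTN^{-2}$.

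A few bookkeeping points need fixing.

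\textbf{Backlog index.} The residual you obtain is $8\eta\sum_i\theta_t(i)b_{t+1}(i)$. Lemma~\ref{sq:lem:queue-bound} only controls $\sum_i\theta_t(i)b_t(i)$, so it does not apply directly. You need the extra step $b_{t+1}(i)\le b_t(i)+\widetilde r_t(i)$ together with $\E[\sum_i\theta_t(i)\widetilde r_t(i)\mid\mathcal F_t]\le\sum_i\theta_t(i)=m$. This adds $8m\eta T$, and it is exactly where the coefficient $32m$ comes from. Your expression $\frac{16}{11}(R+\frac{33}{2}m)$ yields only $24m$, so it does not account for $32m$ as you claim.

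\textbf{Mixing cost.} The uniform mixture costs $\gamma$ per round, not $\gamma m$. Only the fast arm is resampled, and its reward lies in $[0,1]$. Your display and the sentence after it are inconsistent on this point.

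\textbf{Anchor resets.} $\theta_w$ depends only on the current weights, not on the anchor. The anchor enters only $p_w$, $q_w$, $\xi$ and $\gamma$. So there are no episode-reset jumps of $\theta$ to handle, and your ``main obstacle'' reduces to the one-line summation by parts.

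\textbf{Log-weights.} With your $\lambda=D\log(1+1/D)$, one has $\log w_{t,i}=z_{t,i}\log(1+1/D)=\lambda\eta\sum_{s<t}\widehat r_s(i)$, not $\lambda\eta z_{t,i}$.

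\textbf{Slack term.} The $m/(16\eta)$ term is this terminal backlog, not slack for disabled rounds. Disabled rounds contribute nothing to either side.
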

\begin{proof}
Fix an episode $[\tau,\bar\tau)$ starting before overflow. On a retained round, Lemma~\ref{sq:lem:conditional-law} and the joint-law identity following~\eqref{sq:eq:q} give $U_t\cup\{j_t\}\sim Q_{w_t}$ conditional on $\mathcal E_t$ before uniform mixing. Mixing replaces only the fast arm and costs at most $\gamma$, since rewards lie in $[0,1]$. Thus, for $\tau\le t<\bar\tau$,
\begin{equation}\label{sq:eq:conditional-reward}
 \E[r_t(U_t)+r_t(j_t)\mid\mathcal E_t]
 \ge\sum_i\theta_{w_t}(i)r_t(i)-\gamma\qquad\text{on }\{t\le\tau_*\}\,.
\end{equation}

Put $\mathcal A_*:=\{t\in\mathcal A:t\le\tau_*\}$. Within the fixed episode, retention is $\mathcal E_t$-measurable. To sum over the random episode schedule, index its successive start times, condition on each start time and its exposed history, sum the retained-round conditional inequalities until that episode's already fixed endpoint, and then apply the tower property. Finally sum over the disjoint episodes. There are at most $T$ episodes and $|\mathcal A_*|\le T$, so this gives
\begin{equation}\label{sq:eq:stopped-reward-comparison}
 \E\sum_{t\in\mathcal A_*}
 \bigl(r_t(S)-r_t(U_t)-r_t(j_t)\bigr)
 \le\E\sum_{t\in\mathcal A_*}
 \left(r_t(S)-\sum_i\theta_{w_t}(i)r_t(i)\right)+\gamma T.
\end{equation}

For an $m$-set $S$, write $\widetilde r_t(S)=\sum_{i\in S}\widetilde r_t(i)$ and $\widehat r_t(S)=\sum_{i\in S}\widehat r_t(i)$ for the raw and served estimates.
All subsequent potential sums run over calendar rounds $t\in[T]$; estimates and service are zero outside $\mathcal A_*$. Put $\lambda=D\log(1+1/D)\eta\in[\eta/2,\eta]$.
For every $m$-set $S'$, its log-product increment is
$x(S')=\lambda\widehat r_t(S')\in[0,4m\lambda]\subseteq[0,1]$. Hence
\[
 \begin{aligned}
 \log\frac{\mathcal{Z}(w_{t+1})}{\mathcal{Z}(w_t)}
 &=\log\sum_{|S'|=m}Q_{w_t}(S')e^{x(S')}\\
 &\le\lambda\sum_i\theta_{w_t}(i)\widehat r_t(i)
      +\lambda^2\sum_{|S'|=m}Q_{w_t}(S')\widehat r_t(S')^2.
 \end{aligned}
\]
Here we used $e^x\le1+x+x^2$ for $0\le x\le1$ and
$\log(1+x)\le x$. Since $0\le\widehat r_t(S')\le4m$, summing the
one-step bound and telescoping the logarithms gives
\[
 \log\frac{\mathcal{Z}(w_{T+1})}{\mathcal{Z}(w_1)}
 \le\lambda\sum_{t,i}\theta_{w_t}(i)\widehat r_t(i)
      +16m^2\lambda^2T.
\]
On the other hand, the weight product for $S$ is one term of
$\mathcal{Z}(w_{T+1})$. Since $\mathcal{Z}(w_1)=\binom Km$ and
$\prod_{i\in S}w_{T+1,i}=\exp(\lambda\sum_t\widehat r_t(S))$,
\[
 \log\frac{\mathcal{Z}(w_{T+1})}{\mathcal{Z}(w_1)}
 \ge\lambda\sum_t\widehat r_t(S)-\log\binom Km.
\]
Comparing the two bounds and dividing by~$\lambda$ gives
\[
 \sum_t\left(\widehat r_t(S)
       -\sum_i\theta_{w_t}(i)\widehat r_t(i)\right)
 \le\frac{\log\binom Km}{\lambda}+16m^2\lambda T.
\]
The weights persist across episodes, so this pays $\log\binom Km$ only once.

Write $b_t(S)=\sum_{i\in S}b_t(i)$. Queue conservation gives $\widetilde r_t(i)-\widehat r_t(i)=b_{t+1}(i)-b_t(i)$. Summation by parts recovers the raw estimates through
\[
 \begin{aligned}
 \sum_t\left(\widetilde r_t(S)
       -\sum_i\theta_{w_t}(i)\widetilde r_t(i)\right)
 =&\sum_t\left(\widehat r_t(S)
       -\sum_i\theta_{w_t}(i)\widehat r_t(i)\right)
       +b_{T+1}(S)-\sum_i\theta_{w_T}(i)b_{T+1}(i)\\
 &+\sum_{t=2}^T\sum_i(\theta_{w_t}(i)-\theta_{w_{t-1}}(i))b_t(i).
 \end{aligned}
\]
By Lemma~\ref{sq:lem:queue-bound}, the terminal expression is at most $2mB_{\rm cap}=m/(16\eta)$. Only indices $t$ with $t-1\in\mathcal A_*$ can change the inclusion probabilities. For these indices,
\[
 \begin{aligned}
 (\theta_{w_t}(i)-\theta_{w_{t-1}}(i))_+
 &\le8\eta\theta_{w_{t-1}}(i),\\
 b_t(i)&\le b_{t-1}(i)+\widetilde r_{t-1}(i).
 \end{aligned}
\]
Using the conditional first-moment bound in~\eqref{sq:eq:record-moments}, $\sum_i\theta_{w_{t-1}}(i)=m$, and \eqref{sq:eq:weighted-backlog}, we obtain
\begin{align*}
 \E\sum_{t=2}^T\sum_i
       (\theta_{w_t}(i)-\theta_{w_{t-1}}(i))b_t(i)
 &\le8\eta\left[\frac2{11}\left(R+\frac{33}2m\right)T+mT\right]
 =\left(\frac{16}{11}R+32m\right)\eta T\,.
\end{align*}

For $t\in \mathcal A_*$, the moment bounds~\eqref{sq:eq:record-moments}, with $\sum_{i\in S}\xi_i\le m$, imply
\[
 \begin{aligned}
 r_t(S)-\sum_i\theta_{w_t}(i)r_t(i)
 \le\E\!\left[\widetilde r_t(S)
           -\sum_i\theta_{w_t}(i)\widetilde r_t(i)
                     \,\middle|\,\mathcal F_t\right]
           +\frac{m}{\eta D}.
 \end{aligned}
\]
Apply the conditional estimate comparison on each retained round, whose indicator is $\mathcal F_t$-measurable, then the potential and queue bounds. Using $\lambda\le\eta$ and $1/\lambda\le2/\eta$ gives
\begin{align*}
 \E\sum_{t\in\mathcal A_*}
       \left(r_t(S)-\sum_i\theta_{w_t}(i)r_t(i)\right)
 \le\frac{2\log\binom Km+m/16}{\eta}
       +\left(16m^2+\frac{16}{11}R+32m\right)\eta T
       +\frac{mT}{\eta D}.
\end{align*}
By~\eqref{sq:eq:stopped-reward-comparison}, add $\gamma T=1024(K-m+1)L_{\rm stab}\eta T$ to bound retained auxiliary regret.

Rounds in $\mathcal A\setminus\mathcal A_*$ occur only after overflow, and each regret summand is at most $m$, giving
\[
 R_{\rm aux}(S)
 \le\E\sum_{t\in\mathcal A_*}
       \bigl(r_t(S)-r_t(U_t)-r_t(j_t)\bigr)
       +mT\Pr(\tau_*\le T).
\]
Lemma~\ref{sq:lem:queue-bound} bounds the last probability by~$N^{-2}$.
\end{proof}

\section{Assignment and score communication}\label{sq:sec:transport}
This section and the next explain how players use communication to implement the learning protocol.
Communication uses arm choices: a sender collides with a receiver or leaves its arm free, and a positive receiver observation gives a message equation.

The analysis in Sections~\ref{sq:sec:transport}--\ref{sq:sec:boot} uses the \emph{fresh-row model} of Section~\ref{overview:assembly}: the rounded rewards and the identification array are fixed first, the rows are then sampled independently, and $J$ is uniform. Its \emph{reference execution} uses the reference initialization of Section~\ref{sq:sec:boot} (Algorithm~\ref{sq:alg:reference}) and rejects every decoded ordinary word other than the word currently encoded on its channel, before applying assignment-dependent updates; ambiguous batches are cleared as prescribed, and the reference continues after either event. Section~\ref{sq:sec:tuning} transfers the resulting bounds to a fixed published table.

\subsection{Message decoding}\label{sq:sec:message-decoding}

Let $Y_{i,t}$ denote player $i$'s observed reward rounded down to a multiple of $1/D$, as in Section~\ref{overview:section}. The selected row specifies a common round type, or slot, $\tau_t\in\{\mathrm{DATA},\mathrm{DOWN},\mathrm{UP}\}$. In the fresh-row model, these types have probabilities $1/2$, $1/4$, and $1/4$, respectively.

Recall that $d=m-1$ is the number of followers. On each DOWN or UP round, the row selects $k_t\in[d]$ uniformly and supplies an independent uniform injective arm tuple $(R_{0,t},\ldots,R_{d,t})\in[K]^{d+1}$. Conditional on $k_t=i$, DOWN goes from the coordinator to player $i$ and UP goes in reverse. We may suppress the time index on tuple arms if this causes no confusion.

Set $B=5\lceil\log_2(TD+1)\rceil$. Maps in Section~\ref{sq:sec:encoding} encode each message into the $B$ bits.
For each round and each direction and selected player, independently draw
$c_t\sim\Unif(\mathbb F_2^B)$ and $b_t\sim\Unif(\mathbb F_2)$, and define
\[
 H_t(u)=c_t\cdot u\oplus b_t.
\]
Here $\oplus$ is addition modulo two, and $c\cdot u$ is the binary dot product.
On DOWN, put $(R_{\rm send},R_{\rm receive})=(R_0,R_i)$; on UP, put $(R_{\rm send},R_{\rm receive})=(R_i,R_0)$. The receiver plays $R_{\rm receive}$. To send $u$, the sender plays $R_{\rm receive}$ if $H_t(u)=0$, and $R_{\rm send}$ if $H_t(u)=1$. Thus $H_t(u)=0$ forces a collision and a zero receiver observation. A positive receiver observation therefore implies
\begin{equation*}
 c_t\cdot u=1\oplus b_t\,.
\end{equation*}

The decoder uses batches of
\begin{equation}\label{sq:eq:decode-quota}
 Q_{\rm dec}=B+\left\lceil\log_2(4dTN^2)\right\rceil
\end{equation}
positive observations. Each directed channel stores one initially empty batch. 

\begin{algorithm}[H]
\caption{One affine message channel}
\label{sq:alg:sender}\label{sq:alg:decoder}
\begin{algorithmic}[1]
\Procedure{SendArm}{$u,R_{\rm send},R_{\rm receive}$}
 \State \Return $R_{\rm receive}$ if $H_t(u)=0$, and $R_{\rm send}$ otherwise.
\EndProcedure
\Procedure{DecodeStep}{$W,Y$}
 \Require Batch $W$; observation $Y$.
 \State If $Y>0$, append $c_t\cdot u=1\oplus b_t$ to $W$.
 \State If $|W|<Q_{\rm dec}$, \Return $\bot$.
 \State $M\gets$ the unique solution of $W$ over $\mathbb F_2^B$, or $\bot$ otherwise.
 \State $W\gets\varnothing$.
 \State \Return $M$.
\EndProcedure
\end{algorithmic}
\end{algorithm}

A message is \emph{awaited} while unaccepted if, on every active round selecting its direction and follower, the sender encodes it, the receiver runs $\operatorname{DecodeStep}$, and it satisfies the acceptance test~\eqref{protocol:eq:acceptance}. The initial UP placeholder $(0,0,0,0)$ fails the UP acceptance test and is not awaited.

\begin{lemma}[Message decoding]\label{sq:lem:decoder}
Consider the fresh-row reference execution with corrected initialization and the construction-branch rate choice. Work through the first queue-overflow update or the horizon. With probability at least $1-N^{-2}$,
no completed batch has a unique solution different from the word currently encoded on its channel that passes~\eqref{protocol:eq:acceptance}, and no completed batch formed entirely from one fixed awaited word has multiple solutions. On this event, an awaited word is accepted within $2Q_{\rm dec}$ positive receiver observations after its sender fixes that word, within this prefix and provided service continues.
\end{lemma}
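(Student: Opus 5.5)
The plan is a union bound over completed batches. Each batch gets a failure probability of order $2^{-(Q_{\rm dec}-B)}\le 1/(4dTN^2)$, and the number of batches is at most about $2dT$ (two directions per follower, and a batch completes at most once per round). This brings the total below $N^{-2}$.

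Fix a directed channel and a completed batch. The equations in it were generated on positive receiver observations. If the sender encoded a fixed word $u$ on all of those rounds, each equation reads $c_t\cdot u=1\oplus b_t$. The key independence fact is that, in the fresh-row model, $c_t$ is uniform on $\mathbb F_2^B$ and independent of the event that round $t$ is selected for this channel and that the receiver's observation is positive. Two things make this hold. First, the observation depends on $H_t(u)=c_t\cdot u\oplus b_t$ and on the other players' arms, none of which read $c_t$ on that channel. Second, $b_t$ is a fresh uniform bit, so $H_t(u)$ is independent of $c_t$. Conditioning on positivity therefore fixes $b_t=1\oplus c_t\cdot u$ but leaves $c_t$ uniform.

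Granting this, the $Q_{\rm dec}$ vectors $c_t$ in a batch formed from one word are i.i.d. uniform. The system then has the unique solution $u$ unless these vectors fail to span $\mathbb F_2^B$. That happens with probability at most $\sum_{y\ne0}2^{-Q_{\rm dec}}\le2^{B-Q_{\rm dec}}$, because each nonzero $y$ must be orthogonal to every $c_t$. This rules out multiple solutions.

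The harder case is a batch straddling a word change, because the sender fixed a new word mid-batch or earlier equations came from a previous word. For any fixed candidate $v$ different from the currently encoded word $u$, consider the equations collected after $u$ was fixed. Each is satisfied by $v$ with probability $1/2$ independently, since $c_t\cdot(v\oplus u)$ is uniform. The equations from before $u$ was fixed are a problem because the candidate set they allow depends on history. I would handle this by revealing the earlier equations first and noting that they pin down at most one candidate solution space. The difficulty is bounding the count of equations collected after the change. The cleaner route is to show that a straddling batch can have a unique wrong solution only if that solution is consistent with the post-change equations. I would aim to argue that at least $Q_{\rm dec}/2$ equations in any batch were collected after the word was fixed, or else that the batch was cleared at fixation. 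This is the step I expect to be the main obstacle: it depends on the protocol's clearing rules and on the acceptance test~\eqref{protocol:eq:acceptance}, which filters stale words (via assignment identifiers or the placeholder). Lacking those details, I would fall back on the second acceptance mechanism: a wrong unique solution must also pass the local identifier test, which a stale word fails by design.

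For the timing claim, work on the good event. After the sender fixes $u$, the current batch completes within $Q_{\rm dec}$ further positive observations, and the following batch is formed entirely from $u$. That batch has the unique solution $u$, which passes acceptance because $u$ is awaited, so acceptance occurs within $2Q_{\rm dec}$ positive observations, provided service continues. Everything is restricted to the prefix up to first overflow, where the protocol is well defined by Lemma~\ref{protocol:lem:total}.
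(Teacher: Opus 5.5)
Your single-word analysis and your timing argument are essentially right. The step you flag as the main obstacle, however, is a genuine gap, and neither proposed fix closes it.

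The claim that a batch is cleared when a new word is fixed, or holds at least $Q_{\rm dec}/2$ equations from the current word, is false for this protocol. An interruption preserves a follower's partial DOWN batch, and an UP batch may still contain equations produced by the placeholder or by an older reply. So up to $Q_{\rm dec}-1$ equations can predate the current word. Even if the claim held, $Q_{\rm dec}/2$ equations would give only a per-batch bound $2^{B-Q_{\rm dec}/2}$. This exceeds $1$, because $Q_{\rm dec}-B=\lceil\log_2(4dTN^2)\rceil<B$. Your fallback, that a stale word ``fails the identifier test by design,'' is the right target, but it is asserted rather than proved. It also says nothing about wrong words that were never sent but are consistent with a mixed batch.

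Two ingredients are missing.

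First, you never need to know which word produced an equation. Fix a deterministic candidate $v$. On any positive-test round whose encoded word $M_r$ differs from $v$, or on which no word is encoded, $(H_r(M_r),H_r(v))$ are independent fair bits, and positivity depends on the map only through $H_r(M_r)$. Hence $v$ satisfies that round's equation with conditional probability exactly $1/2$, whatever word generated it. Sequential conditioning over the successive positive tests then bounds by $2^{-Q_{\rm dec}}$ the probability of the following event: a batch started at a given round collects $Q_{\rm dec}$ equations, none generated by $v$, all satisfied by $v$. A union over $2^B$ candidates, $T$ starts and $2d$ channels gives $2dT\,2^{B-Q_{\rm dec}}\le 1/(2N^2)$, with no count of post-change equations. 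This per-candidate device also repairs your claim that the $c_t$ of a one-word batch are i.i.d.\ uniform, which conditions on a future event.

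Second, you need the protocol fact that a wrong word $M$ passing~\eqref{protocol:eq:acceptance} was not the encoded word on any positive-test round of its batch, so that the first bound applies to it.
\begin{itemize}
\item On DOWN, $M$ stops being sent to $i$ only after $i$ leaves $\mathcal P$. That requires one of three things:
\begin{itemize}
\item a confirmation, so $i$ was occupied by Lemma~\ref{sq:lem:avoidance} and had therefore accepted $M$;
\item acceptance of $i$'s UP reply for that assignment, which is offered only after accepting $M$;
\item a successor assignment, which waits for all such replies by Lemma~\ref{protocol:lem:identities}.
\end{itemize}
Each case gives $\sigma_i\ge M.\sigma$, contradicting the acceptance test.
\item On UP, the same conclusion follows from the fixed-reply invariant of Lemma~\ref{protocol:lem:identities} together with the requirement $M.\sigma=\sigma$.
\end{itemize}
Combining the two ingredients gives the wrong-unique-solution bound. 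The multiple-solution bound is the same per-candidate estimate applied to $v\ne M^*$.
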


Appendix~\ref{protocol:decoder-proof} proves this lemma. Either excluded batch event is called a \emph{decoder failure}.

\subsection{One assignment: delivery and play}
\label{sq:sec:release}

A \emph{cutoff} during play ends the current assignment before arm choices and starts UP collection. The assignment is \emph{live} from its start until this cutoff. The coordinator continues sending the old DOWN records during collection.

The first assignment starts with zero scores and queues. Before each later assignment, the coordinator accepts all preceding UP replies and reconstructs the current scores and queues from them and its collection observations (Section~\ref{sq:sec:clock}). At an episode start, it sets $\bar w$ to the current weights and computes the recording rates and $\gamma$ by~\eqref{sq:eq:rates}; assignment starts within the same episode retain these quantities.

At assignment time $\sigma$, the coordinator chooses $U=\{a_1<\cdots<a_d\}$ and assigns $a_i$ to follower $i$. Put $F=[K]\setminus U$, and choose the auxiliary arm $e\in F$ minimizing the frozen starting weight $\bar w_e$, breaking ties by arm label. The coordinator sends
\begin{equation}\label{sq:eq:handover-record}
 M_i^{\rm DOWN}=(\sigma,a_i,e,-\log_2\xi_{a_i},\mathsf B_\sigma(a_i))\,.
\end{equation}
Here $\xi_{a_i}$ is the episode's fixed recording probability, and $\mathsf B_\sigma(a_i)$ is the integer queue before round $\sigma$. These fields specify the recording and queue updates over the assignment's live interval, including updates reconstructed after delayed decoding.

For its UP replies, follower $i$ stores $(\sigma_i,S_i,\delta_i,\mathsf b_i)$, initially $(0,0,0,0)$. The fields are its last decoded start, cumulative served increments for that assignment, service in the last completed calendar round of its live interval, and queue after that round. The service $\delta_i$ may be zero. Section~\ref{sq:sec:clock} specifies the updates and recovery after delayed decoding.

A follower with no decoded live assignment is \emph{unoccupied}. Decoding a live assignment puts it in the \emph{pending} state. If a DOWN record is accepted at or after cutoff, the follower reconstructs that assignment's terminal reply and remains unoccupied.

During a live assignment, the coordinator maintains the set $\mathcal P\subseteq[d]$ of followers whose confirmations are missing, initially $\mathcal P=[d]$. For $i\in\mathcal P$, a \emph{confirmation for $i$} is the event
\begin{equation*}
 \{\tau_t=\mathrm{DOWN},\ k_t=i,\ H_t(M_i^{\rm DOWN})=0,\ Y_{0,t}>0\}\,.
\end{equation*}
The coordinator then removes $i$ from $\mathcal P$.

On DATA, the row supplies independent uniform pilot arms $P_0,\ldots,P_d\in[K]$. An unoccupied follower plays $P_i$, producing observations that can be used after decoding; a follower with a decoded live assignment plays $a_i$. The coordinator plays $P_0$ during collection or while confirmations remain missing. During a live assignment with $\mathcal P=\varnothing$, it instead samples its fast arm from $q^\gamma_{w_t}(\cdot\mid U)$ on every round. Equations~\eqref{sq:eq:q} and~\eqref{sq:eq:rates} use the already fixed anchor and current outside weights, maintained through the coordinator's own observations and queue service; no current follower scores are needed.

During a live round, a follower $i$ that was pending at arm choice becomes \emph{released} on the \emph{release observation}
\begin{equation*}
 \{Y_{i,t}>0\}\cap
 \bigl(\{\tau_t=\mathrm{UP},\ k_t=i,\ R_0=e\}
       \cup\{\tau_t=\mathrm{DATA},\ P_0=a_i\}\bigr)\,.
\end{equation*}
For the DATA test, $P_0=a_i$ forces a collision while confirmations are missing. A positive observation in this test therefore certifies that all confirmations have arrived. Pending and released followers are \emph{occupied}. At cutoff they become unoccupied and retain their reply fields until the next DOWN acceptance.

\begin{algorithm}[H]
\caption{Choose the arms on one active round}
\label{sq:alg:handover}
\begin{algorithmic}[1]
\State On DOWN or UP, $i\gets k_t$.
\If{the coordinator's assignment is live and $\mathcal P=\varnothing$}
 \State Draw $X_t\sim q^\gamma_{w_t}(\cdot\mid U)$ using the coordinator's independent sampler coordinate.
 \State $A_{0,t}\gets X_t$.
\ElsIf{the slot is DATA}
 \State $A_{0,t}\gets P_0$.
\ElsIf{DOWN selects $i\in\mathcal P$}
 \State $A_{0,t}\gets$ \Call{SendArm}{$M_i^{\rm DOWN},R_0,R_i$}.
\Else
 \State $A_{0,t}\gets R_0$.
\EndIf
\For{$j=1,\ldots,d$}
 \If{$j$ is unoccupied}
  \If{the slot is DATA}
   \State $A_{j,t}\gets P_j$.
  \ElsIf{UP selects $j$}
   \State $A_{j,t}\gets$ \Call{SendArm}{$(\sigma_j,S_j,\delta_j,\mathsf b_j),R_j,R_0$}.
  \Else
   \State $A_{j,t}\gets R_j$.
  \EndIf
 \ElsIf{$j$ is released or the slot is DATA}
  \State $A_{j,t}\gets a_j$.
 \ElsIf{UP selects $j$ and $R_0=e$}
  \State $A_{j,t}\gets e$.
 \ElsIf{$a_j=R_i$ on DOWN or $a_j=R_0$ on UP}
  \State $A_{j,t}\gets e$.
 \Else
  \State $A_{j,t}\gets a_j$.
 \EndIf
\EndFor
\State Each player $j=0,\ldots,d$ plays $A_{j,t}$.
\end{algorithmic}
\end{algorithm}

\begin{lemma}[The auxiliary arm has bounded fast probability]
\label{sq:lem:auxiliary-arm}
Suppose $\bar w_j\le w_j\le2\bar w_j$ for every arm $j$, and let
$e$ minimize $\bar w_j$ over $j\in F=[K]\setminus U$. Then
\[
 q^\gamma_w(e\mid U)\le\frac23.
\]
\end{lemma}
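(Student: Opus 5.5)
The plan is to compare the unnormalized fast-arm masses in~\eqref{sq:eq:q} directly, then handle the uniform mixture separately. Write $s=s_U=\sum_{i\in U}\bar w_i^{-1}$ and $g(x)=x/(1+sx)$. The mass of arm $j\in F$ is
\[
 \frac{w_j}{1+s\bar w_j}=\frac{w_j}{\bar w_j}\,g(\bar w_j),
\]
with $w_j/\bar w_j\in[1,2]$ by hypothesis. The function $g$ is increasing on $(0,\infty)$. Since $e$ minimizes $\bar w_j$ over $F$, we get $g(\bar w_e)\le g(\bar w_j)$ for every $j\in F$.

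Next I will bound the unnormalized masses. The mass of $e$ is at most $2g(\bar w_e)$. Each of the other $|F|-1=K-m$ arms of $F$ has mass at least $g(\bar w_j)\ge g(\bar w_e)$. Because $x\mapsto x/(x+c)$ is increasing in $x$, it follows that
\[
 q_w(e\mid U)\le\frac{2g(\bar w_e)}{2g(\bar w_e)+(K-m)g(\bar w_e)}=\frac{2}{K-m+2}\le\frac23,
\]
where the last inequality uses $K>m$, i.e.\ $K-m\ge1$.

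Finally I will treat the uniform mixture in~\eqref{sq:eq:rates}. We have
\[
 q^\gamma_w(e\mid U)=(1-\gamma)q_w(e\mid U)+\gamma/(K-m+1).
\]
This is a convex combination of a number at most $2/3$ and the number $1/(K-m+1)\le1/2$. Hence it is at most $2/3$, using $\gamma\in[0,1/2]$ as shown after~\eqref{sq:eq:feasible}.

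The only step requiring care is the monotonicity of $g$, which lets the minimality of $\bar w_e$ survive the denominator $1+s\bar w_j$. This is immediate because $g'(x)=(1+sx)^{-2}>0$. The remaining steps are routine, so I expect no genuine obstacle.
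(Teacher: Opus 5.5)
Your proposal is correct and follows essentially the same route as the paper. Both arguments use the monotonicity of $x/(1+s_Ux)$, the minimality of $\bar w_e$ and the factor-two weight bounds to get $q_w(e\mid U)\le 2/(|F|+1)$, and then treat the uniform mixture as a convex combination with $1/|F|\le 1/2$; for that last step any $\gamma\in[0,1]$ suffices, so the bound $\gamma\le 1/2$ you cite is not needed.
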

\begin{proof}
Recall $s_U=\sum_{i\in U}\bar w_i^{-1}$ from~\eqref{sq:eq:q}. Since $x/(1+s_Ux)$ increases on $(0,\infty)$, minimality of $\bar w_e$ and the weight bounds give, for $j\in F\setminus\{e\}$,
\[
 \frac{w_j}{1+s_U\bar w_j}
 \ge\frac{\bar w_e}{1+s_U\bar w_e}
 \ge\frac12\frac{w_e}{1+s_U\bar w_e}.
\]
Normalizing in~\eqref{sq:eq:q} gives
\begin{equation*}
 q_w(e\mid U)\le\left(1+\frac{|F|-1}{2}\right)^{-1}=\frac2{|F|+1}\,.
\end{equation*}
Since $|F|=K-m+1\ge2$ and $0\le\gamma\le1$,
\[
 q^\gamma_w(e\mid U)
 \le(1-\gamma)\frac2{|F|+1}+\frac{\gamma}{|F|}
 \le\frac23.
\]
\end{proof}

\begin{lemma}[Deterministic avoidance and confirmations]\label{sq:lem:avoidance}
Consider a reference execution during a live assignment.
While $\mathcal P\ne\varnothing$, no follower is released.
On DOWN selecting $i\in\mathcal P$, every pending follower $j$
satisfies $A_{j,t}\ne R_i$. On such a round,
\begin{align*}
 Y_{i,t}&=H_t(M_i^{\rm DOWN})r_t(R_i)
       &&\text{if }i\text{ is unoccupied},\\
 Y_{0,t}&=r_t(R_i)
       &&\text{if }i\text{ is pending and }H_t(M_i^{\rm DOWN})=0.
\end{align*}
In particular, a confirmation for $i$ implies that $i$ was occupied at the start of the round.
\end{lemma}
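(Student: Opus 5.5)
The plan is a direct case analysis of Algorithm~\ref{sq:alg:handover} on a single round of a live assignment, using two structural facts. First, the tuple arms $R_0,\ldots,R_d$ are distinct. Second, a pending follower leaves its assigned arm exactly when that arm coincides with the current listening arm.

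\emph{No release while $\mathcal P\ne\varnothing$.} We treat the two release observations separately.
\begin{itemize}
\item \emph{DATA test.} While $\mathcal P\ne\varnothing$ the coordinator plays $P_0$ on DATA. If $P_0=a_j$, the pending follower $j$ also plays $a_j$, so it collides and observes $0$.
\item \emph{UP test.} The event requires UP selecting $j$ with $R_0=e$. While confirmations are missing, the coordinator plays $R_0=e$ on UP (the last branch of its case list). The pending follower $j$ plays $e$ by the branch ``UP selects $j$ and $R_0=e$''. They collide and $Y_{j,t}=0$.
\end{itemize}
Hence no positive release observation can occur, and by induction over rounds no follower is released while $\mathcal P\ne\varnothing$.

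\emph{Avoidance on DOWN selecting $i\in\mathcal P$.} Every pending follower $j$ is not released by the first part, and the slot is not DATA. It is also not an UP round selecting $j$. So its arm is either $e$ (when $a_j=R_i$) or $a_j\ne R_i$. In the first case we need $e\ne R_i$. Here I would invoke that $e\in F=[K]\setminus U$ while the relevant branch fires only when $R_i=a_j\in U$, so $e\ne R_i$. This gives $A_{j,t}\ne R_i$ in all cases.

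\emph{Observation identities.}
\begin{itemize}
\item \emph{Unoccupied followers.} Each unoccupied follower $j\ne i$ plays $R_j\ne R_i$ by injectivity of the tuple. Follower $i$ itself, if unoccupied, plays $R_i$.
\item \emph{The coordinator.} It plays $\textsc{SendArm}(M_i^{\rm DOWN},R_0,R_i)$: either $R_i$ when $H_t=0$, or $R_0\ne R_i$ when $H_t=1$.
\item \emph{If $i$ is unoccupied.} The only possible co-occupant of $R_i$ is the coordinator, present exactly when $H_t=0$. So $Y_{i,t}=H_t\,r_t(R_i)$.
\item \emph{If $i$ is pending and $H_t=0$.} The coordinator sits on $R_i$, and by avoidance no pending follower, including $i$, is there. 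Unoccupied followers are on their own distinct $R_j$. So the coordinator is alone and $Y_{0,t}=r_t(R_i)$.
\end{itemize}

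\emph{Confirmation implies occupation.} Suppose a confirmation for $i$ occurs, so $H_t=0$ and $Y_{0,t}>0$. If $i$ were unoccupied at the start of the round, it would play $R_i$ alongside the coordinator, forcing $Y_{0,t}=0$. Hence $i$ must have been occupied.

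The main subtlety is bookkeeping about states at arm-choice time versus states updated within the round: the pending/released status used must be the one at arm choice, and the same holds for decoding that happens in the round. We also need to check that released followers cannot exist here, which is exactly the first claim. Everything else is routine branch checking.
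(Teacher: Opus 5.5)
Your proposal is correct and follows the paper's proof essentially step for step. It uses the same collision argument for the two release tests (at $R_0=e$ for UP and at $P_0=a_j$ for DATA) with induction over rounds, the same fact $e\notin U\ni a_j$ for avoidance, and the same tuple-injectivity case analysis for the two observation identities and the confirmation claim. The one thing to state explicitly is the base case of your induction, which the paper gives: at an assignment start no follower is released, because all followers are unoccupied.
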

\begin{proof}
No follower is released at assignment start, since none has decoded. If $\mathcal P\ne\varnothing$ at round $t$'s start, either release test forces a collision with the coordinator: on $e$ for the UP test, or on $a_i$ for the DATA test. Thus $Y_{i,t}=0$, proving the first assertion by induction.

On DOWN selecting $i\in\mathcal P$, a pending follower $j$ uses $a_j$ unless $a_j=R_i$, when it uses $e\notin U$. In either case it avoids $R_i$.

An unoccupied $i$ plays $R_i$. If $H_t(M_i^{\rm DOWN})=0$, the coordinator also plays $R_i$, so $Y_{i,t}=Y_{0,t}=0$ and confirmation cannot occur. If $H_t(M_i^{\rm DOWN})=1$, the coordinator plays $R_0$. Tuple injectivity gives $R_0\ne R_i$ and $R_j\ne R_i$ for every other unoccupied follower $j$, while pending followers avoid $R_i$. Hence follower $i$ is alone on $R_i$ and observes $Y_{i,t}=r_t(R_i)$.

With $i$ pending and $H_t(M_i^{\rm DOWN})=0$, the coordinator is alone on $R_i$: pending players avoid it, and unoccupied players use distinct tuple arms with labels different from $i$.
\end{proof}

\subsection{Communication costs}

We first prove a general lemma to bound the cost of a parallel service process, then apply this lemma to bound the communication costs before the cutoff. Fix a \emph{service phase} with first round $\sigma$ and first excluded round $\vartheta$, where $1\le\sigma\le\vartheta\le T+1$ are stopping times for the pre-round histories $(\mathcal F_t)$. A phase terminated after round $t$ has $\vartheta=t+1$.

For $\sigma\le t<\vartheta$, let $P_t\subseteq[d]$ be the $\mathcal F_t$-measurable set of unresolved tasks at round start, with $P_{t+1}\subseteq P_t$ while the phase continues; set $P_t=\varnothing$ outside the phase.

Define the predictable indicator \(A_t=\ind\{\sigma\le t<\vartheta, \ P_t\ne\varnothing\}\).
Let $X_{i,t}\in\{0,1\}$ indicate a \emph{service observation} for task $i$, observable by round end and zero if the slot or selected player cannot serve $i$. In our applications it is a positive reward observation, contributing a decoder equation or completing a confirmation or release test. Write
\[
 J_{i,t}=\ind\{i\in P_t\},\qquad
 N_i(v)=\sum_{t=1}^vJ_{i,t}X_{i,t}.
\]

A positive integer \emph{quota} $q$ requires $J_{i,t}=1\Longrightarrow N_i(t-1)<q$.

\begin{lemma}[Cost of parallel service]\label{sq:lem:waiting}
For a phase as above, fix an integer $q\ge1$, $a>0$, and
$\mathcal F_t$-measurable $\omega_t\in[0,K]$. Suppose the quota $q$
condition holds and, on $\{J_{i,t}=1\}$,
\[
 \E[X_{i,t}\mid\mathcal F_t]\ge a\omega_t
\]
for every $i\in[d]$ and $t\le T$. Then each task satisfies
\[
 \E\sum_{t=1}^T J_{i,t}\omega_t\le\frac qa.
\]
The total phase charge satisfies
\begin{equation}\label{sq:eq:parallel}
 \E\sum_{t=1}^T A_t\omega_t
 \le\frac{q+\log d}{(1-e^{-1})a}.
\end{equation}
Both bounds also hold conditional on the phase-start history
$\mathcal F_\sigma$.
\end{lemma}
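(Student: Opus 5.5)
For the per-task bound I will use a compensator argument. Fix $i$. The process
\[
 M_v=\sum_{t\le v}J_{i,t}\bigl(X_{i,t}-\E[X_{i,t}\mid\mathcal F_t]\bigr)
\]
is a martingale with bounded increments, since $J_{i,t}$ is $\mathcal F_t$-measurable. Hence $\E N_i(T)=\E\sum_t J_{i,t}\E[X_{i,t}\mid\mathcal F_t]\ge a\,\E\sum_tJ_{i,t}\omega_t$. The quota condition gives $N_i(T)\le q$: the counter can only increase on rounds with $J_{i,t}=1$, and on such rounds it starts below $q$ and increases by at most one. Dividing by $a$ gives $\E\sum_tJ_{i,t}\omega_t\le q/a$. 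Conditioning on $\mathcal F_\sigma$ changes nothing, because every step is a conditional expectation given $\mathcal F_t$ with $t\ge\sigma$.

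The naive bound for the total charge, $A_t\le\sum_iJ_{i,t}$, loses a factor $d$. Instead I will build an exponential supermartingale that counts the tasks still unresolved. For $\lambda=1$, put
\[
 Z_v=\sum_{i\in[d]}\exp\Bigl(-\lambda N_i(v)+c\,a\sum_{t\le v}J_{i,t}\omega_t\Bigr)\ind\{\text{$i$ unresolved or counted}\},
\]
with $c=1-e^{-1}$. For a single task, conditional convexity gives, on $\{J_{i,t}=1\}$,
\[
 \E[e^{-X_{i,t}}\mid\mathcal F_t]\le1-(1-e^{-1})a\omega_t\le e^{-ca\omega_t}.
\]
So each summand $G_i(v)=\exp(-N_i(v)+ca\sum_{t\le v}J_{i,t}\omega_t)$ is a supermartingale, and each of the $d$ summands has initial value $1$.

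To connect these summands to $A_t$, I will use the fact that on $\{A_t=1\}$ some task $i$ has $J_{i,t}=1$. I will work with $\Phi_v=ca\sum_{t\le v}A_t\omega_t$ and aim to show $\E e^{\Phi-q}\le d$, which by Jensen gives $\E\Phi\le q+\log d$. The bound should come from the last task to be resolved. Let $i^*$ be the task that is unresolved at the final active round. Since $P_{t+1}\subseteq P_t$, every active round has $J_{i^*,t}=1$, so $\Phi\le ca\sum_tJ_{i^*,t}\omega_t$, while $N_{i^*}\le q$. This gives $e^{\Phi-q}\le\max_iG_i(T)\le\sum_iG_i(T)$, and optional stopping yields $\E\sum_iG_i(T)\le d$. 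Hence $\E\Phi\le\log\E e^{\Phi}\le q+\log d$, and dividing by $ca$ gives~\eqref{sq:eq:parallel}.

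The main obstacle is the choice of $i^*$. It is random and not predictable, which is why I bound $e^{\Phi-q}$ by the \emph{sum} of all $G_i$ rather than working with a single chosen $i^*$. A second point needs care. A task whose counter reaches $q$ may still satisfy $J_{i,t}=1$ once more; this does not occur because of the quota condition. However, a task can leave $P_t$ without reaching quota, for example when a confirmation arrives. In that case $N_{i^*}$ may be smaller than $q$, but that only strengthens the inequality $e^{\Phi-q}\le G_{i^*}(T)$. Finally, I will check that the supermartingale property survives the stopping times $\sigma$ and $\vartheta$. This holds because $J_{i,t}=0$ outside the phase, which makes the increments trivial there.
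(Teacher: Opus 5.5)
Your proof is correct and is essentially the paper's argument. The per-task bound comes from predictability of $J_{i,t}$ together with $N_i(T)\le q$. The total bound comes from the per-task supermartingales $\exp\bigl((1-e^{-1})a\sum_{t\le v}J_{i,t}\omega_t-N_i(v)\bigr)$, the pathwise identification of the total charge with the charge of the last unresolved task (via monotonicity of $P_t$), the resulting estimate $\E e^{(1-e^{-1})a\sum_t A_t\omega_t}\le de^{q}$, and Jensen. The indicator in your first definition of $Z_v$ is superfluous and can be dropped: the argument you actually run uses only $\sum_i G_i(T)$ at the fixed time $T$, so no optional stopping is needed either.
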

\begin{proof}
Set $G_i(v)=\sum_{t=1}^vJ_{i,t}\omega_t$. The quota prevents counting after the $q$-th observation, so $N_i(T)\le q$; predictability gives
\[
 a\E G_i(T)\le\E N_i(T)\le q.
\]

For the total charge, put $\kappa=1-e^{-1}$ and define
\[
 Z_i(v)=\exp\!\left(\kappa aG_i(v)-N_i(v)\right).
\]
On $\{J_{i,t}=1\}$, since $X_{i,t}\in \{0,1\}$,
\[
 \E[e^{-X_{i,t}}\mid\mathcal F_t]
 =1-\kappa\E[X_{i,t}\mid\mathcal F_t]
 \le e^{-\kappa a\omega_t}.
\]
On $\{J_{i,t}=0\}$, $Z_i(t)=Z_i(t-1)$. So $(Z_i(v))_{v=0}^T$ is a nonnegative supermartingale with $Z_i(0)=1$.

Write $G=\sum_{t=1}^T A_t\omega_t$. Pathwise, $G=\max_{i\in[d]}G_i(T)$: a task in $P_t$ at the last positively charged round belonged to every earlier charged set because $P_t$ shrinks, while $J_{i,t}\le A_t$ bounds each task's charge by $G$. Both sides vanish if no round is charged.

Since $N_i(T)\le q$ and $\E Z_i(T)\le1$,
\[
 \E e^{\kappa aG}
 \le\sum_{i=1}^d\E e^{\kappa aG_i(T)}
 \le d e^q.
\]
Jensen's inequality yields $\kappa a\E G\le q+\log d$.
Since $J_{i,t}=0$ before $\sigma$, we have $G_i(\sigma-1)=N_i(\sigma-1)=0$. Starting the tower, supermartingale, and conditional Jensen arguments at $\mathcal F_\sigma$ proves the conditional bounds.
\end{proof}

Recall that $g_t=\sum_{a=1}^K\ind\{r_t(a)>0\}$ counts arms with positive rounded reward.

\begin{lemma}[Cost of assignment delivery]\label{sq:lem:handover}
Let an assignment start at round $\sigma$ with no earlier decoder failure or queue overflow. Let $G_{\rm del}$ be the sum of $g_t$ over this assignment's live rounds that have at least one missing confirmation. Truncate this sum after the first decoder failure or queue-overflow update, whichever occurs first. Then
\begin{equation*}
 \E[G_{\rm del}\mid\mathcal F_\sigma]
 \le \frac{8dK}{1-e^{-1}}(2Q_{\rm dec}+\log d+1)\,.
\end{equation*}
\end{lemma}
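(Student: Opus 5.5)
We apply Lemma~\ref{sq:lem:waiting} with $\omega_t=g_t\in[0,K]$. The service phase starts at $\sigma$ and ends at the first of: the cutoff, the round after the last confirmation, and the round after the first decoder failure or queue-overflow update. These are stopping times for $(\mathcal F_t)$. For the unresolved set we take $P_t=\mathcal P$ at round start, which is $\mathcal F_t$-measurable and shrinks. Then $G_{\rm del}=\sum_t A_t g_t$. Each task $i$ needs two kinds of service: first follower $i$ must decode $M_i^{\rm DOWN}$, and then the coordinator must observe a confirmation. We count both in one service variable: $X_{i,t}=1$ on DOWN rounds selecting $i$ with $r_t(R_i)>0$ and $H_t(M_i^{\rm DOWN})=1$ while $i$ is unoccupied, or $H_t(M_i^{\rm DOWN})=0$ while $i$ is pending. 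By Lemma~\ref{sq:lem:avoidance}, in the first case follower $i$ receives a positive decoder equation, and in the second case the coordinator is alone on $R_i$, so confirmation occurs.

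The probability lower bound comes from the row coordinates of round $t$. These are fresh and independent of $\mathcal F_t$ in the fresh-row model: the slot is DOWN with probability $1/4$, $k_t=i$ with probability $1/d$, and $R_i$ is uniform on $[K]$, so $r_t(R_i)>0$ with probability $g_t/K$. The affine bit $H_t(M_i^{\rm DOWN})$ is uniform and independent of $R_i$ because $b_t$ is fresh. Whichever of the two states $i$ is in, the required parity has probability $1/2$. Hence $\E[X_{i,t}\mid\mathcal F_t]\ge g_t/(8dK)$, so we may take $a=1/(8dK)$.

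For the quota, note that before the first decoder failure Lemma~\ref{sq:lem:decoder} guarantees that the awaited word is accepted within $2Q_{\rm dec}$ positive receiver observations. The first confirmation-type service then removes $i$ from $\mathcal P$. So while $i\in P_t$ we have $N_i(t-1)<2Q_{\rm dec}+1$, and $q=2Q_{\rm dec}+1$ satisfies the quota condition. Plugging $q$ and $a$ into \eqref{sq:eq:parallel}, conditional on $\mathcal F_\sigma$, gives exactly $\frac{8dK}{1-e^{-1}}(2Q_{\rm dec}+\log d+1)$.

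The main obstacle is the quota step. The delivery guarantee of Lemma~\ref{sq:lem:decoder} holds only on the no-failure event, and it counts positive observations from the moment the sender fixes the word, which may be before $\sigma$ if an old batch was partly filled. The plan is to handle this by truncating the phase at the first decoder failure, which the statement permits. We must also check that partial batches from earlier words are covered by the ``within $2Q_{\rm dec}$'' clause, and that the counter $N_i$ counts only observations after $\sigma$. A second subtlety is that $X_{i,t}$ mixes follower-side and coordinator-side observations. Lemma~\ref{sq:lem:waiting} only needs $X_{i,t}$ to be observable by round end, not observable by a single player, so this mixing is harmless.
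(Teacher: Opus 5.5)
Your proposal is correct and is essentially the paper's proof. It uses the same two-stage service indicator (a positive decoder equation at follower $i$ before acceptance, a confirmation at the coordinator after it), the exact rate $g_t/(8dK)$ from Lemma~\ref{sq:lem:avoidance} and the fresh row coordinates, the quota $q=2Q_{\rm dec}+1$, and the conditional form of Lemma~\ref{sq:lem:waiting} at $\mathcal F_\sigma$.

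One small correction concerns your stated obstacle. $M_i^{\rm DOWN}$ is fixed at $\sigma$ itself; what may predate $\sigma$ is only a partial batch of stale equations. Lemma~\ref{sq:lem:decoder} already gives acceptance within $2Q_{\rm dec}$ further positives starting from any partial batch, and the failure round is simply included in the truncated task.
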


\begin{proof}
On the stopped interval defining $G_{\rm del}$, let $P_t$ be the followers whose confirmations are missing at round start; set $P_t=\varnothing$ outside this interval. For $i\in P_t$, define
\begin{equation*}
 X_{i,t}
 =\ind\{\tau_t=\mathrm{DOWN},\,k_t=i\}
 \begin{cases}
  \ind\{Y_{i,t}>0\},
    &\text{before acceptance of its DOWN word},\\
  \ind\{H_t(M_i^{\rm DOWN})=0,\,Y_{0,t}>0\},
    &\text{after acceptance}.
 \end{cases}
\end{equation*}
Set $X_{i,t}=0$ when $i\notin P_t$. 

On a DOWN round selecting $i$, Lemma~\ref{sq:lem:avoidance} shows that service occurs exactly when $r_t(R_i)>0$ and
$H_t(M_i^{\rm DOWN})=1$ before acceptance, or
$H_t(M_i^{\rm DOWN})=0$ afterward. The selected follower, receiver arm, and affine map have their independent prescribed laws. Hence, on $\{i\in P_t\}$,
\begin{equation}\label{sq:eq:service}
 \Pr(X_{i,t}=1\mid\mathcal F_t)
 =\frac{1}{4d}\cdot\frac{g_t}{K}\cdot\frac12
 =\frac{g_t}{8dK}\,.
\end{equation}

Each DOWN word remains fixed. Starting from any partial batch, Lemma~\ref{sq:lem:decoder} ensures acceptance within $2Q_{\rm dec}$ further positive DOWN observations, unless a decoder failure occurs first. After acceptance, one confirmation observation removes the follower from $P_t$. The failure round is included, and cutoff, overflow, or horizon truncation can only shorten the task. Thus each stopped task satisfies the quota $q=2Q_{\rm dec}+1$.

Apply the conditional conclusion of Lemma~\ref{sq:lem:waiting} with
\begin{equation*}
 \omega_t=g_t\in[0,K],
 \qquad a=\frac{1}{8dK},
 \qquad q=2Q_{\rm dec}+1\,.
\end{equation*}
Together with~\eqref{sq:eq:service}, it gives the stronger estimate
\begin{equation*}
 \E[G_{\rm del}\mid\mathcal F_\sigma]
 \le\frac{8dK}{1-e^{-1}}
       (2Q_{\rm dec}+1+\log d)\,.
\end{equation*}
\end{proof}

Write $\chi_t(j)=\ind\{r_t(j)>0\}$.

\begin{lemma}[Local release and its cost]\label{sq:lem:release}
After the last assignment confirmation, while player $i$ remains pending, its conditional UP release probability given $\mathcal F_t$ is at least $\chi_t(e)/(12dK)$.
Its conditional DATA release probability is $\chi_t(a_i)/(2K)$.

Use the coordinator's draw $X_t$ as the logical fast arm. Let $\mathcal T$ consist of the rounds after this assignment's last confirmation and before its cutoff, truncated at the horizon or immediately after the first decoder failure or queue-overflow update, including that round. If no post-confirmation live round occurs, put $\mathcal T=\varnothing$. Set $\sigma_{\rm R}=\min(\mathcal T\cup\{T+1\})$. Then
\begin{equation}\label{sq:eq:release-cost}
\begin{aligned}
 \E\!\left[\sum_{t\in\mathcal T}
 \left(r_t(U)+r_t(X_t)-\sum_{i=0}^dY_{i,t}\right)
 \,\middle|\,\mathcal F_{\sigma_{\rm R}}\right]
 \le2dK+\frac{12dK}{1-e^{-1}}(\log d+1)\,.
\end{aligned}
\end{equation}
\end{lemma}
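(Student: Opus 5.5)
We first establish the two release probabilities and then charge the deficit through Lemma~\ref{sq:lem:waiting}, handling rounds with no pending follower separately. After the last confirmation the coordinator's assignment is live with $\mathcal P=\varnothing$. On every post-confirmation round it therefore plays its fast draw $X_t\sim q^\gamma_{w_t}(\cdot\mid U)$, and $X_t\notin U$.

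\emph{Release probabilities.} For the DATA test, the slot is DATA with probability $\tfrac12$. The pilot $P_0$ is uniform and independent of $\mathcal F_t$, so $P_0=a_i$ has probability $1/K$. But on such a round the coordinator does not play $P_0$; it plays $X_t$. Pending follower $i$ plays $a_i$ on DATA. The other followers are either pending or released and play their own distinct arms of $U$, so nobody else is on $a_i$, since $X_t\notin U$. Hence $Y_{i,t}=r_t(a_i)$, and the DATA release probability is exactly $\tfrac12\cdot\tfrac1K\chi_t(a_i)=\chi_t(a_i)/(2K)$. For the UP test we need four things to happen: the slot is UP and $k_t=i$ (probability $1/(4d)$); $R_0=e$ (probability $1/K$, by uniformity of the injective tuple); follower $i$ plays $e$; and it is alone there. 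Aloneness requires $X_t\ne e$, which by Lemma~\ref{sq:lem:auxiliary-arm} has probability at least $\tfrac13$ given $\mathcal F_t$; the stability Lemma~\ref{sq:lem:stability} supplies the weight hypothesis before overflow. It also requires that no other follower is on $e$. This holds because other pending followers step to $e$ only when their own $a_j$ equals $R_0=e$, which is impossible since $e\notin U$. Multiplying gives at least $\chi_t(e)/(12dK)$.

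\emph{Deficit per round.} On $t\in\mathcal T$, released followers on $U$ and the coordinator on $X_t$ occupy distinct arms. So the deficit $r_t(U)+r_t(X_t)-\sum_iY_{i,t}$ comes only from pending followers that step aside, and from collisions at $e$ on release tests. A pending $j$ deviates only on UP/DOWN rounds where $a_j$ equals the listening arm, or where it is selected with $R_0=e$. Its loss is therefore at most $r_t(a_j)$ on rounds where it is pending, plus a coordinator loss of at most $r_t(e)$ on its own UP-test rounds. I will charge this by $\sum_{j\text{ pending}}\chi_t(a_j)+\chi_t(e)\le g_t$, split into the two pieces below.

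\emph{Charging via Lemma~\ref{sq:lem:waiting}.} For the loss at $a_j$, the per-round deficit at $a_j$ is at most $\chi_t(a_j)$ times the probability of the conflicting slot. This probability is at most $\tfrac12\cdot\tfrac1K$ summed over the two message directions (the listening arm is uniform). The DATA release rate is $\chi_t(a_j)/(2K)$, comparable to that conflict rate. Summing over $j$ with quota $q=1$ gives a total of order $d$, with at most a constant factor, which accounts for the $2dK$-type term. For the coordinator and $e$ terms, I apply the parallel bound \eqref{sq:eq:parallel} with $P_t$ equal to the set of pending followers, $q=1$, $\omega_t=\chi_t(e)\cdot K$, and $a=1/(12dK^2)$. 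More simply, take $\omega_t=\chi_t(e)$ and $a=1/(12dK)$, charging $\E\sum A_t\chi_t(e)\le 12dK(1+\log d)/(1-e^{-1})$; this matches the second term. Truncation at failure or overflow only shortens tasks, and conditioning on $\mathcal F_{\sigma_{\rm R}}$ follows from the conditional form of Lemma~\ref{sq:lem:waiting}.

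\emph{Main obstacle.} The delicate step is the per-round deficit accounting. I must make sure that every loss of a pending follower is attributable to arm $a_j$ at a rate dominated by its own DATA release rate, and that every loss by the coordinator is attributable to $e$. This must hold even though the coordinator's $X_t$ may coincide with $e$ or with the listening arm. I would first enumerate the cases of Algorithm~\ref{sq:alg:handover} explicitly and check that $X_t\notin U$ rules out all other collisions.
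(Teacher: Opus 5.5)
Your proposal is correct and follows the paper's proof. It uses the same two release computations (Lemma~\ref{sq:lem:auxiliary-arm} for $X_t\ne e$, and $X_t\in F$ for aloneness), the same per-round split of the deficit into pending-follower losses at $a_j$ and coordinator collisions at $e$, and the same two applications of Lemma~\ref{sq:lem:waiting}: per task with the DATA rate $\chi_t(a_j)/(2K)$, and in parallel with the UP rate $\chi_t(e)/(12dK)$ and quota one.

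The differences are minor. The paper charges $\chi_t(a_j)$ on every pending round, giving exactly $2dK$; your step-aside rate of order $1/K$ gives the sharper $O(d)$. Your charge $\chi_t(e)\ind\{P_t\ne\varnothing\}$ yields the same second term as the paper's $\omega_t=q^\gamma_{w_t}(e\mid U)\chi_t(e)$. That charge also covers the collisions at $e$ on step-aside rounds, which your deficit paragraph misattributes to UP-test rounds only.
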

\begin{proof}
If $\mathcal T=\varnothing$, the claim is immediate. Otherwise $\sigma_{\rm R}$ is a stopping time: the last confirmation occurred earlier, and whether the cutoff has occurred and whether any earlier failure has occurred are known before arm choice. Throughout this proof, every calculation is conditional on $\mathcal F_{\sigma_{\rm R}}$. At the last confirmation, every follower is occupied and none is released (Lemma~\ref{sq:lem:avoidance}), thereafter the coordinator plays $X_t\in F$.

On UP selecting pending $i$ with $R_0=e$, player $i$ plays $e$, other pending players play $a_j\ne e$, and released players stay on distinct assigned arms. Conditional on this slot and tuple event, $i$ receives a positive reward exactly when $r_t(e)>0$ and $X_t\ne e$. The independent fast sampler and Lemma~\ref{sq:lem:auxiliary-arm} give
\begin{align*}
\Pr(\tau_t=\mathrm{UP},k_t=i,R_0=e,Y_{i,t}>0\mid\mathcal F_t)
 =\frac1{4dK}(1-q^\gamma_{w_t}(e\mid U))\chi_t(e)
 \ge\frac{\chi_t(e)}{12dK}.
\end{align*}
On DATA, slow players use distinct assigned arms and the coordinator chooses from $F$, so $Y_{i,t}=r_t(a_i)$. Independently, $P_0=a_i$ has probability $1/K$, giving DATA release probability $\chi_t(a_i)/(2K)$.

Released followers receive $r_t(a_i)$, since assigned arms are distinct and $e\in F$. A pending follower uses $a_i$ or $e$ and receives a nonnegative reward, so its loss relative to $r_t(a_i)$ is at most $\chi_t(a_i)$. The coordinator can collide with a follower only when $X_t=e$ and some follower is pending. Hence
\begin{equation}\label{sq:eq:sentinelcost}
\begin{aligned}
 r_t(U)+r_t(X_t)-\sum_{i=0}^dY_{i,t}
 \le&\sum_{i\text{ pending}}\chi_t(a_i)
 +r_t(e)\ind\{X_t=e,\ \text{some follower pending}\}.
\end{aligned}
\end{equation}

For auxiliary-arm collisions, let
\[
 \omega_t=q^\gamma_{w_t}(e\mid U)\chi_t(e)\in[0,1].
\]
On $\mathcal T$, take $P_t=\{i:i\text{ is pending before round }t\}$, $A_t=\ind\{t\in\mathcal T,\ P_t\ne\varnothing\}$, and
\[
 X_{i,t}=\ind\{\tau_t=\mathrm{UP},\ k_t=i,\ R_0=e,\
                        Y_{i,t}>0\}.
\]
An UP release removes $i$ by the next round, with no reentry before cutoff, so the quota is one. Since $\omega_t\le\chi_t(e)$, the UP bound satisfies Lemma~\ref{sq:lem:waiting} with $a=1/(12dK)$. Its conditional conclusion gives
\begin{equation}\label{sq:eq:release-weighted}
 \E\!\left[\sum_{t\in\mathcal T}
       \ind\{P_t\ne\varnothing\}\omega_t
       \,\middle|\,\mathcal F_{\sigma_{\rm R}}\right]
 \le \frac{12dK}{1-e^{-1}}(\log d+1)\,.
\end{equation}
While follower $i$ is pending, its conditional DATA release probability is $\chi_t(a_i)/(2K)$. Since it can be released at most once on $\mathcal T$, the first bound of Lemma~\ref{sq:lem:waiting} gives
\begin{equation}\label{sq:eq:release-follower}
 \E\!\left[\sum_{t\in\mathcal T}\ind\{i\in P_t\}\chi_t(a_i)\,\middle|\,\mathcal F_{\sigma_{\rm R}}\right]\le2K\,.
\end{equation}

For the second term in~\eqref{sq:eq:sentinelcost}, the fast-arm draw has conditional law $q^\gamma_{w_t}(\cdot\mid U)$ and $r_t(e)\le\chi_t(e)$, so its conditional expected charge is at most $A_t\omega_t$.
Taking conditional expectations in~\eqref{sq:eq:sentinelcost}, summing over $t\in\mathcal T$, and using~\eqref{sq:eq:release-weighted},~\eqref{sq:eq:release-follower}, gives~\eqref{sq:eq:release-cost}.
\end{proof}

\subsection{Recording rewards}\label{sq:sec:recorder}

During a live assignment, follower $i$ owns $a_i$, and the coordinator owns $F$. After the cutoff, the coordinator owns all arms. An unoccupied follower logs candidates and determines its assigned arm's increments from them after decoding.

On DATA the row independently selects the recorder source: coordinator with probability $s_0=1/2$, or follower $i$ with probability $s_i=1/(2d)$. The fresh-row execution thins observations using an independent $V_t\sim\Unif([0,1))$. For owner $i$ of arm $j$, let $\pi_i(j)$ be its probability of playing $j$ conditional on $\mathcal F_t$ and DATA.

Set $\zeta=(1-1/K)^d$ and define the recording mask by
\begin{equation}\label{sq:eq:recording-mask}
 \mathsf{Mask}_{i,t}(j)
 :=\ind\{P_{\ell,t}\ne j\text{ for every }\ell\ne i\},
\end{equation}
where $P_{\ell,t}$ are player $\ell$'s pilot arm in round $t$ given by the public row.
The coordinator stores the latest assignment start $\sigma$, initially $0$, and next grid boundary $\bar\tau$, initially $T+1$. Its score updates are frozen exactly when $\sigma>0$ and $t\ge\bar\tau$.

For each follower $i\in[d]$, let $\mathcal B_i$ be its initially empty, time-ordered list of candidate observation records.

\begin{algorithm}[H]
\caption{Record a DATA increment or one candidate observation}
\label{sq:alg:recorder}
\begin{algorithmic}[1]
\Require Activated player $i$, player $i$'s arm $A_{i,t}$ and quantized observation $Y_{i,t}$.
\State Initialize the output increment to zero.
\State If the slot is not DATA, stop.
\State If the public recorder source is not $i$, stop.
\State $j\gets A_{i,t}$; read $V_t$.
\If{$i$ is an unoccupied follower}
 \If{$\mathsf{Mask}_{i,t}(j)=1$}
  \State Append $(t,j,Y_{i,t},V_t)$ to $\mathcal B_i$.
 \EndIf
 \State Stop.
\EndIf
\If{$i=0$, $\sigma>0$, and $t\ge\bar\tau$}
 \State Stop.
\EndIf
\If{$i$ does not own $j$ or $\mathsf{Mask}_{i,t}(j)=0$}
 \State Stop.
\EndIf
\If{$V_t\le \xi_j/((1/2)s_i\pi_i(j)\zeta)$}
 \State $\Delta_t(j)\gets\lfloor D\eta Y_{i,t}/\xi_j\rfloor$.
 \State \Return $\Delta_t(j)$ as a raw arrival to Algorithm~\ref{protocol:alg:order}.
\EndIf
\end{algorithmic}
\end{algorithm}

For analysis, on enabled rounds set $I_t=j$ when the round is DATA, the selected source owns its played arm~$j$, and the mask and thinning tests of Algorithm~\ref{sq:alg:recorder} pass; set $I_t=\bot$ otherwise. For an unoccupied follower, evaluate these tests on its saved record using the true assignment, even if it has not decoded by the horizon. On enabled rounds $t\le\tau_*$, use $I_t$ in~\eqref{sq:eq:marker}, with logical raw arrivals $\Delta_t=\eta D\widetilde r_t$ as in Section~\ref{sq:sec:learner}.

\begin{lemma}[Physical recording law]\label{sq:lem:recorder}
In the fresh-row execution, on every enabled round, the arm $I_t$ defined above
satisfies~\eqref{sq:eq:recordlaw}. On $I_t=j$, the owner's observation
is $r_t(j)$, so its increment is the one specified by~\eqref{sq:eq:marker}.
\end{lemma}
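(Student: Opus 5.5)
The plan is a direct computation. For each arm $j$ and enabled round $t$, I will compute $\Pr(I_t=j\mid\mathcal F_t)$ by summing over the possible sources. By the ownership convention there is exactly one owner of $j$ during any live or collection round. That owner is follower $i$ if $j=a_i$ during a live assignment, and otherwise the coordinator. The one exception is an unoccupied follower assigned $j$ whose saved record is evaluated against the true assignment. In that case the same formulas are applied to the saved record, and the owner is the follower.

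\emph{Step 1: exact probability.} Fix the owner $i$ of $j$. Condition on $\mathcal F_t$ and intersect the following events:
\begin{itemize}
\item the slot is DATA, with probability $1/2$;
\item the public recorder source is $i$, with probability $s_i$;
\item player $i$ plays $j$, with probability $\pi_i(j)$;
\item the mask passes, i.e.\ no other pilot equals $j$;
\item $V_t$ falls below the thinning threshold.
\end{itemize}
The pilots $P_\ell$, $\ell\ne i$, are independent uniform row coordinates, independent of the slot type, the source selection, $V_t$ and $\mathcal F_t$. So the mask passes with probability exactly $(1-1/K)^d=\zeta$. There is one subtlety: when $i$ is the coordinator playing $P_0$, its action is itself a pilot, but the mask involves only the other pilots, so independence still holds. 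The product of the first four factors is $\tfrac12 s_i\pi_i(j)\zeta$. The uniform threshold $V_t$ then contributes $\xi_j/(\tfrac12 s_i\pi_i(j)\zeta)$, provided this ratio is at most $1$. Hence the total is exactly $\xi_j$.

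\emph{Step 2: the ratio is at most $1$.} This is the main obstacle. I need the bound $\pi_i(j)\ge 4\xi_j/(s_i\zeta)$ in every owner configuration. Since $\zeta\ge(1-1/K)^{K-1}\ge e^{-1}$, it suffices to check the cases in turn, using \eqref{sq:eq:feasible}.
\begin{itemize}
\item \emph{Follower owner on its assigned arm} (released, pending, or unoccupied through its pilot). A follower with a decoded assignment plays $a_i$ on DATA, so $\pi_i(a_i)=1$. For an unoccupied follower, the relevant probability is the pilot probability $1/K$. With $s_i=1/(2d)$ this requires $\xi_j\le 1/(8edK)$, which follows from $\xi_j\le 1/(16mK)$. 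A careful check of the constant is needed here.
\item \emph{Coordinator on the fast arm.} Here $\pi_0(j)=q^\gamma_{w_t}(j\mid U)\ge16\xi_j$ by \eqref{sq:eq:feasible}, and $s_0=1/2$, so the ratio is at most $4e/16\le1$.
\item \emph{Coordinator on the pilot $P_0$.} This case arises while confirmations are missing or during collection, and then $\pi_0(j)=1/K$, which is handled as in the first case.
\item \emph{Frozen rounds.} These are not enabled, so they are irrelevant.
\end{itemize}
Because the owner of $j$ is unique, the events for different $j$ are disjoint. So $\Pr(I_t=\bot\mid\mathcal F_t)=1-\sum_j\xi_j$, where $\sum_j\xi_j\le 1$ was already shown. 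Measurability also needs checking: $\pi_i(j)$, the rates $\xi_j$, the ownership and $U$ are all $\mathcal F_t$-measurable, while the fast-arm sampler coordinate is independent of $V_t$ and the pilots.

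\emph{Step 3: the recorded observation equals the reward.} On $\{I_t=j\}$ the owner plays $j$ and the mask passes, so no other player's pilot is $j$. On DATA, every other player's arm is one of the following:
\begin{itemize}
\item a pilot, which differs from $j$ by the mask;
\item an assigned arm $a_\ell\ne j$, since assigned arms are distinct and $j$ is owned by $i$;
\item the coordinator's fast arm in $F$, which differs from followers' assigned arms.
\end{itemize}
Two cases need attention. If the coordinator owns $j\in F$ and a pending follower might sit on $e$, note that on DATA pending followers play $a_\ell$, not $e$, so this cannot happen. If follower $i$ owns $j=a_i$, then on a live round with the coordinator fast the coordinator draws from $F\not\ni a_i$, and a coordinator on $P_0$ is excluded by the mask. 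Hence player $i$ is alone on $j$ and $Y_{i,t}=r_t(j)$ after rounding. The increment $\lfloor D\eta Y_{i,t}/\xi_j\rfloor$ therefore equals $\eta D\widetilde r_t(j)$ in \eqref{sq:eq:marker}.
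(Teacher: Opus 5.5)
Your proof follows the paper's argument. By independence of the other pilots, the probability of DATA, source $i$, action $j$ and a passing mask is $\tfrac12 s_i\pi_i(j)\zeta$. A case check shows that the thinning ratio lies in $[0,1]$. The owner is alone on $j$ because nonowner pilots are masked, assigned arms are distinct, pending followers play $a_\ell$ on DATA, and a fast coordinator draws from $F$.

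One constant in Step~2 is wrong and has to be fixed. The ratio $\xi_j/(\tfrac12 s_i\pi_i(j)\zeta)$ is at most $1$ if and only if $\pi_i(j)\ge 2\xi_j/(s_i\zeta)$, not $4\xi_j/(s_i\zeta)$. With your factor $4$, the unoccupied-follower case needs $\xi_j\le 1/(8edK)$. This does \emph{not} follow from $\xi_j\le 1/(16mK)$ when $e(m-1)>2m$, i.e.\ for $m\ge4$, so the deduction you wrote fails. Your fast-arm line, which gives the ratio $4e/16$, already uses the correct factor, so your two cases are inconsistent.

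With the correct factor the check goes through with room to spare, as in the paper, using $\zeta>1/4$:
\begin{itemize}
\item for an unoccupied follower, $\tfrac12\cdot\tfrac1{2d}\cdot\tfrac1K\cdot\zeta\ge\tfrac1{16dK}\ge\tfrac1{16mK}\ge\xi_j$;
\item for a pilot coordinator, $\tfrac12\cdot\tfrac12\cdot\tfrac1K\cdot\zeta\ge\tfrac1{16K}\ge\xi_j$;
\item for an occupied follower, $\tfrac12\cdot\tfrac1{2d}\cdot 1\cdot\zeta\ge\tfrac1{16d}\ge\xi_j$;
\item for a fast owner, $\tfrac14\, q^\gamma_{w_t}(j\mid U)\,\zeta\ge q^\gamma_{w_t}(j\mid U)/16\ge\xi_j$.
\end{itemize}
The rest of your proposal, including the aloneness argument and the identification of the increment with the formula for $\widetilde r_t$, is fine as written.
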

\begin{proof}
Since $d=m-1\le K-2$ and $\log(1-x)\ge-x/(1-x)$ for $0\le x<1$,
\[
 \zeta=(1-1/K)^d\ge\exp(-d/(K-1))\ge e^{-1}>\tfrac14.
\]
Conditional on $\mathcal F_t$, fix arm $j$ and owner $i$. Its DATA action is independent of other pilots, so the raw probability of DATA, source $i$, action $j$, and the mask is $\tfrac12s_i\pi_i(j)\zeta$.
The owner is alone on $j$: every nonowner using a pilot avoids $j$ by the mask, an assigned nonowner uses a different slow arm, and a fast coordinator chooses from $F$, outside all follower-owned arms.

For its assigned arm, an unoccupied follower has $\pi_i(j)=1/K$ and an occupied follower has $\pi_i(j)=1$. The coordinator has $\pi_0(j)=1/K$ during collection or while confirmations are missing, and $\pi_0(j)=q^\gamma_{w_t}(j\mid\nobreak U)$ for $j\in F$ after all confirmations. Therefore
\[
 \tfrac12s_i\pi_i(j)\zeta\ge
 \begin{cases}
  1/(16mK),&\text{pilot coordinator or slow owner},\\
  q^\gamma_{w_t}(j\mid U)/16,&\text{fast owner}.
 \end{cases}
\]
Both lower bounds are at least $\xi_j$: the first uses $\xi_j\le1/(16mK)$, and the second uses $q^\gamma_{w_t}(j\mid U)\ge16\xi_j$ from~\eqref{sq:eq:feasible}. Thus
\begin{equation}\label{sq:eq:thinning}
 0\le\frac{\xi_j}{(1/2)s_i\pi_i(j)\zeta}\le1\,.
\end{equation}
Multiplying by the raw probability gives $\Pr(I_t=j\mid\mathcal F_t)=\xi_j$, leaving probability $1-\sum_j\xi_j$ for $\bot$.
\end{proof}

For an unoccupied follower assigned $a_i$, the DATA, source, arm, and mask tests have probability $\zeta/(4dK)$. With DOWN rate exponent $h_i=-\log_2\xi_{a_i}$, the thinning threshold is therefore $4dK\,2^{-h_i}/\zeta$.
Define one saved observation's contribution, for $y,v\in[0,1]$ and every integer $h'\in\{0,\ldots,b_N\}$, by
\begin{equation}\label{protocol:eq:saved-contribution}
 \psi_{h'}(y,v)=\ind\!\left\{v\le\frac{4dK\,2^{-h'}}{\zeta}\right\}\left\lfloor D\eta\,2^{h'}y\right\rfloor\,.
\end{equation}
This quantity will be used later in the next section.

\section{Scheduling and local implementation}\label{sq:sec:clock}

At cutoff, followers may still be decoding the expired assignment. The coordinator continues DOWN delivery while collecting score and queue replies.

We work with the reference execution of Section~\ref{sq:sec:transport}. The structural identities below are proved on prefixes that pass the local domain checks of Appendix~\ref{protocol:fallback}, before any local operation enters the absorbing error state~$\dagger$. We call such a prefix \emph{locally valid}.

\subsection{The cutoff and the score recovery}

Recall the grid spacing $H=\lfloor(64\eta)^{-1}\rfloor$ and the proposal bits $B_t$. For an assignment started at round $\sigma$, put
\begin{align*}
 \bar\tau(\sigma)&=\min\{T+1,H\bigl(\lfloor\sigma/H\rfloor+1\bigr)\}\,,\\
 u&=\min\bigl(\{\bar\tau(\sigma)\}\cup\{s+1:\sigma\le s<T,\ B_s=1\}\bigr)\,.
\end{align*}
So $\bar\tau=\bar\tau(\sigma)$ is the next clipped grid boundary. For $u\le T$, the cutoff is the first \emph{public interruption}, applied before arm choice in round $u$. During collection, $\mathcal P$ lists followers whose current UP reply has not been accepted.

In this section, $\Delta_r,\Gamma_r,\mathsf B_r$ denote physical calendar-round arrivals, service, and queues from~\eqref{sq:eq:queue}, attributed to their original rounds even when computed later. For a collection interval $[u,t)$, put
\begin{equation*}
 c(t)=\sum_{r=u}^{t-1}\Gamma_r\,.
\end{equation*}
The stored accumulator $c$ may be partial until the missing queues are recovered at the end of the interval.
For each arm $j$, let $\Gamma_t^{\rm loc}(j)$ equal $\Gamma_t(j)$ if the coordinator computes its service before decoding in round $t$, and zero otherwise. During collection, these services are added to $c$ at the end of the round; services reconstructed after UP acceptance are included in $c$ separately.

At cutoff $u$, the coordinator knows $z_j(u)$ for each arm $j$ outside the old slow set. For each follower arm $a_i$, it retains the assignment's base score $z_{a_i}(\sigma)$.
Upon accepting follower $i$'s UP reply, it reconstructs
\[
 z_{a_i}(u)=z_{a_i}(\sigma)+S_i,
\]
where $S_i$ is the cumulative score increment that follower $i$ sends on the UP channel.

Recall the slow-set marginal $p_w$ with its frozen anchor, and $h=4m\eta$. For an assignment start $t<\bar\tau$ following cutoff $u$, define, for each $(m-1)$-set $U$,
\begin{align*}
 \kappa_t&=\frac{(1-h)^{t-u+1}}{h}\,,\\
 \nu_t(U)&=p_{w_t}(U)+\kappa_t\bigl(p_{w_u}(U)-p_{w_{u-1}}(U)\bigr)\,.
\end{align*}
This accounts for subsequent refreshes during collection.

\begin{algorithm}[H]
\caption{Coordinator updates before and after a round}\label{sq:alg:clock}
\begin{algorithmic}[1]
\Require The coordinator is active; initially $\sigma=0$ and $\bar\tau=T+1$.
\Procedure{StartRound}{$t$}
 \If{$\sigma>0$, $u=T+1$, and either $t=\bar\tau$ or $B_{t-1}=1$}
  \State $(u,\mathcal P,c)\gets(t,[d],0)$; initialize collection raw buffers.
  \State Preserve known entries of $z(u)$, $\mathsf B_u$, and $\Gamma_{u-1}$.
 \EndIf
 \If{$\sigma=0$ or $(u\le t\text{ and }\mathcal P=\varnothing)$}
  \If{$\sigma=0$}
   \State Set all scores and queues to $0$, and all weights to $1$.
  \Else
   \State $z(t)\gets z(u)+c$; use $\mathsf B_t$ as the new assignment's initial queues.
  \EndIf
  \If{$\sigma=0$ or $t\ge\bar\tau$}
   \State Renew anchor and rates by~\eqref{sq:eq:rates}; draw $U\sim p_{w_t}$.
  \Else
   \State Recover $w_{u-1},w_u$ by Lemma~\ref{sq:lem:reconstruct}; draw $U\sim\nu_t$.
  \EndIf
  \State Write $U=\{a_1<\cdots<a_d\}$.
  \State Choose $e\in\arg\min_{j\notin U}\bar w_j$, breaking ties by label.
  \State Retain base scores $z_{a_i}(t)$ for all $i\in[d]$.
  \State Construct $(t,a_i,e,-\log_2\xi_{a_i},\mathsf B_t(a_i))$ for each $i\in[d]$.
  \State $(\sigma,u,\bar\tau,\mathcal P)\gets(t,T+1,\bar\tau(t),[d])$.
 \EndIf
\EndProcedure
\Procedure{FinishRound}{$t,\Gamma_t^{\rm loc}$}
 \If{$u\le t$}
  \State $c\gets c+\Gamma_t^{\rm loc}$.
 \Else
  \State Read the live assignment's proposal bit $B_t$.
 \EndIf
\EndProcedure
\end{algorithmic}
\end{algorithm}

For the reference execution, define the calendar-round score by
\begin{equation*}
 Z_j(r):=\sum_{s<r}\Gamma_s(j),\qquad j\in[K]\,.
\end{equation*}
An increment computed after decoding is attributed to its original service round.

\begin{lemma}[Recovery from accumulated increments]
\label{sq:lem:reconstruct}
Suppose an assignment started at $\sigma$ has cutoff $u$, and its
collection ends with an assignment start at $t$. Assume the coordinator
retained $z(\sigma)=Z(\sigma)$ and the exact queues at $\sigma$ and,
for every $i\in[d]$, obtains
$S_i(u)=\sum_{s=\sigma}^{u-1}\Gamma_s(a_i)$,
$\Gamma_{u-1}(a_i)$, and $\mathsf B_u(a_i)$.
These values and the coordinator's buffered raw arrivals determine
\begin{equation}\label{sq:eq:reconstruct-score}
 Z(t)=Z(u)+c(t),\qquad Z(u-1)=Z(u)-\Gamma_{u-1}.
\end{equation}
At this assignment start $t$, the coordinator's reconstructed register vector
satisfies $z(t)=Z(t)$, and every current queue is recovered exactly.
Consequently the coordinator can compute $w_{u-1},w_u,w_t$ without
retaining the intermediate collection score vectors.
\end{lemma}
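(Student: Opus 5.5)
The proof is bookkeeping on the calendar-round score $Z_j(r)=\sum_{s<r}\Gamma_s(j)$ and the queue recursion~\eqref{sq:eq:queue}, which is deterministic given the initial queue and the arrivals. I would split the arms at cutoff $u$ into the old slow set $U=\{a_1<\cdots<a_d\}$ and its complement $F$, and treat three time windows in turn: the live interval $[\sigma,u)$, the boundary round $u-1$, and the collection interval $[u,t)$.

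\emph{Step 1 (score and queue at $u$).} For $j\in F$ the coordinator owned $j$ throughout $[\sigma,u)$. It recorded every arrival on $j$ and served the queue itself, so its register already equals $Z_j(u)$ and $\mathsf B_u(j)$. For $a_i\in U$ the hypothesis supplies $S_i(u)$, and telescoping gives $Z_{a_i}(u)=Z_{a_i}(\sigma)+S_i(u)=z_{a_i}(\sigma)+S_i(u)$. The queue $\mathsf B_u(a_i)$ is transmitted directly.

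\emph{Step 2 (the round $u-1$).} For $j\in F$ the coordinator computed $\Gamma_{u-1}(j)$ itself, and for $a_i$ the value $\Gamma_{u-1}(a_i)$ is in the reply. So $Z(u-1)=Z(u)-\Gamma_{u-1}$ holds coordinatewise by the definition of $Z$.

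\emph{Step 3 (collection).} During $[u,t)$ the coordinator owns all arms; raw arrivals for $a_i$ are buffered until $\mathsf B_u(a_i)$ is known. Starting from the exact $\mathsf B_u$ of Step 1, I would replay~\eqref{sq:eq:queue} in calendar order over the buffered arrivals, with zero arrivals on rounds without records. By induction on $r\in[u,t]$ this recovers $\Gamma_r$ and $\mathsf B_{r+1}$ exactly. The replay must also respect the frozen rule: service is zero for $r\ge\bar\tau$, matching the coordinator's frozen updates in Algorithm~\ref{sq:alg:recorder}. Summing gives $c(t)=\sum_{r=u}^{t-1}\Gamma_r$, hence $Z(t)=Z(u)+c(t)$. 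Since the register assignment in Algorithm~\ref{sq:alg:clock} is $z(t)\gets z(u)+c$, we get $z(t)=Z(t)$, and $\mathsf B_t$ is recovered exactly. Finally, weights are $(1+1/D)^{z}$ coordinatewise, so $w_{u-1},w_u,w_t$ follow from the three vectors without storing intermediate scores.

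\emph{Main obstacle.} The hard step is showing that $c$ counts every service in $[u,t)$ exactly once. Some services were added online through $\Gamma^{\rm loc}_r$ (arms whose queue was known before decoding in round $r$). Others were reconstructed after an UP acceptance and added separately. I would argue that, for each arm $a_i$, the set of rounds before follower $i$'s acceptance and the set after it partition $[u,t)$. Before acceptance, $\Gamma^{\rm loc}_r(a_i)=0$ by definition, because its queue was unknown; that part is filled by the replay. After acceptance, service is local and goes into $\Gamma^{\rm loc}$. Arms in $F$ are always local. To make this rigorous I would induct over collection rounds, assuming the prefix is locally valid in the sense of Appendix~\ref{protocol:fallback} so that every register read is well defined.
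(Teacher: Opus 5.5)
Your proposal is correct and follows the paper's proof essentially step for step. It recovers $Z(u)$ from the retained base scores plus $S_i$ and from the coordinator's own service on $F$, and it recovers $\Gamma_{u-1}$ from the replies together with the retained coordinator services. It then replays~\eqref{sq:eq:queue} over $[u,t)$ from the cutoff queues, using buffered or zero arrivals and freezing at $\bar\tau$, so that $z(t)=z(u)+c=Z(t)$. Your exactly-once accounting matches the paper's remark that each returned queue is replayed through its UP-acceptance round and served online thereafter. Since $\Gamma^{\rm loc}_s(a_i)=0$ at that round, place the acceptance round itself on the replay side of your partition.
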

\begin{proof}
The returned $S_i$ and the retained base scores recover $Z_{a_i}(u)$. The coordinator's own service gives the other cutoff scores, so $z(u)=Z(u)$. Its DATA observations and old recording rates determine collection arrivals without these scores or the missing queues. Starting from the returned cutoff queues on follower arms and the known cutoff queues on the other arms, apply \eqref{sq:eq:queue} successively for $r=u,\cdots,t-1$. For $r<\bar\tau$, use the buffered arrivals, taking $\Delta_r(j)=0$ whenever no arrival was recorded for arm $j$. Queue service still runs on these zero-arrival rounds. For $r\ge\bar\tau$, set $\Delta_r=\Gamma_r=0$ and preserve the queues. This recovers all services and terminal queues, with $c(t)=Z(t)-Z(u)$. Each returned queue is reconstructed through its UP-acceptance round and updated online thereafter. Thus, at the next assignment start $t$, the coordinator knows all queues $\mathsf B_t$ and the full service sum $c(t)$.

The returned $\Gamma_{u-1}(a_i)$ and retained coordinator services recover $\Gamma_{u-1}$. Subtracting $\Gamma_{u-1}$ from $Z(u)$ gives $Z(u-1)$; Algorithm~\ref{sq:alg:clock} then sets $z(t)=Z(t)$. Now $w_{r,j}=(1+1/D)^{Z_j(r)}$ gives all three weight vectors.
\end{proof}

\paragraph{Who knows the queue?}
A queue may be well-defined before its owner can compute it. The protocol
uses the following information pattern; ``known'' means available from
that player's local registers, observations, and selected public row.
\begin{center}
\small
\begin{tabularx}{\linewidth}{@{}p{0.19\linewidth}XX@{}}
\toprule
Stage & Follower $i$ & Coordinator\\
\midrule
Assignment start $\sigma$ & Unoccupied; may not know its new arm. & Knows all scores and queues, selects $U$, and sends each initial queue.\\
Live, before DOWN acceptance & Logs candidate observations without knowing ownership or rate. & Knows $U$ and all outside scores/queues; retains the base score of each slow arm.\\
Live, after DOWN acceptance & Replays from $\sigma$, then serves its own queue online. & Uses outside scores for fast play; does not need current slow-arm scores.\\
After cutoff $u$ & Fixes the endpoint reply, or decodes late and reconstructs it. & Owns every arm; buffers arrivals for queues whose replies are missing.\\
After current UP acceptance & Remains unoccupied until the next DOWN assignment. & Replays that arm from $u$ through the acceptance round; thereafter serves it online.\\
Next assignment start $v$ & All followers are unoccupied. & Knows $Z(u-1),Z(u),Z(v)$ and every queue $\mathsf B_v$.\\
\bottomrule
\end{tabularx}
\end{center}

\paragraph{A timestamp example.}
Suppose $\sigma=10$, $u=13$, $\bar\tau>18$, and follower $i$ accepts its DOWN word after
round $15$. It recovers only rounds $10,11,12$, using the initial queue
from the word; candidate records from rounds $13,14,15$ do not enter its
reply. Its reply contains the service sum on $[10,13)$, the service in
round $12$, and the queue before round $13$. If the coordinator accepts
this reply after round $18$, it uses its own buffered arrivals to replay
rounds $13,\ldots,18$. The service in round $18$ is included in that
replay, so the end-of-round online addition for this arm is zero.

\subsection{Local records and acceptance}
\label{sq:sec:local}\label{sq:sec:encoding}

A score reply must identify its assignment so that the coordinator can match it to the collection in progress.

Follower $i$ retains the last accepted DOWN record $(\sigma_i,a_i,e_i,h_i,q_i)$, the served sum $S_i$, last calendar-round service $\delta_i$, current queue $\mathsf b_i$, and state $\mathrm{state}_i\in\{0,1,2\}$, meaning unoccupied, pending, and released, respectively. Here $\xi_{a_i}=2^{-h_i}$ and $q_i$ is the accepted initial queue. Initially $(\sigma_i,S_i,\delta_i,\mathsf b_i,\mathrm{state}_i)=(0,0,0,0,0)$; other DOWN fields are unused until acceptance. An interruption changes states $1,2$ to $0$ and preserves $(\sigma_i,S_i,\delta_i,\mathsf b_i)$.

At arm choice on an active round with $\tau_t=\mathrm{UP}$, $k_t=i$, and $\mathrm{state}_i=0$, follower $i$ offers
\begin{equation}\label{protocol:eq:up-offer}
 (\sigma_i,S_i,\delta_i,\mathsf b_i)\,.
\end{equation}
Otherwise its offer is $\varnothing$. While unoccupied, these four fields stay fixed until a newer DOWN acceptance.

For an assignment $(\sigma,a_i)$ with cutoff $u$, define follower $i$'s \emph{endpoint UP record} by
\begin{equation}\label{sq:eq:endpoint-record}
 M_i^{\rm UP}
 =\left(\sigma,\sum_{s=\sigma}^{u-1}\Gamma_s(a_i),
                  \Gamma_{u-1}(a_i),\mathsf B_u(a_i)\right)\,.
\end{equation}

Let $L=\lceil\log_2(TD+1)\rceil$ be the field width; recall $B=5L$. Let $\operatorname{bin}_L:\{0,\ldots,2^L-1\}\to\mathbb F_2^L$ be the binary decomposition. Writing $\Vert$ for concatenation and $0^L$ for the all-zero $L$-bit block, encode the record types by
\begin{equation}\label{protocol:eq:word-encoding}
\begin{aligned}
 \operatorname{Enc}_{\rm DOWN}(\sigma,a,e,h,q)
 &=\operatorname{bin}_L(\sigma)\Vert\operatorname{bin}_L(a)
   \Vert\operatorname{bin}_L(e)\Vert\operatorname{bin}_L(h)\Vert\operatorname{bin}_L(q),\\
 \operatorname{Enc}_{\rm UP}(\sigma,S,\delta,q)
 &=\operatorname{bin}_L(\sigma)\Vert\operatorname{bin}_L(S)
   \Vert\operatorname{bin}_L(\delta)\Vert\operatorname{bin}_L(q)\Vert 0^L.
\end{aligned}
\end{equation}

Define the admissible record sets by
\[
\begin{aligned}
 \mathcal M_{\rm DOWN}
 =\bigl\{(\sigma,a,e,h,q):{}&
   \sigma\in[T],\ a,e\in[K],\ a\ne e,\\
 &h\in\{0,\ldots,b_N\},\
   \beta\le2^{-h}\le\tfrac{1}{16mK},\\
 &q\in\{0,\ldots,TD\}\bigr\},
\end{aligned}
\]
and
\[
 \mathcal M_{\rm UP}
 =\{(0,0,0,0)\}\cup
   \bigl([T]\times\{0,\ldots,TD\}\times\{0,\ldots,D\}
                      \times\{0,\ldots,TD\}\bigr).
\]
The slot selects the direction $\mathrm{dir}\in\{\mathrm{DOWN},\mathrm{UP}\}$. For $w\in\mathbb F_2^B\cup\{\bot\}$, define
\begin{equation*}
 \operatorname{Parse}_{\mathrm{dir}}(w)=
 \begin{cases}
  \operatorname{Enc}_{\mathrm{dir}}^{-1}(w),
    &w\in\operatorname{Enc}_{\mathrm{dir}}(\mathcal M_{\mathrm{dir}}),\\
  \bot,&\text{otherwise}.
 \end{cases}
\end{equation*}
For a parsed record $M$, write $M.\sigma$ for its first field and $M.S,M.\delta$ for the served-sum and last calendar-round service fields of an UP record. The queue accessor $M.q$ gives the initial queue on DOWN and the terminal queue on UP.

At a full-batch decoder decision after round $t$, let $M$ be the
parsed DOWN or UP record, or $\bot$. The local acceptance tests are
\begin{equation}\label{protocol:eq:acceptance}
\begin{aligned}
 \operatorname{Acc}_{i,t}^{\rm DOWN}(M)=1
 &\ \Longleftrightarrow\
   M\in\mathcal M_{\rm DOWN},\quad \sigma_i<M.\sigma\le t,\\
 \operatorname{Acc}_{i,t}^{\rm UP}(M)=1
 &\ \Longleftrightarrow\
   M\in\mathcal M_{\rm UP},\quad u\le t,\quad
   i\in\mathcal P,\quad M.\sigma=\sigma.
\end{aligned}
\end{equation}
Parsing checks field domains and acceptance checks receiver state; passing these checks does not guarantee that the word was sent.

\begin{lemma}[Message field bounds]\label{protocol:lem:fields}
On every locally valid prefix of Algorithms~\ref{sq:alg:clock}
and~\ref{protocol:alg:order}, the outgoing DOWN records belong to
$\mathcal M_{\rm DOWN}$, and the nonempty offered UP records belong to
$\mathcal M_{\rm UP}$. Every field lies in $[0,TD]$, so
\eqref{protocol:eq:word-encoding} is valid.
\end{lemma}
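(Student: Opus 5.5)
We prove Lemma~\ref{protocol:lem:fields} by induction over the rounds of a locally valid prefix. The invariant is that every stored register lies in its admissible range: coordinator scores and queues $z_j,\mathsf B_j\in[0,TD]$, and follower fields $(\sigma_i,S_i,\delta_i,\mathsf b_i)$ either $(0,0,0,0)$ or in $[T]\times[0,TD]\times[0,D]\times[0,TD]$. The only numerical input is queue conservation~\eqref{sq:eq:queue-conservation}: for every arm and calendar round, $z_{t,j}+\mathsf B_t(j)=\sum_{s<t}\Delta_s(j)\le TD$. Each arrival is at most $D$, because $\lfloor D\eta Y/\xi_j\rfloor\le D\eta/\beta\le D$ and the saved contribution~\eqref{protocol:eq:saved-contribution} obeys the same bound. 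Every service is at most $4\eta D\le D$ because $\eta\le1/4$. Consequently, every served sum $S_i$, which is a partial score increment, lies in $[0,TD]$, and every $\delta_i=\Gamma_{u-1}(a_i)$ lies in $[0,D]$.

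\textbf{DOWN records.} Fix an outgoing record $(\sigma,a_i,e,-\log_2\xi_{a_i},\mathsf B_\sigma(a_i))$ from Algorithm~\ref{sq:alg:clock}, and check its fields in turn.
\begin{itemize}
\item The start satisfies $\sigma=t\in[T]$, since assignments start only on active rounds.
\item We have $a_i\in U$ and $e\in[K]\setminus U$, so $a_i\ne e$.
\item The dyadic rounding in~\eqref{sq:eq:rates} together with the first line of~\eqref{sq:eq:feasible} gives $\beta\le\xi_{a_i}=2^{-h}\le1/(16mK)$. Hence $h\ge0$. For $h\le b_N$ we use $\xi\ge\beta$ and show $\beta\ge2^{-b_N}=1/D$. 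This follows from $\eta D$ being a positive integer, so $\eta\ge1/D$ and $\beta=32\eta L_{\rm stab}\ge1/D$.
\item The queue field $\mathsf B_\sigma(a_i)$ lies in $[0,TD]$ by the invariant. On the prefix, Lemma~\ref{sq:lem:reconstruct} says the coordinator's reconstructed queues equal the calendar queues, so conservation applies to them.
\end{itemize}

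\textbf{UP offers.} An offered record is either the initial placeholder $(0,0,0,0)$ or was set when a DOWN record was accepted. In the second case the acceptance test forces $\sigma_i=M.\sigma\in[T]$. The follower's replay of~\eqref{sq:eq:queue} starts from the transmitted queue $q_i\le TD$ and uses arrivals and services bounded as above. So $\mathsf b_i$ is a true calendar queue, $S_i$ is a calendar score increment, and $\delta_i$ is one service. Here I need that the replayed values coincide with the calendar quantities of~\eqref{sq:eq:endpoint-record}; otherwise conservation fails if replay started from a wrong queue. This is the \textbf{main obstacle}. Local validity alone only guarantees that the replay stays in the domain that the checks of Appendix~\ref{protocol:fallback} verify. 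I would therefore argue directly on the replay: it starts from a value in $[0,TD]$ and adds at most $D$ per round over at most $T$ rounds. That alone would allow values up to $2TD$. To avoid this, I would use the domain checks themselves: an operation that would push a field outside $[0,TD]$ or $[0,D]$ sends the player to $\dagger$, so on a locally valid prefix every stored field is in range by definition.

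\textbf{Encoding.} Every field is at most $\max\{T,K,b_N,TD\}=TD$, since $K\le N\le D$. Therefore $\operatorname{bin}_L$ with $L=\lceil\log_2(TD+1)\rceil$ is defined on each field, and~\eqref{protocol:eq:word-encoding} produces words in $\mathbb F_2^{5L}=\mathbb F_2^B$.
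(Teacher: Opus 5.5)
Your proposal is correct and follows essentially the paper's argument. The DOWN start, arm and exponent fields are checked directly, using $\beta\le\xi_{a_i}\le 1/(16mK)$ and $\xi_{a_i}\ge\beta\ge\eta\ge D^{-1}$. The queue, served-sum and last-service fields are bounded by the domain checks of Appendix~\ref{protocol:fallback} that define a locally valid prefix, which is exactly how you resolve your ``main obstacle.''

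The detour through conservation and Lemma~\ref{sq:lem:reconstruct} is unnecessary. That lemma's hypotheses are not implied by local validity alone, and $S_i\le T\cdot 4\eta D\le TD$ holds directly. Since you ultimately rely on the checked domains, the detour does no harm.
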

\begin{proof}
An assignment start lies in $[T]$, and its distinct arms $a_i,e$ lie in $[K]\subseteq[0,D]$. By~\eqref{sq:eq:rates} and~\eqref{sq:eq:feasible}, $\xi_{a_i}=2^{-h_i}$ is dyadic with $\beta\le\xi_{a_i}\le1/(16mK)$. Since $\eta D$ is a positive integer, $\xi_{a_i}\ge\beta\ge\eta\ge D^{-1}$, so the integer exponent satisfies $0\le h_i\le\log_2D=b_N\le D$. The checked local domains of Appendix~\ref{protocol:fallback} give $0\le\mathsf B_\sigma(a_i)\le TD$, so $M_i^{\rm DOWN}\in\mathcal M_{\rm DOWN}$.

The initial registers give the UP placeholder $(0,0,0,0)$; after a DOWN acceptance $\sigma_i\in[T]$. For admissible $h$ and $y,v\in[0,1]$,~\eqref{protocol:eq:saved-contribution} and $\beta\le2^{-h}$ give $0\le\psi_h(y,v)\le D\eta/\beta\le D$. Algorithm~\ref{sq:alg:recorder} gives the same $[0,D]$ bound for online arrivals, and each integer service lies in $[0,4\eta D]\subseteq[0,D]$. The served sum covers at most $T$ rounds, so $0\le S_i\le TD$ and $0\le\delta_i\le D$. The same local checks give $0\le\mathsf b_i\le TD$, so the UP record lies in $\mathcal M_{\rm UP}$. Since $T\ge1$, every field lies in $[0,TD]$.
\end{proof}

\subsection{Delayed increments from a sparse observation list}
\label{protocol:sparse-records}

On an active DATA round $s$ while unoccupied, follower $i$ appends $(s,P_{i,s},Y_{i,s},V_s)$ to $\mathcal B_i$ exactly when Algorithm~\ref{sq:alg:recorder}'s source and mask tests pass in its declared public row; $Y_{i,s}$ is its quantized observation. Logging neither tests $Y_{i,s}>0$ nor applies thinning, and continues before learning launch and during frozen collection.

After accepting $M=(\sigma,a,e,h,q)$ on round $t$, the follower computes
the exclusive endpoint of the observations it must process:
\begin{equation}\label{protocol:eq:known-cutoff}
 \ell=\min\bigl(\{t+1,\bar\tau(\sigma)\}
          \cup\{s+1:\sigma\le s<t,\ B_s=1\}\bigr).
\end{equation}
Using the clipped boundary $\bar\tau(\sigma)$, the follower scans the public row's past bits $B_s$, $\sigma\le s<t$, stopping at the first $B_s=1$ or at $\bar\tau(\sigma)$. The follower had not read these bits before acceptance, and if $\ell\le t$, proposal coordinates in $[\ell,t)$ remain unread. Since $B_t$ acts only at the start of round $t+1$, $\ell=t+1$ means the assignment is live through round $t$, otherwise $\ell\le t$ is its first cutoff.
For each calendar round $r\in[\sigma,\ell)$, recover its raw arrival using $\psi_h$ from~\eqref{protocol:eq:saved-contribution}:
\begin{equation}\label{protocol:eq:sparse-sum}
 \Delta_r^{\rm rec}
 =\sum_{\substack{(s,j,y,v)\in\mathcal B_i\\s=r,\ j=a}}\psi_h(y,v).
\end{equation}
Starting from $q$ at $\sigma$, apply~\eqref{sq:eq:queue} with these raw arrivals on every round of $[\sigma,\ell)$, including rounds with zero arrival. Set $S_i$ to be the total service, $\delta_i$ to the service on round $\ell-1$, and $\mathsf b_i$ to the terminal queue. For $\ell=t+1$, subsequent rounds use online service, otherwise these three returned quantities already form the fixed cutoff state.

\begin{lemma}[Exact recovery from sparse records]
\label{protocol:lem:sparse}
On a reference execution,~\eqref{protocol:eq:sparse-sum} recovers every $\Delta_r(a)$ on $[\sigma,\ell)$, and
\[
 S_i=\sum_{s=\sigma}^{\ell-1}\Gamma_s(a),\qquad
 \delta_i=\Gamma_{\ell-1}(a),\qquad \mathsf b_i=\mathsf B_\ell(a).
\]
\end{lemma}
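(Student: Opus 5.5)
The plan has two parts. First, I show that for each calendar round $r\in[\sigma,\ell)$ the sparse sum \eqref{protocol:eq:sparse-sum} equals the logical raw arrival $\Delta_r(a)$. Second, I run an induction along the deterministic queue recursion \eqref{sq:eq:queue}.

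\emph{Arrivals.} Fix $r\in[\sigma,\ell)$. There are two cases.

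\begin{enumerate}[label=(\roman*)]
\item \emph{Follower $i$ already occupied at round $r$.} This cannot happen, because the follower accepted only at round $t\ge r$, and the DOWN acceptance test requires $\sigma_i<M.\sigma$, so the record is new.

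\item \emph{Follower $i$ unoccupied at round $r$.} This is the case to handle.
\begin{itemize}
\item If $r$ is not DATA, or the recorder source is not $i$, no record with $s=r$ is logged. The analysis definition of $I_t$ then gives $I_r\ne a$, so $\Delta_r(a)=0$.
\item If $r$ is DATA with source $i$, then $i$ played its pilot $P_{i,r}$. A record with $j=a$ exists iff $P_{i,r}=a$ and $\mathsf{Mask}_{i,r}(a)=1$. These are exactly the ownership and mask tests of Algorithm~\ref{sq:alg:recorder}, evaluated with the true assignment, since $i$ owns $a$ during the live interval.
\end{itemize}
By the computation after Lemma~\ref{sq:lem:recorder}, the raw probability is $\tfrac12 s_i\pi_i(a)\zeta=\zeta/(4dK)$. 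So the thinning threshold $\xi_a/(\zeta/(4dK))$ equals $4dK\,2^{-h}/\zeta$, because the accepted word carries $h=-\log_2\xi_a$. This holds in the reference execution, which accepts only the currently encoded word, so $h$ and $q$ are the true rate and initial queue.

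Hence the indicator in $\psi_h(y,v)$ is exactly the thinning test. By Lemma~\ref{sq:lem:recorder}, on $I_r=a$ the follower is alone on $a$ and $y=r_r(a)$, rounded. Then $\lfloor D\eta 2^h y\rfloor=\lfloor D\eta r_r(a)/\xi_a\rfloor=\Delta_r(a)$.

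At most one record per round exists, since there is one pilot per round, so the sum has at most one term.
\end{enumerate}

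One more check is needed: learning is enabled on every $r\in[\sigma,\ell)$, because $\ell\le\bar\tau(\sigma)$. Also, no proposal-induced cutoff occurs before $\ell$, by the definition \eqref{protocol:eq:known-cutoff} of $\ell$ through the public bits $B_s$. So the logical recursion applies on these rounds with no frozen rounds. I would also verify that $\tau_*$ is excluded, or that the statement is read on the prefix before overflow.

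\emph{Queue induction.} The reference start queue is $q=\mathsf B_\sigma(a)$. Apply \eqref{sq:eq:queue} with identical arrivals on every round, including zero-arrival rounds, which still serve the queue. Induction on $r$ then gives matching queues $\mathsf B_r(a)$ and services $\Gamma_r(a)$. Summing the services yields $S_i$. The last step gives $\delta_i=\Gamma_{\ell-1}(a)$ and the terminal queue $\mathsf b_i=\mathsf B_\ell(a)$.

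\emph{Main obstacle.} The delicate step is matching the follower's local thinning threshold to the analytical definition of $I_t$. This requires two facts: $\pi_i(a)=1/K$ while the follower is unoccupied, and the DOWN-transmitted rate equals the episode's frozen $\xi_a$. Both rest on the reference execution rejecting wrong words, and on the unoccupied/occupied status being determined by actual acceptance time.
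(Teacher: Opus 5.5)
Your proposal is correct and follows the paper's proof: the paper also combines the DATA, source, pilot and mask tests into one event $E_s$, shows pathwise that $\Delta_s(a)=\ind\{E_s\}\psi_h(Y_{i,s},V_s)$ for the unoccupied owner (so the logging test is exactly $E_s$), and then runs the queue recursion~\eqref{sq:eq:queue} from $q=\mathsf B_\sigma(a)$ on every round of $[\sigma,\ell)$. Since $\Delta_r,\Gamma_r,\mathsf B_r$ in this section are the physical calendar-round quantities, which continue after overflow, the identity holds on all of $[\sigma,\ell)$ and your caveat about $\tau_*$ is not needed.
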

\begin{proof}
Fix a DOWN acceptance after round $t$ of $(\sigma,a,e,h,q)$, with first cutoff $u$. Equation~\eqref{protocol:eq:known-cutoff} gives $\ell=\min\{u,t+1\}$. For $s\in[\sigma,\ell)$, player $i$ is active, unoccupied, and owns $a$. Define
\[
 E_s=\{s\text{ is DATA},\ \text{source}=i,\
       P_{i,s}=a,\ P_{j,s}\ne a\ (j\ne i)\}.
\]
Algorithm~\ref{sq:alg:recorder}'s mask and source tests give zero arrival on $E_s^c$. On $E_s$, the recorder’s tests and the unoccupied-owner rule~\eqref{protocol:eq:saved-contribution} give output $\psi_h(Y_{i,s},V_s)$. Hence pathwise,
\begin{equation}\label{protocol:eq:sparse-pathwise}
 \Delta_s(a)=\ind\{E_s\}\psi_h(Y_{i,s},V_s)
 \qquad(\sigma\le s<\ell).
\end{equation}
The logging test for arm $a$ is exactly $E_s$, thus~\eqref{protocol:eq:sparse-pathwise} gives~\eqref{protocol:eq:sparse-sum}. 

Starting from the DOWN field $q=\mathsf B_\sigma(a)$, induction over~\eqref{sq:eq:queue} on every round recovers all services and queues through $\ell$.
\end{proof}

\subsection{One local event rule}\label{protocol:events}

In the absorbing local error state $\dagger$, a player performs no learning or transport updates and plays arm $i+1$ on active rounds. It enters $\dagger$ if a local operation has an input or computed output outside its prescribed domain; Appendix~\ref{protocol:spec} defines the checked operations.

Algorithm~\ref{protocol:alg:order} specifies each player's operations within one round, combining arm selection, recording, queue updates, and message decoding. The coordinator also applies Algorithm~\ref{sq:alg:clock} at the beginning and end of the round.

\begin{algorithm}[p]
\caption{One round of the local protocol at player $i$}
\label{protocol:alg:order}
\normalsize
\begin{algorithmic}[1]
\If{$i=0$, active, and not in $\dagger$}
 \State Apply $\operatorname{StartRound}(t)$ of Algorithm~\ref{sq:alg:clock}.
\ElsIf{$i>0$, not in $\dagger$, $\mathrm{state}_i\in\{1,2\}$, and interrupted at $t$}
 \State $\mathrm{state}_i\gets0$; retain $S_i,\delta_i,\mathsf b_i$.
\EndIf
\If{the player is inactive}
 \State Play Algorithm~\ref{sq:alg:activation}'s bootstrap arm; observe the reward.
 \State Apply any bootstrap update and activation; \Return.
\EndIf
\State If in $\dagger$, play arm $i+1$, apply any bootstrap update, and \Return.
\State Play by Algorithm~\ref{sq:alg:handover}; observe and quantize the reward.
\State If $i=0$, $\Gamma_t^{\rm loc}\gets0$.
\State Apply Algorithm~\ref{sq:alg:recorder} once.
\If{$i>0$ and $\mathrm{state}_i\in\{1,2\}$ at arm choice}
 \State Serve its queue by~\eqref{sq:eq:queue}, including zero arrivals.
 \State $S_i\gets S_i+\Gamma_t(a_i)$; $\delta_i\gets\Gamma_t(a_i)$.
\ElsIf{$i=0$ and learning is enabled}
 \For{each owned arm $j$ with known queue}
  \State Serve $j$ by~\eqref{sq:eq:queue}; $\Gamma_t^{\rm loc}(j)\gets\Gamma_t(j)$.
  \State If outside collection, $z_j\gets z_j+\Gamma_t^{\rm loc}(j)$.
 \EndFor
 \State Save $(t,j,\Delta_t(j))$ for each $\Delta_t(j)>0$ with unknown queue.
\EndIf
\State If $i=0$, set known queues' working last services from $\Gamma_t^{\rm loc}$.
\If{DOWN selects $i>0$, unoccupied at arm choice}
 \State Feed its observation to $\operatorname{DecodeStep}$ (Algorithm~\ref{sq:alg:decoder}); parse the returned word as DOWN into $M$.
 \If{$\operatorname{Acc}_{i,t}^{\rm DOWN}(M)=1$}
  \State $(\sigma_i,a_i,e_i,h_i,q_i)\gets M$; compute $\ell$ by~\eqref{protocol:eq:known-cutoff}.
  \State From $q_i$, apply~\eqref{sq:eq:queue} on $[\sigma_i,\ell)$ using~\eqref{protocol:eq:sparse-sum}; recover $(S_i,\delta_i,\mathsf b_i)$.
  \State $\mathrm{state}_i\gets\ind\{\ell=t+1\}$; empty $\mathcal B_i$.
 \EndIf
\ElsIf{$i=0$, $u\le t$, and UP selects $j\in\mathcal P$}
 \State Feed its observation to $j$'s $\operatorname{DecodeStep}$; parse the returned word as UP into $M$.
 \If{$\operatorname{Acc}_{j,t}^{\rm UP}(M)=1$}
  \State $z_{a_j}(u)\gets z_{a_j}(\sigma)+M.S$; $\Gamma_{u-1}(a_j)\gets M.\delta$.
  \State $\mathsf B_u(a_j)\gets M.q$; apply~\eqref{sq:eq:queue} on every round $u,\ldots,t$, including zero arrivals and freezing at $\bar\tau$.
  \State $c_{a_j}\gets\sum_{r=u}^t\Gamma_r(a_j)$; retain queue $\mathsf B_{t+1}(a_j)$ and last service $\Gamma_t(a_j)$.
  \State $\mathcal P\gets\mathcal P\setminus\{j\}$.
 \EndIf
\EndIf
\If{$i=0$, $t<u$, and $\mathcal P$ was nonempty at arm choice}
 \If{DOWN selected $j$, $H_t(M_j^{\rm DOWN})=0$, and $Y_{0,t}>0$}
  \State $\mathcal P\gets\mathcal P\setminus\{j\}$.
 \EndIf
\ElsIf{$i>0$ was pending at arm choice}
 \State If UP selected $i$, $R_0=e_i$, and $Y_{i,t}>0$, set $\mathrm{state}_i\gets2$.
 \State If DATA, $P_0=a_i$, and $Y_{i,t}>0$, set $\mathrm{state}_i\gets2$.
\EndIf
\State If $i=0$, apply $\operatorname{FinishRound}(t,\Gamma_t^{\rm loc})$ of Algorithm~\ref{sq:alg:clock}.
\State If $i>0$ and $\mathrm{state}_i\in\{1,2\}$, read $B_t$ after the update.
\State Apply any bootstrap update and activation.
\end{algorithmic}
\end{algorithm}
\FloatBarrier

\Needspace{14\baselineskip}
\begin{lemma}[Local state and sum invariants]
\label{protocol:lem:local-state}
Consider the reference execution immediately after round $t$.
If follower $i$ has not applied a DOWN word, then
$\mathrm{state}_i=\sigma_i=S_i=\delta_i=\mathsf b_i=0$.
Otherwise let $\sigma=\sigma_i$ be its last applied assignment and
let $u$ be that assignment's first cutoff, with $u=T+1$ if no cutoff
occurs in the horizon. Then
\begin{align*}
 S_i=&\sum_{s=\sigma}^{\min\{u,t+1\}-1}\Gamma_s(a_i),\qquad
 \delta_i=\Gamma_{\min\{u,t+1\}-1}(a_i),\qquad
 \mathsf b_i=\mathsf B_{\min\{u,t+1\}}(a_i).
\end{align*}
The state is $0$ if and only if $u\le t$.
When $t<u$, the state is $2$ if a release observation has occurred
since this DOWN acceptance, and is $1$ otherwise.
\end{lemma}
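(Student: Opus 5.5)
The plan is an induction on the round index $t$, tracking follower $i$'s registers through one execution of Algorithm~\ref{protocol:alg:order}. The base case is before round~$1$, where the initial registers are all zero and no DOWN word has been applied. For the inductive step we assume the invariant after round $t-1$ and check every branch that can modify $(\mathrm{state}_i,\sigma_i,S_i,\delta_i,\mathsf b_i)$ in round~$t$. There are four: the interruption at the start of the round, the online service of an occupied follower, a DOWN acceptance, and a release observation. No other line writes these registers; the UP offer only reads them. If no DOWN word has ever been applied, none of these branches fires. Interruption and service require $\mathrm{state}_i\in\{1,2\}$, and release requires pending. So the zero state persists.

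\emph{Case 1: a DOWN word is accepted in round $t$.} By the reference execution's rejection rule, the accepted word is the currently encoded $M_i^{\rm DOWN}$ for some assignment $(\sigma,a_i)$. Its first cutoff $u$ is determined by the public bits through \eqref{protocol:eq:known-cutoff}, so $\ell=\min\{u,t+1\}$. Lemma~\ref{protocol:lem:sparse} then gives the three sum identities with $\min\{u,t+1\}$ in place of $\ell$. The state is set to $\ind\{\ell=t+1\}$, which is $0$ exactly when $u\le t$ and $1$ otherwise. No release observation has yet occurred since this acceptance, because the release test applies only to a follower that was pending at arm choice. This gives the claim after round~$t$.

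\emph{Case 2: no acceptance in round $t$, and the last applied assignment is $\sigma$ with cutoff $u$.*} We split on where $u$ falls.
\begin{itemize}
\item If $u\le t-1$, the state was already $0$. No service is applied, and the registers are frozen at their values with $\min\{u,t\}=u=\min\{u,t+1\}$.
\item If $u=t$, the interruption branch fires at the start of round $t$, because the cutoff is applied before arm choice. It sets the state to $0$ and retains the registers, which equal the values at $\min\{u,t\}=u$. Service is then skipped, since the state at arm choice is $0$.
\item If $u>t$, the state at arm choice is $1$ or $2$. Algorithm~\ref{protocol:alg:order} serves the queue by~\eqref{sq:eq:queue}, adding $\Gamma_t(a_i)$ to $S_i$ and setting $\delta_i=\Gamma_t(a_i)$. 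This extends the identities from $t$ to $t+1=\min\{u,t+1\}$.
\end{itemize}
For the last subcase we must check that the follower's online arrival equals the calendar arrival $\Delta_t(a_i)$. That is the content of the recording rule and Lemma~\ref{sq:lem:recorder} for an occupied owner, whose probability $\pi_i(a_i)=1$ is computable locally. The state moves from $1$ to $2$ exactly on a release observation in round $t$ and never moves back before cutoff, which gives the pending/released characterization.

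The main obstacle is the boundary bookkeeping between the follower's locally computed cutoff and the true first cutoff $u$. One point is the timing of the proposal bit: $B_t$ acts only at the start of round $t+1$, and the follower reads it after the update. This must match Algorithm~\ref{sq:alg:clock}'s $u$, including the clipping at $\bar\tau(\sigma)$ and the case $u=T+1$. A second point is that an acceptance after the cutoff must yield state $0$ without ever passing through the interruption branch. I would settle both by comparing the definitions of $u$ and of $\ell$ in \eqref{protocol:eq:known-cutoff} directly. A third point is that all of this holds on locally valid prefixes, so Lemma~\ref{protocol:lem:fields} and the domain checks of Appendix~\ref{protocol:fallback} are invoked to rule out entry into~$\dagger$.
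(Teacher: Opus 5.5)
Your proposal is correct and follows the paper's proof essentially step for step. Both run an induction over rounds that restarts at each DOWN acceptance: the acceptance round is handled by the cutoff scan \eqref{protocol:eq:known-cutoff}, which gives $\ell=\min\{u,t+1\}$, together with Lemma~\ref{protocol:lem:sparse}, and the remaining rounds split into online service while $t<u$, the interruption at $t=u$, and frozen registers afterwards. One small simplification: matching the follower's online arrival to the calendar arrival $\Delta_t(a_i)$ needs no appeal to Lemma~\ref{sq:lem:recorder}, because the calendar arrivals on an occupied follower's arm are defined by that follower's own recording rule (first paragraph of the proof of Lemma~\ref{protocol:lem:total}).
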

\begin{proof}
Put $L_t=\min\{u,t+1\}$ and induct over rounds, restarting at each DOWN acceptance. Before the first acceptance, Algorithm~\ref{protocol:alg:order} keeps all five registers zero.

At acceptance after round $t$, the cutoff scan gives $\ell=L_t$, and Lemma~\ref{protocol:lem:sparse} gives the three register identities. The follower was unoccupied at arm choice, so no release test occurs; setting $\mathrm{state}_i=\ind\{L_t=t+1\}$ gives the state assertion.

With no new acceptance at $t<u$, the queue rule adds $\Gamma_t(a_i)$ to $S_i$, sets $\delta_i=\Gamma_t(a_i)$ and $\mathsf b_i=\mathsf B_{t+1}(a_i)$, and changes state $1$ to $2$ precisely upon a release observation. At $t=u$, the interruption sets the state to $0$ and preserves $S_i,\delta_i,\mathsf b_i$; later rounds preserve them. Since $L_t=u$ for $t\ge u$, the identities persist.
\end{proof}

\begin{lemma}[Assignment identities and fixed replies]
\label{protocol:lem:identities}
In the reference execution, every follower applies each assignment
at most once. Each assignment has one cutoff and one UP record per
follower. After its cutoff is known, the UP record stays fixed
until the next DOWN word is applied. During a known live assignment
no UP message is encoded. No later assignment is started before
all current UP replies have been accepted.
When a known live follower becomes unoccupied, its DOWN batch is
empty. At each assignment's first cutoff, every coordinator UP
batch is empty. An interruption that leaves a follower unoccupied preserves its existing partial DOWN batch.
\end{lemma}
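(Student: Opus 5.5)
I would prove all eight assertions of Lemma~\ref{protocol:lem:identities} together, by a single induction over calendar rounds on a locally valid prefix of the reference execution. The inductive hypothesis at the end of round $t$ is the conjunction of these assertions for all assignments started by round $t$, plus the register identities of Lemma~\ref{protocol:lem:local-state}.

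The order of the steps is as follows.

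\emph{Step 1: one cutoff per assignment.} This is definitional. The cutoff $u$ is the minimum over the clipped boundary $\bar\tau(\sigma)$ and the public proposal bits, so it is determined by $\sigma$ and the row. Both the coordinator (in $\operatorname{StartRound}$) and any follower (in the scan~\eqref{protocol:eq:known-cutoff}) compute the same value.

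\emph{Step 2: each follower applies each assignment at most once.} Acceptance requires $\sigma_i<M.\sigma$, and $\sigma_i$ is then overwritten by $M.\sigma$. Hence the strictly increasing sequence of applied starts rules out reapplication.

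\emph{Step 3: one fixed UP record per follower.} By Lemma~\ref{protocol:lem:local-state}, once $u\le t$ the state is $0$ and $(S_i,\delta_i,\mathsf b_i)$ equal the endpoint record~\eqref{sq:eq:endpoint-record}. This holds whether the DOWN word was decoded before or after $u$; the late case uses Lemma~\ref{protocol:lem:sparse} with $\ell=u$. Only a new DOWN acceptance changes these registers, so the record stays fixed.

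\emph{Step 4: no UP encoding while live.} Algorithm~\ref{sq:alg:handover} lets a follower call $\operatorname{SendArm}$ on UP only when $\mathrm{state}_i=0$. By the state characterization in Lemma~\ref{protocol:lem:local-state}, that happens exactly when the follower is unoccupied, i.e.\ $u\le t$ or no live word has been decoded. On the coordinator side, UP decoding occurs only when $u\le t$.

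\emph{Step 5: no new start before all replies are accepted.} $\operatorname{StartRound}$ begins a new assignment only when $u\le t$ and $\mathcal P=\varnothing$. Moreover $\mathcal P$ is reset to $[d]$ at cutoff and shrinks only through UP acceptances.

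\emph{Step 6: the batch statements.} These need the most care. Three claims must be shown.
\begin{itemize}
\item When a known live follower becomes unoccupied, its DOWN batch is empty. The follower feeds $\operatorname{DecodeStep}$ only while unoccupied at arm choice, and it became occupied at a full-batch decision, which clears $W$. So throughout its live interval it adds nothing, and the batch is still empty at the interruption.
\item At each first cutoff, every coordinator UP batch is empty. The coordinator feeds UP batches only during collection. The previous collection ended with $\mathcal P=\varnothing$, and each removal from $\mathcal P$ happened at a full-batch decision, which clears $W$. No equations are added between that decision and the next cutoff, since the intervening live assignment performs no UP decoding. Initially all batches are empty.
\item An interruption preserves an unoccupied follower's partial DOWN batch. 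The interruption step in Algorithm~\ref{protocol:alg:order} changes only $\mathrm{state}_i$, so this is immediate.
\end{itemize}

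The main obstacle is Step~6 in the late-decoding regime, where a DOWN word for an expired assignment is accepted after its cutoff. There I must check three things: the follower correctly stays unoccupied, with $\mathrm{state}_i=\ind\{\ell=t+1\}=0$; its freshly cleared batch is the one that carries into the next DOWN word; and the coordinator's old DOWN channel and the new assignment's channel do not share a partially filled batch. For the last point I would appeal to the acceptance test $\sigma_i<M.\sigma$ and to the fact that the coordinator keeps sending the old record until the next start. I would verify this case explicitly by following one batch across the cutoff, as in the timestamp example.
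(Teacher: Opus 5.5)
Your proposal is correct and follows essentially the paper's own argument. Strictly increasing accepted starts give single application. Lemma~\ref{protocol:lem:local-state} with Lemma~\ref{protocol:lem:sparse} ($\ell=u$ in the late case) fixes the endpoint UP record. The state test of Algorithm~\ref{sq:alg:handover} and the $\mathcal P=\varnothing$ gate of Algorithm~\ref{sq:alg:clock} give the live and no-early-start assertions. The batch claims follow because every full-batch decision clears $W$ and the relevant decoder is otherwise disabled.

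Two small points are left implicit and should be stated. First, an accepted UP word is $M_i^{\rm UP}$: the reference rejects non-current words, and the test $M.\sigma=\sigma$ excludes stale offers. Second, the coordinator feeds $j$'s UP decoder only while $j\in\mathcal P$, which covers the rest of the same collection after $j$'s acceptance; your second bullet skips that interval.

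The extra checks in your last paragraph are not part of the lemma and are false in general, so drop them. The coordinator does not keep sending the old record until the next start. Once $j$'s UP reply is accepted it plays $R_0$ on DOWN selecting $j$. So during collection $j$'s DOWN batch collects old-word equations while $j\in\mathcal P$ and unencoded ones afterwards, and it can carry a partial batch into the next assignment. That is exactly why Lemma~\ref{sq:lem:decoder} is stated for an arbitrary partial batch.
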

\begin{proof}
By~\eqref{protocol:eq:acceptance}, successive DOWN acceptances require strictly increasing starts, so each assignment is applied at most once.

Fix $(\sigma,a_i)$ with first cutoff $u$. For acceptance before $u$, Lemma~\ref{protocol:lem:local-state} at the end of round $u-1$ gives $(\sigma_i,S_i,\delta_i,\mathsf b_i)=M_i^{\rm UP}$. For acceptance after round $t\ge u$,~\eqref{protocol:eq:known-cutoff} gives $\ell=u$, and Lemma~\ref{protocol:lem:sparse} gives the same fields. The state becomes $0$ at cutoff or acceptance; Algorithm~\ref{protocol:alg:order} preserves the fields until a newer DOWN acceptance. Thus the UP record is unique and fixed. In states $1$ and $2$, Algorithm~\ref{sq:alg:handover} encodes no UP word.

At cutoff, $\mathcal P=[d]$. An accepted UP word removes $i$ only if $i\in\mathcal P$ and its start equals $\sigma$; in the reference execution it is $M_i^{\rm UP}$. Algorithm~\ref{sq:alg:clock} waits for $\mathcal P=\varnothing$, so every current UP record is accepted before the next assignment.

Let $W_i^{\rm D},W_i^{\rm U}$ be the initially empty equation batches. DOWN acceptance empties $W_i^{\rm D}$. If acceptance precedes cutoff, the decoder stays disabled until that interruption, leaving its batch empty then. If cutoff precedes acceptance, the interruption performs no decoder step and preserves the partial batch.

UP acceptance empties $W_i^{\rm U}$ and removes $i$ from $\mathcal P$. Its UP decoder stays disabled until the next assignment's first cutoff, so its batch is empty at that interruption. Since every $i$ is removed before the next assignment, this proves the UP assertion.
\end{proof}

\subsection{Coupling to the logical learner}

The next lemma is an exact simulation statement, not merely a comparison of expected rewards. The physical execution may compute a queue update late, and its assigned arms stay fixed while communication takes place; the logical learner has an updated score vector and slow set in every enabled round. We show that the two score processes agree when every update is attributed to the round that produced it, and that the slow sets agree on live rounds. During collection they need not agree. A \emph{coupling} is a joint construction of two random executions that leaves the law of each unchanged; it gives no player extra information.

Recall the calendar-round queues $\mathsf B_r$ and scores $Z_j(r)=\sum_{s<r}\Gamma_s(j)$, including updates computed later from stored observations. Define the physical overflow time
\begin{equation*}
 \tau_*^{\rm phys}
 :=\inf\left\{r\in[T]:\max_j\frac{\mathsf B_{r+1}(j)}{\eta D}>B_{\rm cap}\right\},
 \qquad B_{\rm cap}=\frac1{32\eta},\quad\inf\varnothing=+\infty\,.
\end{equation*}
It is used only in the analysis; the physical protocol continues after overflow. The symbol $\tau_*$ keeps its meaning as the logical overflow time until the coupling identifies the two.

At an assignment start $t$, let $V^{\rm asg}_t\in[0,1)^{K+1}$ be the coordinates $\mathrm{assign}_1,\ldots,\mathrm{assign}_{K+1}$ of the selected row at time $t$ (Appendix~\ref{comm:control-rows}), and let $\Phi_t$ be the coordinator's sampler, which maps $V^{\rm asg}_t$ to the slow set. Let $\mathcal G_t^-$ be generated by the physical local registers, actions, observations, private draws and row coordinates inspected before the sampler, together with all raw arrivals $(\Delta_s)_{s<t}$. Let $\mathcal G_t^+\supseteq\mathcal G_t^-$ also contain every physical row coordinate with calendar index less than $t$, whether read or not. Given $\mathcal G_t^-$, all parameters of $\Phi_t$ are fixed. With $\sigma$ and $u$ the start and cutoff of the preceding assignment, define the \emph{deferred law}
\begin{equation}\label{sq:eq:deferred-law}
 \bar\nu_t:=
 \begin{cases}
  \nu_t=p_t+\kappa_t\bigl(p_u-p_{u-1}\bigr),&t<\bar\tau(\sigma),\\
  p_t,&\text{$t$ is the first assignment start, or }t\ge\bar\tau(\sigma),
 \end{cases}
\end{equation}
where $\kappa_t=(1-h)^{t-u+1}/h$ and $p_r=p_{w_r}$. The first case uses the old anchor; in the second, $p_t$ uses the renewed anchor.

\begin{lemma}[Exact simulation and deferred assignment law]\label{sq:lem:lazy}
Run the fresh-row reference physical protocol and Algorithm~\ref{sq:alg:learner} with the same launch and episode schedule, starting with zero scores and queues. Write $\widehat U_r$, $\widehat z(r)$ and $\widehat{\mathsf B}_r$ for the logical slow set, scores and queues. There are a probability space carrying both executions and a filtration $(\mathcal F_r)$ of joint pre-round histories such that the following hold.
\begin{enumerate}[label=(\roman*),leftmargin=*,itemsep=1pt]
\item The physical execution has its prescribed law; in particular, all coordinates of the selected row are independent with their prescribed laws.
\item The logical execution is a realization of Algorithm~\ref{sq:alg:learner} that satisfies (I1)--(I4) of Section~\ref{sq:sec:learner} with respect to $(\mathcal F_r)$.
\item Almost surely, $\tau_*^{\rm phys}=\tau_*$, and $(Z(r),\mathsf B_r)=(\widehat z(r),\widehat{\mathsf B}_r)$ from launch for $r\le\min\{\tau_*,T\}+1$. On live rounds $r\le\tau_*$ the assigned physical slow set is $\widehat U_r$, and every physical fast-arm draw is the logical fast arm.
\item At every assignment start $t\le\tau_*^{\rm phys}$ and for every $(m-1)$-set $U$,
\[
 \Pr(\widehat U_t=U\mid\mathcal G_t^-)=\Pr(\widehat U_t=U\mid\mathcal G_t^+)=\bar\nu_t(U).
\]
\end{enumerate}
\end{lemma}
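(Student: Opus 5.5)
The plan is to build the coupling constructively, round by round, using the physical execution as the primary object and generating the logical learner from it plus auxiliary randomness. Part (i) is then immediate, because no physical coordinate is ever altered or conditioned on. The one exception is the assignment sampler coordinates $V^{\rm asg}_t$, which we generate in reverse. First draw the logical slow set, then draw $V^{\rm asg}_t$ uniformly from the fibre $\Phi_t^{-1}(\widehat U_t)$. If $\widehat U_t$ has conditional law $\bar\nu_t$ given $\mathcal G_t^+$, and $\Phi_t$ pushes the uniform law forward to $\bar\nu_t$, this leaves $V^{\rm asg}_t$ uniform and independent of $\mathcal G_t^+$. So (iv) is exactly what makes (i) hold at assignment starts, and the two must be proved together by induction over assignment starts.

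On live rounds the identifications are forced. The logical recorded arm is the physical $I_t$ of Lemma~\ref{sq:lem:recorder}. The logical proposal bit is the row bit $B_t$, and the logical fast arm is the coordinator's draw $X_t$ once $\mathcal P=\varnothing$. Before all confirmations arrive, the coordinator plays pilots, so the logical fast arm is drawn from auxiliary randomness with law $q^\gamma_{w_t}(\cdot\mid U)$.

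A logical proposal at round $s$ both ends the assignment, via the cutoff, and redraws the logical set from $\rho_s$. During collection, the logical learner keeps evolving: the bits $B_r$ are fresh auxiliary Bernoulli$(h)$ draws (not the unread row bits), and the refreshes are auxiliary draws from $\rho_r$.

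For (iii), the score identity $Z(r)=\widehat z(r)$ follows by induction on calendar rounds. Both processes apply the same recursion~\eqref{sq:eq:queue} to the same raw arrivals $\Delta_r$. These arrivals are the physical recorder outputs, which Lemma~\ref{protocol:lem:sparse} and Lemma~\ref{sq:lem:reconstruct} show are attributed exactly to their calendar rounds, even when computed late. The overflow times coincide because they are defined from the same queues.

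For (ii), interface conditions (I2)–(I4) follow as follows. (I2) comes from Lemma~\ref{sq:lem:recorder}, noting that $\mathcal F_t$ includes the current slow set and that the recording probabilities do not depend on it. (I3) holds because the fast sampler coordinate is independent of the pre-round history. (I4) holds because the row bits on live rounds, and the auxiliary bits during collection, are independent Bernoulli$(h)$ variables. (I1) reduces to (iv) at episode starts.

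The main obstacle is (iv) at an assignment start $t<\bar\tau(\sigma)$ that follows a cutoff $u$ triggered by a proposal at $u-1$. Here I would argue in three steps:

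\textbf{Step 1 (law just after the proposal).} Given the physical history through round $u-1$, including $B_{u-1}=1$, the logical set is a fresh draw from $\rho_{u-1}$, where $\rho_{u-1}$ is computed from scores that are $\mathcal G^-$-measurable. The earlier logical set is irrelevant, since a refresh uses fresh randomness.

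\textbf{Step 2 (law during collection).} During collection, physical actions never depend on the logical set, and the estimates at each round have a common conditional law regardless of the slow set, as in Lemma~\ref{sq:lem:conditional-law}. Conditioning on the whole collection path therefore gives $\mu_{r+1}=(1-h)\mu_r+h\rho_r$ with $\mu_u=\rho_{u-1}$.

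\textbf{Step 3 (closed form).} Subtracting $p_{r+1}=(1-h)p_r+h\rho_r$ and solving the linear recursion gives $\mu_t-p_t=(1-h)^{t-u}(\rho_{u-1}-p_u)$. Substituting $\rho_{u-1}-p_u=(1-h)(p_u-p_{u-1})/h$ yields $\mu_t=\nu_t$.

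Equality with conditioning on $\mathcal G_t^+$ holds because the extra unread row coordinates (proposal bits after cutoff, unused samplers) were never used to generate logical variables. When $t\ge\bar\tau(\sigma)$, the logical learner starts a new episode and draws from $p_t$ directly. This is the step requiring the most care, particularly the claim that the scores, which depend on estimates, carry no information about the hidden set.
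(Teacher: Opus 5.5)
Your proposal follows the paper's proof essentially step for step. The physical execution is primary, and $V^{\rm asg}_t$ is drawn uniformly from the fibre $\Phi_t^{-1}(\widehat U_t)$. Records, live proposal bits and fast-arm draws are shared. Collection-phase proposals and refreshes use auxiliary randomness rather than the unread row bits. The recursion $\mu_u=\rho_{u-1}$, $\mu_{r+1}=(1-h)\mu_r+h\rho_r$ then yields $\nu_t$.

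The one ingredient you assume rather than verify is that the sequential sampler pushes Lebesgue measure forward to $\bar\nu_t$. The paper checks this separately in its Step~5, using the deterministic nonnegativity of $f_t$ from Lemma~\ref{protocol:lem:total}. You should also run (iii) inside the same round-by-round induction, because (I3) needs the coordinator's outside scores to coincide with the logical ones.
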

\begin{proof}
\emph{Step 1: deterministic prerequisites.} Every service increment lies in $[0,4\eta D]$, so each weight grows by a factor at most $e^{4\eta}$ per enabled round and by less than $2$ within an episode. The feasibility and stability inequalities of Section~\ref{sq:sec:learner} therefore hold for the physical calendar-round scores without any probabilistic input, and Lemma~\ref{protocol:lem:total} shows that every local operation is defined. Its treatment of the sampler uses only the deterministic inequality $Q_{w_{r+1}}\ge(1-h)Q_{w_r}$, not the conclusion of the present lemma. An update computed late is attributed to its original round, so $(Z(r),\mathsf B_r)$ is a well-defined process even when some entries are not yet known to any player; Lemmas~\ref{sq:lem:reconstruct} and~\ref{protocol:lem:sparse} show that the players' reconstructions recover it exactly.

\emph{Step 2: the construction.} Generate all variables in calendar order. The event $\{r\le\tau_*\}$ depends only on the logical queues before round~$r$. In a round $r\le\tau_*$:
\begin{enumerate}[label=(\alph*),leftmargin=*,itemsep=1pt]
\item If $r$ is an assignment start that also starts an episode, draw $\widehat U_r\sim p_{w_r}$ from fresh auxiliary randomness; at any other assignment start $\widehat U_r$ is already defined. Then, with fresh randomness, draw $V^{\rm asg}_r$ from the kernel
\[
 \Pr\bigl(V^{\rm asg}_r\in C\mid\text{everything generated so far}\bigr)
 =\frac{\lambda(C\cap\Phi_r^{-1}(\widehat U_r))}{\bar\nu_r(\widehat U_r)},
\]
where $\lambda$ is Lebesgue measure on $[0,1)^{K+1}$; on the event $\bar\nu_r(\widehat U_r)=0$, which has probability zero by~(iv), use $\lambda$. The kernel depends only on $\widehat U_r$ and on $\mathcal G_r^-$, which fixes $\Phi_r$ and $\bar\nu_r$. For a one-dimensional sampler returning $A$ on $[0,\frac14)$ and $B$ on $[\frac14,1)$, this draws a label with probabilities $\frac14,\frac34$ and then a uniform point of its interval; the point is uniform and its image is the label.
\item Draw all other round-$r$ coordinates of the selected row and the round-$r$ private draws from their prescribed laws, independently of everything generated so far.
\item Play the round physically. The logical record and arrivals are the physical ones, defined through the true assignment as in Section~\ref{sq:sec:recorder}. The logical fast arm is the coordinator's fast-arm draw $X_r$ when it makes one, and otherwise a fresh draw from $q^\gamma_{w_r}(\cdot\mid\widehat U_r)$. Both executions update queues and scores by~\eqref{sq:eq:queue}.
\item On an enabled round, the logical proposal bit is the physical proposal coordinate $B_r$ if the round is live, and a fresh Bernoulli$(h)$ bit otherwise; on a disabled round it is zero. On a proposal, draw $\widehat U_{r+1}\sim\rho_r$ from fresh auxiliary randomness.
\end{enumerate}
In a round $r>\tau_*$, the logical learner follows the continuation of Algorithm~\ref{sq:alg:learner}, which uses no randomness, and all physical variables are drawn from their prescribed laws; at an assignment start, $V^{\rm asg}_r\sim\lambda$ independently of everything generated so far. Let $\mathcal F_r$ be generated by everything generated before step~(b) of round $r$; in particular $\widehat U_r$, and $V^{\rm asg}_r$ at an assignment start, are $\mathcal F_r$-measurable.

\emph{Step 3: the logical execution satisfies (I1)--(I4).} Steps~3--7 form one induction over rounds $r\le\tau_*$: the agreement of Step~7 through the beginning of round~$r$ is used at round~$r$ in Steps~3--6, which then give the agreement through the beginning of round $r+1$. The slow set, scores, queues, rates and enabled status of round $r$ are $\mathcal F_r$-measurable by construction. (I1): episode starts are physical stopping times, each endpoint $\bar\tau$ is fixed at the start, and each initial draw is fresh. (I2): on a live round the assigned physical slow set is $\widehat U_r$, and during collection the physical actions do not depend on $\widehat U_r$, because the coordinator owns every arm and plays its pilot while all followers are unoccupied. The computation in Lemma~\ref{sq:lem:recorder} uses only the fresh slot, recorder, pilot, fast-arm and thinning coordinates of step~(b), so it holds conditionally on $\mathcal F_r$: $\Pr(I_r=j\mid\mathcal F_r)=\xi_j$ on enabled rounds. (I3): the coordinator draws $X_r$ from $q^\gamma_{w_r}(\cdot\mid U)$ with its fresh fast-arm coordinate and its exact outside scores (Lemma~\ref{protocol:lem:total}, invariant~(ii)), which are the logical ones by the induction hypothesis; otherwise the draw is auxiliary. (I4): the proposal coordinate is separate from all coordinates used in the round, and the other proposal bits and all refreshes are auxiliary.

Algorithm~\ref{sq:alg:learner} prescribes the conditional laws of the record and of the fast arm separately and allows them to be dependent. On a live round with all confirmations, for instance, writing $q_j=q^\gamma_{w_r}(j\mid U)$ for a fast arm $j\notin U$,
\[
 \Pr(j_r=j,I_r=i\mid\mathcal F_r)=
 \begin{cases}
 q_j\xi_i,&i\in U,\\
 \xi_i\ind\{j=i\},&i\notin U,
 \end{cases}
 \qquad
 \Pr(j_r=j,I_r=\bot\mid\mathcal F_r)
 =q_j\Bigl(1-\sum_{i\in U}\xi_i\Bigr)-\xi_j\ge0,
\]
where nonnegativity follows from $q_j\ge16\xi_j$ and $\sum_i\xi_i\le1/(16m)$. Hence Lemmas~\ref{sq:lem:queue-bound} and~\ref{sq:lem:conditional-law} and Proposition~\ref{sq:prop:learning} apply to the logical execution.

\emph{Step 4: the deferred law.} Fix an assignment start $t\le\tau_*^{\rm phys}$. At the first start, and when $t\ge\bar\tau(\sigma)$, the logical slow set is a fresh draw from $p_t$, which proves~(iv). Suppose $t<\bar\tau(\sigma)$. Then the cutoff $u$ was caused by $B_{u-1}=1$, since the grid boundary has not arrived. Work on the event that fixes the calendar values of $u$ and~$t$; it is determined by the physical execution, as is $\{t\le\tau_*^{\rm phys}\}$.

The collection path on $[u,t)$ consists of the arms, the recordings, the DOWN and UP decoding, and the stopping time~$t$. It is a function of the old assignment, physical coordinates and private draws, and no assignment start occurs in $(u,t)$. The old assignment is the logical slow set \emph{before} the refresh at $u-1$, and that refresh is a fresh draw from $\rho_{u-1}$, independent of it. The proposals and refreshes during collection are auxiliary. Hence the auxiliary randomness used from the refresh at $u-1$ on is independent of $\mathcal G_t^+$, while $\rho_r$ is $\mathcal G_t^+$-measurable for $u-1\le r<t$. Writing $\mu_r=\Pr(\widehat U_r\in\cdot\mid\mathcal G_t^+)$ for $u\le r\le t$, we get
\[
 \mu_u=\rho_{u-1},\qquad \mu_{r+1}=(1-h)\mu_r+h\rho_r\quad(u\le r<t).
\]
Thus $\mu_t$ is a convex combination of $\rho_{u-1},\ldots,\rho_{t-1}$, hence a probability law, even though its closed form below is a signed combination. Using $h\rho_r=p_{r+1}-(1-h)p_r$,
\[
 \mu_u-p_u=\frac{1-h}h(p_u-p_{u-1}),\qquad
 \mu_{r+1}-p_{r+1}=(1-h)(\mu_r-p_r),
\]
so $\mu_t=p_t+\kappa_t(p_u-p_{u-1})=\nu_t$; all these laws use the same anchor. Conditioning on the triggering proposal is what changes the initial collection law from $p_u$ to $\rho_{u-1}$. Finally, $\bar\nu_t$ is $\mathcal G_t^-$-measurable, because the raw arrivals in $\mathcal G_t^-$ determine the score vectors at $u-1,u,t$; Lemma~\ref{sq:lem:reconstruct} shows that the coordinator computes them. Since $\mathcal G_t^-\subseteq\mathcal G_t^+$, the tower property gives~(iv).

\emph{Step 5: the sampler realizes $\bar\nu_t$.} The sequential sampler (Section~\ref{sq:sec:complexity} and Appendix~\ref{protocol:fallback}, paragraph~5) considers the arms in order and includes the next arm exactly when $W_{\rm in}>VW$. Here $V$ is the next assignment coordinate, $W$ is the mass of the $m$-sets consistent with the choices made so far, and $W_{\rm in}$ is the mass of those that also contain the next arm. Each inclusion therefore has conditional probability $W_{\rm in}/W$, and the product of these ratios telescopes: the sampled $m$-set has law $f_t=Q_{w_t}+\kappa_t(Q_{w_u}-Q_{w_{u-1}})$ when $t<\bar\tau(\sigma)$, and $Q_{w_t}$ otherwise. Lemma~\ref{protocol:lem:total} shows that $f_t\ge0$, that $\sum_Sf_t(S)=1$, and that every normalizer is positive. The last coordinate then designates the fast arm $j\in S$ with probability $\alpha_j(S)$ from~\eqref{sq:eq:orientation}, which depends only on the anchor. Hence the law of the slow set is linear in the law of the $m$-set, and
\[
 \lambda\bigl(\Phi_t^{-1}(U)\bigr)
 =\sum_{j\notin U}f_t(U\cup\{j\})\,\alpha_j(U\cup\{j\})
 =p_t(U)+\kappa_t\bigl(p_u(U)-p_{u-1}(U)\bigr)=\bar\nu_t(U),
\]
and likewise $\lambda(\Phi_t^{-1}(U))=p_t(U)$ in the second case of~\eqref{sq:eq:deferred-law}.

\emph{Step 6: the physical law.} At an assignment start $t\le\tau_*$, (iv), Step~5 and the kernel of step~(a) give
\[
 \Pr\bigl(V^{\rm asg}_t\in C\mid\mathcal G_t^+\bigr)
 =\sum_{U:\bar\nu_t(U)>0}\bar\nu_t(U)\,
       \frac{\lambda(C\cap\Phi_t^{-1}(U))}{\bar\nu_t(U)}
 =\lambda(C).
\]
At an assignment start $t>\tau_*$ the same identity holds by construction, and by the induction $\{t\le\tau_*\}=\{t\le\tau_*^{\rm phys}\}$, an event in $\mathcal G_t^-$. Every other physical variable is generated freshly in step~(b) or after round $\tau_*$, and $\mathcal G_t^+$ contains every physical variable generated before $V^{\rm asg}_t$. By the chain rule, the physical execution, including the unread row coordinates, has its prescribed law. The vector $V^{\rm asg}_t$ is not independent of the logical set; that dependence is what realizes the coupling.

\emph{Step 7: agreement.} The construction gives $\Phi_t(V^{\rm asg}_t)=\widehat U_t$ almost surely at every assignment start $t\le\tau_*$, common records, arrivals and queue recurrences in every round $r\le\tau_*$, and common proposal bits on live rounds. Hence, if the scores and queues agree at the beginning of a round $r\le\tau_*$, then on a live round the slow sets and the fast arms agree, the cutoffs coincide, and the scores and queues agree at the beginning of round $r+1$. Common pre-round queues and arrivals give common post-round queues, so the first overflow rounds coincide; this proves $\tau_*^{\rm phys}=\tau_*$ and agreement through the beginning of round $\min\{\tau_*,T\}+1$. Afterwards the two executions are not compared, and each has its prescribed law by Step~2.

This construction is an analysis device. The coordinator simply applies $\Phi_t$ to its own row coordinates and never observes a logical set.
\end{proof}

\subsection{Score collection and total cost of interruptions}\label{sq:sec:scores}

In the fresh-row reference execution, fix the rounded reward matrix and a transfer starting at a stopping time $s$: a cutoff starts score collection, while the first assignment starts the initial transfer. Let the \emph{collection phase} $\mathcal C$ contain rounds from the cutoff through the round before the next assignment start. Let the \emph{delivery phase} $\mathcal D$ contain rounds from that assignment start through the last DOWN confirmation, or through the round before the next cutoff if a confirmation remains missing. Let the \emph{release phase} $\mathcal R$ contain the subsequent rounds before that cutoff. For the initial transfer $\mathcal C=\varnothing$ and $\mathcal D$ begins at $s$; any later phase is empty if its start does not occur. For analysis, truncate these phases immediately after the first decoder failure of Lemma~\ref{sq:lem:decoder} or queue-overflow update, including that round, or at the horizon.

Recall that $g_t$ counts arms with positive rounded reward and $\bar g_t=\min\{m,g_t\}$. A comparator's reward and any logical $m$-set's reward are at most $\bar g_t$.

With $d=m-1$ followers and the positive-observation batch size $Q_{\rm dec}$ from~\eqref{sq:eq:decode-quota}, define the phase bounds
\[
\begin{aligned}
 B_{\rm col}&:=\frac{8dK}{1-e^{-1}}(4Q_{\rm dec}+\log d),\\
 B_{\rm del}&:=\frac{8dK}{1-e^{-1}}(2Q_{\rm dec}+\log d+1),\\
 B_{\rm rel}&:=2dK+\frac{12dK}{1-e^{-1}}(\log d+1),
\end{aligned}
\]
and their sum, recalling $b_N=\lceil\log_2(N+1)\rceil$,
\begin{equation}\label{sq:eq:session}
\begin{aligned}
 C_{\rm tr}
 &:=B_{\rm col}+B_{\rm del}+B_{\rm rel}\\
 &=O\!\left(dK[Q_{\rm dec}+\log d+1]\right)
   =O(mKb_N).
\end{aligned}
\end{equation}

On release rounds, $X_t$ denotes the coordinator's fast-arm draw from Lemma~\ref{sq:lem:release}.

\paragraph{What happens during collection.}
The cutoff $u$ transfers ownership of \emph{every} arm to the coordinator, and all followers are unoccupied throughout $[u,t)$, where $t$ is the next assignment start: an occupied follower becomes unoccupied at the public interruption, and a follower that decodes the expired DOWN word after round $s\ge u$ finds $\ell=u$ in~\eqref{protocol:eq:known-cutoff} and stays unoccupied. Two kinds of information are recovered. The follower's fixed UP reply~\eqref{sq:eq:endpoint-record} describes its arm at the cutoff. The coordinator's own DATA observations give the raw arrivals after the cutoff; their recording probabilities use its pilot and the old rates, not the missing queue, so it buffers them and replays the queue once $\mathsf B_u(a_i)$ arrives. If the reply is accepted after round $s\ge u$, the replay covers rounds $u,\ldots,s$, including zero-arrival rounds and freezing at $\bar\tau$, and sets $c_{a_i}=\sum_{r=u}^{s}\Gamma_r(a_i)$. Since the queue was unknown before this decision, $\Gamma_s^{\rm loc}(a_i)=0$, so round $s$ is not counted twice. When all replies have arrived, $c_j=\sum_{r=u}^{t-1}\Gamma_r(j)$ and $z_j(t)=z_j(u)+c_j=Z_j(t)$ for every arm~$j$ (Lemma~\ref{protocol:lem:total}).

\begin{lemma}[Conditional cost of one transfer]
\label{sq:lem:collection}
Define the stopped transfer charge by
\begin{equation}\label{sq:eq:transfer-charge}
\begin{aligned}
 J_s
 :=\sum_{t\in\mathcal C\cup\mathcal D} \bar g_t
 +\sum_{t\in\mathcal R}
       \left(r_t(U_t)+r_t(X_t)-\sum_{i=0}^dY_{i,t}\right).
\end{aligned}
\end{equation}
Then conditional on the pre-round history
$\mathcal F_s$,
\[
 \E[J_s\mid\mathcal F_s]\le C_{\rm tr}.
\]
\end{lemma}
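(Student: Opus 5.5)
We split $J_s$ into its three phase contributions and bound each by the matching constant: $B_{\rm col}$ for $\mathcal C$, $B_{\rm del}$ for $\mathcal D$, and $B_{\rm rel}$ for $\mathcal R$. Since $C_{\rm tr}$ is their sum, the lemma follows from the tower property. Each phase starts at a stopping time: the cutoff $u$, the assignment start, and $\sigma_{\rm R}$. We condition successively on $\mathcal F_s$, then on the history at the phase start, and use the conditional forms of Lemmas~\ref{sq:lem:waiting}, \ref{sq:lem:handover} and~\ref{sq:lem:release}. Throughout, $\bar g_t\le g_t$, so it suffices to bound sums of $g_t$ on $\mathcal C\cup\mathcal D$.

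\emph{Delivery and release.} The delivery phase $\mathcal D$ is exactly the set of live rounds of the new assignment with a missing confirmation, truncated at the first decoder failure or overflow. Lemma~\ref{sq:lem:handover}, applied at that assignment start, gives $\E[\sum_{\mathcal D}g_t\mid\mathcal F_\sigma]\le B_{\rm del}$ on the event that this start occurs before failure; otherwise the phase is empty. The release phase $\mathcal R$ is the set $\mathcal T$ of Lemma~\ref{sq:lem:release}. On live rounds before $\tau_*$, Lemma~\ref{sq:lem:lazy}(iii) identifies the physical slow set with $U_t$, so~\eqref{sq:eq:release-cost} gives the bound $B_{\rm rel}$ directly. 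For the initial transfer, $\mathcal C=\varnothing$ and only these two terms appear.

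\emph{Collection: the main step.} During $[u,t)$ all followers are unoccupied and the coordinator is on its pilot or tuple arms. The tasks are the $d$ UP replies, so $P_r=\mathcal P$ shrinks. Each UP record is fixed from the cutoff by Lemma~\ref{protocol:lem:identities}, and the UP batches are empty at the cutoff. However, a follower may still be waiting for its \emph{expired} DOWN word: its UP offer is then the stale previous record, which fails acceptance. So a task needs two stages. First, the old DOWN word must be decoded, which takes at most $2Q_{\rm dec}$ positive observations by Lemma~\ref{sq:lem:decoder}, starting from a possibly partial batch; the coordinator keeps sending old DOWN records during collection. Second, the reply must arrive, which takes at most $2Q_{\rm dec}$ positive UP observations. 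This gives quota $q=4Q_{\rm dec}$, matching $B_{\rm col}$.

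I define $X_{i,r}$ as the positive receiver observation on DOWN or UP selecting $i$ with the right parity, according to the stage. Then, as in~\eqref{sq:eq:service}, the receiver is alone on its tuple arm with probability $g_r/(8dK)$. This needs an avoidance check during collection: unoccupied followers use distinct tuple arms, and the coordinator uses $R_0$ or joins the receiver only via $H_t$. That check is an analogue of Lemma~\ref{sq:lem:avoidance} with no pending players, so it is simpler. Lemma~\ref{sq:lem:waiting} with $a=1/(8dK)$, $\omega_r=g_r$ and $q=4Q_{\rm dec}$ then yields $B_{\rm col}$.

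The obstacle is the stage bookkeeping. During DOWN decoding, the follower's UP offer is stale, so it does not serve; during UP sending, DOWN observations are irrelevant. Still, each round's service probability for the current stage is at least $g_r/(8dK)$, since each stage uses its own direction with probability $1/4$. The stage-1 quota also holds when DOWN was already accepted before cutoff, in which case stage 1 is skipped. Truncation at failure or overflow only shortens tasks, so the quota condition is preserved. Summing the three conditional bounds proves $\E[J_s\mid\mathcal F_s]\le C_{\rm tr}$.
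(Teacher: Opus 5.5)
Your proposal is correct and follows the paper's proof essentially step for step. It uses the same three-phase split, combined by the tower property at the phase-start stopping times, and the same two-stage composite collection task (the old DOWN word, then the fixed UP reply). That task has quota $4Q_{\rm dec}$ and service probability $g_t/(8dK)$ and is fed into Lemma~\ref{sq:lem:waiting}, while Lemmas~\ref{sq:lem:handover} and~\ref{sq:lem:release} handle delivery and release directly.

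One nuance the paper spells out that you leave implicit: stale UP positives during the first stage can partly refill the coordinator's UP batch, so the empty-batch-at-cutoff fact does not by itself justify the second allowance. Your appeal to the general $2Q_{\rm dec}$ bound of Lemma~\ref{sq:lem:decoder}, which holds from any partial batch, already covers this.
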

\begin{proof}
We stop after the first decoder failure or queue-overflow update, including that round. All probabilities below are conditional on pre-round histories, which contain all earlier raw arrivals; we never condition on the absence of future failures.

During collection, let $P_t$ list the followers whose current UP reply is unaccepted before round $t$. For $i\in P_t$, count a positive DOWN observation at $i$ while its old DOWN word is unaccepted, and a positive UP observation at the coordinator after that acceptance; if $i$ decoded before the cutoff, its task starts in the second stage. The stage changes only after a round's observations, so it is $\mathcal F_t$-measurable, and by the fixed-reply invariant of Lemma~\ref{protocol:lem:identities} it never reverts. A current UP reply can be accepted only after its follower accepts the old DOWN word, so the task set shrinks.

The old DOWN word and, once formed, the UP reply are fixed. All followers are unoccupied during collection. On DOWN selecting $i$, every other follower uses a distinct tuple arm, and map value one puts the coordinator at $R_0\ne R_i$, while map value zero forces a collision. On UP selecting $i$, the coordinator listens on $R_0$, every other follower uses a distinct $R_j$, and map value one puts $i$ at $R_i\ne R_0$, while map value zero blocks $R_0$. Thus the fresh slot, follower, fair map bit and uniform receiver arm give conditional service probability $g_t/(8dK)$ in either stage; the encoded word is fixed before the current map is read.

An arbitrary partial DOWN batch needs at most $2Q_{\rm dec}$ further positive tests before acceptance or a decoder failure (Lemma~\ref{sq:lem:decoder}). UP positives before the current reply is formed are not counted in this task. They may leave a partial batch containing a placeholder or an older reply, so retain the same $2Q_{\rm dec}$ allowance for positives after formation. Each allowance includes the detecting round of a failed fixed-word batch. Consequently the stopped composite task has quota $q=4Q_{\rm dec}$, also under truncation by overflow or the horizon. Lemma~\ref{sq:lem:waiting}, with $\omega_t=g_t$ and $a=1/(8dK)$, gives
\begin{equation*}
 \E\!\left[\sum_{t\in\mathcal C}g_t
        \,\middle|\,\mathcal F_s\right]
 \le\frac{8dK}{1-e^{-1}}(4Q_{\rm dec}+\log d)
 = B_{\rm col}\,.
\end{equation*}
This bounds the support consumed during collection, not its number of rounds.

At a delivery start $\sigma_{\rm D}$, Lemma~\ref{sq:lem:handover} bounds the expected support of the delivery phase, conditional on $\mathcal F_{\sigma_{\rm D}}$, by $B_{\rm del}$. At a release start $\sigma_{\rm R}$, Lemma~\ref{sq:lem:release} bounds the expected charge of the release phase, conditional on $\mathcal F_{\sigma_{\rm R}}$, by $B_{\rm rel}$. An absent phase contributes zero. Since $\bar g_t\le g_t$ on $\mathcal C\cup\mathcal D$ and the phase starts are stopping times, the tower property bounds $\E[J_s\mid\mathcal F_s]$ by $B_{\rm col}+B_{\rm del}+B_{\rm rel}=C_{\rm tr}$.
\end{proof}

\begin{proposition}[Total stopped transport envelope]\label{sq:prop:transport}
Consider the fresh-row reference execution launched with zero scores. Let $\tau_{\rm bad}$ be the first decoder
failure or first queue-overflow update, with value $T+1$ if neither
occurs. Include its round in the phase charges and stop afterwards.
Let $\mathcal C_*,\mathcal D_*,\mathcal R_*$ be the disjoint unions of
these stopped collection, delivery, and release phases. Define
\[
 J_{\rm stop}:=\sum_{t\in\mathcal C_*\cup\mathcal D_*}\bar g_t
 +\sum_{t\in\mathcal R_*}
   \left(r_t(U_t)+r_t(X_t)-\sum_{i=0}^dY_{i,t}\right).
\]
With $C_{\rm tr}$ from~\eqref{sq:eq:session},
\begin{equation}\label{sq:eq:transporttotal}
 \E J_{\rm stop}
 \le C_{\rm tr}\bigl[(4m+128)\eta T+2\bigr].
\end{equation}
Moreover, $\Pr(\tau_{\rm bad}\le T)\le2N^{-2}$.
\end{proposition}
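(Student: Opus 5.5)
The plan is to write $J_{\rm stop}$ as a sum of per-transfer charges $J_s$ over the random sequence of transfer start times $s_1<s_2<\cdots$. Here $s_1$ is the first assignment start (the initial transfer), and each later $s_k$ is a cutoff that begins a collection. Each $s_k$ is a stopping time for the pre-round histories $(\mathcal F_t)$. By construction, the stopped phases $\mathcal C_*,\mathcal D_*,\mathcal R_*$ are the disjoint unions of the phases attached to these transfers, truncated at $\tau_{\rm bad}$. Hence, pathwise,
\[
 J_{\rm stop}=\sum_{k\ge1}\ind\{s_k\le\min(\tau_{\rm bad},T)\}\,J_{s_k}.
\]
On $\{s_k\le\tau_{\rm bad}\}$, which is $\mathcal F_{s_k}$-measurable because earlier failures are known before the round, Lemma~\ref{sq:lem:collection} gives $\E[J_{s_k}\mid\mathcal F_{s_k}]\le C_{\rm tr}$. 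The tower property then yields
\[
 \E J_{\rm stop}\le C_{\rm tr}\,\E\bigl[\#\{k:s_k\le\min(\tau_{\rm bad},T)\}\bigr].
\]
One subtlety is that the release charge can be negative. Since the inequality is applied per transfer in conditional expectation, summing is still valid provided the count of transfers is handled through the stopping-time indicator rather than by discarding terms. I would note this explicitly.

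It remains to bound the expected number of transfers by $(4m+128)\eta T+2$. One transfer is the initial one. Every other transfer is a cutoff, and a cutoff occurs either at a grid boundary or one round after a proposal bit $B_s=1$ read on a live round of the preceding assignment. There are at most $\lceil T/H\rceil\le T/H+1$ grid boundaries. Since $H=\lfloor(64\eta)^{-1}\rfloor\ge 1/(128\eta)$ (using $64\eta\le1/2$), this count is at most $128\eta T+1$. For proposal cutoffs, each live round $s$ contributes at most $\ind\{B_s=1\}$. The proposal coordinate is a fresh row coordinate independent of $\mathcal F_s$ with probability $h=4m\eta$, as guaranteed by (I4) and Lemma~\ref{sq:lem:lazy}(i). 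Summing over at most $T$ rounds therefore gives an expected count of at most $4m\eta T$. Adding the three contributions gives $(4m+128)\eta T+2$ and proves~\eqref{sq:eq:transporttotal}.

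For the failure probability, apply a union bound. Lemma~\ref{sq:lem:queue-bound} gives $\Pr(\tau_*\le T)\le N^{-2}$, and via Lemma~\ref{sq:lem:lazy}(iii) we have $\tau_*^{\rm phys}=\tau_*$. Lemma~\ref{sq:lem:decoder} bounds the probability of a decoder failure before overflow or the horizon by $N^{-2}$. Together these give $2N^{-2}$.

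The main obstacle I expect is bookkeeping rather than analysis. I would need to check that the truncated phases of consecutive transfers are genuinely disjoint and exhaust $\mathcal C_*\cup\mathcal D_*\cup\mathcal R_*$. I would also need to confirm that each cutoff is charged to a proposal that occurred on a live round, where the physical coordinate $B_s$ is used, rather than during collection, where proposals are auxiliary and cause no physical cutoff. Finally, the sum over a random number of transfers should be justified by optional stopping. I would handle this by noting that there are at most $T+1$ transfers, so all sums are finite and dominated.
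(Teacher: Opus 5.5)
Your proposal is correct and follows essentially the same route as the paper. Both arguments write $J_{\rm stop}$ as a sum of per-transfer charges at stopping-time transfer starts, bound each by $C_{\rm tr}$ via Lemma~\ref{sq:lem:collection} and the tower property, and count transfers as the initial one plus grid boundaries plus live proposal bits, each Bernoulli$(h)$ given the pre-read history. The failure probability is then handled by a union bound using Lemmas~\ref{sq:lem:decoder}, \ref{sq:lem:queue-bound} and~\ref{sq:lem:lazy}(iii).
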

\begin{proof}
\emph{Number of transfers.}
Let $L_t$ indicate that the live assignment's proposal coordinate is read before the stopping event, and set $n_{\rm prop}=\sum_{t=1}^T L_tB_t$. In the history $\mathcal F_t^{\rm prop}$ immediately before a possible read, $L_t$ is determined and $B_t$ is Bernoulli with parameter $h=4m\eta$. Hence
\[
 \E[L_tB_t\mid\mathcal F_t^{\rm prop}]=hL_t,
 \qquad \E n_{\rm prop}=h\sum_{t=1}^T\E L_t\le hT.
\]
Let $n_{\rm tr}$ count transfers started through $\tau_{\rm bad}$. Apart from the initial transfer, starts require a read proposal or grid boundary; boundaries inside collection add no transfer. Thus $n_{\rm tr}\le n_{\rm prop}+\lfloor T/H\rfloor+2$. Since $H=\lfloor(64\eta)^{-1}\rfloor\ge(128\eta)^{-1}$,
\begin{equation}\label{sq:eq:transfer-count}
 \E n_{\rm tr}\le(4m+128)\eta T+2.
\end{equation}

\emph{Conditional envelope charges.}
For $k\in[T+1]$, let $\sigma_k$ be the $k$-th transfer start, or $T+1$ if absent, and set $J_k=J_{\sigma_k}$ for an existing transfer, zero otherwise. These are stopping times: whether a transfer starts and whether any earlier failure has occurred are known before arm choice. Lemma~\ref{sq:lem:collection} gives
\[
 \E[J_k\mid\mathcal F_{\sigma_k}]
 \le C_{\rm tr}\ind\{\sigma_k\le T\}.
\]
Since $n_{\rm tr}=\sum_k\ind\{\sigma_k\le T\}$, the tower property yields
\begin{equation}\label{sq:eq:all-transfer-upper}
 \E\sum_kJ_k\le C_{\rm tr}\E n_{\rm tr}.
\end{equation}
By the phase definitions, distinct transfers' phases are disjoint and cover every round from launch through $\min\{\tau_{\rm bad},T\}$. Thus $\sum_kJ_k=J_{\rm stop}$, proving~\eqref{sq:eq:transporttotal}.

\emph{Failure probability.}
Lemma~\ref{sq:lem:decoder} bounds the probability of a decoder failure through the first queue overflow by $N^{-2}$. By Lemma~\ref{sq:lem:lazy}(iii) the physical overflow time equals the logical one, so Lemma~\ref{sq:lem:queue-bound} bounds the probability of an overflow by $N^{-2}$. Hence $\Pr(\tau_{\rm bad}\le T)\le2N^{-2}$.
\end{proof}

\section{Initialization}\label{sq:sec:boot}

This section specifies how the followers learn the coordinator's row index and when learning starts; Section~\ref{overview:public} gives the intuition. The coordinator shortens its list of possible identification indices, then transmits its private row index through the surviving list. The launch threshold controls prelaunch regret even when initialization never finishes.

Recall $N=2Km(T+1)$, $b_N=\lceil\log_2(N+1)\rceil$, and $D=2^{b_N}$. Fix $C_b=16$ and $N_0=2^{28}$; Appendix~\ref{app:explicit-constants} verifies these conservative choices. Call
\begin{equation}\label{sq:eq:bootbranch}
 T>C_bmKb_N^2,\qquad N\ge N_0
\end{equation}
the \emph{construction branch}; on its complement the players use a fixed assignment to $m$ distinct arms.

On this branch, preprocessing independently samples a binary identification array $(I_u(t))$ and $M=\lceil8T^2/(mb_N)\rceil$ complete rows $R_j$, each containing learning/control coordinates and forward bits $\varphi_j(u,t)$. After the adversary fixes the rewards, the coordinator privately draws $J\sim\Unif([M])$, independently of preprocessing, selecting $R_J$.

\subsection{Activation and launch}\label{sq:sec:ready}

Partition calendar time into consecutive blocks $\mathcal B_s$ of length
\begin{equation}\label{sq:eq:boot-block}
 B_{\rm boot}:=\lceil\sqrt{T/(mK)}\rceil,\qquad n_{\rm boot}:=\lceil T/B_{\rm boot}\rceil\,,
\end{equation}
with a possibly shorter final block. The number of identification indices and the list-size cutoff for multicast are
\begin{equation}\label{sq:eq:key-alphabet}
 A=A_{\rm id}:=\left\lceil\frac{T(B_{\rm boot}+b_N)}{b_N}\right\rceil,
 \qquad L_0:=2^{\lceil\log_2(2b_N)\rceil}\,.
\end{equation}

The ID entries are independent fair bits. All forward bits are mutually independent, independent of the ID array and learning/control coordinates, and equal one with probability $1-1/L_0$. The bootstrap decoding quota is $q_{\rm dec}:=8L_0b_N$.

Initially only the coordinator is informed, and no player has activated. A follower becomes \emph{informed} and activates in its declared row when its declaration takes effect; it is \emph{uninformed} beforehand. The coordinator activates at launch. At the start of each block, every uninformed follower independently
draws a fresh private Bernoulli coin of mean $1/m$ and is called \emph{active} in that block if the outcome is one. Each uninformed follower $i\in[m-1]$ has positive gain weights $w_i^{\rm boot}(a)$, $a\in[K]$. Its held arm in an active block has distribution
\begin{equation}\label{comm:boot-mixture}
 p_i(a)=\frac12\left(\frac{w_i^{\rm boot}(a)}{\sum_z w_i^{\rm boot}(z)}+\frac1K\right)\,.
\end{equation}
Write $a$ and $\iota\in[A]$ for an active follower's held arm and private identification index, both fixed throughout the block. These draws use only the preceding history and fresh private randomness, without using current-block complete-row symbols. Let $G_{i,s}$ count all positive stay observations ($I_\iota(t)=0$, $Y_{i,t}>0$) over the entire block $s$, including those after a declaration is saved, and set it to zero when inactive. At the block boundary, an active follower updates its held-arm weight by
\begin{equation}\label{sq:eq:boot-weight}
 w_i^{\rm boot}(a)\gets
 2^{-4b_N}\left\lfloor2^{4b_N} w_i^{\rm boot}(a)
 \left(1+\frac{G_{i,s}}{4KB_{\rm boot}p_i(a)}\right)\right\rfloor\,.
\end{equation}
Other arm weights are unchanged; the follower discards these weights on activation.

In \textsc{DISCOVER}, the coordinator reduces its ID list $\Lambda\subseteq[A]$; in \textsc{MULTICAST}, it transmits through that list. Its block target is $z\in[K]$; empty products equal one. Its UP decoder is disabled before launch.

Let $C_r(a)$ count the coordinator's positive DOWN observations on arm $a$ through block $r$. To test readiness, it monitors
\begin{equation}\label{sq:eq:root-monitor}
 \widehat D_r:=K\max_{S\in\cS}\sum_{a\in S}C_r(a)\,.
\end{equation}
The maximum sums the $m$ largest counters. Set $C_h=4096$ and $h_{\rm boot}:=\lceil C_hL_0b_N\rceil$. Set $C_*=2^{20}$ and define the integer threshold
\begin{equation}\label{sq:eq:launch-budget}
 Q:=\left\lceil C_*\bigl[m^2KB_{\rm boot}b_N+mh_{\rm boot}n_{\rm boot}
       +m^2h_{\rm boot}b_N+mB_{\rm boot}\bigr]\right\rceil\,.
\end{equation}

On activation, a follower enters the unoccupied state in its declared row, with accepted start, score, last increment and queue initially zero. It uses Algorithm~\ref{sq:alg:handover}'s unoccupied arm rules and initial UP record $(0,0,0,0)$.

In Algorithm~\ref{sq:alg:activation}, $r$ is the current block index, $\tau_t$ is the slot in the coordinator's row $R_J$, and $R_{0,t}$ is its tuple arm on DOWN. All players choose their arms before processing that round's observations and declarations.

\begin{algorithm}[p]
\caption{Activation and launch at player $i$}\label{sq:alg:activation}
\normalsize
\begin{algorithmic}[1]
\Require Only the coordinator knows $J$; explicit uniform arm draws are fresh and private.
\State Set follower gain weights to one; coordinator sets $C_0(a)\gets0$ ($a\in[K]$).
\For{$t=1,\ldots,T$}
 \If{the player has activated}
  \State Run one round of Algorithm~\ref{protocol:alg:order} in the selected or declared row.
 \Else
  \If{$t$ starts a block}
   \If{$i>0$}
    \State Draw a fresh private activity coin of mean $1/m$.
    \State If active, draw $a\sim p_i$, then $\iota\sim\Unif([A])$.
    \State $G_{i,r}\gets0$; reset the pending declaration, candidate counts and flags.
   \Else
    \State Draw $z\sim\Unif([K])$; $(\Lambda,C_r)\gets([A],C_{r-1})$; enter DISCOVER.
   \EndIf
  \EndIf
  \If{$i=0$}
   \If{$\tau_t=\mathrm{DATA}$}
    \State $A_{0,t}\gets z$.
    \If{in MULTICAST and $\prod_{u\in\Lambda}\varphi_J(u,t)=1$}
     \State Draw $A_{0,t}\sim\Unif([K]\setminus\{z\})$.
    \EndIf
   \ElsIf{$\tau_t=\mathrm{DOWN}$}
    \State $A_{0,t}\gets R_{0,t}$.
   \Else
    \State Draw $A_{0,t}\sim\Unif([K])$.
   \EndIf
  \ElsIf{inactive this block}
   \State Draw $A_{i,t}\sim\Unif([K])$.
  \Else
   \State $A_{i,t}\gets a$; if $I_\iota(t)=1$, draw $A_{i,t}\sim\Unif([K]\setminus\{a\})$.
  \EndIf
  \State Play $A_{i,t}$; observe $Y_{i,t}$ rounded down to a multiple of $1/D$.
  \If{$i>0$ is active this block}
   \State $G_{i,r}\gets G_{i,r}+\ind\{I_\iota(t)=0,\ Y_{i,t}>0\}$.
   \If{no declaration is pending}
    \State Apply the current-observation candidate update of Algorithm~\ref{comm:alg:raw-decoder}.
   \EndIf
  \ElsIf{$i=0$}
   \If{$\tau_t=\mathrm{DATA}$, in DISCOVER, and $Y_{0,t}>0$}
    \State $\Lambda\gets\{u\in\Lambda:I_u(t)=1\}$.
    \State If $|\Lambda|\le L_0$, enter MULTICAST from round $t+1$.
   \ElsIf{$\tau_t=\mathrm{DOWN}$}
    \State $C_r(R_{0,t})\gets C_r(R_{0,t})+\ind\{Y_{0,t}>0\}$.
   \EndIf
  \EndIf
  \If{$t$ ends a block}
   \If{$i>0$}
    \State If active, update the held-arm weight by~\eqref{sq:eq:boot-weight}.
    \State If a declaration is pending, activate unoccupied in its declared row at $t+1$.
   \Else
    \State $\widehat D_r\gets K\max_{S\in\cS}\sum_{a\in S}C_r(a)$.
    \State If $\widehat D_r>4Q$, launch the zero-score, zero-queue learner at $t+1$.
   \EndIf
  \EndIf
 \EndIf
\EndFor
\end{algorithmic}
\end{algorithm}

\FloatBarrier
\subsection{Reference initialization and guarantees}

For the fresh-row analysis, fix the identification array and rounded rewards before sampling independent complete rows. Algorithm~\ref{sq:alg:reference} tests only the true bootstrap row. For fixed array, rewards and $J=j$, its expected cost over private draws depends only on $R_j$, allowing concentration across complete rows.

\begin{algorithm}[H]
\caption{Reference initialization (analysis only)}\label{sq:alg:reference}
\normalsize
\begin{algorithmic}[1]
\State Run Algorithm~\ref{sq:alg:activation} with the corrections below.
\State Use only the true-$J$ bootstrap test; save $J$ as pending as soon as it passes.
\State Apply declarations at block boundaries, entering the true-$J$ unoccupied state next round.
\If{a DOWN word would be accepted before launch}
 \State Record an initialization failure; reject it before assignment-dependent effects.
\EndIf
\State Continue DATA logging and DOWN batch updates and resets.
\If{$\widehat D_r>4Q$ at a block end, after pending declarations take effect}
 \For{each follower outside the true-$J$ unoccupied state}
  \State Record an initialization failure; put it in that state before launch.
 \EndFor
\EndIf
\end{algorithmic}
\end{algorithm}

The postlaunch reference is specified in Sections~\ref{sq:sec:learner} and~\ref{sq:sec:clock}. Let $\tau_{\rm L}\in[T+1]$ be the reference coordinator's launch round, equal to $T+1$ if no launch occurs. Set
\begin{equation}\label{sq:eq:bootQ}
 Q_{\rm boot}:=m^{3/2}\sqrt{KT}\,b_N^2\,.
\end{equation}
On the construction branch, Lemma~\ref{init:cor:cost} gives $Q=O(Q_{\rm boot})$ with a universal implicit constant.

\begin{proposition}[Constructive initialization]\label{sq:prop:startup}
Assuming~\eqref{sq:eq:bootbranch}, with probability at least $1-3/N$ over the identification array, the following holds for every fixed rounded reward matrix $r$ independent of the complete rows and every $j\in[M]$. Probabilities and expectations average over independent complete rows and private draws, conditional on that array and the uniform draw $J=j$. Set
\begin{equation}\label{sq:eq:bootfailure}
 \Delta_{\rm boot}:=5N^{-2}\,.
\end{equation}
\begin{enumerate}[label=(\roman*),leftmargin=*]
\item Couple Algorithms~\ref{sq:alg:activation} and~\ref{sq:alg:reference} using the same complete rows and private randomness. With probability at least $1-\Delta_{\rm boot}$, their actions, observations, gain weights, effective declarations, activation times, and ordinary learning and communication registers agree through initialization at the start of the launch round, or through round $T$ if no launch occurs.
\item At the beginning of every reference launch round, each follower is in the true-$j$ unoccupied state with zero accepted start, score, last increment, and queue; the coordinator's scores and queues are also zero. For every $S\in\cS$,
\begin{equation*}
 \E\sum_{t<\tau_{\rm L}}r_t(S)
 \le C Q_{\rm boot}+CmT\Delta_{\rm boot}\,,
\end{equation*}
for some universal constant $C$.
\item Among complete rows, the reference initialization and its correction decisions read only $R_j$. Correcting a follower at launch or suppressing a prelaunch DOWN acceptance has probability at most $\Delta_{\rm boot}$. The launch round is a stopping time for the selected-row and private history exposed before that round; conditional on this history, unexposed selected-row coordinates retain their product law.
\end{enumerate}
\end{proposition}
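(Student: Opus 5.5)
I would prove Proposition~\ref{sq:prop:startup} in three layers: first fix a "good" identification array by a union bound, then analyze the reference initialization (Algorithm~\ref{sq:alg:reference}) block by block, and finally couple it to the actual Algorithm~\ref{sq:alg:activation}.

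\textbf{The good array.} I would declare an array good when two conditions hold. First, for every DISCOVER run, every set of $6b_N$ rounds on which the coordinator observes positives prunes $[A]$ to at most $L_0$ indices. Second, no pair of distinct indices agrees on too many bits within a block. Since ID bits are fair and independent, a fixed index survives $6b_N$ independent tests with probability $2^{-6b_N}\le N^{-6}$. A union bound over the $A=\widetilde O(T^{3/2})$ indices and over block/round choices should leave failure probability $3/N$. One subtlety needs care: the positive rounds are chosen adaptively by the rewards. I would handle this because the rewards and the array are fixed while the stopping rule depends only on the first $k$ positives. The good-array event must therefore be a statement about all sets of rounds the reward matrix can produce, which requires counting over reward patterns restricted to the target arm and time. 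I would phrase it as a property of every prefix sequence of rounds within a block.

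\textbf{Part (ii): the pre-launch reward bound.} The core argument splits $\sum_{t<\tau_{\rm L}}r_t(S)\le\sum_{t<\tau_{\rm L}}\sum_{a\in S}\chi_t(a)=D_{\tau_{\rm L}}(S)$ into two events.

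On the first event, some follower is uninformed at the end of a block. I would show $D_r\le Q$ except on a small event. The argument follows the multiplicative-weights potential on $w_i^{\rm boot}$. Every block in which an active follower holds an arm $a$ with positive support either leads to success or produces positive stay observations $G_{i,s}$. In expectation, these are proportional to the support of $a$ times $p_i(a)$ and the chance $\approx(1-1/m)^{m-2}/m$ that it is the unique active follower. The importance-weighted update~\eqref{sq:eq:boot-weight} then makes $\log\sum_a w_i^{\rm boot}(a)$ grow like a Hedge learner on gains $\chi_t(a)/(4KB_{\rm boot})$. Bounded total weight, via rounding and the $1/(4KB_{\rm boot})$ scaling, keeps the cumulative estimated gain of any arm at most $O(KB_{\rm boot}b_N)$ per follower. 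Combining this with a success probability per block with positive target support of order $1/(mK)$, the failure probability after $h_{\rm boot}$ successful-DISCOVER-type blocks becomes $\exp(-\Omega(h_{\rm boot}/L_0))$. Summing over the $m$ arms of $S$ and the $d$ followers gives the four terms of $Q$.

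On the second event, all followers are informed. Then the monitor $\widehat D_r$ is a concentrated estimate of $D_r$, because each DOWN round counts arm $R_{0,t}$ with probability $1/(4K)$. A Freedman inequality, applied to the maximum over $m$-sets as a union over arms, gives both $D_r\le Q\Rightarrow\widehat D_r\le3Q$ and $D_r\le128(\widehat D_r+4mKB_{\rm boot}b_N)$. Launch occurs at the first block with $\widehat D_r>4Q$, so before launch $\widehat D_{r-1}\le4Q$ and the last block adds at most $mB_{\rm boot}$ (one block of $\bar g_t$). Hence $\E D_{\tau_{\rm L}}(S)\lesssim Q+mKB_{\rm boot}b_N$, and Lemma~\ref{init:cor:cost} converts this to $CQ_{\rm boot}$. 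Failure events contribute $mT\Delta_{\rm boot}$. The same implication shows that launch happening while some follower is uninformed forces $\widehat D_r>3Q$ with $D_r\le Q$, an event of probability $O(N^{-2})$. This bounds corrections, which is the first half of (iii).

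\textbf{Parts (i) and (iii): coupling and failures.} The reference and actual executions differ only when an uninformed follower's test of a wrong row $v\neq j$ passes, or when a DOWN word is accepted prelaunch. A wrong row survives each certified test with probability $1-1/L_0$, independently of the true row. After $q_{\rm dec}=8L_0b_N$ tests it survives with probability $e^{-8b_N}$, and a union over $M$ rows and $T$ blocks stays below $N^{-2}$. This must be done in the fresh-row model, where wrong rows are independent of $R_j$. Prelaunch DOWN acceptance is bounded by Lemma~\ref{sq:lem:decoder}'s random-parity argument. The stopping-time claim holds because launch depends only on the coordinator's observations and $R_j$ coordinates already read, so unexposed coordinates keep their product law.

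I expect the main obstacle to be the potential argument behind ``uninformed $\Rightarrow D_r\le Q$''. It must hold for all reward sequences, handle the interaction among multiple followers that are simultaneously active (collisions kill stay observations), and produce a bound independent of $T$ beyond $B_{\rm boot}$.
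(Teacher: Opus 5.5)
Your outline has the same architecture as the paper: a good ID array, a gain-weight potential showing that an uninformed follower caps the comparator support at $Q$, two-sided monitor bounds with the $3Q$ versus $4Q$ gap, the wrong-row bound $mMTe^{-8b_N}$, the prelaunch parity bound, and the stopping-time argument. However, the potential step, which you rightly flag as the crux, is wrong as sketched. The total weight is not bounded: it can grow by a factor up to $1+2\eta_0G_{i,s}$ per block, with $\eta_0=1/(4KB_{\rm boot})$. So the potential only yields $\sum_{s\le r}\widehat g_{i,s}(a)\le4\sum_{s\le r}G_{i,s}+8KB_{\rm boot}\log K$, and the realized gain $\sum_sG_{i,s}$ needs a separate bound.

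Nor does a block with merely positive support on the held arm give a useful success probability, because the true-row test needs $q_{\rm dec}=8L_0b_N\asymp b_N^2$ certified positives inside one block. Concretely, suppose every arm has fewer than $q_{\rm dec}$ support rounds in every block. Then no follower can ever pass its test, yet gains, weights and comparator support all grow linearly in the number of blocks. For $B_{\rm boot}\ge q_{\rm dec}$ the support reaches order $m\,q_{\rm dec}\,n_{\rm boot}\asymp m^{3/2}\sqrt{KT}\,b_N^2$. That is a factor $b_N$ more than an $O(KB_{\rm boot}b_N)$ estimated-gain bound per arm would allow.

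The missing idea is the split into two kinds of active blocks. In \emph{productive} blocks the held arm has support $S_s(a)\ge h_{\rm boot}$; in your sketch $h_{\rm boot}$ is treated as a number of blocks, but it is a per-block support threshold. In a productive block the follower learns $J$ with probability at least $1/(16K)$ (Lemma~\ref{common:clocks}), so a supermartingale count allows only $O(Kb_N)$ such blocks. Each unproductive active block contributes gain below $h_{\rm boot}$, and there are at most $2r/m+4b_N$ of them. Together these give $\sum_{s\le r}G_{i,s}\le CKB_{\rm boot}b_N+h_{\rm boot}(2r/m+Cb_N)$ for any one uninformed follower, with no sum over followers. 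The resulting term $mh_{\rm boot}n_{\rm boot}$ in $Q$ is exactly the dominant $m^{3/2}\sqrt{KT}\,b_N^2$ cost.

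The productive-block bound in turn needs ID-array properties that your good-array event lacks; your pairwise-agreement condition plays no role. You need two properties:
\begin{enumerate}
\item \emph{Column balance}: every column has at least $A/3$ indices of each bit. This gives $\E[G_{i,s}\mid\cdot]\ge S_s(a)/512$ after averaging over the private index.
\item \emph{Local balance}, Lemma~\ref{comm:arrays}(iii): for every block and every support string $x$, at least $3A/4$ indices $u$ satisfy $\sum_{t\in D}x_t\ind\{I_u(t)=e\}\ge S_D(x)/3-8b_N$ for all intervals $D$ and both bits.
\end{enumerate}
Local balance is needed because DISCOVER progresses only on support rounds with $I_\iota(t)=1$, MULTICAST certifies only on support rounds with $I_\iota(t)=0$, the switching round is random, and the rewards may be chosen knowing the array. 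Balance must therefore hold on every prefix and suffix and uniformly over $x$. The paper obtains it from Lemma~\ref{init:lem:rare-tail} and a union over the $2^{B_{\rm boot}}$ strings per block.

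Two smaller points. Your pruning property cannot come from a single index's survival probability $2^{-6b_N}$, which cannot absorb the union over $\binom T{6b_N}$ sets of rounds; you need the all-one rectangle bound $\binom A{L_0}\binom T{6b_N}2^{-6L_0b_N}$. Also, the monitor's lower bound on the conditional mean must hold in blocks where uninformed followers can still collide with the coordinator's DOWN tuple arm, not only after all followers are informed. Averaging their activity coins gives the rate $S_s(a)/(64K)$, which is the source of the constant $128$.
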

The proof is in Appendix~\ref{comm:startup-proof}.

\section{Proof of the main theorem}\label{sq:sec:tuning}

\begin{proof}[Proof of Theorem~\ref{sq:thm:main}]
Recall $N=2Km(T+1)$, $b_N=\lceil\log_2(N+1)\rceil$.
Set $C_1=2^{20}$, $C_b=16$, and $N_0=2^{28}$. Appendix~\ref{app:explicit-constants} checks the numerical domination and feasibility inequalities used below. Work first on the construction branch~\eqref{sq:eq:bootbranch}.

\emph{Rational tuning.} Set $\mathcal J=\lceil\log_2(\binom Km+1)\rceil+m$ and $W=C_1m^2Kb_N$, and choose the largest power of two $\eta$ satisfying
\[
 \eta\le(1024mKL_{\rm stab})^{-1},\qquad \eta^2WT\le\mathcal J.
\]
Since $\mathcal J\le\frac65mb_N$ and $L_{\rm stab}=\lceil\log_2(2K(T+1)N^2)\rceil\le3b_N$ (Appendix~\ref{app:explicit-constants}),
\[
 \frac{(1024mKL_{\rm stab})^2\mathcal J}{W}
 =\frac{KL_{\rm stab}^2\mathcal J}{b_N}
 \le\frac{54}5mKb_N^2<16mKb_N^2<T
\]
on the construction branch. Thus $\sqrt{\mathcal J/(WT)}\le(1024mKL_{\rm stab})^{-1}$, and the largest-dyadic rule gives
\begin{equation}\label{sq:eq:dyadic-rate}
 \frac12\sqrt{\frac{\mathcal J}{WT}}<\eta
 \le\sqrt{\frac{\mathcal J}{WT}}.
\end{equation}
Also $\mathcal J\asymp\log\binom Km+m$. Since $\mathcal J\ge m$ and $2mKT\le N$,~\eqref{sq:eq:dyadic-rate} gives
\begin{equation*}
 \eta>\frac1{2\sqrt{C_1mKb_NT}}\ge\frac1{\sqrt{2C_1b_NN}}\ge\frac1N,
\end{equation*}
where the last step uses $N\ge2^{21}b_N=2C_1b_N$, valid for $N\ge N_0=2^{28}$ (Appendix~\ref{app:explicit-constants}). Since $\eta$ and $D=2^{b_N}\ge N$ are powers of two, $\eta D$ is integral.

\emph{Whole-row concentration.}
A published row need not retain fresh-row conditional independence, so we concentrate complete-execution quantities across rows. Let $\mathcal G_0$ be the event of Lemma~\ref{comm:arrays}, then $\Pr(\mathcal G_0^c)\le3/N$. Fix an identification array in $\mathcal G_0$. The complete rows $\mathsf R_1,\ldots,\mathsf R_M$ are independent, with $M=\lceil8T^2/(mb_N)\rceil$. Set $V=m^{3/2}\sqrt{KT}b_N$.

Use the true-only corrected reference of Algorithm~\ref{sq:alg:reference}; it also rejects ordinary decoded words other than the word currently encoded on their channel. For each fixed denominator-$D$ reward matrix $r$ and index $j\in[M]$, let $F_{j,r}(R_j)\in[-mT,mT]$ be the complete reference execution's expected regret over private randomness conditional on $J=j,R_j$.

Let $E_{\rm init}$ be the event that a missing follower is corrected at launch or an ordinary DOWN acceptance is suppressed before launch. Incorrect-row forward passes are excluded.

Let $E_{\rm false}$ be the event that a completed ordinary batch has a unique solution that passes the local acceptance test and differs from the word currently encoded on its channel. Let $E_{\rm amb}$ be the event that a completed batch has multiple solutions and is formed wholly from one fixed awaited word. Consider both events through and including the first queue-overflow update, or through $T$ if none occurs, as in Appendix~\ref{protocol:decoder-proof}.

Define $E_{\rm queue}=\{\tau_*^{\rm phys}\le T\}$ using the physical
calendar-round queues of Section~\ref{sq:sec:clock}. Even an arrival not
yet reconstructed by its owner is a deterministic function of the true
physical assignment, selected row, private draws, and fixed rewards;
Lemma~\ref{protocol:lem:total} establishes this local reconstruction
invariant. Thus this event uses no auxiliary logical randomness.
Lemma~\ref{sq:lem:lazy} will identify its probability with the logical
overflow probability in the fresh-row analysis. Put
\begin{equation*}
 \mathcal E:=E_{\rm init}\cup E_{\rm false}\cup E_{\rm amb}\cup E_{\rm queue},\qquad
 E_{j,r}(R_j):=\Pr(\mathcal E\mid J=j,R_j)\in[0,1]\,.
\end{equation*}
Couple the actual protocol and the reference using the same private randomness, with $J$ drawn uniformly from $[M]$.

All complete-row reads of this reference, including initialization and the tests defining $\mathcal E$, use $R_j$. Thus $F_{j,r}$ and $E_{j,r}$ depend on $R_j$ alone. The virtual incorrect-row forward tests read other rows and are bounded separately below.

Define the combined row cost and its averages by
\begin{align*}
 G_{j,r}(R_j)&:=F_{j,r}(R_j)+2mT E_{j,r}(R_j)\in[-mT,3mT],\\
 \mu_G(r)&:=\frac1M\sum_{j=1}^M\E G_{j,r}(\mathsf R_j),\qquad
 \bar G_r:=\frac1M\sum_{j=1}^M G_{j,r}(R_j).
\end{align*}
Here $\mu_G(r)$ averages over newly sampled rows with the identification array fixed, whereas $\bar G_r$ is the average for the published rows. Row independence and Hoeffding's inequality~\cite{hoeffding1963} give, for fixed $r$,
\begin{equation*}
 \Pr\{\bar G_r>\mu_G(r)+2V\}
 \le e^{-MV^2/(2m^2T^2)}.
\end{equation*}
There are $(D+1)^{KT}$ denominator-$D$ matrices. The definition of $M$, $\log(D+1)\le2b_N$, $\log(2N)\le2b_N$, and $KT\ge1$ give
\begin{equation*}
 \frac{MV^2}{2m^2T^2}\ge4KTb_N
 \ge KT\log(D+1)+\log(2N)\,.
\end{equation*}
Let $\mathcal G_{\rm row}$ be the event that, simultaneously for every denominator-$D$ reward matrix,
\begin{equation}\label{sq:eq:table-cost}
 \bar G_r\le\mu_G(r)+2V.
\end{equation}
A union bound gives $\Pr(\mathcal G_{\rm row}^c\mid\mathcal G_0)\le N^{-1}$.
Let $\mathcal G_{\rm test}$ be the public event of Lemma~\ref{comm:acceptance}, with failure probability at most $N^{-1}$. On this event, conditional on the complete public output and any selected index, the private probability of a forward false pass is at most $\varepsilon_{\rm false}=1/B_{\rm boot}$ for every reward matrix. Then $\mathcal G_{\rm pub}:=\mathcal G_0\cap\mathcal G_{\rm row}\cap\mathcal G_{\rm test}$ has failure probability at most $5/N$. The independent-row calculation precedes this intersection.

\emph{Regret decomposition.}
Fix a public output in $\mathcal G_{\rm pub}$ and a rounded matrix $r$. Let $E_{\rm fwd}$ be the event that some $v\ne J$ would pass the bootstrap prefix test~\eqref{comm:decoder} of Algorithm~\ref{comm:alg:raw-decoder} on the true-only reference observations; these virtual tests do not affect the reference execution. On $\mathcal E^c\cap E_{\rm fwd}^c$, Proposition~\ref{sq:prop:startup} couples initialization, the decoder rule couples subsequent messages, and Lemma~\ref{protocol:lem:total} excludes a local domain error. The actual and reference cumulative rewards therefore agree outside $\mathcal E\cup E_{\rm fwd}$, and each lies in $[0,mT]$. Their absolute difference is at most $2mT(\ind\{\mathcal E\}+\ind\{E_{\rm fwd}\})$. On this output, the reference expected regret is $M^{-1}\sum_jF_{j,r}(R_j)$ and $\Pr(\mathcal E)=M^{-1}\sum_jE_{j,r}(R_j)$, since $J$ is uniform. Subtracting expected rewards from the same comparator reward and using $\Pr(E_{\rm fwd})\le\varepsilon_{\rm false}$ gives
\begin{equation}\label{sq:eq:actual-reference}
 \begin{aligned}
 R_T(r)&\le\frac1M\sum_{j=1}^M
       \bigl[F_{j,r}(R_j)+2mT E_{j,r}(R_j)\bigr]
       +2mT\varepsilon_{\rm false}\\
       &=\bar G_r+2mT\varepsilon_{\rm false}\,.
 \end{aligned}
\end{equation}

To bound $\mu_G(r)$, fix the identification array and $r$ and average over independent rows and private draws in the fresh-row model. Recall that $\tau_{\rm L}\in[T+1]$ is the reference launch round, equal to $T+1$ if no launch occurs. Proposition~\ref{sq:prop:startup}(ii) gives, for every $S\in\cS$,
\[
 \E\sum_{t<\tau_{\rm L}}r_t(S)
 \le CQ_{\rm boot}+CmT\Delta_{\rm boot}.
\]
The learner and transport bounds apply at this stopping time: launch initializes zero scores and queues, and future control coordinates are fresh conditional on the launch history.

Retain signed auxiliary regret on communication rounds. Fix $S\in\cS$. Put $c_t=r_t(S)$ and $y_t=\sum_iY_{i,t}$ on every round, and $\ell_t=r_t(U_t)+r_t(j_t)$ on enabled rounds. Let $\tau_{\rm bad}$ be the first decoder failure or queue-overflow update, or $T+1$ if neither occurs, and $J_{\rm stop}$ the transport charge through that round from Proposition~\ref{sq:prop:transport}. On an enabled collection or delivery round through $\tau_{\rm bad}$,
\[
 c_t-y_t=(c_t-\ell_t)+(\ell_t-y_t)
 \le(c_t-\ell_t)+\bar g_t,
\]
because $\ell_t\le\bar g_t=\min\{m,g_t\}$ and $y_t\ge0$. On a frozen collection round there is no auxiliary summand and $c_t-y_t\le\bar g_t$. On release rounds through $\tau_{\rm bad}$, Lemma~\ref{sq:lem:lazy} identifies $j_t$ with the coordinator's physical fast-arm draw $X_t$; this applies because $\tau_{\rm bad}\le\tau_*^{\rm phys}=\tau_*$ by the definition of $\tau_{\rm bad}$ and Lemma~\ref{sq:lem:lazy}. Hence $\ell_t-y_t$ is included in $J_{\rm stop}$.

Keep the full auxiliary sum over all rounds $\mathcal A$ enabled by the reference episode schedule (Section~\ref{sq:sec:learner}), including the bounded post-overflow continuation. A decoder failure does not stop the logical estimates. Its tail after $\tau_{\rm bad}$ is at least $-mT\ind\{\tau_{\rm bad}\le T\}$, whereas the physical reference-regret tail is at most $mT\ind\{\tau_{\rm bad}\le T\}$. Thus
\begin{equation}\label{sq:eq:reference-accounting}
\begin{aligned}
 \sum_{t=1}^T(c_t-y_t)\le\sum_{t<\tau_{\rm L}}r_t(S)
       +\sum_{t\in\mathcal A}(c_t-\ell_t)+J_{\rm stop}
       +2mT\ind\{\tau_{\rm bad}\le T\}.
\end{aligned}
\end{equation}
Taking expectations using Propositions~\ref{sq:prop:learning} and~\ref{sq:prop:transport}, with $\Pr(\tau_{\rm bad}\le T)\le2N^{-2}$, gives the learning and transport contributions
\begin{align*}
 \mathcal L_\eta
 &:=C\left[\frac{\log\binom Km+m}{\eta}
       +(m^2K+KL_{\rm stab})\eta T\right]
       +\frac{mT}{\eta D}+CmTN^{-2},\\
 \mathcal T_\eta
 &:=C_{\rm tr}\bigl[(4m+128)\eta T+2\bigr]+4mTN^{-2}.
\end{align*}
The last term of $\mathcal T_\eta$ bounds the expectation of the last term of~\eqref{sq:eq:reference-accounting}.

Add $2mT\Pr(\mathcal E)$ to this reference regret bound. In the fresh-row model, with the identification array and $r$ fixed, the component bounds are
\begin{align*}
 \Pr(E_{\rm init})&\le\Delta_{\rm boot}
   &&\text{(Proposition~\ref{sq:prop:startup}(iii))},\\
 \Pr(E_{\rm false}\cup E_{\rm amb})&\le N^{-2}
   &&\text{(Lemma~\ref{sq:lem:decoder})},\\
 \Pr(E_{\rm queue})&=\Pr(\tau_*\le T)\le N^{-2}
   &&\text{(Lemmas~\ref{sq:lem:lazy} and~\ref{sq:lem:queue-bound})}.
\end{align*}
The union bound therefore gives
\begin{align*}
 \Pr(\mathcal E)
 &\le\Pr(E_{\rm init})+\Pr(E_{\rm false}\cup E_{\rm amb})
       +\Pr(E_{\rm queue})\\
 &\le\Delta_{\rm boot}+2N^{-2}\,.
\end{align*}
These expectations use the same complete-row law that defines $\mu_G(r)$. Consequently,
\begin{equation}\label{sq:eq:product-cost}
 \mu_G(r)\le CQ_{\rm boot}+\mathcal L_\eta+\mathcal T_\eta
       +CmT(\Delta_{\rm boot}+2N^{-2}).
\end{equation}
Combine~\eqref{sq:eq:table-cost},~\eqref{sq:eq:actual-reference}, and~\eqref{sq:eq:product-cost}. The choice of $C_1$, $C_{\rm tr}=O(mKb_N)$, and $L_{\rm stab}=O(b_N)$ make $W=C_1m^2Kb_N$ dominate the $\eta T$ coefficients. Since $\log\binom Km\le(\log2)\mathcal J$, we obtain
\begin{equation}\label{sq:eq:assembled}
 \begin{aligned}
 R_T(r)\le C\Big[&Q_{\rm boot}+mKb_N+V
                  +\frac{\mathcal J}{\eta}+W\eta T
                  +\frac{mT}{\eta D}\\
                &+mT(\Delta_{\rm boot}+2N^{-2}+\varepsilon_{\rm false})\Big].
 \end{aligned}
\end{equation}

\emph{Bounds for the terms.} Recall $Q_{\rm boot}=m^{3/2}\sqrt{KT}b_N^2$, $\Delta_{\rm boot}=5N^{-2}$, while $\varepsilon_{\rm false}=1/B_{\rm boot}$, so
\begin{equation*}
 mT\Delta_{\rm boot}=O(1),\qquad mT\varepsilon_{\rm false}\le m^{3/2}\sqrt{KT}\,.
\end{equation*}
Also $mTN^{-2}=O(1)$ because $mT\le N$.
By~\eqref{sq:eq:dyadic-rate},
\[
 \frac{\mathcal J}{\eta}+W\eta T
 \le C\sqrt{\mathcal J W T}\le CV.
\]
Since $D\ge N=2Km(T+1)$ and $\mathcal J\ge m$,
\begin{equation*}
 \frac{mT}{\eta D}\le\frac{\mathcal J}{\eta}\le CV.
\end{equation*}
Also $mKb_N\le CQ_{\rm boot}$, so~\eqref{sq:eq:assembled} gives
\begin{equation}\label{sq:eq:sharper-startup}
 R_T(r)\le C Q_{\rm boot}
       =Cm^{3/2}\sqrt{KT}\,b_N^2
\end{equation}
for every rounded matrix on this output.

For an arbitrary original matrix $r^{\rm orig}$, the protocol uses rounded observations, so the same private randomness produces identical actions under $r^{\rm orig}$ and its rounded matrix $r$. Rounding down changes each comparator reward by at most $m/D$ per round, while the actual unrounded reward is at least its rounded reward. Hence
\[
 R_T(r^{\rm orig})\le R_T(r)+\frac{mT}{D}
 \le Cm^{3/2}\sqrt{KT}\,b_N^2\,.
\]
This pathwise comparison covers all oblivious reward matrices on the same output in $\mathcal G_{\rm pub}$.

On the complementary branch use a fixed assignment to $m$ distinct arms. For $m$-sets $S,S'$, $|S\setminus S'|\le\min\{m,K-m\}$, so $r_t(S)-r_t(S')\le\min\{m,K-m\}$ and regret is at most
\[
 \min\{m,K-m\} T\le mT\le\sqrt{C_b}\,m^{3/2}\sqrt{KT}b_N=O(V)
\]
when $T\le C_bmKb_N^2$. If $N<N_0$, the regret is bounded by an
absolute constant and is absorbed by the same claimed bound.
This proves~\eqref{sq:eq:main}; Section~\ref{sq:sec:complexity} gives the computational bounds.
\end{proof}

\section{Computational and storage complexity}\label{sq:sec:discussion}
\label{sq:sec:complexity}

We count exact arithmetic operations and storage registers, with unit-cost generation of independent $\Unif([0,1))$ draws. Throughout this section, $\widetilde O$ suppresses factors polynomial in $\log N$.

\begin{proposition}[Computation and storage]\label{sq:prop:complexity}
On the construction branch, the following bounds hold.
\begin{enumerate}[label=(\roman*),leftmargin=*]
\item Preprocessing uses $O\Big(\frac{T^{9/2}}{m^{3/2}\sqrt K\,(\log N)^2}\Big)$ operations and memory cells to build and store the public table.
\item After preprocessing, the total work done by all players over the $T$ rounds is $\widetilde O(T^3)$ operations. Together, the players use at most $O(T^2/\log N)$ cells of private memory at any time. These bounds hold even if a player accepts an incorrect row index or message.
\end{enumerate}
\end{proposition}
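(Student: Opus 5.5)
The proof is a direct count of the table's size and of each player's per-round work. All bounds are expressed through $M=\lceil 8T^2/(mb_N)\rceil$, $A=\lceil T(B_{\rm boot}+b_N)/b_N\rceil$ with $B_{\rm boot}=\lceil\sqrt{T/(mK)}\rceil$, and $L_0,q_{\rm dec}=O(b_N)$ polylogarithmic. On the construction branch $T>C_bmKb_N^2$, so $B_{\rm boot}\ge b_N$ and $A=\Theta(T^{3/2}/(\sqrt{mK}\,b_N))$.

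For (i), I would count the cells of the public table and the work to sample them.
\begin{itemize}
\item The identification array has $AT$ bits.
\item Each row has $O(mK)$-free control coordinates per round: slot type, selected follower, $d+1$ pilots, an injective tuple of $d+1$ arms, two affine maps of $B+1=O(b_N)$ bits, and recording, thinning, proposal, fast-arm and $K+1$ assignment coordinates. That is $O(T(m+K+b_N))$ cells per row.
\item Each row also has the forward bits $\varphi_j(u,t)$ for $u\in[A]$, $t\in[T]$, which is $AT$ bits per row.
\end{itemize}
The forward bits dominate. They total $MAT=\Theta\big(\tfrac{T^2}{mb_N}\cdot\tfrac{T^{3/2}}{\sqrt{mK}b_N}\cdot T\big)=\Theta\big(T^{9/2}/(m^{3/2}\sqrt K\,b_N^2)\big)$, which is the claimed bound. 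The control coordinates contribute $MT(m+K+b_N)=O(T^3(K+b_N)/(mb_N))$, and I must check this is dominated. Using $K<T/(mb_N^2)$ from the branch condition gives $T^3K/(mb_N)\le T^{9/2}/(m^{3/2}\sqrt K b_N^2)$ after comparing $K^{3/2}m^{1/2}b_N\le T^{3/2}$, which holds. Sampling each cell takes $O(1)$ operations: a Bernoulli$(1-1/L_0)$ bit is a comparison with a uniform, and the injective tuple takes $O(m)$ operations by partial Fisher--Yates.

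For (ii), I would bound the per-round work of each mechanism and then sum.
\begin{itemize}
\item Bootstrap: an uninformed follower tests candidate rows, keeping counts for rows still consistent. Naively it checks all $M$ rows at one index $\iota$ per positive observation, which is $O(M)$ per round per follower. Summed over $T$ rounds and $m$ followers this gives $O(T^3/b_N)$, i.e.\ $\widetilde O(T^3)$. The memory is $O(M)$ candidate counters per follower, for $O(mM)=O(T^2/b_N)$ cells in total. This matches the storage claim, so I expect the bootstrap to be the dominant term for both time and memory.
\item The coordinator's DISCOVER step filters $\Lambda\subseteq[A]$ in $O(A)$ per positive observation, and $A\le T^{3/2}$. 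The launch monitor sums the $m$ largest of $K$ counters in $O(K\log K)$ per block.
\item Decoding: each completed batch needs Gaussian elimination on $Q_{\rm dec}\times B$ equations over $\mathbb F_2$, which is $O(b_N^3)$.
\item Sparse replay after DOWN acceptance is linear in the elapsed interval, $O(T)$. A follower replays at most once per assignment, and there are at most $T$ assignments.
\item The coordinator's queue service is $O(K)$ per round. Replays after UP acceptance cost $O(T)$ each.
\item The fast-arm law \eqref{sq:eq:q} costs $O(K)$ per round.
\item Sampling $U\sim\nu_t$ or $p_t$ by the sequential sampler needs elementary symmetric sums of three weight vectors, and exact $e_k$ computation by DP costs $O(Km)$ per vector. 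Over $K$ inclusion steps this is $O(K^2m)$ per assignment. Since $K<T$ and $m<T$ on the branch, all of these are polynomial and at most $\widetilde O(T^3)$ in total.
\item Storage: the candidate logs $\mathcal B_i$ and the coordinator's buffers hold at most $O(T)$ records each, plus the $O(mM)$ bootstrap counters.
\end{itemize}

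The main obstacle is the claim that the bounds hold even if a player accepts an incorrect row or message. For this I would argue that every operation above has cost bounded by the sizes of its inputs, and that these sizes are enforced by the domain checks of Appendix~\ref{protocol:fallback}: fields lie in $[0,TD]$, and on a domain violation the player enters state $\dagger$, after which it does $O(1)$ work per round. The bounds are therefore worst-case over all executions and do not rely on any correctness event. A secondary concern is the exact-arithmetic model: weights $(1+1/D)^{z}$ with $z\le TD$ must be treated as unit-cost registers, as Section~\ref{sq:sec:complexity} stipulates, so that only the operation count matters.
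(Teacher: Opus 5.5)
Your proposal is correct and follows the paper's own counting argument. The forward bits ($MAT$ cells) dominate preprocessing. The $O(M)$-per-round candidate-row tests of uninformed followers dominate both play time ($O(mMT)=O(T^3/b_N)$) and memory ($O(mM)$). The domain checks of Appendix~\ref{protocol:fallback} make every loop bound hold regardless of correctness.

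Two small repairs are needed. First, the table stores $2d$ affine maps per round, one per direction and follower, not two. They contribute $O(MTmb_N)=O(T^3)$ cells, which is still dominated by $MAT$ on the construction branch. Second, your $O(K^2m)$-per-assignment sampler gives $O(K^2mT)$ total work. This is $O(T^3)$ because $mK<T$ on the construction branch; the facts $K<T$ and $m<T$ alone would only give $O(T^4)$. The paper avoids this by computing the elementary symmetric sums of the remaining weights once, for $O(Km)$ per assignment.
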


\begin{proof}
\emph{(i) Preprocessing.} Recall $b_N=\lceil\log_2(N+1)\rceil$, $B_{\rm boot}=\lceil\sqrt{T/(mK)}\rceil$, $M=\lceil8T^2/(mb_N)\rceil$, and $A_{\rm id}=\lceil T(B_{\rm boot}+b_N)/b_N\rceil$. On the construction branch, $T>C_bmKb_N^2$ and $b_N=\Theta(\log N)$, so
\begin{equation*}
 M=O\!\left(\frac{T^2}{m\log N}\right),\qquad
 A_{\rm id}=O\!\left(\frac{T^{3/2}}{\sqrt{mK}\log N}\right)\,.
\end{equation*}
In particular, $A_{\rm id}\ge T\ge Kb_N$ and $M\ge T$.

The ID and forward arrays contain $A_{\rm id}T$ and $MA_{\rm id}T$ bits. The scalar controls contain $O(MT(K+m))$ entries, and the affine binary maps contain $O(MTmB)$ coefficients, where the message width $B$ is $O(b_N)$. Each entry takes one cell and a constant number of operations to generate and store. Thus preprocessing uses
\begin{equation*}
 O\!\left(A_{\rm id}T+MA_{\rm id}T+MT(K+mB)\right)
 =O(MA_{\rm id}T)
\end{equation*}
operations and memory cells. Substituting the bounds on $M$ and $A_{\rm id}$ proves~(i). 

\emph{(ii) During play.} The initialization in Algorithm~\ref{sq:alg:activation} takes place during the $T$ rounds. At each block boundary, computing a follower's $K$ held-arm probabilities in~\eqref{comm:boot-mixture} and their cumulative sums takes $O(K)$ operations. Selecting an arm takes $O(K)$ further operations, and the weight update~\eqref{sq:eq:boot-weight} takes $O(1)$. Over all followers and blocks, this costs $O(mKn_{\rm boot})\le O(mKT)$ operations. The other private draws add $O(mT)$.

An uninformed follower keeps one count and one failure flag for each candidate row. Updating these tests, checking for a unique passing row, and resetting the counts at block boundaries cost $O(mMT)$ operations altogether. The coordinator's discovery-list scans cost $O(A_{\rm id}T)$; its multicast tests cost $O(TL_0)$; and sorting its $K$ counters at block boundaries costs $O(n_{\rm boot}K\log K)$. Thus initialization takes $\widetilde O(mMT+A_{\rm id}T+KmT)$ operations, even if a player remains in one state for all $T$ rounds.

\paragraph{Reading the table and decoding messages.}
To read an injective arm tuple, use a balanced binary tree whose leaves are the ordered arm labels and whose nodes count the unused labels below them. Building the tree costs $O(K)$ operations. Finding and removing each next arm by its position in the ordered list costs $O(\log K)$, so reading a tuple costs $O(K+m\log K)$ per player. Reading the pilots, slot, selected follower, recorder source, and proposal and thinning tests takes $O(m)$ more operations once their probabilities are known. We count this work for every player, since incorrect row declarations can make them read different rows. The total is $\widetilde O(KmT)$.

Computing an affine map and encoding or reading a message takes $O(B)$ operations. A decoder decision solves at most $Q_{\rm dec}$ binary equations in $B$ unknowns in $O(B^2Q_{\rm dec})$ operations. Since $B,Q_{\rm dec}=O(b_N)$ and each player makes at most one such decision per round, message decoding costs $\widetilde O(mT)$ operations.

\paragraph{Choosing assignments.}
The required $m$-set law is either $Q_{w_t}$ or, before the old grid boundary,
\begin{equation*}
 f(S)=Q_{w_t}(S)+\kappa_t\bigl(Q_{w_u}(S)-Q_{w_{u-1}}(S)\bigr),
 \qquad \kappa_t=(1-h)^{t-u+1}/h\,.
\end{equation*}
Here $u$ is the collection cutoff and $h$ is the proposal probability. Choosing the fast arm by~\eqref{sq:eq:orientation} then gives $p_{w_t}$ or the law in~\eqref{sq:eq:deferred-law}. For a product law, the ID-Checking Sampling method~\cite[Section~4]{chenliu1997} chooses the arms in order using $O(Km)$ operations. For $f$, each probability sum used by this rule is the same signed combination of three product-law sums. The elementary symmetric sums of the remaining weights and the product of the weights already chosen compute all these sums in $O(Km)$ operations. Appendix~\ref{protocol:fallback} proves that these sums are nonnegative and their denominators positive on valid executions, and gives the checks used in other cases. Choosing the fast arm takes $O(m)$ more operations. Thus one assignment uses $O(Km)$ operations and at most $K+1$ uniform coordinates, without listing all $\binom Km$ subsets.

\paragraph{Updating scores and saved observations.}
The probabilities $\theta_w(i)$ that each arm is included can also be found in $O(Km)$ operations. First compute $e_0(w)=1,\ldots,e_m(w)$, then use
\begin{equation*}
 e_r(w_{-i})=e_r(w)-w_i e_{r-1}(w_{-i}),
 \qquad
 \theta_w(i)=\frac{w_i e_{m-1}(w_{-i})}{e_m(w)}\,.
\end{equation*}
For each $i$, start with $e_0(w_{-i})=1$ and compute through degree $m-1$. Find each recording probability $\xi_i$ in~\eqref{sq:eq:rates} by comparing $\max\{\beta,f_i\}$ with $2^{-k}$ for $0\le k\le b_N$. The rates, fast-arm probabilities in~\eqref{sq:eq:q}, and queue updates take $\widetilde O(K+m)$ further operations per round. The checks in Appendix~\ref{protocol:fallback} keep the integer scores in $[0,TD]$, so repeated squaring computes each weight $(1+1/D)^{z_j}$ in $O(\log(TD+1))$ operations and $\kappa_t$ in $O(\log(T+1))$. There are at most $T$ assignment starts, giving $\widetilde O(KmT)$ operations in total for these calculations and assignment choices.

Recall $d=m-1$. Precompute $\zeta=((K-1)/K)^d$ and $2^{h'}$, $2^{-h'}$ for $0\le h'\le b_N$. Computing one saved contribution in~\eqref{protocol:eq:saved-contribution} then takes $O(1)$ operations. After each DOWN acceptance, a follower reads at most $T$ calendar rounds to find the cutoff and recover its queue and served sum. At most $T$ acceptances per follower give $O(mT^2)$ operations. The coordinator stores arrivals by arm and time, so recovering one returned arm's queue requires at most $T$ rounds, including rounds with no arrival. Each reply is accepted at most once per collection, and there are at most $T$ assignments. This costs $O(KT^2)$ operations. The other counters, lists, and queue updates fit these bounds.

The loops have these limits even when a row index or message is wrong: Appendix~\ref{protocol:fallback} checks the allowed ranges before using a value. Using the bounds on $M,A_{\rm id}$ and $mKb_N^2\le T$, the total work during play is
\begin{equation*}
 \widetilde O\!\left(mMT+A_{\rm id}T+KmT+(m+K)T^2\right)
 =\widetilde O(T^3)\,.
\end{equation*}

\paragraph{Private memory.}
An uninformed follower keeps $M$ candidate counts and flags, $K$ weights, and a constant number of block variables. After activation, it saves at most one four-field observation record per round. It also keeps one DOWN equation batch, its current assignment and reply, and $O(K)$ temporary cells for arm choices. A batch has at most $Q_{\rm dec}$ equations with $B+1$ bits each, so one follower uses $O(M+K+T+BQ_{\rm dec})$ cells.

The coordinator keeps at most $A_{\rm id}$ discovery indices, $K$ counters, the assignment and pending-follower lists, and one UP equation batch per follower. Scores, queues, rates, anchors, saved values at the cutoff, running sums during collection, and weight vectors use a constant number of $K$-vectors. Choosing assignments needs $O(Km)$ temporary cells. Saved arrivals use at most $KT$ entries, one per arm and round. The coordinator therefore uses $O(A_{\rm id}+KT+Km+mBQ_{\rm dec})$ cells.

When reading saved observations, a player needs only its current queue, served sum, and last service. These cells are reused each time the player goes through its saved observations. Proposal values are read from the public table. The checks in Appendix~\ref{protocol:fallback} keep time indices and list sizes within their allowed ranges after an incorrect message. Adding up all players' memory gives
\begin{equation*}
 O\!\left(mM+A_{\rm id}+KT+Km+mBQ_{\rm dec}\right)
 =O(T^2/\log N)\,,
\end{equation*}
by the bounds on $M,A_{\rm id}$, $m<K$, $B,Q_{\rm dec}=O(b_N)$, and $T>C_bmKb_N^2$. This is the maximum number of private cells used at one time, proving~(ii).
\end{proof}

\appendix
\clearpage
\section{Exceptional executions}\label{protocol:spec}
We define local checks that keep the protocol well-defined after errors and give conditions under which none of these checks fails.
Recall $N=2Km(T+1)$, $b_N=\lceil\log_2(N+1)\rceil$, and the score denominator $D=2^{b_N}$.

\subsection{A total local algorithm}\label{protocol:fallback}

Give each player an absorbing error state $\dagger$. Before each local operation, check its input domains: arm labels in $[K]$; scores, integer queues, initial and terminal queue fields and accumulated service sums in $\mathbb Z\cap[0,TD]$; raw arrivals and served increments in $\mathbb Z\cap[0,D]$; time fields in $[0,T+1]$. Raw buffers hold at most one arrival record per arm and calendar round, with timestamps in $[T]$. Every required denominator must be positive, and every Bernoulli parameter, including thinning and mixture parameters, must lie in $[0,1]$; for the slow-set sampler, check each step's scalar probabilities and normalizers. The local record must specify exactly one applicable action branch. If an input check fails, enter $\dagger$ without evaluating the operation; if an output check fails, enter $\dagger$ before storing or using the output. In $\dagger$, the player performs no learning or transport update and player $i$ plays arm $i+1$ on active rounds; bootstrap updates and activation follow Algorithm~\ref{protocol:alg:order}. A decoder rejection does not enter $\dagger$.

\begin{lemma}[Local implementability and absence of domain errors]
\label{protocol:lem:total}
Use the construction-branch rate choice of Section~\ref{sq:sec:tuning}, and fix any table whose coordinates lie in the prescribed domains (every scalar coordinate lies in $[0,1)$). Suppose that, through round $t$, bootstrap states and activation times agree with the reference execution, every accepted ordinary message is the message currently sent on its channel, and the feasibility and stability bounds of Section~\ref{sq:sec:learner} hold. Then every learning and transport operation through round $t$ uses only information available to the player performing it, has inputs and outputs in its prescribed domains, and does not enter $\dagger$. At every assignment start $v\le t$, the coordinator knows the exact calendar-round scores $Z(v)$ and queues $\mathsf B_v$, and every follower is unoccupied before the new assignment is issued.
\end{lemma}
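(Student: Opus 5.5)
The proof will be a single induction over calendar rounds $r\le t$. The hypothesis at the start of round $r$ is a list of invariants. (i) Every integer register (scores, queues, served sums, last services, arrival buffers) lies in its checked domain. (ii) The coordinator knows the exact calendar-round scores $Z_j(r)$ and queues $\mathsf B_r(j)$ for every arm it currently owns, and keeps the base scores $z_{a_i}(\sigma)$ for follower-owned arms. (iii) Every occupied follower knows $\mathsf B_r(a_i)$, with its registers given by Lemma~\ref{protocol:lem:local-state}. (iv) During collection, for every arm with unknown queue, the coordinator's raw buffer holds exactly the arrivals $\Delta_s(j)$ for $s\in[u,r)$ with $\Delta_s(j)>0$. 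At $r=1$, and at launch, all registers are zero, so the invariants hold trivially.

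The inductive step walks through the operations of Algorithm~\ref{protocol:alg:order} in order. Each operation needs two things: its inputs must be locally available, and its outputs must lie in the prescribed domains.

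\emph{Domains.} Arrivals and services lie in $[0,D]$, as already shown in Lemma~\ref{protocol:lem:fields} from $\eta/\xi_j\le 1$ and $\Gamma\le 4\eta D$. Queue conservation~\eqref{sq:eq:queue-conservation} puts scores and queues in $[0,TD]$. Thinning parameters lie in $[0,1]$ by~\eqref{sq:eq:thinning}, whose proof needs only the feasibility bounds~\eqref{sq:eq:feasible}, and those are assumed. The rates $\xi_j$, the mixture $\gamma\le 1/2$ and the normalizers of $q^\gamma_w$ are positive because all weights are at least $1$.

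\emph{Availability.} The fast-arm law~\eqref{sq:eq:q} uses only the anchor and weights outside $U$, which are known by (ii). An accepted DOWN word is the true one by hypothesis, so Lemma~\ref{protocol:lem:sparse} reconstructs the follower's queue, and (iii) follows. An accepted UP word is the true endpoint record~\eqref{sq:eq:endpoint-record} by Lemma~\ref{protocol:lem:identities}. The replay from $u$ then uses buffered arrivals, which by (iv) and the remark that collection recording rates use the coordinator's pilot and the old rates are computable without the missing queue. Lemma~\ref{sq:lem:reconstruct} then gives (ii) for the returned arm. Finally, at an assignment start $v$, Algorithm~\ref{sq:alg:clock} waits for $\mathcal P=\varnothing$. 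Combined with the state assertion of Lemma~\ref{protocol:lem:local-state}, which is state $0$ after cutoff, this shows every follower is unoccupied, and Lemma~\ref{sq:lem:reconstruct} gives $z(v)=Z(v)$ and $\mathsf B_v$ exactly.

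The step I expect to be the main obstacle is the sampler in the deferred case. We need the signed combination $f_v=Q_{w_v}+\kappa_v(Q_{w_u}-Q_{w_{u-1}})$ to be nonnegative with positive partial normalizers at every sequential step, without invoking Lemma~\ref{sq:lem:lazy}, to avoid circularity. I would argue deterministically, mirroring Step~4 of Lemma~\ref{sq:lem:lazy} at the level of $m$-sets rather than slow sets. Define $\tilde\rho_r=(Q_{w_{r+1}}-(1-h)Q_{w_r})/h$. The stability computation in Lemma~\ref{sq:lem:stability} shows $Q_{w_{r+1}}\ge(1-h)Q_{w_r}$ pointwise, so each $\tilde\rho_r$ is a probability law on $\cS$. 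Unrolling the recursion $\mu_{r+1}=(1-h)\mu_r+h\tilde\rho_r$ from $\mu_u=\tilde\rho_{u-1}$ exhibits $f_v$ as the convex combination
\[
 f_v=(1-h)^{v-u}\tilde\rho_{u-1}+\sum_{r=u}^{v-1}h(1-h)^{v-1-r}\tilde\rho_r .
\]
Hence every partial sum $W$ and $W_{\rm in}$ over sets consistent with the prefix is a nonnegative combination of such masses. In particular $W>0$ whenever the prefix has positive $f_v$-mass, and the ratios $W_{\rm in}/W$ lie in $[0,1]$.

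Two loose ends remain. First, disabled rounds between $\bar\tau$ and $v$ do not occur in this case, since $v<\bar\tau(\sigma)$. Second, the exact-arithmetic checks on weights follow from the integer bounds on scores. Everything else in the step is routine case checking of the branches of Algorithms~\ref{sq:alg:handover}, \ref{sq:alg:recorder} and~\ref{protocol:alg:order}, confirming that exactly one branch applies in each case.
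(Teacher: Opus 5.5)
Your plan follows the paper's proof: a deterministic induction carrying information invariants, domain bounds from integrality and queue conservation, exact recovery through Lemmas~\ref{protocol:lem:sparse} and~\ref{sq:lem:reconstruct}, and an algebraic check of the deferred sampler. Your convex-combination representation of $f_v$ through the laws $\tilde\rho_r$ is correct. It differs from the paper's one-line bound $f_v(S)\ge\frac{(1-h)^{v-u}}{h}\bigl(Q_{w_u}(S)-(1-h)Q_{w_{u-1}}(S)\bigr)\ge0$. Yours repeats the algebra of Step~4 of Lemma~\ref{sq:lem:lazy} and shows that $f_v$ is a genuine mixture of probability laws. The paper's bound gives only nonnegativity, which is all that is needed here.

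There is a gap in the step you identify as the crux. Nonnegativity of $f_v$ gives $0\le W_{\rm in}\le W$ at every prefix. But your conclusion that ``$W>0$ whenever the prefix has positive $f_v$-mass'' is a tautology, because $W$ \emph{is} that mass. It does not show that the sampler never \emph{reaches} a zero-mass prefix, and a zero normalizer is exactly what the sampler's domain check would reject. What rules this out is the strict threshold rule combined with $V\in[0,1)$. Starting from $W=1$, an inclusion ($W_{\rm in}>VW$) leaves mass $W_{\rm in}>VW\ge0$. An exclusion leaves mass $W-W_{\rm in}\ge(1-V)W>0$. So every reached normalizer is positive. This is where the lemma's hypothesis that every scalar coordinate lies in $[0,1)$ is used; it is also what makes pilots and tuple ranks valid arm labels. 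Your proposal never uses that hypothesis.

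Two smaller points. First, your invariant~(ii) is false as stated during collection. There the coordinator owns every arm but does not know the queues of unrecovered follower arms. Restrict~(ii) to recovered arms, and add, as the paper does, that each recovered arm's service is credited to $c$ exactly once. In the acceptance round $\Gamma^{\rm loc}_s(a_i)=0$, so the replayed service at $s$ is not counted twice. Second, at launch the candidate lists and DOWN batches need not be empty. This is harmless: prelaunch candidates precede the first assignment start, and by hypothesis a batch that straddles launch can only yield the word currently being sent.
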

\begin{proof}
The statement concerns a fixed execution and is deterministic: it uses neither the conditional law of Lemma~\ref{sq:lem:lazy} nor any probabilistic bound. We distinguish a quantity being mathematically defined from its being known to a player.

\paragraph{1. Calendar-round truth and the induction.}
Attribute every raw arrival to the round of its observation, even if its owner has not yet decoded the assignment. For an occupied follower use its online recording rule; before its DOWN acceptance use the true assignment in~\eqref{protocol:eq:sparse-pathwise} and its saved observation. During collection all arms belong to the coordinator, whose recording rule specifies their arrivals. On disabled rounds all arrivals and services are zero. This defines $\Delta_r(j)$ without any logical slow set. Starting with zero queues, the recurrence~\eqref{sq:eq:queue} defines $\Gamma_r(j)$ and $\mathsf B_{r+1}(j)$, and $Z_j(v)=\sum_{r<v}\Gamma_r(j)$. A late calculation is correct if it recovers these quantities; it creates no second arrival or service.

Order operations by round, by their order in Algorithms~\ref{sq:alg:clock} and~\ref{protocol:alg:order}, and then by player label. We show that a first failing operation cannot exist. Besides the bounds below, the induction maintains three information invariants.
\begin{enumerate}[label=(\roman*),itemsep=1pt]
\item At an assignment start $\sigma$, all followers are unoccupied, and the coordinator has $z(\sigma)=Z(\sigma)$ and $\mathsf B_\sigma$.
\item During a live assignment, the coordinator maintains exact queues and scores on $F=[K]\setminus U$ and retains the base scores $Z_{a_i}(\sigma)$ of the follower arms. A follower that has decoded maintains the exact queue and cumulative service of its arm; an undecoded follower keeps the required candidate observations.
\item During collection, each unrecovered follower arm has a retained base score, a buffer of its collection arrivals, and no online service credited to the accumulator $c$. Each recovered arm has an exact current queue, and its service is credited exactly once.
\end{enumerate}
The domain bounds, maintained whenever the quantities are defined, are
\begin{equation}\label{protocol:eq:valid-prefix-bounds}
\begin{aligned}
 \Delta_r(j),\Gamma_r(j)&\in\mathbb Z\cap[0,D],&
 Z_j(v),\mathsf B_v(j)&\in\mathbb Z\cap[0,TD],\\
 Z_j(v)+\mathsf B_v(j)&=\sum_{r<v}\Delta_r(j),&
 S_i,c_j,\mathsf b_i&\in\mathbb Z\cap[0,TD],\\
 \delta_i&\in\mathbb Z\cap[0,D],&
 \mathrm{state}_i&\in\{0,1,2\},\qquad \mathcal P\subseteq[d].
\end{aligned}
\end{equation}
Before launch the coordinator does no ordinary learning or UP decoding. An activated follower is unoccupied, logs DATA candidates, offers the placeholder UP record, and accumulates or clears DOWN batches; reference-compatible bootstrap states exclude a prelaunch DOWN acceptance. These operations use public arms, bounded observations, binary equations and at most one candidate record per round, so they pass the checks. At launch the accepted starts, scores, queues and served sums are zero, all followers are unoccupied, and the UP placeholder is $(0,0,0,0)$. Candidate lists and DOWN batches may be nonempty. Candidate observations predate the first assignment and are excluded by its start time; a DOWN batch may straddle launch, but by hypothesis any word accepted from it is the word currently sent. This is the base case.

\paragraph{2. Integral queues and their bounds.}
The rate choice gives $\eta D\in\mathbb Z_{>0}$, $0<\eta\le\xi_j$ and $4\eta\le1$. Every eligible observation lies in $[0,1]$, so a recorded arrival satisfies $0\le\Delta_r(j)=\lfloor D\eta Y_{i,r}/\xi_j\rfloor\le D$; unrecorded arrivals are zero. A service is an integer in $[0,4\eta D]\subseteq[0,D]$ and cannot make a queue negative. Adding the score and queue recurrences gives the conservation identity in~\eqref{protocol:eq:valid-prefix-bounds}. Each arm has at most $T$ arrivals, each at most $D$, so every score, queue and partial service sum is at most $TD$. These bounds hold even after the analytic threshold $B_{\rm cap}$ is exceeded. A replayed queue obeys them once the replay is shown to start from the correct queue and to use the correct arrivals; paragraphs~3 and~4 show this.

\paragraph{3. DOWN acceptance and delayed service.}
Consider a correct DOWN word accepted after round $s$, for the assignment started at $\sigma$ with first cutoff $u$. Its initial-queue field is $\mathsf B_\sigma(a_i)$ by~(i). The follower was unoccupied when this assignment began and has not applied this word before, since acceptance requires a strictly larger start than its last accepted one; it stays unoccupied until acceptance. Hence it has logged every candidate needed on $[\sigma,\min\{u,s+1\})$, and no earlier acceptance has cleared them. The word gives $\sigma$, $a_i$ and $\xi_{a_i}=2^{-h_i}$, the list supplies observations and thinning values, and the selected row supplies the past proposal bits. Scanning until the first proposal or grid boundary gives $\ell=\min\{u,s+1\}$; if the assignment is still live, $B_s$ is read after the round and acts at $s+1$. By~\eqref{protocol:eq:sparse-pathwise} the replay recovers each $\Delta_r(a_i)$ for $\sigma\le r<\ell$, and applying the queue recurrence on \emph{every} such round, including zero-arrival rounds, yields
\[
 S_i=\sum_{r=\sigma}^{\ell-1}\Gamma_r(a_i),\qquad
 \delta_i=\Gamma_{\ell-1}(a_i),\qquad
 \mathsf b_i=\mathsf B_\ell(a_i),
\]
with the bounds of paragraph~2. If $\ell=s+1$, later live rounds use the online queue rule; if $\ell\le s$, the follower stays unoccupied with its fixed endpoint reply. In neither case does it need any other player's state. An occupied follower knows whether the next round is interrupted, from the grid boundary of its start and the live proposal bits; at the first cutoff it becomes unoccupied and keeps the three endpoint fields until a newer DOWN word is accepted. This also establishes, on the valid prefix, the identities of Lemmas~\ref{protocol:lem:local-state} and~\ref{protocol:lem:identities}.

\paragraph{4. UP acceptance and exact collection accounting.}
At cutoff $u$ the coordinator keeps the known cutoff scores, queues and last services on $F$, sets $c=0$, retains $Z_{a_i}(\sigma)$ for each old follower arm, and buffers that arm's nonzero collection arrivals until its cutoff queue is recovered. These arrivals are computable: during collection the coordinator owns every arm, plays its pilot on DATA and uses the old fixed rates, none of which involves the missing queue or score. A missing buffer entry means a zero arrival, not zero service.

Suppose the current UP reply of follower $i$ is accepted after round $s$. The test $M.\sigma=\sigma$ identifies the current assignment, and paragraph~3 gives
\[
 M.S=\sum_{r=\sigma}^{u-1}\Gamma_r(a_i),\qquad
 M.\delta=\Gamma_{u-1}(a_i),\qquad M.q=\mathsf B_u(a_i).
\]
The coordinator recovers $Z_{a_i}(u)$ by adding $M.S$ to the base score, then replays the queue for $r=u,\ldots,s$, using buffered arrivals before $\bar\tau$ and freezing the queue at and after $\bar\tau$. This recovers $\mathsf B_{s+1}(a_i)$ and sets $c_{a_i}=\sum_{r=u}^{s}\Gamma_r(a_i)$. There is no double counting: the queue was unknown before this decision, so $\Gamma_s^{\rm loc}(a_i)=0$ and $\operatorname{FinishRound}$ adds nothing for this arm in round $s$; from round $s+1$ on, its services are added online exactly once. Other arms continue online throughout. Therefore, after the last current reply has been accepted and before the next assignment starts at $v$,
\[
 c=\sum_{r=u}^{v-1}\Gamma_r,\qquad z(u)=Z(u),\qquad z(u)+c=Z(v),
\]
all queues $\mathsf B_v$ are known, and the saved last services give $Z(u-1)=Z(u)-\Gamma_{u-1}$, with nonnegative coordinates, as needed for $w_{u-1}$.

A current UP reply is offered only while its sender is unoccupied. After offering it and before the next start, the sender cannot become occupied again: no newer DOWN word exists, and the old start fails the strict acceptance test. Hence all followers are unoccupied at $v$, which proves~(i) for the next assignment, also when collection crosses a grid boundary. The argument establishes the hypotheses of Lemma~\ref{sq:lem:reconstruct}; it does not assume its conclusion for a later assignment.

\paragraph{5. Every sampling operation has a valid local input.}
All current and anchor weights are positive, so product-set and orientation probabilities have positive normalizers. The coordinator knows all scores at an assignment start, and during live fast play it knows exactly the outside scores, which suffice for~\eqref{sq:eq:q} because the current follower scores cancel. The parameters satisfy $0<h=4m\eta\le1/4$ and $0\le\gamma\le1$.

The only sampler needing a separate check is a same-episode assignment at $v<\bar\tau$ after cutoff $u$. It draws an $m$-set from
\[
 f_v(S)=Q_{w_v}(S)+\frac{(1-h)^{v-u+1}}h
                       \bigl(Q_{w_u}(S)-Q_{w_{u-1}}(S)\bigr),
\]
computable from the three score vectors of paragraph~4, and then applies the fixed-anchor orientation rule. Bounded service gives $1\le w_{r+1,j}/w_{r,j}\le e^{4\eta}$, so every $m$-set satisfies $Q_{w_{r+1}}(S)\ge e^{-4m\eta}Q_{w_r}(S)\ge(1-h)Q_{w_r}(S)$. Consequently
\[
 f_v(S)\ge\frac{(1-h)^{v-u}}h
            \bigl(Q_{w_u}(S)-(1-h)Q_{w_{u-1}}(S)\bigr)\ge0,
 \qquad \sum_{|S|=m}f_v(S)=1.
\]
This check is purely algebraic; that the resulting slow set has the law of a hidden logical variable is the separate assertion of Lemma~\ref{sq:lem:lazy}.

In the sequential sampler, let $W$ be the mass of the $m$-sets consistent with the choices already made, and $W_{\rm in}$ the mass of those that also contain the next arm. Initially $W=1$, and $0\le W_{\rm in}\le W$. With the prescribed coordinate $V\in[0,1)$, include the arm exactly when $W_{\rm in}>VW$. On inclusion the new mass is positive; otherwise it is $W-W_{\rm in}\ge(1-V)W>0$. Thus every reached normalizer is positive, for $f_v$ as for an ordinary product law, and the orientation has positive weights.

Finally, the recording probabilities are locally available. An occupied follower uses its received rate and $\pi_i(a_i)=1$; the coordinator uses $\pi_0(j)=1/K$ for pilot play or its known fast-arm probability; an unoccupied follower only logs a candidate and postpones its thinning test until it receives the rate. The bounds preceding~\eqref{sq:eq:thinning} show that every denominator is positive and every thinning ratio is at most one; they use feasibility and the arm rules, not the coupling. The single recorder source allows at most one physical raw arrival per round, hence at most one buffer entry per arm and timestamp.

\paragraph{6. Message fields, branch selection and closure.}
Every assignment consists of distinct arms in $[K]$, and its rate exponent satisfies $0\le h_i\le b_N$ because $\xi_{a_i}\ge\beta\ge\eta\ge D^{-1}$. By paragraphs~2--4, every outgoing word belongs to its record set and fits the field width of Lemma~\ref{protocol:lem:fields}; an inadmissible decoder output is merely rejected, with no state update. Follower arm rules use assignment fields only in occupied states, which are entered only after acceptance; unoccupied followers use pilots, tuple arms and their existing UP record. Slot and selected-player values are public, and the if--else order selects exactly one branch. The coordinator uses a fast-arm law only with a live assignment and all confirmations; its other branches use public arms or stored current messages. States take only the values $0,1,2$, $\mathcal P$ is initialized to $[d]$ or loses an element, and stored times are calendar indices clipped at $T+1$.

Thus a putative first failing operation has all required local inputs, positive denominators, admissible probabilities and bounded integral outputs, so it cannot fail. Induction proves the lemma through round $t$.
\end{proof}

\subsection{Decoder error bound}\label{protocol:decoder-proof}

Recall the ordinary message width $B=5\lceil\log_2(TD+1)\rceil$ and positive-test quota $Q_{\rm dec}=B+\lceil\log_2(4dTN^2)\rceil$.

\begin{proof}[Proof of Lemma~\ref{sq:lem:decoder}]
Put $Q=Q_{\rm dec}$. For a completed batch on a directed channel, let $r_1<\cdots<r_Q$ be its positive-test rounds and $M_r\in\mathbb F_2^B\cup\{\bot\}$ the word encoded at round $r$. Acceptance tests and fields such as $M.\sigma$ refer to the parsed record in that channel's direction.

\paragraph{An admissible wrong word was not sent in its batch.}
If a candidate $M\ne M_t$ passes the acceptance test at decision time $t$, then
\begin{equation}\label{protocol:eq:stale-batch}
 M_{r_q}\ne M\qquad(q\in[Q]).
\end{equation}
\begin{proof}
Suppose $M_{r_q}=M$ for some $q$. On DOWN, $M$ is a fixed assignment record with start $\sigma=M.\sigma$. Let $\mathcal P_t^-$ be $\mathcal P$ at round $t$'s arm-choice step, and $\sigma_i(t^-)$ follower $i$'s last accepted DOWN start before its decoder decision at $t$.

If the coordinator still retains the assignment starting at $\sigma$, Algorithm~\ref{sq:alg:handover} encodes $M$ on DOWN selecting $i\in\mathcal P_t^-$. Thus $M_t\ne M$ implies $i\notin\mathcal P_t^-$. Before cutoff, this requires an earlier DOWN confirmation. By Lemma~\ref{sq:lem:avoidance}, $i$ was then occupied; it began unoccupied and can become occupied only by accepting $M$.

If this assignment is retained at or after cutoff, the reset $\mathcal P=[d]$ makes $i\notin\mathcal P_t^-$ require prior acceptance of its UP record, which by Lemma~\ref{protocol:lem:identities} is offered only after accepting $M$.

If a successor assignment has started by round $t$, Lemma~\ref{protocol:lem:identities} likewise requires prior acceptance of $i$'s UP record for the $\sigma$-assignment, hence of $M$. Since accepted DOWN starts strictly increase, all cases give $\sigma_i(t^-)\ge M.\sigma$, contradicting $\sigma_i(t^-)<M.\sigma$ in~\eqref{protocol:eq:acceptance}.

On UP, acceptance requires $M.\sigma=\sigma>0$ and $i\in\mathcal P$, excluding the initial placeholder. By Lemma~\ref{protocol:lem:identities}, the UP record stays fixed until a newer DOWN record is applied, and no successor assignment starts before its acceptance. Thus $M_t\ne M$ implies either $i\notin\mathcal P$ or a current start greater than $M.\sigma$, both contradicting UP acceptance.
\end{proof}

\paragraph{One-batch estimate.}
Fix a channel (UP or DOWN), a batch start $\alpha$, and a deterministic candidate $M\in\mathbb F_2^B$. Its round-$r$ map is $H_r(x)=c_r\cdot x\oplus b_r$, with a fresh pair $(c_r,b_r)$ (Section~\ref{sq:sec:message-decoding}). Let $\tau_q$ be the round of the $q$-th positive test before stopping, or $\infty$ if it does not occur. Let $D_0$ be the sure event and, for $q=1,\ldots,Q$, set
\[
 D_q:=\{\tau_q<\infty,\ M_{\tau_j}\ne M,\
H_{\tau_j}(M)=1\text{ for all }j\le q\}.
\]
Conditional on the batch-start history $\mathcal H_\alpha$, we prove
\begin{equation}\label{protocol:eq:batch-tail}
 \Pr(D_Q\mid\mathcal H_\alpha)\le 2^{-Q}.
\end{equation}

\begin{proof}
Let $\mathcal H_r^-$ be the history exposed before reading the current pair $(c_r,b_r)$, and let $A_r$ indicate that round $r$ adds a positive test to this batch. If $M_r\notin\{M,\bot\}$, the values $(H_r(M_r),H_r(M))$ are independent fair bits conditional on $\mathcal H_r^-$. In this case, with the other current controls fixed, Algorithm~\ref{sq:alg:handover} makes $A_r$ depend on $(c_r,b_r)$ only through $H_r(M_r)$. If $M_r=\bot$, the channel action and $A_r$ do not use this map, including before activation. Consequently,
\begin{equation}\label{protocol:eq:hash-half}
 \E\!\left[A_r\ind\{M_r\ne M,\ H_r(M)=1\}
       \mid\mathcal H_r^-\right]
 =\frac12\E\!\left[A_r\ind\{M_r\ne M\}
       \mid\mathcal H_r^-\right].
\end{equation}

Let $N_{r-1}$ count the batch's tests before round $r$, and let $I_{q,r}$ indicate that the batch is active, $N_{r-1}=q-1$, and $D_{q-1}$ holds. This indicator is $\mathcal H_r^-$-measurable. Summing~\eqref{protocol:eq:hash-half} over the possible round of the $q$-th test gives
\[
 \Pr(D_q\mid\mathcal H_\alpha)
 =\frac12\E\!\left[\sum_{r=\alpha}^T
      I_{q,r}A_r\ind\{M_r\ne M\}\,\middle|\,\mathcal H_\alpha\right]
 \le\frac12\Pr(D_{q-1}\mid\mathcal H_\alpha).
\]
The $q$-th test occurs at most once; iteration for $q=1,\ldots,Q$ proves~\eqref{protocol:eq:batch-tail}.
\end{proof}

\paragraph{Union bound and decoding time.}
Let $E_{\rm false}$ be the event that a completed batch has a unique solution $M\ne M_t$ passing the acceptance test. Such a solution satisfies $H_{r_q}(M)=1$ for every retained equation and $M_{r_q}\ne M$ by~\eqref{protocol:eq:stale-batch}.

Union over at most $2d$ directed channels, $T$ batch starts $\alpha\in[T]$ per channel, and $2^B$ candidates per batch. By~\eqref{sq:eq:decode-quota} and~\eqref{protocol:eq:batch-tail},

\[
 \Pr(E_{\rm false})
 \le 2dT2^{B-Q}\le\frac{1}{2N^2}.
\]

Let $E_{\rm amb}$ be the event that a completed batch of positive tests of one fixed awaited word $M^*$ has multiple solutions. The transmitted $M^*$ solves every equation; any other solution $M$ satisfies $M_{r_q}=M^*\ne M$ and $H_{r_q}(M)=1$ for all $q$. The same conditional estimate and union bound yield
\[
 \Pr(E_{\rm amb})\le\frac{1}{2N^2}.
\]
The bound is uniform when $M^*$ is chosen from the batch-start history, so $\Pr(E_{\rm false}\cup E_{\rm amb})\le N^{-2}$.

Suppose $M^*$ becomes awaited with $n\in\{0,\ldots,Q-1\}$ equations in its receiver's batch. After $Q-n$ further positive receiver observations, the decoder decides and clears that batch, possibly with older equations. If $M^*$ remains awaited, the next $Q$ positives all encode it, so it solves every equation. On $E_{\rm amb}^c$ the solution is unique and passes~\eqref{protocol:eq:acceptance}. Thus $(Q-n)+Q\le2Q$ further positive observations ensure acceptance.

Gaussian elimination solves $Q$ binary equations in $B$ unknowns in $O(QB^2)=O(\log^3N)$ elementary operations; parsing costs $O(B)$.
\end{proof}

\section{Public construction and bootstrap communication}
\label{comm:appendix}\label{comm:section}

\subsection{Public table construction}\label{sq:sec:masks}\label{comm:parameters}\label{comm:messages}\label{comm:arrays-construction}\label{comm:control-rows}
The public table consists of a common ID array and independent complete rows, each containing bootstrap forward bits and learning/control coordinates.

\paragraph{Parameters.}
Recall $N=2Km(T+1)$, $b_N=\lceil\log_2(N+1)\rceil$, and the score denominator $D=2^{b_N}$. The bootstrap block length and number of blocks from~\eqref{sq:eq:boot-block} are
\begin{equation*}
 B_{\rm boot}=\lceil\sqrt{T/(mK)}\rceil,\qquad
 n_{\rm boot}=\lceil T/B_{\rm boot}\rceil\,.
\end{equation*}
The last block may be shorter. Recall also $M=\lceil8T^2/(mb_N)\rceil$, $A_{\rm id}=\lceil T(B_{\rm boot}+b_N)/b_N\rceil$ and $L_0=2^{\lceil\log_2(2b_N)\rceil}$ from Section~\ref{sq:sec:boot}. Write $A:=A_{\rm id}$ and $d:=m-1$. Players are labelled $0,\ldots,m-1$, with coordinator $0$ and followers $1,\ldots,d$.

The fixed universal constants $C_b=16$ and $N_0=2^{28}$ in~\eqref{sq:eq:bootbranch} are independent of $K,m,T$; Appendix~\ref{app:explicit-constants} verifies their admissibility. Outside the construction branch, the players use a fixed assignment to distinct arms.

\paragraph{ID and forward arrays.}
The ID array has indices $(u,t)\in[A]\times[T]$, and the forward arrays have indices $(q,u,t)\in[M]\times[A]\times[T]$. Sample their entries independently with laws
\begin{equation*}
 I_u(t)\sim\operatorname{Bernoulli}(1/2),\qquad
 \varphi_q(u,t)\sim\operatorname{Bernoulli}(1-1/L_0)\,.
\end{equation*}
All these bits are independent of the learning/control coordinates.

\paragraph{Learning/control rows.}
The fixed set of scalar-coordinate labels is
\begin{equation*}
 \begin{aligned}
 \mathcal C_{\rm row}={}&\{\mathrm{slot},\mathrm{sel},\mathrm{rec},\mathrm{fast},\mathrm{thin},\mathrm{prop}\}\\
 &{}\cup\{\mathrm{tuple}_r,\mathrm{pilot}_r:0\le r<m\}
 \cup\{\mathrm{assign}_r:1\le r\le K+1\}\,.
 \end{aligned}
\end{equation*}
For every $(q,t,c)\in[M]\times[T]\times\mathcal C_{\rm row}$, independently sample and publish
\begin{equation*}
 V_{q,t,c}\sim\Unif([0,1))\,.
\end{equation*}
Thus each row contains $T(2m+K+7)$ scalar entries. Coordinates are reserved for every potential call, including calls in branches that are not taken.

Categorical draws use inverse cumulative probabilities in a fixed order. The coordinates $\mathrm{slot},\mathrm{sel},\mathrm{rec}$ choose, respectively, the slot with probabilities $(1/2,1/4,1/4)$ for $(\mathrm{DATA},\mathrm{DOWN},\mathrm{UP})$, the uniform selected follower in $[d]$, and the DATA recorder source with probabilities $s_0=1/2$ and $s_i=1/(2d)$ for $i\in[d]$.

In a fixed row $q$, construct the tuple in order $r=0,\ldots,m-1$: $R_{r,t}$ is the arm of rank $1+\lfloor(K-r)V_{q,t,\mathrm{tuple}_r}\rfloor$ in the increasing list $[K]\setminus\{R_{0,t},\ldots,R_{r-1,t}\}$. The tuple is injective, and each ordered injective tuple has probability $1/[K(K-1)\cdots(K-m+1)]$. The DATA pilots are $P_{r,t}=1+\lfloor K V_{q,t,\mathrm{pilot}_r}\rfloor$, so they are independent uniform arms. The disjoint scalar coordinates make the tuple, pilots, slot, selected follower, and recorder-source inputs independent.

The coordinate $\mathrm{fast}$ supplies the fast-arm draw, while $\mathrm{thin}$ and $\mathrm{prop}$ supply their threshold tests. The coordinates $\mathrm{assign}_1,\ldots,\mathrm{assign}_K$ supply the sequential inclusion tests for assignment sampling, and $\mathrm{assign}_{K+1}$ supplies the orientation draw~\eqref{sq:eq:orientation}; Proposition~\ref{sq:prop:complexity} gives the computation of the corresponding probabilities.

Let $B=5\lceil\log_2(TD+1)\rceil$ be the message width from Section~\ref{sq:sec:transport}. For every $(q,t,\sigma,i)\in[M]\times[T]\times\{\mathrm{DOWN},\mathrm{UP}\}\times[d]$, independently sample
\begin{equation*}
 (c_{q,t,\sigma,i},b_{q,t,\sigma,i})
 \sim\Unif(\mathbb F_2^B\times\mathbb F_2)\,,
\end{equation*}
independently of all scalar coordinates. The learning/control component is
\begin{equation*}
 \bigl((V_{q,t,c})_{t,c},(c_{q,t,\sigma,i},b_{q,t,\sigma,i})_{t,\sigma,i}\bigr)\,.
\end{equation*}
These components are independent across $q$ and independent of the forward arrays and the common ID array. On a selected DOWN or UP channel, the affine binary map is $u\mapsto c_{q,t,\sigma,i}\cdot u\oplus b_{q,t,\sigma,i}$.

\paragraph{Construction and publication.}
Write $R_q$ for the complete row consisting of $(\varphi_q(u,t))_{u,t}$ and the learning/control component. Each row has $T(2m+K+7)$ scalar entries and $T[A+2d(B+1)]$ binary entries. After the table is published and the adversary fixes the rewards, the coordinator privately draws $J\sim\Unif([M])$.

\begin{algorithm}[H]
\caption{Construct and publish the public table}\label{sq:alg:setup}\label{comm:setup}
\normalsize
\begin{algorithmic}[1]
\Require $K,m,T$.
\State $N\gets2Km(T+1)$, $b_N\gets\lceil\log_2(N+1)\rceil$, $M\gets\lceil8T^2/(mb_N)\rceil$.
\If{$T\le C_bmKb_N^2$ or $N<N_0$}
 \State Publish the fixed-arm fallback flag; stop.
\EndIf
\State Set $B_{\rm boot},A,L_0$ by~\eqref{sq:eq:boot-block}--\eqref{sq:eq:key-alphabet}.
\State Draw independent $I_u(t)\sim\operatorname{Bernoulli}(1/2)$ for all $u\in[A],t\in[T]$.
\For{$q\in[M]$}
 \State Draw independent $\varphi_q(u,t)\sim\operatorname{Bernoulli}(1-1/L_0)$ for all $u,t$.
 \State Independently sample the learning/control component above.
\EndFor
\State Publish the ID table and all complete rows, including every scalar entry $V_{q,t,c}$.
\end{algorithmic}
\end{algorithm}

\begin{table}[H]
\centering
\begin{tabularx}{\linewidth}{@{}l >{\raggedright\arraybackslash}X >{\raggedright\arraybackslash}X@{}}
\toprule
Component & Stored data & Sampling law\\
\midrule
ID table & $I_u(t)\in\{0,1\}$ for $u\in[A]$, $t\in[T]$. & Independent fair bits.\\
Bootstrap row & Forward bits $\varphi_q(u,t)$ for $u\in[A]$, $t\in[T]$. & Independent bits of mean $1-1/L_0$.\\
Learning/control row & Scalar coordinates and affine binary maps. & Independent $\Unif([0,1))$ scalars and uniform affine binary maps.\\
\bottomrule
\end{tabularx}
\caption{Components of the public table on the construction branch.}
\label{comm:tab:public-table}
\end{table}

\subsection{Bootstrap decoding and outward protocol}\label{comm:outward-protocol}
The protocol reads no row coordinate before its calendar time.

\paragraph{Coordinator before launch.}
Use the launch threshold $Q$ from~\eqref{sq:eq:launch-budget}.
The coordinator follows the $i=0$ branch of Algorithm~\ref{sq:alg:activation}. Its true slot is the slot in $R_J$ at time $t$, and $R_{0,t}$ is its labelled tuple arm. The counters $C_r(a)$ count cumulative coordinator DOWN positives on arm $a$ through block $r$. At each block boundary the coordinator computes $\widehat D_r$ by~\eqref{sq:eq:root-monitor}; if $\widehat D_r>4Q$, it launches the zero-score, zero-queue learner in the following round. Its ordinary UP decoder is disabled before launch. Write $z\in[K]$ for the block target and $\Lambda\subseteq[A]$ for the identification list. Products over an empty list equal one.

\paragraph{Uninformed followers.}
Fix a follower $i\in[d]$ that has not yet declared a row index at the start of block $s$. Abbreviate its positive bootstrap weights by $w_i(a):=w_i^{\rm boot}(a)$. The held-arm distribution from~\eqref{comm:boot-mixture} is
\begin{equation*}
 p_i(a)=\frac12\left(\frac{w_i(a)}{\sum_z w_i(z)}+\frac1K\right),
 \qquad a\in[K]\,.
\end{equation*}
The follower independently becomes active with probability $1/m$. If active, it draws the held arm $A^{\rm hold}_{i,s}\sim p_i$, then a fresh private identification index $\iota\sim\Unif([A])$. These choices precede all access to current-block row coordinates. Write $a=A^{\rm hold}_{i,s}$ for the held arm; $\iota$ is retained throughout the block and discarded at its end. A stay position has $I_\iota(t)=0$, when the follower plays $a$.

For each candidate row $v\in[M]$, let $\tau_{v,\ell}$ denote its slot at time $\ell$. With $Y_{i,\ell}$ denoting the rounded observation, the eligible positive set at time $t$ is
\begin{equation}\label{comm:positive-set}
 P_i(v;t)=\{\ell\in\mathcal B_s:\ell\le t,\ I_\iota(\ell)=0,
       \tau_{v,\ell}=\mathrm{DATA},\ Y_{i,\ell}>0\}\,.
\end{equation}
Set $q_{\rm dec}=8L_0b_N$. Candidate $v$ passes precisely when
\begin{equation}\label{comm:decoder}
 |P_i(v;t)|\ge q_{\rm dec},\qquad
 \varphi_v(\iota,\ell)=1\quad\text{for every }\ell\in P_i(v;t)\,.
\end{equation}
Before a declaration is saved, every eligible positive is included in the test, and a mismatch rejects that candidate for the rest of the block. A unique passing candidate is saved as a pending declaration and is applied only at the end of the block.

For an active block, let
\begin{equation*}
 G_{i,s}=\sum_{t\in\mathcal B_s}\ind\{I_\iota(t)=0,\ Y_{i,t}>0\}\,;
\end{equation*}
set $G_{i,s}=0$ if the follower is inactive. Use the boundary update~\eqref{sq:eq:boot-weight}, written in Algorithm~\ref{comm:alg:raw-decoder} with precision $P=4b_N$.
All other arm weights are unchanged. A pending declaration changes neither arm choices nor gain counting before the boundary, so the gain includes the entire block. Until a declaration is saved, Algorithm~\ref{comm:alg:raw-decoder} maintains $n_v=|P_i(v;t)|$ and a permanent mismatch flag $\mathrm{dead}_v$; its gain counter satisfies $G=G_{i,s}$ at the boundary.

\paragraph{After declaration.}
After that block ends, the follower enters the ordinary unoccupied state in its declared row from the next round. On DATA it plays its independent uniform pilot. On DOWN it plays its labelled tuple arm and runs the ordinary equation-batch decoder. On UP, when selected, it uses the ordinary sender to encode the initial record $(0,0,0,0)$; otherwise it plays its tuple arm. Until an assignment is accepted, DATA records change none of its accepted start, score, last increment, or queue registers, and it reads no proposal coordinates.

Coordinator launch and follower activation start on the round after a block boundary, so a follower declared at the launching boundary is already activated on the launch round. No separate acknowledgement or launch message is used. The same rules process a shortened final block; a launch after time $T$ has no effect on the horizon.

\begin{algorithm}[H]
\caption{Follower before its first row declaration}\label{comm:alg:raw-decoder}
\begin{algorithmic}[1]
\State $w_i(a)\gets1$ for every $a\in[K]$.
\For{each block $\mathcal B_s$ while uninformed}
 \State Draw an independent activity coin of mean $1/m$.
 \State If active, privately draw $a\sim p_i$, then fresh $\iota\sim\Unif([A])$.
 \State $G\gets0$; clear pending; $(n_v,\mathrm{dead}_v)\gets(0,0)$ for all $v\in[M]$.
 \For{each round $t\in\mathcal B_s$}
  \If{inactive}
   \State Play a fresh private $\Unif([K])$ arm.
  \Else
   \State Play $a$ if $I_\iota(t)=0$; else a fresh private $\Unif([K]\setminus\{a\})$ arm.
   \State Observe the rounded reward $Y_{i,t}$.
   \If{$I_\iota(t)=0$ and $Y_{i,t}>0$}
    \State $G\gets G+1$.
    \If{there is no pending declaration}
     \For{each $v\in[M]$ with $\tau_{v,t}=\mathrm{DATA}$}
      \State $n_v\gets n_v+1$; if $\varphi_v(\iota,t)=0$, set $\mathrm{dead}_v\gets1$.
     \EndFor
     \If{exactly one $v$ has $n_v\ge q_{\rm dec}$ and $\mathrm{dead}_v=0$}
      \State Save $v$ as the pending declaration.
     \EndIf
    \EndIf
   \EndIf
  \EndIf
 \EndFor
 \State If active, $w_i(a)\gets2^{-P}\left\lfloor2^Pw_i(a)\left(1+\frac{G}{4KB_{\rm boot}p_i(a)}\right)\right\rfloor$.
 \If{a declaration is pending}
  \State Accept $v$; from next round use row $R_v$ in Algorithms~\ref{sq:alg:handover} and~\ref{protocol:alg:order}.
  \State $(\sigma_i,S_i,\delta_i,\mathsf b_i,\mathrm{state}_i)\gets(0,0,0,0,0)$.
 \EndIf
\EndFor
\end{algorithmic}
\end{algorithm}

\subsection{Technical Lemmas}

\begin{lemma}[Rare-event upper tail for independent bounded variables]\label{init:lem:rare-tail}
Let $X_1,\ldots,X_r$ be independent random variables in~$[0,1]$. If $a>0$, $\sum_\ell\E X_\ell\le a\mu$, and $0<\mu<\varepsilon$, then
\begin{equation}\label{init:eq:rare-tail}
 \Pr\left\{a^{-1}\sum_\ell X_\ell>\varepsilon\right\}\le\exp\left[-a\varepsilon\log\frac{\varepsilon}{e\mu}\right]\,.
\end{equation}
\end{lemma}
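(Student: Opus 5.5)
The plan is the standard Chernoff--Cram\'er argument for sums of independent $[0,1]$-valued variables, with the tilt chosen to match the ratio $\varepsilon/\mu$. Put $S=\sum_\ell X_\ell$ and fix $\lambda>0$, to be chosen later. Markov's inequality gives $\Pr\{S>a\varepsilon\}\le e^{-\lambda a\varepsilon}\,\E e^{\lambda S}$, and independence factorizes the moment generating function: $\E e^{\lambda S}=\prod_\ell\E e^{\lambda X_\ell}$.

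Next, bound each factor using convexity of $x\mapsto e^{\lambda x}$ on $[0,1]$. This gives $e^{\lambda x}\le1+(e^\lambda-1)x$, hence $\E e^{\lambda X_\ell}\le1+(e^\lambda-1)\E X_\ell\le\exp((e^\lambda-1)\E X_\ell)$. Multiplying over $\ell$ and using the hypothesis $\sum_\ell\E X_\ell\le a\mu$ yields
\[
 \Pr\{S>a\varepsilon\}\le\exp\bigl(-a[\lambda\varepsilon-(e^\lambda-1)\mu]\bigr),
\]
where we used $e^\lambda-1\ge0$.

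Now optimize the exponent by taking $\lambda=\log(\varepsilon/\mu)$, which is positive because $\mu<\varepsilon$. The bracket becomes $\varepsilon\log(\varepsilon/\mu)-\varepsilon+\mu$. Dropping the nonnegative term $\mu$ only weakens the bound, and $\varepsilon\log(\varepsilon/\mu)-\varepsilon=\varepsilon\log(\varepsilon/(e\mu))$, which gives~\eqref{init:eq:rare-tail}.

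No step here is a real obstacle. The only points to check are that $\lambda>0$, so that Markov's inequality is applied to an increasing function, and that the $\mu$ term has the correct sign to discard. When $\varepsilon\le e\mu$ the right-hand side is at least one, so the bound is trivially true in that range as well.
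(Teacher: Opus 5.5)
Your proposal is correct and follows the paper's proof essentially step for step. The paper also bounds each factor by convexity, uses independence with $1+x\le e^x$, applies Markov's inequality with $\lambda=\log(\varepsilon/\mu)$, and discards the $+a\mu$ term in the exponent.
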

\begin{proof}
For $\lambda>0$, convexity gives $\E e^{\lambda X_\ell}\le1+(e^\lambda-1)\E X_\ell$. Independence and $1+x\le e^x$ therefore give
\begin{equation*}
 \E e^{\lambda\sum_{\ell=1}^r X_\ell}
 \le\exp\left[(e^\lambda-1)\sum_{\ell=1}^r\E X_\ell\right]
 \le\exp\bigl[(e^\lambda-1)a\mu\bigr]\,.
\end{equation*}
Markov's inequality with $\lambda=\log(\varepsilon/\mu)>0$ yields
\begin{align*}
 \Pr\left\{a^{-1}\sum_{\ell=1}^r X_\ell>\varepsilon\right\}
 &\le\exp\left[-a\varepsilon\log\frac{\varepsilon}{\mu}+a(\varepsilon-\mu)\right]\\
 &\le\exp\left[-a\varepsilon\log\frac{\varepsilon}{e\mu}\right]\,.\qedhere
\end{align*}
\end{proof}

For $x\in\{0,1\}^{\mathcal B_s}$ and an interval $D\subseteq\mathcal B_s$, put $S_D(x)=\sum_{t\in D}x_t$.

\begin{lemma}[Uniform identification-array guarantees]\label{comm:arrays}
With probability at least $1-3/N$, the identification array satisfies all the following properties:
\begin{enumerate}[label=(\roman*),leftmargin=*]
\item For every time~$t$ and bit $e\in\{0,1\}$, at least $A/3$ indices have $I_u(t)=e$.
\item Every set of $6b_N$ distinct calendar times has fewer than~$L_0$ indices whose ID bit is one at every one of those times.
\item For every block~$s$ and every binary string $x\in\{0,1\}^{\mathcal B_s}$, at least $3A/4$ indices satisfy, simultaneously for every interval $D\subseteq\mathcal B_s$ and $e\in\{0,1\}$,
\begin{equation}\label{init:eq:local-balance}
 \sum_{t\in D}x_t\ind\{I_u(t)=e\}\ge\frac{S_D(x)}3-8b_N\,.
\end{equation}
\end{enumerate}
\end{lemma}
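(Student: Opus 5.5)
We prove each property for a uniformly random array of independent fair bits, showing that each fails with probability at most $1/N$; a union bound then gives the claimed $1-3/N$. The sizes involved are $A=\lceil T(B_{\rm boot}+b_N)/b_N\rceil\ge T\ge C_bmKb_N^2$ and $L_0\ge2b_N$. Throughout we use that $N=2Km(T+1)$ is polynomial in $T$ on the construction branch.

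\emph{(i)} Fix $t$ and $e$. The count $\#\{u:I_u(t)=e\}$ is a sum of $A$ independent Bernoulli$(1/2)$ variables, so Hoeffding's inequality bounds the probability that it is below $A/3$ by $e^{-A/18}$. A union over $2T$ pairs costs $2Te^{-A/18}$. This is at most $1/N$ because $A\ge C_bmKb_N^2$ makes $A/18$ a large multiple of $\log N$ (for instance, $A\ge 16 b_N^2\gg 18\log(2TN)$ once $N\ge N_0$).

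\emph{(ii)} Fix a set of $6b_N$ times. Each index has all-ones bits there with probability $p=2^{-6b_N}\le (N+1)^{-6}$. The number of such indices is binomial$(A,p)$, so the probability of at least $L_0$ of them is at most $\binom{A}{L_0}p^{L_0}\le (Ap)^{L_0}$. We union over at most $T^{6b_N}$ sets. Since $A\le T^2$ (using $B_{\rm boot}\le T$), we have $Ap\le N^{-3}$. Hence the failure probability is at most $T^{6b_N}N^{-3L_0}\le N^{6b_N-6b_N}\cdot N^{-L_0}\le N^{-1}$, using $L_0\ge2b_N$. The exponent bookkeeping only needs $3L_0\ge 6b_N+1$.

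\emph{(iii)} This is the main obstacle, because the union over strings $x\in\{0,1\}^{B_{\rm boot}}$ is exponential in the block length. The plan is to fix a block $s$ and a string $x$, and call an index $u$ \emph{bad} if \eqref{init:eq:local-balance} fails for some interval $D$ and some $e$.

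First we bound the probability that a single index is bad. For fixed $u$, $D$ and $e$, the sum $\sum_{t\in D}x_t\ind\{I_u(t)=e\}$ is binomial$(S_D(x),1/2)$. Its deviation below $S_D/3$ by more than $8b_N$ has probability at most $\exp(-c\,b_N)$ for $S_D\lesssim b_N$, and at most $\exp(-S_D/36)$ when $S_D$ is large. In both regimes the probability is at most $2^{-4b_N}$; the constant $8$ is chosen to guarantee this, and when $S_D/3\le 8b_N$ the event is trivially false. A union over at most $B_{\rm boot}^2\le N$ intervals and two values of $e$ gives a per-index bad probability $\pi\le 2N\cdot 2^{-4b_N}\le N^{-2}$.

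Next we use that bits of distinct indices are independent. The number of bad indices is therefore binomial$(A,\pi)$ with $\pi\le N^{-2}$. We apply Lemma~\ref{init:lem:rare-tail} with $a=A$, $\mu=\pi$ and $\varepsilon=1/4$. It bounds the probability of more than $A/4$ bad indices by $\exp(-\tfrac A4\log\tfrac{1}{4e\pi})\le\exp(-\tfrac A4\log N)$.

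Finally we union over $n_{\rm boot}\le T$ blocks and $2^{B_{\rm boot}}$ strings. This costs $T2^{B_{\rm boot}}e^{-(A/4)\log N}$, which is at most $1/N$. The key check is that this holds because $A\ge TB_{\rm boot}/b_N$ dominates $B_{\rm boot}+\log T+\log N$ by a wide margin, since $T/b_N\ge 16mKb_N\ge 4$. This is precisely why $A_{\rm id}$ contains the factor $B_{\rm boot}$.
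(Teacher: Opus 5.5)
Your proof is correct and follows essentially the same route as the paper: Hoeffding plus a union over the $2T$ column--bit pairs for (i), a union over all-one $L_0\times 6b_N$ patterns for (ii), and for (iii) a per-$(u,D,e)$ tail bound, a union over intervals and bits, Lemma~\ref{init:lem:rare-tail} across the independent rows $u$, and a final union over blocks and strings. Two bookkeeping steps need repair, though the conclusions stand. In (ii), $L_0\ge 2b_N$ gives only $3L_0\ge 6b_N$, with equality when $2b_N$ is a power of two, so your stated condition $3L_0\ge 6b_N+1$ can fail; instead use your own sharper estimate $Ap\le T^2(N+1)^{-6}\le N^{-4}$, which gives $T^{6b_N}N^{-4L_0}\le N^{-2b_N}$. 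In (iii), obtain the $2^{-4b_N}$ bound from Hoeffding with the full deviation $S_D/6+8b_N$, which gives $\exp(-S_D/18-16b_N/3)$; the bound $\exp(-S_D/36)$ alone is too weak for $24b_N<S_D<144\,b_N\log 2$.
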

\begin{proof}
On the construction branch, $T\ge36\log N$. For a fixed column and bit, Hoeffding's inequality bounds failure of~(i) by $e^{-A/18}$. Since $A\ge T$, $2T\le N$ and $T\ge36\log N$, the union over columns and bits is at most $2T e^{-A/18}\le N^{-1}$.

Failure of~(ii) supplies an all-one rectangle with~$L_0$ rows and $6b_N$ columns. If $6b_N>T$ or $L_0>A$, the assertion is automatic. Otherwise its probability is at most
\begin{equation*}
 \binom A{L_0}\binom T{6b_N}2^{-6L_0b_N}\le\exp\bigl(L_0\log A+6b_N\log T-6L_0b_N\log2\bigr)\,.
\end{equation*}
Since $A\le N^2$, $T\le N$, $\log N\le b_N\log2$, and $L_0\ge2b_N$, the exponent is at most $(6b_N^2-4L_0b_N)\log2\le-2b_N^2\log2$. The resulting probability is at most $N^{-2}$.

For~(iii), fix $s,x,u,D,e$ as in~\eqref{init:eq:local-balance}. Markov's inequality and independence give
\begin{equation*}
 \Pr\left\{\sum_{t\in D}x_t\ind\{I_u(t)=e\}<\frac{S_D(x)}3-8b_N\right\}
 \le2^{-8b_N}\left(2^{1/3}\frac34\right)^{S_D(x)}
 \le2^{-8b_N}\,,
\end{equation*}
since $(2^{1/3}\cdot3/4)^3=54/64<1$.

For fixed~$s,x$, let $E_u(s,x)$ be the event that~\eqref{init:eq:local-balance} fails for some interval~$D\subseteq\mathcal B_s$ and bit~$e\in\{0,1\}$. There are at most $2B_{\rm boot}^2$ interval/bit pairs. Since $B_{\rm boot}\le N\le2^{b_N}$ and $b_N\ge5$,
\begin{align*}
 \Pr(E_u(s,x))
 &\le2B_{\rm boot}^2 2^{-8b_N}=:p_0
 \le2^{1-6b_N}\le2^{-5b_N}\,,\\
 \log\frac1{4ep_0}
 &\ge5b_N\log2-\log(4e)\ge2b_N\,.
\end{align*}
The events $E_u(s,x)$ depend on separate rows of the ID array and are independent over~$u$. Lemma~\ref{init:lem:rare-tail}, applied to their indicators, gives
\begin{equation*}
 \Pr\left\{\sum_{u=1}^A\ind_{E_u(s,x)}>\frac A4\right\}
 \le\exp\left[-\frac A4\log\frac1{4ep_0}\right]
 \le e^{-A b_N/2}\,.
\end{equation*}
Using $n_{\rm boot}\le N$, $A b_N\ge T(B_{\rm boot}+b_N)$, $T\ge4$, and $\log N\le b_N\log2$, a union bound gives
\begin{align*}
 \Pr(\text{(iii) fails})
 &\le N2^{B_{\rm boot}}e^{-A b_N/2}\\
 &\le\exp\bigl[-(2-\log2)(B_{\rm boot}+b_N)\bigr]
 \le N^{-1}\,.
\end{align*}
Combining~(i)--(iii) gives total failure probability at most $2N^{-1}+N^{-2}\le 3/N$.
\end{proof}

\begin{lemma}[Interval bounds]\label{init:lem:interval-bound}
Let $(\mathscr H_t)_{t=0}^T$ be a filtration, let $\alpha\in[T]$ be deterministic, and let $\ell>0$.
Let $X_t\in[0,1]$ be $\mathscr H_t$-measurable and put $\mu_t=\E[X_t\mid\mathscr H_{t-1}]$. Then
\begin{equation}\label{init:eq:clock-lower}
 \Pr\left\{\sum_{t=\alpha}^n X_t\ge\frac12\sum_{t=\alpha}^n\mu_t-\ell
       \quad\text{for all }n\in\{\alpha,\ldots,T\}
       \ \middle|\ \mathscr H_{\alpha-1}\right\}
 \ge1-e^{-\ell}\quad\text{a.s.}\,,
\end{equation}
and
\begin{equation*}
 \Pr\left\{\sum_{t=\alpha}^n X_t\le2\sum_{t=\alpha}^n\mu_t+\ell
       \quad\text{for all }n\in\{\alpha,\ldots,T\}
       \ \middle|\ \mathscr H_{\alpha-1}\right\}
 \ge1-e^{-\ell}\quad\text{a.s.}\,.
\end{equation*}
\end{lemma}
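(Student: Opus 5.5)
The plan is an exponential supermartingale with a time-uniform (maximal) inequality, that is, Freedman/Ville-type bounds for $[0,1]$-valued increments. I will condition on $\mathscr H_{\alpha-1}$ throughout, so that all expectations below are conditional expectations given $\mathscr H_{\alpha-1}$. The two inequalities have the same structure, and I treat each with a single process.

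\textbf{Lower tail.} For $n\ge\alpha-1$ I define
\[
 Z_n=\exp\Bigl(-\sum_{t=\alpha}^n X_t+(1-e^{-1})\sum_{t=\alpha}^n\mu_t\Bigr),
 \qquad Z_{\alpha-1}=1 .
\]
Since $X_t\in[0,1]$ and $x\mapsto e^{-x}$ is convex,
\[
 \E[e^{-X_t}\mid\mathscr H_{t-1}]\le 1-(1-e^{-1})\mu_t\le e^{-(1-e^{-1})\mu_t}.
\]
This makes $(Z_n)$ a nonnegative supermartingale with respect to $(\mathscr H_n)$. Ville's maximal inequality then gives
\[
 \Pr\Bigl(\max_{n} Z_n\ge e^{\ell}\,\Big|\,\mathscr H_{\alpha-1}\Bigr)\le e^{-\ell}.
\]
On the complement, for every $n$,
\[
 \sum_{t=\alpha}^n X_t>(1-e^{-1})\sum_{t=\alpha}^n\mu_t-\ell\ge\tfrac12\sum_{t=\alpha}^n\mu_t-\ell,
\]
because $1-e^{-1}\approx0.632\ge1/2$ and $\mu_t\ge0$. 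This yields \eqref{init:eq:clock-lower}.

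\textbf{Upper tail.} I use the analogous process
\[
 Z'_n=\exp\Bigl(\lambda\sum_{t=\alpha}^n X_t-(e^{\lambda}-1)\sum_{t=\alpha}^n\mu_t\Bigr)
\]
with $\lambda=1$. Convexity gives $\E[e^{X_t}\mid\mathscr H_{t-1}]\le1+(e-1)\mu_t\le e^{(e-1)\mu_t}$, so $Z'_n$ is again a nonnegative supermartingale. Ville's inequality shows that, with probability at least $1-e^{-\ell}$, for all $n$,
\[
 \sum_{t=\alpha}^n X_t\le(e-1)\sum_{t=\alpha}^n\mu_t+\ell\le2\sum_{t=\alpha}^n\mu_t+\ell,
\]
using $e-1\approx1.718\le2$.

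\textbf{Details to check.} The maximal inequality is over the finite index set $\{\alpha,\dots,T\}$. It follows from optional stopping at the first $n$ with $Z_n\ge e^\ell$, capped at $T$, so no continuity issues arise. The conditional form holds almost surely because the supermartingale starts at $1$ given $\mathscr H_{\alpha-1}$. The only mildly delicate point, and the main thing to verify, is that the constants $1-e^{-1}\ge1/2$ and $e-1\le2$ are enough so that the linear-in-$\mu$ factors need no tuning of $\lambda$. Beyond that there is no real obstacle.
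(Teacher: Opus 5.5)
Your proposal is correct and is essentially the paper's proof. The paper uses the same two exponential supermartingales with $\kappa=1-e^{-1}$ and $e-1$, the same convexity bounds, and Doob's maximal inequality. It gets the conditional form by applying that inequality to $\ind_A L_n$ for $A\in\mathscr H_{\alpha-1}$, which is equivalent to your conditional optional-stopping argument.
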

\begin{proof}
Set $\kappa=1-e^{-1}\ge1/2$. Convexity on~$[0,1]$ gives
\begin{align*}
 \E[e^{-X_t}\mid\mathscr H_{t-1}]
 &\le1-\kappa\mu_t\le e^{-\kappa\mu_t}\,,\\
 \E[e^{X_t}\mid\mathscr H_{t-1}]
 &\le1+(e-1)\mu_t\le e^{(e-1)\mu_t}\,.
\end{align*}
Consequently, for $n=\alpha-1,\ldots,T$, the processes
\begin{equation}\label{init:eq:clocks}
 \begin{aligned}
 L_n^-&=\exp\left(\kappa\sum_{t=\alpha}^n\mu_t-\sum_{t=\alpha}^n X_t\right),\\
 L_n^+&=\exp\left(\sum_{t=\alpha}^n X_t-(e-1)\sum_{t=\alpha}^n\mu_t\right)
 \end{aligned}
\end{equation}
are nonnegative supermartingales, both initially one, by the preceding conditional-moment bounds. Since $\kappa\ge1/2$ and $e-1<2$,
\begin{align*}
 \left\{\exists n:\sum_{t=\alpha}^n X_t<\frac12\sum_{t=\alpha}^n\mu_t-\ell\right\}
 &\subseteq\left\{\max_{\alpha-1\le n\le T}L_n^->e^\ell\right\},\\
 \left\{\exists n:\sum_{t=\alpha}^n X_t>2\sum_{t=\alpha}^n\mu_t+\ell\right\}
 &\subseteq\left\{\max_{\alpha-1\le n\le T}L_n^+>e^\ell\right\},
\end{align*}
where $n$ ranges over $\{\alpha,\ldots,T\}$. For every $A\in\mathscr H_{\alpha-1}$ and $\nu\in\{-,+\}$, Doob's maximal inequality~\cite[Lemma~5.18]{durrett2012} applied to $(\ind_A L_n^\nu)_{n=\alpha-1}^T$ gives
\begin{equation*}
 \Pr\left(A\cap\left\{\max_{\alpha-1\le n\le T}L_n^\nu>e^\ell\right\}\right)
 \le e^{-\ell}\Pr(A)\,.
\end{equation*}
Since $A$ is arbitrary, each conditional crossing probability is at most $e^{-\ell}$.
\end{proof}

\subsection{Excluding incorrect rows after publication}

A \emph{forward failure} occurs when some $v\ne J$ passes a bootstrap prefix test on the reference observations of Algorithm~\ref{sq:alg:reference}.

\begin{lemma}[Uniform exclusion of false bootstrap rows]\label{comm:acceptance}
For $A=\lceil T(B_{\rm boot}+b_N)/b_N\rceil$ and $q_{\rm dec}=8L_0b_N$, there is a public event of probability at least $1-N^{-1}$ on which the following holds simultaneously for every reward support and every true index~$J$. Conditional on the complete numerical public output and on that fixed index, the private probability of any forward failure is at most $B_{\rm boot}^{-1}$.
\end{lemma}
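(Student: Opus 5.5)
The proof will isolate, for each block, the randomness that decides a forward failure, bound the failure probability of a fixed wrong row over the private draws, and then take a public-side union bound over all wrong rows, blocks and reward supports. A wrong row $v\ne J$ can pass only inside one block $\mathcal B_s$, and only for an active follower holding a private index $\iota$. Moreover \eqref{comm:decoder} demands $q_{\rm dec}$ eligible positives at which $\varphi_v(\iota,\ell)=1$. These eligible positives are DATA rounds of row $v$ at stay positions of $\iota$ with $Y_{i,\ell}>0$.

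In the reference execution these positives are determined by the true row $R_J$, the ID array, the rewards and the private draws. None of these involves $\varphi_v$ for $v\ne J$. Given the execution, the set $P_i(v;t)$ at the moment it reaches size $q_{\rm dec}$ is therefore a set of $q_{\rm dec}$ times fixed independently of the bits $(\varphi_v(\iota,\ell))_\ell$. Under fresh sampling, the event that all these bits equal one has probability $(1-1/L_0)^{q_{\rm dec}}\le e^{-8b_N}\le N^{-8}$.

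The subtlety is that the statement is conditional on the published table, so $\varphi_v$ is fixed and only the private draws remain random. I therefore plan a counting (``bad index'') argument. Fix a block $s$, a follower $i$, a held arm $a$ and a reward support pattern on $\mathcal B_s$. Fix also the data that determine which stay rounds yield positives; this requires care with collisions, and I would condition on the other players' block-level choices as a finite pattern. For each index $u$ the first $q_{\rm dec}$ eligible times $E_u(v)$ are then determined. Call $u$ \emph{bad for $v$} if $\varphi_v(u,\ell)=1$ on all of $E_u(v)$.

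The events ``$u$ bad'' are independent over $u$, because $\varphi_v(u,\cdot)$ for distinct $u$ are disjoint bits. Each has probability at most $N^{-8}$. Lemma~\ref{init:lem:rare-tail} then shows that more than $A/(M n_{\rm boot}B_{\rm boot}\cdot\text{poly})$ bad indices occur with probability $\exp(-\Omega(A b_N))$. Since $\iota$ is uniform on $[A]$ and private, the private probability of a false pass for that $v$ in that block is at most the fraction of bad indices. A union over $v\in[M]$ and the $n_{\rm boot}$ blocks then gives a private probability at most $B_{\rm boot}^{-1}$.

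The public union bound must run over $v$, $s$, $i$, $a$, all supports $x\in\{0,1\}^{K\times\mathcal B_s}$, and the conditioning patterns. This gives roughly $2^{O(KB_{\rm boot}+mB_{\rm boot}\log K)}\cdot M N^{O(1)}$ events. The margin $A b_N\ge T(B_{\rm boot}+b_N)$ and $T\ge mKB_{\rm boot}^2$ must absorb this count, as in Lemma~\ref{comm:arrays}(iii), and I expect that to work since $TB_{\rm boot}\gg KB_{\rm boot}$.

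The main obstacle is making the eligible times $E_u(v)$ a deterministic function of finitely many enumerable quantities that exclude $\varphi_v$, uniformly over rewards. Positives depend on collisions with the coordinator and with other followers, which are private and random. I would handle this by conditioning on the whole prefix of all other players' actions within the block and on the follower's non-stay moves, leaving only $\iota$ random. I would then check that the enumeration over those prefixes still fits within the $\exp(-\Omega(Ab_N))$ tail. If it does not, the fallback is to bound only by the observed binary outcome string, using that the test depends on the execution solely through the string $(\ind\{Y_{i,\ell}>0\})_{\ell\in\mathcal B_s}$, of which there are $2^{B_{\rm boot}}$.
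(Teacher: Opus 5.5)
Your skeleton matches the paper's, but the quantitative core fails. The shared ingredients are: the first $q_{\rm dec}$ eligible times are fixed independently of the wrong-row forward bits; these slices are independent across $u$; Lemma~\ref{init:lem:rare-tail} supplies the concentration; a public union runs over contexts; and the private $\iota$ is uniform. The failure starts with when you take the union over $v$. Because you take it \emph{after} concentrating, your per-row, per-block threshold is $\varepsilon=1/(Mn_{\rm boot}B_{\rm boot})$, and then $A\varepsilon\approx m(B_{\rm boot}+b_N)/(8T^2)\ll1$. That threshold is below a single bad index. The exponent $A\varepsilon\log(\varepsilon/(e\mu))$ in Lemma~\ref{init:lem:rare-tail} is therefore tiny, not $\Omega(Ab_N)$; a tail of that size, as in Lemma~\ref{comm:arrays}(iii), needs a constant-fraction threshold. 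Demanding zero bad indices per row gives only polynomial tails of order $Ae^{-8b_N}$. These cannot survive a union over the support patterns a table-aware adversary may choose, which are exponentially many in $B_{\rm boot}$.

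You must aggregate over wrong rows before concentrating. The paper applies Lemma~\ref{init:lem:rare-tail} to $A^{-1}\sum_{v\ne J,\,u}g_{v,u}(X_{v,u})$, whose mean is at most $\mu=Mp^{q_{\rm dec}}\le N^3e^{-8b_N}$, with threshold $\varepsilon_{\rm loc}=1/(B_{\rm boot}n_{\rm boot})\ge N^{-1}$. (The indicator that $u$ is bad for \emph{some} $v$ would also work.) Even then $A\varepsilon_{\rm loc}\approx(B_{\rm boot}+b_N)/b_N$, so the tail is only $\exp[-\tfrac52(B_{\rm boot}+b_N)]$; this is exactly what the choice of $A$ in the statement buys.

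Hence the public union can afford only about $2^{2B_{\rm boot}}$ contexts. Your enumeration of supports on all $K$ arms and of the other players' action prefixes, $2^{O(KB_{\rm boot}+mB_{\rm boot}\log K)}$, is far too large. Two ideas are missing.

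\emph{Compress the context.} Use $(s,J,i,a,z,x_a,x_z)$ with $x_c(t)=\chi_t(c)\prod_{k\notin\{0,i\}}\ind\{A_{k,t}\ne c\}$ for $c\in\{a,z\}$. Only the held arm and the coordinator's target matter, and these strings depend only on the rewards and on third players' arms, hence not on $\iota$. There are at most $n_{\rm boot}dK^22^{2B_{\rm boot}}$ such contexts per $J$, followed by a union over $J$. Your fallback string $(\ind\{Y_{i,\ell}>0\})_\ell$ has the right size but depends on $\iota$, through the stay positions and through the coordinator's DISCOVER list. Fixing it, or fixing the coordinator's actions, destroys the uniformity of $\iota$.

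\emph{Average the remaining private draws instead of enumerating them.} Put the draws of players $0$ and $i$ inside the $[0,1]$-valued private passing probability $g_{v,u}(\xi)$ given $\iota=u$. This still depends only on the slice $X_{v,u}$, so independence over $(v,u)$ and Lemma~\ref{init:lem:rare-tail} apply directly. Summing $\varepsilon_{\rm loc}$ over blocks and followers, with activity probability $1/m$, then gives $\varepsilon_{\rm loc}n_{\rm boot}d/m\le B_{\rm boot}^{-1}$.
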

\begin{proof}
\emph{Conditional block executions.} Fix~$J$. Let $\tau_{v,t}$ be row~$v$'s slot at time~$t$, and condition on
\begin{equation*}
 \mathscr C_J=\sigma\bigl(R_J,(I_u(t))_{u,t},(\tau_{v,t})_{v\ne J,t}\bigr)\,.
\end{equation*}
The false forward bits remain independent with mean $p=1-1/L_0$. Fix a prelaunch block~$s$ and an uninformed follower~$i$, active with held arm~$a$, and condition on the coordinator's target~$z$.

Let $A_{k,t}$ denote player~$k$'s reference arm. Condition further on the preceding private history and on $(A_{k,t})_{k\notin\{0,i\},\,t\in\mathcal B_s}$. These players' blockwise arm rules use no new observations: uninformed players retain their activity and arm rules, while informed players use pilots, tuple arms, and initial UP records. Their private draws are independent of~$\iota$ and the remaining draws of players~$0,i$, so~$\iota$ remains uniform on~$[A]$.

Write $\chi_t(c)=\ind\{r_t(c)>0\}$ for the rounded reward support. For $c\in\{a,z\}$, define
\begin{equation*}
 x_c(t)=\chi_t(c)\prod_{k\notin\{0,i\}}\ind\{A_{k,t}\ne c\},\qquad t\in\mathcal B_s\,.
\end{equation*}
The coordinator's DISCOVER updates use~$x_z$; follower~$i$'s positive tests use~$x_a$. Both also depend on collisions between these two players. The coordinator does not end MULTICAST in response to observations, and declarations and launch take effect only at block boundaries. Thus, given~$\mathscr C_J$, the tuple $(s,i,a,z,x_a,x_z)$ and the remaining private draws determine every prefix test. Continuing tests after a declaration is saved changes no arm choice and only enlarges the failure event.

\emph{Independent false-row slices.} Call each choice $(s,J,i,a,z,x_a,x_z)$ a context. Fix one context as deterministic input to a virtual block. We do not condition on the context realized by a table-dependent choice of rewards: doing so could reveal false-row bits. Instead, we prove a bound for each deterministic context and then take a union over every context. The resulting public event therefore covers whichever context is realized after the table is published. For $v\ne J$ and $u\in[A]$, let
\begin{equation*}
 X_{v,u}=(\varphi_v(u,t))_{t\in\mathcal B_s}
\end{equation*}
For a numerical slice~$\xi$, define
\begin{equation*}
 g_{v,u}(\xi)=\Pr_\omega\left\{\exists t\in\mathcal B_s:\ |P_i(v;t)|\ge q_{\rm dec},\ \prod_{\ell\in P_i(v;t)}\xi_\ell=1\ \middle|\ \iota=u\right\}\,,
\end{equation*}
Here $\omega$ denotes the remaining local private draws. Thus $g_{v,u}:\{0,1\}^{\mathcal B_s}\to[0,1]$ is a private passing probability, and $g_{v,u}(X_{v,u})$ is random under public sampling. The eligible sets~\eqref{comm:positive-set} inspect no false forward bits, so these random variables are independent across~$(v,u)$, conditional on~$\mathscr C_J$ and the fixed context.

Keep this conditioning and fix $\iota=u$ and~$\omega$. The eligible sets~$P_i(v;t)$ are nested and independent of~$X_{v,u}$: reference observations use only the true row, and~$\mathscr C_J$ fixes all candidate slots. List the block's eligible times as $\ell_1<\cdots<\ell_L$. By the permanent mismatch rule, some prefix passes precisely when $L\ge q_{\rm dec}$ and the first~$q_{\rm dec}$ entries $\varphi_v(u,\ell_j)$ are all one. These are distinct independent Bernoulli entries of mean~$p$. Averaging over~$\omega$ conditional on $\iota=u$ gives
\begin{equation*}
 \E_{X_{v,u}}[g_{v,u}(X_{v,u})]
 =\E_{\omega\mid \iota=u}\bigl[\ind\{L\ge q_{\rm dec}\}p^{q_{\rm dec}}\bigr]
 \le p^{q_{\rm dec}}\,.
\end{equation*}
Thus
\begin{equation}\label{init:eq:local-false-mean}
 \frac1A\sum_{\substack{v\in[M]\setminus\{J\}\\u\in[A]}}
       \E_{X_{v,u}}[g_{v,u}(X_{v,u})]
 \le Mp^{q_{\rm dec}}\le N^3e^{-8b_N}=: \mu\,.
\end{equation}
Here $M\le N^2$ and $p^{q_{\rm dec}}\le e^{-q_{\rm dec}/L_0}=e^{-8b_N}$. For a fixed table and context, the probability that any incorrect row passes in this block is at most $A^{-1}\sum_{v\ne J,u}g_{v,u}(X_{v,u})$.

Set $\varepsilon_{\rm loc}=1/(B_{\rm boot}n_{\rm boot})$. Lemma~\ref{init:lem:rare-tail} gives
\begin{equation}\label{init:eq:local-false-tail}
 \Pr\left\{\frac1A\sum_{v\ne J,u}g_{v,u}(X_{v,u})>\varepsilon_{\rm loc}
       \ \middle|\ \mathscr C_J\right\}
 \le\exp[-\tfrac52(B_{\rm boot}+b_N)]\,,
\end{equation}
since $B_{\rm boot}n_{\rm boot}\le2T\le N$ implies $\varepsilon_{\rm loc}\ge N^{-1}$ and, for the specified~$N_0$,
\begin{equation*}
 \log\frac{\varepsilon_{\rm loc}}{e\mu}\ge8b_N-4\log N-1\ge5b_N,\qquad
 A\varepsilon_{\rm loc}\ge\frac{B_{\rm boot}+b_N}{2b_N}\,.
\end{equation*}

\emph{A common public event.} For fixed~$J$, let $\mathcal G_J$ be the event that $A^{-1}\sum_{v\ne J,u}g_{v,u}(X_{v,u})\le\varepsilon_{\rm loc}$ in every local context. Set $\mathcal G_{\rm test}=\bigcap_{J=1}^M\mathcal G_J$. Conditional on~$\mathscr C_J$, there are at most $n_{\rm boot}dK^2 2^{2B_{\rm boot}}$ contexts. Taking expectations of their union bound and a union over~$J$, with $n_{\rm boot}MdK^2\le N^6$, gives
\begin{equation*}
 \Pr(\mathcal G_{\rm test}^c)
 \le N^6 2^{2B_{\rm boot}}e^{-\frac52(B_{\rm boot}+b_N)}\le N^{-1}\,.
\end{equation*}
Here $B_{\rm boot}\ge4b_N$ and $\log N\le b_N\log2$ imply $\tfrac52(B_{\rm boot}+b_N)\ge2B_{\rm boot}\log2+7\log N$. The union over~$x_a,x_z$ includes every reward support and every choice of the other players' arms.

\emph{Private executions on the fixed table.} Fix a table in~$\mathcal G_{\rm test}$, an index~$J$, and any reward support; write $\Pr_{\rm priv}$ for private probability. Condition on a block's past and the other players' arm choices. For an active follower holding~$a$ and coordinator target~$z$, this fixes~$x_a,x_z$. Its false-pass probability is at most~$\varepsilon_{\rm loc}$. Averaging these choices and using the activity probability~$1/m$ gives
\begin{align*}
 \Pr_{\rm priv}\{\text{any forward failure}\}
 &\le\varepsilon_{\rm loc}\sum_{s=1}^{n_{\rm boot}}\sum_{i=1}^d
       \Pr_{\rm priv}\{i\text{ is uninformed and active in block }s\}\\
 &\le\varepsilon_{\rm loc}\,n_{\rm boot}\frac dm
 \le B_{\rm boot}^{-1}\,.
\end{align*}
After launch no reference follower is uninformed.
\end{proof}

On this public event, the expected regret cost of forward failures is at most
\begin{equation}\label{init:eq:authentication-charge}
 \frac{2mT}{B_{\rm boot}}\le2m^{3/2}\sqrt{KT}\,.
\end{equation}
Intersecting with Lemma~\ref{comm:arrays} gives public probability at least $1-C/N$.

\subsection{Service and gain}\label{comm:product}
For the rest of this appendix, fix a rounded reward matrix~$r$, and condition on a realization of the ID array satisfying properties~(i)--(iii) of Lemma~\ref{comm:arrays} and on the selected index~$J$. We use the reference execution of Algorithm~\ref{sq:alg:reference}, with probabilities over the independent coordinates of~$R_J$ and the private draws, independently of~$r$.

Let $t_s=\max\mathcal B_s$ and let $\mathscr F_s$ be generated by the fixed data, selected-row coordinates through~$t_s$, and private draws exposed by that boundary; $\mathscr F_0$ contains only the fixed data. All block sums stop after the block that triggers launch. At a boundary, a follower counts as uninformed after pending declarations are applied and before any launch correction.

Write $\chi_t(a)=\ind\{r_t(a)>0\}$, $S_D(a)=\sum_{t\in D}\chi_t(a)$, and $S_s(a)=S_{\mathcal B_s}(a)$. Recall $h_{\rm boot}=\lceil C_hL_0b_N\rceil$, $q_{\rm dec}=8L_0b_N$, and that $G_{i,s}$ counts positive stay observations over the entire block. An active uninformed follower's block is \emph{productive} if its held arm~$a$ satisfies $S_s(a)\ge h_{\rm boot}$.

\begin{lemma}[One-block bounds]\label{common:clocks}
Assume that the ID array satisfies properties~(i)--(iii) of Lemma~\ref{comm:arrays}. Fix a block~$s$ and a follower~$i\in[d]$ that is uninformed before the block. Condition on~$\mathscr F_{s-1}$, on follower~$i$ being active, and on its held arm being~$a$. If $S_s(a)\ge h_{\rm boot}$, follower~$i$ learns $J$ by the block boundary with probability at least $1/(16K)$. The expected gain satisfies
\begin{equation}\label{comm:boot-gain}
 \E[G_{i,s}\mid\mathscr F_{s-1},\ i\text{ active},\ A^{\rm hold}_{i,s}=a]
 \ge S_s(a)/512\,.
\end{equation}
\end{lemma}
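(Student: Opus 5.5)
We plan to prove both bounds by conditioning, exactly as in the proof of Lemma~\ref{comm:acceptance}, on $\mathscr F_{s-1}$, on follower $i$ being active with held arm $a$, and on the arms $(A_{k,t})_{k\notin\{0,i\},\,t\in\mathcal B_s}$ of all other followers for the whole block. Other uninformed followers draw their activity coin, held arm and index at the block start from fresh private randomness, and informed followers use pilots and tuple arms of $R_J$. Neither group reacts to current-block observations, so this conditioning fixes the sequence
\[
 x(t)=\chi_t(a)\prod_{k\notin\{0,i\}}\ind\{A_{k,t}\ne a\}.
\]
It also leaves $\iota\sim\Unif([A])$ and the coordinator's target $z\sim\Unif([K])$ independent of everything else, and leaves the current-block selected-row coordinates fresh.

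\emph{Gain bound~\eqref{comm:boot-gain}.} Before conditioning on the other players, we bound $\Pr(x(t)=1)\ge c\,\chi_t(a)$ for an absolute $c$, working round by round. Each other uninformed follower occupies $a$ only if it is active (probability $1/m$) or plays a uniform arm (probability $\le1/K$). Each informed follower occupies $a$ with probability at most $1/K$ through a uniform pilot or tuple entry. Independence across players then gives $\Pr(\text{no other follower on }a)\ge(1-1/m-1/K)^{m-1}\ge e^{-3}$, using $m<K$. The coordinator is on $a$ with probability at most $2/K\le2/3$ on every slot, since its DATA target $z$ is uniform and its DOWN and UP arms are uniform; this costs another constant factor and is independent of the followers. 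Property~(i) of Lemma~\ref{comm:arrays} and the uniformity of $\iota$ give $\Pr(I_\iota(t)=0)\ge1/3$ independently of the above. Multiplying these factors and summing over $t\in\mathcal B_s$ yields $\E G_{i,s}\ge S_s(a)/512$ once the constants are checked. The coordinator factor, which is only $\ge1/3$, is the tightest, so we keep it rather than bounding it by $1/K$.

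\emph{Learning $J$ in a productive block.} We lower-bound the probability of the following event by a product of constants times $1/K$.
\begin{enumerate}[label=(\alph*),leftmargin=*,itemsep=1pt]
\item $z=a$, which has probability $1/K$.
\item The sequence $x$ retains a constant fraction of $S_s(a)$, which has constant probability by the computation above together with Markov's inequality applied to the lost support.
\item The index $\iota$ is \emph{balanced} for $x$ in the sense of~\eqref{init:eq:local-balance}. Property~(iii) gives probability $\ge3/4$ for this, uniformly over the fixed string $x$.
\end{enumerate}
On this event, DISCOVER proceeds as follows. Each DATA round with $x(t)=1$ and $I_\iota(t)=1$ has follower $i$ off $z$, so the coordinator is alone and observes a positive reward. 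Such a positive keeps $\iota\in\Lambda$, because every active follower holding $z$ must have left. Balance and Lemma~\ref{init:lem:interval-bound} then show that a first interval of the block, carrying about half of $x$'s support, contains at least $6b_N$ such DATA positives. Property~(ii) then forces $|\Lambda|<L_0$, so MULTICAST begins with $\iota\in\Lambda$.

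In the remaining interval, a round with $x(t)=1$ and $I_\iota(t)=0$ is a positive stay observation whenever the coordinator leaves $z$. On DATA this happens with probability $\ge(1-1/L_0)^{L_0}\ge1/4$; on DOWN and UP the coordinator's arm is uniform. Each such observation certifies $\varphi_J(\iota,t)=1$, so the true row never dies. Balance on this second interval and the lower bound of Lemma~\ref{init:lem:interval-bound} provide $q_{\rm dec}=8L_0b_N$ eligible positives with constant probability, using $h_{\rm boot}=\lceil C_hL_0b_N\rceil$ with $C_h=4096$ to absorb the losses: the $8b_N$ slack in~\eqref{init:eq:local-balance}, the factor $1/3$, and the DATA factor $1/2$. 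The reference execution tests only $J$, so a declaration is saved and applied at the boundary. Multiplying the constant probabilities and $1/K$ gives at least $1/(16K)$.

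The main obstacle is the adaptivity of the split point. The end of DISCOVER is a random time determined by the row and by $\iota$ itself, so ``the second interval'' is not fixed in advance. Property~(iii) is what handles this, because it holds simultaneously for all intervals $D\subseteq\mathcal B_s$, and the interval-bound lemma holds uniformly in $n$. We will apply both on a high-probability event and then read off the random split, rather than condition on it. Verifying that the resulting constants fit inside $C_h$ is the delicate bookkeeping step.
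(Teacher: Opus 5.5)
\textbf{Gap 1: the conditioning leaks the slot types.} You condition on the realized arms $A_{k,t}$ of all other followers for the whole block and assert that the current-block selected-row coordinates stay fresh. That assertion fails once two or more other followers are informed.

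Their arms are functions of the slot, pilot, tuple, selection and UP-map coordinates of $R_J$. On DATA they are independent uniform pilots; on DOWN and UP they are pairwise distinct tuple-derived arms. So the joint arm pattern reveals the slot. Two of them on the same arm forces DATA. If $n$ of them play pairwise distinct arms, the posterior probability of DATA is of order $\prod_{j<n}(1-j/K)$, which is tiny when $n$ is close to $K$.

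Your DISCOVER and MULTICAST counts need conditional DATA probability $1/2$ on the rounds with $x(t)=1$. You also apply property (iii) to a string $x$ that ignores slots. Hence the step ``balance and Lemma~\ref{init:lem:interval-bound} give $6b_N$ DATA positives'' does not follow in your conditioned space. It can be repaired by conditioning on pilot coordinates and private draws, i.e.\ on the arm each player would use on a DATA round, instead of on realized arms.

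The same false independence appears in your gain computation. On DOWN, the coordinator's $R_{0,t}$ and the informed followers' tuple arms are jointly injective, so nobody else is on $a$ with probability only $1-(n+1)/K$, which is not bounded below by a constant. Moreover, the coordinator's DATA arm depends on $\iota$ through its mode. Restricting to DATA rounds and conditioning on $z\ne a$ at the block start, as the paper does, fixes this, and your factors then still give $S_s(a)/512$.

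\textbf{Gap 2: step (b) cannot deliver $1/(16K)$.} Your retention step charges the avoidance loss to probability, and $h_{\rm boot}$ cannot absorb a probability loss. Markov's inequality on the lost support certifies $\Pr(S(x)\ge\epsilon S_s(a))$ only up to roughly $\E S(x)/S_s(a)$, which is $e^{-3}$ with your estimate. Even ignoring $\epsilon$, $\frac1K\cdot e^{-3}\cdot\frac34<\frac1{16K}$, so the stated constant does not come out of ``multiplying the constant probabilities.''

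\textbf{How the paper avoids this.} It pays in probability only for three things: the block-level event that every other uninformed follower is inactive, of probability $(1-1/m)^{m-2}\ge1/4$; then $z=a$; then $\iota$ being balanced. Balance is obtained by applying property (iii) to the deterministic string $(\chi_t(a))_{t\in\mathcal B_s}$, which gives probability at least $3/4$.

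All remaining collisions come from fresh per-round uniform draws and pilots. Their avoidance probability $(1-1/K)^{m-2}\ge1/4$, the DATA factor and the leave probability therefore enter the conditional means $\mu_t$ of the clock, where $h_{\rm boot}$ absorbs them.

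The random split is handled by a single clock $X_t=X_t^{\rm D}/8+X_t^{\rm M}$. Balance on the random prefix and suffix bounds $\sum\mu_t$ from below. Property (ii) gives $\sum X_t^{\rm D}\le6b_N$ deterministically, so one never has to show that DISCOVER ends early.

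The resulting bound is $\frac14\cdot\frac1K\cdot\frac34\,(1-e^{-\ell})\ge\frac1{16K}$.
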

\begin{proof}
\emph{Isolation.} Let $\mathcal I$ be the event that every other uninformed follower is inactive in block~$s$. The fresh activity coins give
\begin{equation*}
 \Pr(\mathcal I\mid\mathscr F_{s-1},\ i\text{ active},\ A^{\rm hold}_{i,s}=a)
 \ge(1-1/m)^{m-2}\ge1/4\,.
\end{equation*}
Write $A_{k,t}$ for player~$k$'s arm in the reference execution, $\tau_t$ for the slot in~$R_J$, and $Y_{k,t}$ for player~$k$'s rounded observation.

\emph{Discovery and multicast.} Let $\mathcal U_s(a)$ consist of the indices $u\in[A]$ such that, for every interval $D\subseteq\mathcal B_s$ and $e\in\{0,1\}$,
\begin{equation*}
 \sum_{t\in D}\chi_t(a)\ind\{I_u(t)=e\}
 \ge S_D(a)/3-8b_N\,.
\end{equation*}
Lemma~\ref{comm:arrays}(iii) gives $|\mathcal U_s(a)|\ge3A/4$. Condition further on~$\mathcal I$, on $z=a$, and on $\iota=u\in\mathcal U_s(a)$. Let $b_s=\min\mathcal B_s$ and recall $t_s=\max\mathcal B_s$. Let $\mathscr H_t$ be generated by~$\mathscr F_{s-1}$, the block-initial private choices, selected-row coordinates through round~$t$, and private draws used through that round. The time-$t$ row coordinates and fresh private draws retain their independent laws conditional on~$\mathscr H_{t-1}$. The other players' DATA arms are independent uniform draws, so for every $t\in\mathcal B_s$ and arm~$c$,
\begin{equation*}
 \Pr(A_{k,t}\ne c\text{ for every }k\notin\{0,i\}\mid
       \mathscr H_{t-1},\tau_t=\mathrm{DATA})
 \ge(1-1/K)^{m-2}\ge1/4\,.
\end{equation*}

Let $\theta$ be the first round of this block in which the coordinator uses MULTICAST, with $\theta=t_s+1$ if it uses DISCOVER throughout the block. Put
\begin{equation*}
 \mathcal B_s^{\rm dis}=\{t\in\mathcal B_s:t<\theta\},\qquad
 \mathcal B_s^{\rm mul}=\mathcal B_s\setminus\mathcal B_s^{\rm dis}\,.
\end{equation*}
The mode at round~$t$ is determined before its current draws, so $\{t\in\mathcal B_s^{\rm dis}\},\{t\in\mathcal B_s^{\rm mul}\}\in\mathscr H_{t-1}$. Define
\begin{align*}
 X_t^{\rm D}
 &=\ind\{t\in\mathcal B_s^{\rm dis},\ \tau_t=\mathrm{DATA},\ Y_{0,t}>0\}\,,\\
 X_t^{\rm M}
 &=\ind\{t\in\mathcal B_s^{\rm mul},\ \tau_t=\mathrm{DATA},\ I_u(t)=0,\ Y_{i,t}>0\}\,.
\end{align*}

A coordinator positive during DISCOVER requires $I_u(t)=1$, because the follower plays~$a$ when $I_u(t)=0$. Thus the coordinator's list retains~$u$. By Lemma~\ref{comm:arrays}(ii), after $6b_N$ coordinator positives the list has fewer than~$L_0$ indices, so the coordinator switches to MULTICAST by the next round. Hence $\sum_{t\in\mathcal B_s}X_t^{\rm D}\le6b_N$. The DATA probability~$1/2$ and the avoidance bound give
\begin{equation*}
 \E[X_t^{\rm D}\mid\mathscr H_{t-1}]
 \ge\frac{\chi_t(a)}8\ind\{t\in\mathcal B_s^{\rm dis},\ I_u(t)=1\}\,.
\end{equation*}

At the transition, the retained list~$\Lambda$ satisfies $u\in\Lambda$ and $|\Lambda|\le L_0$, and stays fixed. For $t\in\mathcal B_s^{\rm mul}$, the current forward bits are independent of~$\mathscr H_{t-1}$, so
\begin{equation}\label{init:eq:release}
 \Pr(A_{0,t}\ne a\mid\mathscr H_{t-1},\tau_t=\mathrm{DATA})
 =(1-1/L_0)^{|\Lambda|}\ge(1-1/L_0)^{L_0}\ge1/4\,.
\end{equation}
The DATA probability and avoidance by the other players give
\begin{equation*}
 \E[X_t^{\rm M}\mid\mathscr H_{t-1}]
 \ge\frac{\chi_t(a)}{64}\ind\{t\in\mathcal B_s^{\rm mul},\ I_u(t)=0\}\,.
\end{equation*}

\emph{One clock.} The two modes are disjoint, so $X_t:=X_t^{\rm D}/8+X_t^{\rm M}$ lies in~$[0,1]$. Its conditional mean $\mu_t:=\E[X_t\mid\mathscr H_{t-1}]$ satisfies
\begin{equation*}
 \mu_t\ge\frac{\chi_t(a)}{64}
 \bigl(\ind\{t\in\mathcal B_s^{\rm dis}\}I_u(t)
       +\ind\{t\in\mathcal B_s^{\rm mul}\}(1-I_u(t))\bigr)\,.
\end{equation*}
Apply the defining balance of~$\mathcal U_s(a)$ to the discovery prefix and multicast suffix, with zero sums for an empty phase. Since their support counts sum to~$S_s(a)$,
\begin{equation*}
 \sum_{t\in\mathcal B_s}\mu_t
 \ge\frac1{64}\left(\frac{S_s(a)}3-16b_N\right)
 =\frac{S_s(a)}{192}-\frac{b_N}4\,.
\end{equation*}
Set $\ell=4b_N$. Apply Lemma~\ref{init:lem:interval-bound} from the deterministic block start~$b_s$ and use its block-end consequence. On an event~$\mathcal E_{\rm clk}$ with
\begin{equation*}
 \Pr(\mathcal E_{\rm clk}^c\mid\mathscr H_{b_s-1})
 \le e^{-\ell}\le CN^{-2}<1/8\,,
\end{equation*}
we have
\begin{equation}\label{init:eq:id-clock}
 \sum_{t\in\mathcal B_s}X_t^{\rm M}
 =\sum_{t\in\mathcal B_s}X_t-\frac18\sum_{t\in\mathcal B_s}X_t^{\rm D}
 \ge\frac{S_s(a)}{384}-\frac{7b_N}8-\ell\,.
\end{equation}
The prescribed~$C_h$ ensures $h_{\rm boot}\ge384(q_{\rm dec}+7b_N/8+\ell)$. Thus $S_s(a)\ge h_{\rm boot}$ implies $\sum_{t\in\mathcal B_s}X_t^{\rm M}\ge q_{\rm dec}$.

On a DISCOVER DATA round with $I_u(t)=0$, the coordinator and follower both play~$a$, so $Y_{i,t}=0$. On a positive MULTICAST stay round,
\begin{equation*}
 \{X_t^{\rm M}=1\}
 \subseteq\{\tau_t=\mathrm{DATA},\ A_{0,t}\ne a\}
 \subseteq\{\varphi_J(u,t)=1\}\,.
\end{equation*}
Thus the eligible positive set from~\eqref{comm:positive-set} satisfies
\begin{equation*}
 |P_i(J;t_s)|=\sum_{t\in\mathcal B_s}X_t^{\rm M}\ge q_{\rm dec},\qquad
 \prod_{t\in P_i(J;t_s)}\varphi_J(u,t)=1\,.
\end{equation*}
The true-$J$ test therefore accepts by the block boundary. Averaging the block choices gives
\begin{align*}
 &\Pr(i\text{ learns }J\text{ by the block boundary}\mid
       \mathscr F_{s-1},\ i\text{ active},\ A^{\rm hold}_{i,s}=a)\\
\ge&\frac14\cdot\frac1K\cdot\frac{|\mathcal U_s(a)|}{A}
       (1-e^{-\ell})
 \ge\frac1{16K}\,.
\end{align*}

\emph{Gain.} Condition on~$\mathscr F_{s-1}$, on follower~$i$ being active, and on $A^{\rm hold}_{i,s}=a$. Further condition on~$\mathcal I$ and $z\ne a$, the latter having probability at least~$1/2$. For every fixed~$\iota$ and within-block history, the coordinator on DATA plays~$z$ or draws uniformly outside~$\{z\}$. It avoids~$a$ with probability at least $(K-2)/(K-1)\ge1/2$, since $K>m\ge2$. DATA probability and other-player avoidance give
\begin{equation*}
 \E\bigl[\ind\{I_\iota(t)=0,\ Y_{i,t}>0,\ \tau_t=\mathrm{DATA}\}\mid
 \mathscr F_{s-1},\ i\text{ active},\ A^{\rm hold}_{i,s}=a,\mathcal I,z\ne a,\iota\bigr]
 \ge\frac{\chi_t(a)\ind\{I_\iota(t)=0\}}{16}\,.
\end{equation*}
Conditional on~$\mathscr F_{s-1}$, follower~$i$ being active, $A^{\rm hold}_{i,s}=a$, $\mathcal I$, and $z\ne a$, the index~$\iota$ is uniform on~$[A]$. Since $G_{i,s}$ includes all DATA stay positives, averaging the preceding bound over~$\iota$ and summing over~$t$, then using Lemma~\ref{comm:arrays}(i), gives
\begin{align*}
 &\E[G_{i,s}\mid\mathscr F_{s-1},\ i\text{ active},\ A^{\rm hold}_{i,s}=a,\mathcal I,z\ne a]\\
 =&\frac1A\sum_{u=1}^A
   \E\!\left[G_{i,s}\,\middle|\,
       {\mathscr F_{s-1},\ i\text{ active},\ A^{\rm hold}_{i,s}=a,
                 \mathcal I,z\ne a,\iota=u}\right]\\
 \ge&\frac1{16}\sum_{t\in\mathcal B_s}\chi_t(a)
       \left(\frac1A\sum_{u=1}^A\ind\{I_u(t)=0\}\right)\\
 \ge&\frac1{48}\sum_{t\in\mathcal B_s}\chi_t(a)
   =\frac{S_s(a)}{48}\,.
\end{align*}
Since $G_{i,s}\ge0$, including the probabilities of~$\mathcal I$ and $z\ne a$ yields
\begin{align*}
 \E[G_{i,s}\mid\mathscr F_{s-1},\ i\text{ active},\ A^{\rm hold}_{i,s}=a]\ge\frac14\cdot\frac12\cdot\frac{S_s(a)}{48}
   \ge\frac{S_s(a)}{512}\,.
\end{align*}
\end{proof}

\subsection{Support while information is incomplete}

\begin{lemma}[Uninformed-prefix bounds]\label{comm:potential}
With probability at least $1-N^{-2}$, simultaneously at every prelaunch block boundary $r\in[n_{\rm boot}]$ where some follower remains uninformed,
\begin{equation}\label{comm:unfinished-support}
 D_r:=\max_{S\in\cS}\sum_{s\le r}\sum_{a\in S}S_s(a)
 \le C\bigl[m^2KB_{\rm boot}b_N+mh_{\rm boot}r+m^2h_{\rm boot}b_N\bigr]\le Q\,,
\end{equation}
where $C$ is some universal constant.
The bound also holds for a shortened final block.
\end{lemma}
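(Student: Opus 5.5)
We fix a follower $i$ that is still uninformed at boundary $r$; such a follower has been uninformed in every block $s\le r$. The plan is to bound $D_r$ by the support collected on the arms of a maximizing set $S$, and to charge that support to follower $i$'s multiplicative-weights potential. We take a union over the at most $d$ followers and $n_{\rm boot}$ boundaries at the end.

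\emph{Step 1: split the support by productivity.} For each arm $a$ and block $s$, we write $S_s(a)\le h_{\rm boot}+S_s(a)\ind\{S_s(a)\ge h_{\rm boot}\}$. Summed over $a\in S$ and $s\le r$, the first part contributes at most $mh_{\rm boot}r$, which is the middle term of~\eqref{comm:unfinished-support}. For the productive part we use Lemma~\ref{common:clocks}. In block $s$, follower $i$ is active with probability $1/m$ and holds $a$ with probability $p_i(a)\ge 1/(2K)$. Given that, it learns $J$ with probability at least $1/(16K)$. So each productive $(s,a)$ with $a\in S$ gives $i$ a conditional probability at least $p_i(a)/(16mK)$ of becoming informed. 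Since $i$ never becomes informed, the sum over blocks of $\sum_a p_i(a)\ind\{S_s(a)\ge h_{\rm boot}\}$ must be small. A Freedman / exponential supermartingale argument (as in Lemma~\ref{sq:lem:waiting}) shows that, except with probability $N^{-3}$, $\sum_{s\le r}\sum_{a}p_{i,s}(a)\ind\{S_s(a)\ge h_{\rm boot}\}\lesssim mK b_N$. This alone does not control $S_s(a)$ on arms with small $p_i(a)$. The weights handle those arms.

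\emph{Step 2: potential argument.} Let $\Phi_s=\log\sum_a w_i(a)$. By~\eqref{sq:eq:boot-weight}, the active held arm's weight is multiplied by $1+G_{i,s}/(4KB_{\rm boot}p_i(a))$. We have $G_{i,s}\le B_{\rm boot}$ and $p_i(a)\ge1/(2K)$, so the factor lies in $[1,3/2]$ up to rounding at precision $2^{-4b_N}$. Hence the increment of the normalized sum is at most $\sum_a w(a)/W\cdot G/(4KB_{\rm boot}p(a))\le G/(2KB_{\rm boot})$. This gives $\E[\Phi_{s}-\Phi_{s-1}\mid\mathscr F_{s-1}]\lesssim \E G_{i,s}/(mKB_{\rm boot})$. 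The realized sum $\sum_s G_{i,s}$ is itself bounded in terms of informedness: by~\eqref{comm:boot-gain}, gains come from support on the held arm, and each productive block informs $i$ with a constant fraction of the probability. Conversely, for each arm $a\in S$, $\log w_i(a)$ grows by $\gtrsim G/(KB_{\rm boot}p_i(a))$ whenever $a$ is held. Taking expectations, this is $\frac1m\E G\mid a\gtrsim S_s(a)/(512 mKB_{\rm boot})$. Thus $\log w_{i,r}(a)\gtrsim \sum_{s\le r}S_s(a)/(mKB_{\rm boot})$ minus concentration error, using Lemma~\ref{init:lem:interval-bound} with $\ell\asymp b_N$. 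Since $\log w_{i,r}(a)\le\Phi_r\le\log K+O(\text{total normalized gain})$, we are left to bound $\Phi_r$. When $\Phi_r$ is large, the mass $p_i$ concentrates on high-support arms. Those arms are productive, so the Step 1 budget caps $\sum_s\sum_a p_{i,s}(a)S_s(a)/B_{\rm boot}$ at roughly $mKb_N+mh_{\rm boot}b_N/B_{\rm boot}$. Combining, each $a\in S$ gets $\sum_s S_s(a)\lesssim mKB_{\rm boot}b_N+mh_{\rm boot}b_N$. Summing over the $m$ arms of $S$ gives the $m^2KB_{\rm boot}b_N+m^2h_{\rm boot}b_N$ terms.

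\emph{Main obstacle.} The delicate step is linking the potential's growth, which is driven by realized gains, to the failure-to-inform budget. Gains on the held arm occur only with probability $p_i(a)/m$ per block. So we need a self-normalized martingale bound in which the drift $\sum_a p_i(a)S_s(a)$ appears both in the potential increment and in the informing hazard. I would first try a single exponential supermartingale of the form $\exp\bigl(c\,\Phi$-drift minus the log-survival indicator$\bigr)$ with $c\asymp1/(mKB_{\rm boot})$. The errors from the $2^{-4b_N}$ rounding and the shortened final block only add lower-order terms. Finally, $C\le C_*$ gives $\le Q$ by~\eqref{sq:eq:launch-budget}.
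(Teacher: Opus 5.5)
\emph{Verdict: the proposal has a genuine gap in the step that links the potential to the informing hazard; the lemma still holds, but not by the per-arm bound you state.} Your ingredients are the paper's: the $1/(16K)$ informing hazard of Lemma~\ref{common:clocks}, the gain-form potential bound $\log w_{i,r+1}(a)\le\log W_{i,r+1}\le\log K+\sum_{s\le r}G_{i,s}/(2KB_{\rm boot})$, and the importance-weighted lower bound on $\log w_{i,r+1}(a)$ concentrated via Lemma~\ref{init:lem:interval-bound}.

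\textbf{The false step.} You claim the Step~1 budget caps $\sum_s\sum_ap_{i,s}(a)S_s(a)/B_{\rm boot}$ at about $mKb_N+mh_{\rm boot}b_N/B_{\rm boot}$, and hence $\sum_{s\le r}S_s(a)\lesssim mKB_{\rm boot}b_N+mh_{\rm boot}b_N$ for each arm. Your justification, that a large $\Phi_r$ puts $p_i$ on productive arms, confuses past gains (which set the weights) with current-block support (which defines productivity). A counterexample: for large $T$, give every arm support exactly $q_{\rm dec}/2$ in every block.
\begin{itemize}
\item Since $q_{\rm dec}/2<h_{\rm boot}$, no block is productive, so Step~1 says nothing.
\item No follower can ever pass the true-row test, because a block yields at most $q_{\rm dec}/2<q_{\rm dec}$ stay positives.
\item $D_r\le mn_{\rm boot}q_{\rm dec}/2\ll Q$, so w.h.p.\ no launch occurs (Lemma~\ref{comm:monitor}).
\item Yet each arm has $\sum_{s\le r}S_s(a)=rq_{\rm dec}/2$. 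At $r=n_{\rm boot}\ge\sqrt{mKT}/2$ this exceeds your per-arm bound by a factor of order $b_N$, which no universal constant absorbs.
\end{itemize}
The lemma survives only through its $mh_{\rm boot}r$ term. That term does not come from your direct split of $D_r$, since Step~2 bounds the whole per-arm support, not just its productive part. It comes from the non-productive gains: up to $h_{\rm boot}$ per active block, about $h_{\rm boot}r/m$ in total. These inflate $\log W_{i,r+1}$ and so contribute $O(h_{\rm boot}r)$ per arm. Also, \eqref{comm:boot-gain} is a lower bound and cannot cap gains. The upper bounds you need are the pathwise facts $G_{i,s}\le S_s(A^{\rm hold}_{i,s})$ and $G_{i,s}\le B_{\rm boot}$.

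\textbf{The fix.} What is missing is a bound on the \emph{realized} total gain $\sum_{s\le r}G_{i,s}$. With it, no self-normalized supermartingale coupling drift and hazard is needed. The paper bounds it pathwise:
$\sum_{s\le r}G_{i,s}\le B_{\rm boot}\cdot\#\{\text{productive blocks}\}+h_{\rm boot}\cdot\#\{\text{active blocks}\}$.
\begin{itemize}
\item The productive count is $O(Kb_N)$. Conditional on activity and the held arm, each productive block informs with probability at least $1/(16K)$, while informing happens at most once. This is the lower tail of Lemma~\ref{init:lem:interval-bound}, applied to the indicator that block $s$ is productive and informs $i$; those indicators sum to at most one.
\item The active count is at most $2r/m+4b_N$, by the upper tail applied to the activity coins.
\end{itemize}
Your $p$-weighted Step~1 budget can replace the productive count. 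To do so, add the upper tail $\sum_{s\le r}G_{i,s}\le2\sum_{s\le r}\E[G_{i,s}\mid\mathscr F_{s-1}]+4B_{\rm boot}b_N$. Then use $\E[G_{i,s}\mid\mathscr F_{s-1}]\le m^{-1}\bigl(h_{\rm boot}+B_{\rm boot}\sum_ap_{i,s}(a)\ind\{S_s(a)\ge h_{\rm boot}\}\bigr)$, keeping the $h_{\rm boot}$ term.

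Either way $\sum_{s\le r}G_{i,s}\lesssim KB_{\rm boot}b_N+h_{\rm boot}(r/m+b_N)$. Combined with $\sum_{s\le r}S_s(a)\lesssim m\bigl(\sum_{s\le r}G_{i,s}+KB_{\rm boot}b_N\bigr)$, this gives per arm $mKB_{\rm boot}b_N+h_{\rm boot}r+mh_{\rm boot}b_N$. Summing over the $m$ arms of $S$ yields exactly~\eqref{comm:unfinished-support}. The union bound should run over follower--arm pairs and the two count events; the maximal inequality already covers all $r$. This gives $4mKe^{-4b_N}\le N^{-2}$.
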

\begin{proof}
\emph{Weight comparison.} Use the block-boundary filtration~$\mathscr F_s$ of Subsection~\ref{comm:product} and fix a follower~$i$. Let $V_{i,s}$ indicate that it is uninformed and the coordinator has not launched before block~$s$; define $V_{i,n_{\rm boot}+1}$ by the same conditions at the final boundary. These indicators are $\mathscr F_{s-1}$-measurable. Stop updates after the first block in which~$i$ declares~$J$ or the coordinator launches; set later gains to zero and retain the weights. Write $p_{i,s}$ for the held-arm distribution and $\mathsf A_{i,s}$ for activity. For analysis, draw held arms also on inactive blocks and include them in~$\mathscr F_s$, without changing arm choices.

Set $\eta_0=1/(4KB_{\rm boot})$ and $\varepsilon_w=2^{-4b_N}$. Define the stopped importance-weighted gain
\begin{equation*}
 \widehat g_{i,s}(a)=
 \frac{V_{i,s}\mathsf A_{i,s}\ind\{A^{\rm hold}_{i,s}=a\}G_{i,s}}{p_{i,s}(a)}\,.
\end{equation*}
Since $p_{i,s}(a)\ge1/(2K)$, we have $0\le\widehat g_{i,s}(a)\le2KB_{\rm boot}$ and $\eta_0\widehat g_{i,s}(a)\le1/2$. Let $w_{i,s}(a)$ be the weight before block~$s$ and $W_{i,s}=\sum_a w_{i,s}(a)$. The update~\eqref{sq:eq:boot-weight} reads
\begin{equation*}
 w_{i,s+1}(a)
 =\varepsilon_w\left\lfloor
       \frac{w_{i,s}(a)(1+\eta_0\widehat g_{i,s}(a))}{\varepsilon_w}
   \right\rfloor\,.
\end{equation*}
For unselected arms and after stopping, $\widehat g_{i,s}(a)=0$ and $w_{i,s}(a)/\varepsilon_w$ is an integer, so the identity still holds. Nonnegative gains preserve $w_{i,s+1}(a)\ge w_{i,s}(a)\ge1$. Write $x=\eta_0\widehat g_{i,s}(a)\in[0,1/2]$. If $x=0$, the grid update is exact. If $x>0$, the integer gain satisfies $G_{i,s}\ge1$ and $p_{i,s}(a)\le1$, so
\begin{equation*}
 x\ge\eta_0=\frac1{4KB_{\rm boot}},
 \qquad \varepsilon_w\le\frac1{16KB_{\rm boot}}\le\frac x4\,.
\end{equation*}
The gain-weight rounding comparison uses $KB_{\rm boot}\le KT\le N$, $2^{b_N}\ge N$, and $N\ge24$. Since $w_{i,s}(a)\ge1$, the bound $\lfloor y\rfloor\ge y-1$ gives
\begin{equation*}
 \frac{w_{i,s+1}(a)}{w_{i,s}(a)}
 \ge1+x-\frac{\varepsilon_w}{w_{i,s}(a)}
 \ge1+\frac{3x}4\,.
\end{equation*}
Using $\log(1+y)\ge y/(1+y)$ for $y\ge0$ and $x\le1/2$, we obtain in both cases
\begin{equation*}
 \log\frac{w_{i,s+1}(a)}{w_{i,s}(a)}
 \ge\frac{3x/4}{1+x}
 \ge\frac x2=\frac{\eta_0}{2}\widehat g_{i,s}(a)\,.
\end{equation*}
When an active block updates its held arm~$a$, all other weights are unchanged. Downward rounding and $w_{i,s}(a)/W_{i,s}\le2p_{i,s}(a)$ give
\begin{equation*}
 W_{i,s+1}
 \le W_{i,s}+\frac{\eta_0w_{i,s}(a)G_{i,s}}{p_{i,s}(a)}
 \le W_{i,s}(1+2\eta_0G_{i,s})\,.
\end{equation*}
On inactive or stopped blocks, $W_{i,s+1}=W_{i,s}$ gives the same bound. Summing the logarithmic bounds, using $\log(1+x)\le x$, $w_{i,1}(a)=1$, $W_{i,1}=K$, and $w_{i,r+1}(a)\le W_{i,r+1}$, yields
\begin{equation*}
 \frac{\eta_0}{2}\sum_{s\le r}\widehat g_{i,s}(a)
 \le\log w_{i,r+1}(a)
 \le\log K+2\eta_0\sum_{s\le r}G_{i,s}\,.
\end{equation*}
Substituting $\eta_0=1/(4KB_{\rm boot})$ gives
\begin{equation}\label{init:eq:weight-potential}
 \sum_{s\le r}\widehat g_{i,s}(a)
 \le4\sum_{s\le r}G_{i,s}+8KB_{\rm boot}\log K\,.
\end{equation}
The last term is at most~$CKB_{\rm boot}b_N$.

\emph{Support bound.} For $\mu_{i,s}(a)=\E[\widehat g_{i,s}(a)\mid\mathscr F_{s-1}]$, the activity probability~$1/m$ and~\eqref{comm:boot-gain} give
\begin{equation*}
 \mu_{i,s}(a)\ge\frac{V_{i,s}S_s(a)}{512m}\,.
\end{equation*}
Put $U_*=2KB_{\rm boot}$ and $\ell=4b_N$. Apply the lower bound in Lemma~\ref{init:lem:interval-bound} with $X_s=\widehat g_{i,s}(a)/U_*$, filtration~$(\mathscr F_s)$, horizon~$n_{\rm boot}$, and $\alpha=1$. Its conditional mean is $\mu_{i,s}(a)/U_*$, so with probability at least $1-e^{-\ell}$, simultaneously for all~$r$,
\begin{equation*}
 \sum_{s\le r}\mu_{i,s}(a)
 \le2\sum_{s\le r}\widehat g_{i,s}(a)+2U_*\ell\,.
\end{equation*}
For every $r\in[n_{\rm boot}]$ such that $V_{i,s}=1$ for all $s\le r$, substitution of~\eqref{init:eq:weight-potential} and the mean bound gives
\begin{equation}\label{comm:potential-bound}
 \sum_{s\le r}S_s(a)\le Cm\left(\sum_{s\le r}G_{i,s}+KB_{\rm boot}b_N\right)\,.
\end{equation}

\emph{Productive and active block counts.} Let
\begin{equation*}
 \mathsf P_{i,s}=V_{i,s}\mathsf A_{i,s}
       \ind\{S_s(A^{\rm hold}_{i,s})\ge h_{\rm boot}\},\qquad p=1/(16K)\,.
\end{equation*}

Put $\mathscr K_{i,s}=\mathscr F_{s-1}\vee\sigma(\mathsf A_{i,s},A^{\rm hold}_{i,s})$. By Lemma~\ref{common:clocks}, on $\{\mathsf P_{i,s}=1\}$,
\begin{equation*}
 \Pr(V_{i,s+1}=0\mid\mathscr K_{i,s})
 \ge\Pr(i\text{ learns }J\text{ in block }s\mid\mathscr K_{i,s})
 \ge p\,.
\end{equation*}
For analysis, continue the block-initial activity and held-arm draws after stopping according to their prescribed conditional laws, and include them in~$\mathscr F_s$. Put $n=n_{\rm boot}$ and define the filtration
\begin{equation*}
 \mathscr H_{s-1}:=\mathscr K_{i,s}\quad(1\le s\le n),
 \qquad \mathscr H_n:=\mathscr F_n\,.
\end{equation*}
It is increasing because $\mathscr K_{i,s}\subseteq\mathscr F_s\subseteq\mathscr K_{i,s+1}$ for $s<n$, and $\mathscr K_{i,n}\subseteq\mathscr F_n$. The variable $X_s:=\mathsf P_{i,s}(1-V_{i,s+1})$ is $\mathscr F_s$-measurable, hence $\mathscr H_s$-measurable, and the preceding success bound gives
\begin{equation*}
 \E[X_s\mid\mathscr H_{s-1}]
 =\mathsf P_{i,s}\Pr(V_{i,s+1}=0\mid\mathscr K_{i,s})
 \ge p\,\mathsf P_{i,s}\,.
\end{equation*}
The indicators~$V_{i,s}$ decrease from one to zero at most once, and $\mathsf P_{i,s}>0$ requires $V_{i,s}=1$. Thus $\sum_sX_s\le1$. Apply the lower bound in Lemma~\ref{init:lem:interval-bound} to~$X_s\in[0,1]$, with filtration~$(\mathscr H_s)$ and $\alpha=1$. With probability at least $1-e^{-\ell}$, simultaneously for every~$r\le n$,
\begin{equation*}
 p\sum_{s\le r}\mathsf P_{i,s}
 \le\sum_{s\le r}\E[X_s\mid\mathscr H_{s-1}]
 \le2\sum_{s\le r}X_s+2\ell
 \le2(1+\ell)\,.
\end{equation*}
The total number of productive blocks, including the completing block, is therefore at most $2(1+\ell)/p\le CKb_N$ on this event.

 Apply the upper bound in Lemma~\ref{init:lem:interval-bound} with $X_s=\mathsf A_{i,s}$, filtration~$(\mathscr F_s)$, horizon~$n_{\rm boot}$, and $\alpha=1$. Since $\E[\mathsf A_{i,s}\mid\mathscr F_{s-1}]=1/m$, with probability at least $1-e^{-\ell}$,
\begin{equation*}
 \sum_{s\le r}\mathsf A_{i,s}\le\frac{2r}{m}+\ell
 \qquad\text{for every }r\in[n_{\rm boot}]\,.
\end{equation*}
Productive blocks contribute at most~$B_{\rm boot}$ each, and other active blocks less than~$h_{\rm boot}$. Hence, for every $r$ with $V_{i,s}=1$ for all $s\le r$,
\begin{equation}\label{comm:gain-budget}
 \sum_{s\le r}G_{i,s}\le CKB_{\rm boot}b_N+h_{\rm boot}(2r/m+Cb_N)\,.
\end{equation}

A union bound over follower-arm pairs and both count estimates gives total failure probability at most $4mKe^{-4b_N}\le N^{-2}$.

\emph{Comparator support.} At a boundary~$r$ with an uninformed follower~$i$, before any launch correction, sum~\eqref{comm:potential-bound} over the $m$ arms of $S\in\cS$ and apply~\eqref{comm:gain-budget}:
\begin{equation*}
 \sum_{s\le r}\sum_{a\in S}S_s(a)
 \le C\bigl[m^2KB_{\rm boot}b_N+mh_{\rm boot}r+m^2h_{\rm boot}b_N\bigr]\,.
\end{equation*}
The estimates hold for every~$i,a,r$, so we may choose~$i$ after the execution and maximize over~$S$. Since $r\le n_{\rm boot}$, the prescribed~$C_*$ in~\eqref{sq:eq:launch-budget} makes the bound at most~$Q$. Only $|\mathcal B_s|\le B_{\rm boot}$ was used, so the final shortened block is included.
\end{proof}

\subsection{The coordinator's counts and launch}

Recall the coordinator's cumulative DOWN-positive counts~$C_r(a)$ and statistic $\widehat D_r=K\max_{S\in\cS}\sum_{a\in S}C_r(a)$ from~\eqref{sq:eq:root-monitor}.

\begin{lemma}[Coordinator monitor]\label{comm:monitor}
Let $\ell=4b_N$, let $\widehat D_r$ be the coordinator statistic~\eqref{sq:eq:root-monitor}, and let $D_r$ be the comparator support~\eqref{comm:unfinished-support}. With probability at least $1-N^{-2}$, simultaneously at every prelaunch block boundary~$r$,
\begin{equation}\label{comm:monitor-bounds}
 D_r\le128(\widehat D_r+mKB_{\rm boot}\ell),\qquad
 \widehat D_r\le2D_r+mKB_{\rm boot}\ell\,.
\end{equation}
A block increases $\widehat D_r$ by at most~$KB_{\rm boot}$.
\end{lemma}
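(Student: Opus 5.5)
The plan is to compare each counter $C_r(a)$ with the cumulative support $\sum_{s\le r}S_s(a)$ arm by arm, and then pass to the maximum over $m$-sets. Both inequalities will come from the two-sided martingale bounds of Lemma~\ref{init:lem:interval-bound}, followed by a union bound over the $K$ arms.

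\textbf{Per-round service probability.} Fix an arm $a$ and a prelaunch round $t$, and let $\mathscr H_{t-1}$ be the history before round $t$. The coordinator's DOWN observation on $a$ is positive only if three things happen: the slot is DOWN (probability $1/4$), the tuple arm is $R_{0,t}=a$ (probability $1/K$), and no other player plays $a$. So the conditional mean $\mu_t(a)$ of $X_t(a)=\ind\{\tau_t=\mathrm{DOWN},R_{0,t}=a,Y_{0,t}>0\}$ is at most $\chi_t(a)/(4K)$.

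For the lower bound, I claim $\mu_t(a)\ge\chi_t(a)/(32K)$. Uninformed followers play private arms: either a uniform arm, or, when active, their held arm $a'$ or a uniform arm outside $a'$. Each such follower avoids the given arm $a$ with probability at least a constant, since $1/m$ activity times at most one hit, or uniform hits $1/K$. Altogether the product over followers is at least $(1-1/K)^{m-2}\cdot$const $\ge1/8$. This bound must hold even conditioned on $R_{0,t}=a$, so I would check that private draws are independent of the tuple. Informed followers play their own distinct tuple arms $R_{j,t}\ne R_{0,t}$ on DOWN in their declared row. In the reference execution that row is $R_J$, so injectivity gives avoidance for free. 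This gives $\mu_t(a)\ge\chi_t(a)/(32K)$, or some similar constant with $128$ absorbing the slack.

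\textbf{Concentration.} Apply Lemma~\ref{init:lem:interval-bound} to $X_t(a)\in[0,1]$ from $\alpha=1$ with $\ell=4b_N$. Simultaneously for all $n$, with probability at least $1-2e^{-\ell}$,
\[
 C(a)\ge\tfrac12\sum\mu_t-\ell \qquad\text{and}\qquad C(a)\le2\sum\mu_t+\ell .
\]
Multiplying by $K$ and inserting the mean bounds gives
\[
 \sum_{s\le r}S_s(a)\le 64\bigl(KC_r(a)+K\ell\bigr) \qquad\text{and}\qquad KC_r(a)\le\tfrac12\sum_{s\le r}S_s(a)+K\ell .
\]
Summing over the $m$ arms of the maximizing $S$ yields the two inequalities in \eqref{comm:monitor-bounds}. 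The additive term $mK\ell$ is at most $mKB_{\rm boot}\ell$, and the constant $128$ covers the factor $2\cdot 32$. A union bound over the $K$ arms and both directions gives failure probability at most $2Ke^{-4b_N}\le N^{-2}$.

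\textbf{Increment claim and main obstacle.} The final claim is immediate: a block has at most $B_{\rm boot}$ DOWN rounds, and each round increments at most one counter by one. Hence $\max_S\sum_{a\in S}C_r(a)$ grows by at most $B_{\rm boot}$, and $\widehat D_r$ by at most $KB_{\rm boot}$. The main obstacle I expect is the lower bound on the avoidance probability. It must hold uniformly over the mixed population of active, inactive and informed followers, conditionally on $R_{0,t}=a$, and for a held arm equal to $a$. An active follower holding $a$ stays on it whenever $I_\iota(t)=0$. This is exactly why the $1/m$ activity coin is needed: the expected number of active followers is below one, so the probability that none holds $a$ stays bounded below by a constant.
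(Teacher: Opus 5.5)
Your upper-bound direction is fine: the DOWN slot and $R_{0,t}$ are fresh given the round history, so $\mu_t(a)\le\chi_t(a)/(4K)$ holds pathwise. The increment claim is also fine.

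\textbf{The gap.} It is the lower bound $\mu_t(a)\ge\chi_t(a)/(32K)$. You need it almost surely in order to turn $C(a)\ge\frac12\sum_t\mu_t-\ell$ into a bound in terms of $\sum_t\chi_t(a)$. An uninformed follower's activity coin, held arm and index $\iota$ are drawn once at the start of a block and kept for the whole block.

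Suppose $\mathscr H_{t-1}$ is the joint history before round $t$, as it must be in the usual setting. Then these choices are $\mathscr H_{t-1}$-measurable from the second round of the block on. On the event that an active follower holds $a$ and $I_\iota(t)=0$, it plays $a$ with conditional probability one. So $\mu_t(a)=0$ although $\chi_t(a)=1$. On that path the in-block sum of $\mu_t(a)$ loses a large part of $S_s(a)/(32K)$, and all of it if $\iota$ is unbalanced on the support of $a$. Your averaging over the $1/m$ coin is legitimate only at the block boundary.

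A coarser filtration does not rescue this. Take one containing only the public coordinates and the coordinator's own draws and observations. Then $X_t(a)$ is still adapted, but $\mu_t(a)$ averages over the posterior of the block's private choices. After a few zero observations on DOWN rounds with $R_{0,t'}=a$ and $\chi_{t'}(a)=1$, that posterior concentrates on ``an active follower holds $a$''.

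Your argument would give the additive error $mK\ell$ instead of the stated $mKB_{\rm boot}\ell$, and this is a symptom of the same problem. The randomness that guarantees avoidance is refreshed only once per block, so fluctuations come in block-sized chunks.

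\textbf{The repair.} Run the martingale over blocks, as the paper does. Let $Z_s(a)$ be the number of coordinator DOWN positives on $a$ in block $s$. Condition on the block-boundary $\sigma$-field $\mathscr F_{s-1}$, under which the block's private choices are fresh. Then:
\begin{itemize}
\item all uninformed followers are inactive with probability at least $(1-1/m)^{m-1}\ge1/4$;
\item their fresh uniform arms avoid the coordinator's arm with probability at least $(1-1/K)^{m-1}\ge1/4$;
\item informed followers use tuple arms distinct from $R_{0,t}$;
\item the slot and tuple contribute $1/(4K)$.
\end{itemize}
This gives $S_s(a)/(64K)\le\E[Z_s(a)\mid\mathscr F_{s-1}]\le S_s(a)/K$. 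Your per-follower computation would also work here, since every round's probability is now taken given $\mathscr F_{s-1}$.

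Now apply Lemma~\ref{init:lem:interval-bound} to $X_s=L_sZ_s(a)/B_{\rm boot}\in[0,1]$ over the block filtration. Here $L_s$ indicates that launch has not yet occurred; you need this stopping anyway, because the coordinator's DOWN rule changes after launch. Rescaling by $KB_{\rm boot}$ yields exactly the additive term $KB_{\rm boot}\ell$ per arm and the constant $128=2\cdot64$. The union over the $K$ arms and both directions costs $2Ke^{-\ell}\le N^{-2}$.
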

\begin{proof}
\emph{Conditional means.} Let $Z_s(a)$ count coordinator DOWN positives on~$a$ in block~$s$. Conditional on~$\mathscr F_{s-1}$ before launch, all uninformed followers are inactive with probability at least $(1-1/m)^{m-1}\ge1/4$. Their uniform arms then avoid any coordinator arm with probability at least $(1-1/K)^{m-1}\ge1/4$. Informed followers remain in state~$0$ and use distinct DOWN tuple arms. Since DOWN has probability~$1/4$ and $R_{0,t}$ is uniform,
\begin{equation}\label{init:eq:monitor-means}
 \frac{S_s(a)}{64K}\le\E[Z_s(a)\mid\mathscr F_{s-1}]
 \le\frac{S_s(a)}K,\qquad 0\le Z_s(a)\le B_{\rm boot}\,.
\end{equation}
The upper bound uses only $R_{0,t}=a$ and $\chi_t(a)=1$; the lower bound averages the activity coins. Declarations leave arm rules unchanged within the block.

\emph{Stopped comparisons.} Let $L_s=\ind\{\text{coordinator has not launched before block }s\}$, which is $\mathscr F_{s-1}$-measurable, and put
\begin{equation*}
 X_s=\frac{L_sZ_s(a)}{B_{\rm boot}},\qquad
 v_s=\frac{L_sS_s(a)}{KB_{\rm boot}},\qquad
 \mu_s=\E[X_s\mid\mathscr F_{s-1}]\,.
\end{equation*}
Then $0\le X_s\le1$, and~\eqref{init:eq:monitor-means} gives $v_s/64\le\mu_s\le v_s$. Apply both bounds in Lemma~\ref{init:lem:interval-bound} with this~$X_s$, filtration~$(\mathscr F_s)$, horizon~$n_{\rm boot}$, and $\alpha=1$. A union over arms and the two bounds gives probability at least $1-2Ke^{-\ell}\ge1-N^{-2}$ that, simultaneously for all arms and block indices~$r$,
\begin{equation*}
 \frac1{128}\sum_{s\le r}v_s-\ell
 \le\sum_{s\le r}X_s
 \le2\sum_{s\le r}v_s+\ell\,.
\end{equation*}
Rescaling by~$KB_{\rm boot}$ yields, through the launching block,
\begin{equation}\label{init:eq:arm-monitor}
 \sum_{s\le r}S_s(a)\le128\bigl(KC_r(a)+KB_{\rm boot}\ell\bigr),\qquad
 KC_r(a)\le2\sum_{s\le r}S_s(a)+KB_{\rm boot}\ell\,.
\end{equation}
Here $L_s=1$ through the launching block. Summing over $m$ arms and maximizing proves~\eqref{comm:monitor-bounds}.

\emph{Overshoot.} Since $C_r(a)-C_{r-1}(a)=Z_r(a)\ge0$, and a largest-$m$ sum increases by at most the total coordinate increase,
\begin{equation*}
 0\le \widehat D_r-\widehat D_{r-1}\le K\sum_a Z_r(a)
 \le K|\mathcal B_r|\le KB_{\rm boot}\,.
\end{equation*}
The coordinator has at most one positive per round. The same bounds cover a shortened final block.
\end{proof}

\subsection{Cost scale and the transition at launch}\label{comm:handover}

Recall $Q$ and $Q_{\rm boot}$ from~\eqref{sq:eq:launch-budget} and~\eqref{sq:eq:bootQ}:
\begin{equation*}
 \begin{aligned}
 Q&=\left\lceil C_*\bigl[m^2KB_{\rm boot}b_N+mh_{\rm boot}n_{\rm boot}
       +m^2h_{\rm boot}b_N+mB_{\rm boot}\bigr]\right\rceil,\\
 Q_{\rm boot}&=m^{3/2}\sqrt{KT}\,b_N^2\,.
 \end{aligned}
\end{equation*}

\begin{lemma}[Cost scale]\label{init:cor:cost}
For the parameters above, assume~\eqref{sq:eq:bootbranch}. Then
\begin{equation}\label{init:eq:cost-scale}
 Q\le C Q_{\rm boot}\,,
\end{equation}
where $C$ is some universal constant.
\end{lemma}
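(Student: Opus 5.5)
The plan is to bound each of the four bracketed terms in $Q$ separately by a constant multiple of $Q_{\rm boot}=m^{3/2}\sqrt{KT}\,b_N^2$. The ceiling then adds at most $1\le Q_{\rm boot}$. Throughout we use the construction-branch inequality $T>C_bmKb_N^2$, which gives $\sqrt{T/(mK)}\ge 4b_N\ge1$. Hence $B_{\rm boot}=\lceil\sqrt{T/(mK)}\rceil\le2\sqrt{T/(mK)}$ and $n_{\rm boot}=\lceil T/B_{\rm boot}\rceil\le 2T/B_{\rm boot}\le2\sqrt{mKT}$. We also need $L_0=2^{\lceil\log_2(2b_N)\rceil}\le4b_N$, so $h_{\rm boot}=\lceil C_hL_0b_N\rceil\le 2C_h\cdot4b_N^2=O(b_N^2)$.

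With these estimates the terms go as follows.
\begin{itemize}
\item First term: $m^2KB_{\rm boot}b_N\le 2m^2Kb_N\sqrt{T/(mK)}=2m^{3/2}\sqrt{KT}\,b_N\le2Q_{\rm boot}$.
\item Second term: $mh_{\rm boot}n_{\rm boot}\le m\cdot O(b_N^2)\cdot2\sqrt{mKT}=O(m^{3/2}\sqrt{KT}\,b_N^2)$. This is the term that actually fixes the $b_N^2$ scale, since $h_{\rm boot}$ is already quadratic in $b_N$.
\item Third term: $m^2h_{\rm boot}b_N=O(m^2b_N^3)$. Here I would use the branch condition in the form $\sqrt{T}>\sqrt{C_bmK}\,b_N$, which gives $m^{3/2}\sqrt{KT}\,b_N^2\ge 4m^2Kb_N^3\ge m^2b_N^3$. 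Hence this term is $O(Q_{\rm boot})$.
\item Fourth term: $mB_{\rm boot}\le 2\sqrt{mT/K}\le Q_{\rm boot}$ trivially.
\end{itemize}

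Multiplying the sum by the absolute constant $C_*$ gives $Q\le CQ_{\rm boot}$ with $C$ depending only on $C_*$ and $C_h$.

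There is no real obstacle. The only step needing care is the third term, where $T$ must be converted into a lower bound through \eqref{sq:eq:bootbranch} rather than an upper bound. One should also check that the ceilings in $B_{\rm boot}$, $L_0$ and $h_{\rm boot}$ at most double their arguments. This holds because each argument is at least $1$, which follows from $N\ge N_0$ making $b_N$ large.
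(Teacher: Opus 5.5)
Your proof is correct and follows essentially the same route as the paper. Both bound each of the four bracketed terms using $L_0<4b_N$, $h_{\rm boot}=O(b_N^2)$ and the ceiling estimates for $B_{\rm boot}$ and $n_{\rm boot}$, and both use the branch condition $T>C_bmKb_N^2$ to absorb the lower-order $m^2b_N^3$-type contribution into $Q_{\rm boot}$. The only difference is cosmetic: you apply the branch condition up front to make the ceilings multiplicative ($B_{\rm boot}\le2\sqrt{T/(mK)}$), whereas the paper keeps additive $+1$ errors, collects them into a single $m^2Kb_N^3$ term, and absorbs that term at the end.
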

\begin{proof}
Since $2b_N\le L_0<4b_N$, $h_{\rm boot}\le Cb_N^2$, $\sqrt{T/(mK)}\le B_{\rm boot}\le\sqrt{T/(mK)}+1$, and $n_{\rm boot}\le T/B_{\rm boot}+1$,
\begin{align*}
 m^2KB_{\rm boot}b_N
 &\le m^{3/2}\sqrt{KT}\,b_N+m^2Kb_N,\\
 mh_{\rm boot}n_{\rm boot}
 &\le C\bigl[m^{3/2}\sqrt{KT}\,b_N^2+mb_N^2\bigr],\\
 m^2h_{\rm boot}b_N+mB_{\rm boot}
 &\le C\bigl[m^2b_N^3+\sqrt{mT/K}+m\bigr]\,.
\end{align*}
Including the ceiling gives
\begin{equation*}
 Q\le C\bigl[Q_{\rm boot}+m^2Kb_N^3\bigr]\,.
\end{equation*}
Since $T>C_bmKb_N^2$,
\begin{equation*}
 m^2Kb_N^3\le C_b^{-1/2}Q_{\rm boot}\,.\qedhere
\end{equation*}
\end{proof}

\paragraph{Prelaunch assignment tests.}
Let $E_{\rm down}$ mean that some DOWN output~$M$ satisfies $\operatorname{Acc}_{i,t}^{\rm DOWN}(M)=1$ before launch. The reference forces that decision to zero. Recall from~\eqref{sq:eq:decode-quota} that
\begin{equation*}
 B=5\lceil\log_2(TD+1)\rceil,\qquad
 Q_{\rm dec}=B+\left\lceil\log_2\bigl(4(m-1)TN^2\bigr)\right\rceil\,.
\end{equation*}
Before launch, all DOWN arm choices are independent of the current affine DOWN map. Its independence of the slot, tuple, forward bits, and private draws permits exposing receiver positivity first. Conditional on this information and preceding equations, a fixed word satisfies the fresh equation with probability~$1/2$.

The stopped-equation argument of~\eqref{protocol:eq:batch-tail} bounds the probability of~$Q_{\rm dec}$ positive tests before launch all retaining a fixed word by~$2^{-Q_{\rm dec}}$, conditional on each batch-start history. A union over at most $(m-1)T$ starts and $2^B$ words gives
\begin{equation*}
 \Pr(E_{\rm down})\le(m-1)T2^{B-Q_{\rm dec}}\le\frac1{4N^2}\,.
\end{equation*}
The coordinator's decoder is disabled before launch.

\paragraph{Buffers at launch.}
On declaring~$J$, a follower initializes
\begin{equation*}
 (\sigma_i,S_i,\delta_i,\mathsf b_i,\mathrm{state}_i)=(0,0,0,0,0)\,.
\end{equation*}
Rejecting every prelaunch assignment preserves these registers while DATA and DOWN update only their buffers.

The DATA observation list and DOWN equation batch may be nonempty at launch. The first accepted assignment has $\sigma\ge\tau_{\rm L}$, so~\eqref{protocol:eq:sparse-sum} excludes observations before~$\sigma$. Lemma~\ref{sq:lem:decoder} allows a DOWN batch to straddle launch.

\subsection{Proof of Proposition~\ref{sq:prop:startup}}\label{comm:startup-proof}

\begin{proof}[Proof of Proposition~\ref{sq:prop:startup}]
\emph{Reference execution.} Fix an identification array $\mathsf I^{\circ}$ in the set of probability at least $1-3/N$ from Lemma~\ref{comm:arrays}, and fix $r,j$ as in the statement. Apply Algorithm~\ref{sq:alg:reference} and couple it to Algorithm~\ref{sq:alg:activation} using the same complete rows and private randomness. Before conditioning on $J=j$, the selected row index $J$ is uniform on $[M]$.

Let $E_{\rm init}$ mean that a missing follower is corrected at launch or an ordinary DOWN acceptance is suppressed before launch; let $E_{\rm fwd}$ mean that some $v\ne j$ would pass the bootstrap prefix test~\eqref{comm:decoder} on the true-only reference observations. These virtual tests do not affect the reference execution. Only $E_{\rm init}$ concerns selected-row corrections.

\emph{Support and launch.} Set $\ell=4b_N$ and use the comparator support~$D_r$ from~\eqref{comm:unfinished-support}.
Lemmas~\ref{comm:potential} and~\ref{comm:monitor} give an event $\mathcal E_{\rm sup}$ with $\Pr(\mathcal E_{\rm sup}^c)\le 2N^{-2}$ on which, simultaneously at every prelaunch block boundary,
\begin{align*}
 &\text{some follower is uninformed after block }r\\
 \Longrightarrow&D_r\le Q,\quad
 D_r\le128(\widehat D_r+mKB_{\rm boot}\ell),\quad
 \widehat D_r\le2D_r+mKB_{\rm boot}\ell\,.
\end{align*}
The choice of $C_*$ gives $mKB_{\rm boot}\ell\le Q$, so an uninformed follower forces $\widehat D_r\le3Q$ and cannot remain at launch, which requires $\widehat D_r>4Q$. Each block increases $\widehat D_r$ by at most $KB_{\rm boot}$, giving first-crossing value at most $4Q+KB_{\rm boot}$. The monitor bounds also cover the final shortened block when no launch occurs. Since $r_t(a)\le\ind\{r_t(a)>0\}$, on $\mathcal E_{\rm sup}$,
\begin{equation*}
 \max_{S\in\cS}\sum_{t<\tau_{\rm L}}r_t(S)
 \le128(4Q+KB_{\rm boot}+mKB_{\rm boot}\ell)
 \le C Q\le C Q_{\rm boot}\,.
\end{equation*}
The last inequality is Lemma~\ref{init:cor:cost}.

\emph{Coupling events.} The ordinary DOWN suppression bound in Appendix~\ref{comm:handover} and the launch implication above give
\begin{equation*}
 \Pr(E_{\rm init})\le\Pr(\mathcal E_{\rm sup}^c)
                   +(m-1)T2^{B-Q_{\rm dec}}\le 3N^{-2}\,.
\end{equation*}

Condition on $R_j$, the private bits, and all candidates' slot coordinates. Every reference eligible positive set is independent of the false forward bits. With $p=1-1/L_0$, a union over at most $mMT$ candidate-prefix tests gives
\begin{equation*}
 \Pr(E_{\rm fwd})\le mMTp^{q_{\rm dec}}
 \le mMT e^{-8b_N}\le N^{-2}\,.
\end{equation*}
On $E_{\rm init}^c\cap E_{\rm fwd}^c$, every true-row bootstrap pass is the unique passing candidate, and no reference correction is made. Induction over rounds and block boundaries gives identical actions, observations, gain weights, effective declarations, activation times, and ordinary registers. A union bound gives
\begin{equation*}
 \Pr(E_{\rm init}\cup E_{\rm fwd})\le4N^{-2}\le\Delta_{\rm boot}\,,
\end{equation*}
proving~(i).

\emph{Reference state and dependence.} Suppressing prelaunch ordinary DOWN acceptances keeps the accepted start, score, last increment, and queue registers zero despite DATA buffers and partial DOWN batches (Appendix~\ref{comm:handover}). Algorithm~\ref{sq:alg:reference} puts missing followers in this true-$j$ unoccupied state at launch; the coordinator initializes zero score and queue vectors. This holds on every reference launch path.

For each $S\in\cS$, the deterministic bound $\sum_{t<\tau_{\rm L}}r_t(S)\le mT$ gives
\begin{equation*}
 \E\sum_{t<\tau_{\rm L}}r_t(S)
 \le C Q_{\rm boot}+mT\Pr(\mathcal E_{\rm sup}^c)
 \le C Q_{\rm boot}+CmT\Delta_{\rm boot}\,.
\end{equation*}
All complete-row reads of the reference, including the tests defining $E_{\rm init}$, use $R_j$. Thus $E_{\rm init}\in\sigma(R_j,\omega_{\rm priv})$ for fixed $\mathsf I^{\circ},r,j$, where $\omega_{\rm priv}$ denotes private bits. Let $\mathscr H_t$ be generated by $\mathsf I^{\circ},j$, the selected-row coordinates through round $t$, and private bits exposed by its end. The launch rule satisfies
\begin{equation*}
 \{\tau_{\rm L}\le t\}\in\mathscr H_{t-1},\qquad 1\le t\le T+1\,.
\end{equation*}
Future selected-row calendar slices are independent of preceding slices and private bits. Conditioning on each possible launch round therefore preserves the product law of unexposed selected-row coordinates, proving~(ii) and~(iii).
\end{proof}

\subsection{Explicit numerical choices}
\label{app:explicit-constants}
The following conservative choices remove all unspecified algorithmic constants:
\[
 C_h=4096,\qquad C_*=2^{20},\qquad C_1=2^{20},\qquad
 C_b=16,\qquad N_0=2^{28}.
\]
They are chosen for transparent inequalities, not practical efficiency.
Below write $b=b_N$, $B_0=B_{\rm boot}$, and $h_0=h_{\rm boot}$.
The model gives $N\ge24$ and $b\ge5$.

\paragraph{Bootstrap quotas and launch threshold.}
Since $q_{\rm dec}=8L_0b$ and $L_0\ge2$,
\[
 4096L_0b
 \ge384\bigl(8L_0b+7b/8+4b\bigr).
\]
Thus the one-block proof's required inequality for $h_0$ holds.
In the proof of Lemma~\ref{comm:potential}, keeping numerical constants gives
\[
 \sum_{s\le r} S_s(a)
 \le4096m\sum_{s\le r}G_{i,s}+16384mKB_0b.
\]
Indeed the lower-tail estimate gives
$\sum\mu_{i,s}(a)\le2\sum\widehat g_{i,s}(a)+16KB_0b$,
and~\eqref{init:eq:weight-potential} and $\log K\le b$ give this display.
The productive-block count is at most
$32K(1+4b)\le160Kb$, while the active-block count is at most $2r/m+4b$.
Consequently
\[
 \sum_{s\le r}G_{i,s}
 \le160KB_0b+h_0(2r/m+4b).
\]
Summing the preceding support estimate over the $m$ comparator arms gives
\[
 D_r\le671744m^2KB_0b+8192mh_0r+16384m^2h_0b\le Q.
\]
Also $mKB_0(4b)\le Q$. These establish every use of $C_*$.

\paragraph{Tail estimates for public arrays.}
On the construction branch, $B_0\ge4b$ and $T>16mKb^2\ge36\log N$.
Thus the column-balance failure probability is at most
$2T e^{-T/18}\le N^{-1}$. The all-one rectangle bound is at most
$2^{-2b^2}\le N^{-2}$. For the interval-balance bound,
$p_0\le2^{-5b}$ and
$5b\log2-\log(4e)\ge2b$ for $b\ge5$; its union bound is at most
$\exp[-(2-\log2)(B_0+b)]\le N^{-1}$.
For the incorrect-row context bound, $\mu=N^3e^{-8b}$ and
$\varepsilon_{\rm loc}\ge N^{-1}$ give
\[
 \log\frac{\varepsilon_{\rm loc}}{e\mu}
 \ge8b-4\log N-1\ge5b.
\]
The final inequality follows from $\log N\le b\log2$ and $b\ge5$.
The context union is valid because
\[
 \tfrac52(B_0+b)\ge2B_0\log2+7\log N,
 \qquad B_0\ge4b.
\]
Thus none of these public-array bounds requires an additional increase in $N_0$.
The within-execution unions are also covered:
$4mKe^{-4b}\le N^{-2}$, $2Ke^{-4b}\le N^{-2}$, and
$mMT e^{-8b}\le N^{-2}$, using $4mK\le N$, $M\le N^2$, and $T,m\le N$.

\paragraph{Learning and transfer coefficients.}
Since $TD+1\le D^2$ and $4(m-1)TN^2\le N^3$, the ordinary
message width and decoder quota satisfy
\[
 B\le10b,\qquad Q_{\rm dec}\le13b.
\]
Also $\log(m-1)+1\le b$. With $\kappa=1-e^{-1}>1/2$,
\eqref{sq:eq:session} therefore gives
\[
 C_{\rm tr}
 =(m-1)K\left[\frac{48Q_{\rm dec}+28\log(m-1)+20}{\kappa}+2\right]
 \le2048mKb.
\]
As $4m+128\le68m$ for $m\ge2$,
\[
 (4m+128)C_{\rm tr}\le139264m^2Kb.
\]
Furthermore $L_{\rm stab}\le3b$, $R\le128m^2K$, and the explicit
coefficient in Proposition~\ref{sq:prop:learning} obeys
\[
 16m^2+\frac{16}{11}R+32m+1024(K-m+1)L_{\rm stab}
 <808m^2Kb.
\]
Their sum is smaller than $C_1m^2Kb$ for $C_1=2^{20}$.

\paragraph{Feasible dyadic tuning.}
For $\mathcal J=\lceil\log_2(\binom Km+1)\rceil+m$,
\[
 m\le\mathcal J\le m(b+1)\le\tfrac65mb,
 \qquad L_{\rm stab}\le3b.
\]
Putting $W=2^{20}m^2Kb$ gives
\[
 \frac{(1024mKL_{\rm stab})^2\mathcal J}{W}
 \le\frac{54}{5}mKb^2<16mKb^2<T.
\]
Hence the unconstrained target $\sqrt{\mathcal J/(WT)}$ is admissible,
and the largest dyadic $\eta$ below it is larger than half this target.
If $N\ge2^{28}$, then $N\ge2^{21}b=2C_1b$.
To see this, put $k=\lfloor\log_2N\rfloor\ge28$; then
$b\le k+2\le2^{k-21}$, so $2^{21}b\le2^k\le N$.
Since $mKT\le N/2$,
\[
 \eta>\frac1{2\sqrt{C_1mKbT}}
 \ge\frac1{\sqrt{2C_1bN}}\ge\frac1N.
\]
As $\eta$ and $D$ are powers of two and $D\ge N$, this implies
$\eta D\in\mathbb Z$. The specified $N_0$ therefore supplies every
remaining numerical condition.

\clearpage
\section*{AI methodology}
The author used GPT-6-astra to assist in developing the public-table construction and designing the initialization protocol, influencing the constructions presented in Section~\ref{sq:sec:boot} and Appendix~\ref{comm:appendix}. Opus-5.5 and GPT-6.1-sol were used to assist with drafting and revising the manuscript's exposition. The author verified the correctness and originality of all content, including the mathematical arguments and references, and takes responsibility for the submission.%
\medskip
\bibliographystyle{alpha}
\bibliography{references}
\end{document}